\documentclass{article}
\usepackage[a4paper,margin=25mm]{geometry}
\usepackage{graphicx} 
\usepackage[table]{xcolor}
\usepackage{xurl}
\usepackage{hyperref}
\usepackage{amssymb}
\usepackage{amsmath}
\usepackage{amsthm}
\usepackage[capitalize]{cleveref}
\usepackage{mathtools}
\usepackage{todonotes}
\usepackage{tikz-cd}
\usepackage{tikz-dimline}
\usetikzlibrary{positioning, calc, circuits.logic.US, circuits.logic.IEC, patterns, calc, decorations.pathreplacing, decorations.pathmorphing, backgrounds, shadows, arrows.meta, fit, shapes}
\usepackage{authblk}

\usepackage{array,booktabs,tabularx}
\usepackage{makecell}

\usepackage{rotating}
\usepackage{comment}

\usepackage{enumitem}

\usepackage{pgfplots} \pgfplotsset{compat=1.18}
\usepgfplotslibrary{groupplots}

\usepackage[backend=bibtex]{biblatex} 
\AtBeginBibliography{\emergencystretch=2em}

\usepackage{subcaption}

\theoremstyle{plain}
\newtheorem{theorem}{Theorem}
\newtheorem{corollary}[theorem]{Corollary}
\newtheorem{lemma}[theorem]{Lemma}
\newtheorem{definition}[theorem]{Definition}
\newtheorem{proposition}[theorem]{Proposition}
\theoremstyle{definition}
\newtheorem*{informalresult}{Results (informal)}

\numberwithin{equation}{section}

\usepackage{thmtools}
\makeatletter
\renewcommand\thmt@autorefsetup{
  \@xa\def\csname\thmt@envname autorefname\@xa\endcsname
    \@xa{\thmt@thmname}
}
\makeatother

\declaretheoremstyle[
  bodyfont=\normalfont,
  headfont=\bfseries
]{examplestyle}

\declaretheorem[style=examplestyle,sibling=theorem]{example}
\declaretheorem[style=examplestyle,sibling=theorem]{remark}

\AtBeginEnvironment{corollary}{\crefalias{theorem}{corollary}}
\AtBeginEnvironment{lemma}{\crefalias{theorem}{lemma}}
\AtBeginEnvironment{definition}{\crefalias{theorem}{definition}}
\AtBeginEnvironment{proposition}{\crefalias{theorem}{proposition}}
\AtBeginEnvironment{example}{\crefalias{theorem}{example}}
\AtBeginEnvironment{remark}{\crefalias{theorem}{remark}}

\DeclareMathOperator*{\argmin}{argmin}
\DeclareMathOperator{\sgn}{sgn}

\usepackage{colortbl}

\newcommand{\Pclass}{\mathsf{P}}
\newcommand{\NPclass}{\mathsf{NP}}
\newcommand{\FPclass}{\mathsf{FP}}
\newcommand{\SharpPclass}{\mathsf{\#P}}
\newcommand{\PH}{\mathsf{PH}}
\newcommand{\CoNPclass}{\mathsf{co}\text{-}\mathsf{NP}}
\newcommand{\Aclass}{\mathsf{A}}
\newcommand{\Bclass}{\mathsf{B}}
\newcommand{\CoAclass}{\mathsf{co}\text{-}\mathsf{A}}
\newcommand{\MAclass}{\mathsf{MA}}

\tikzset{
 vm/.style={font=\small,text=VMInk,line join=round,line cap=round},
 vm label/.style={font=\small\bfseries,inner sep=3pt,align=center},
 vm point/.style={circle,fill=VMInk,inner sep=0pt,minimum size=4pt},
 vm region/.style={line width=1pt,rounded corners=7pt}
}

\newcommand{\simul}{\longrightarrow}
\newcommand{\polysim}{\overset{ \mathrm{p}}{\longrightarrow}}
\newcommand{\vsim}{\overset{\mathrm{v}}{\longrightarrow}}
\newcommand{\vpolysim}{\overset{\mathrm{v}, \mathrm{p}}{\longrightarrow}}
\newcommand{\simset}{\mathcal{C}_{\mathrm{sim}}}

\AtBeginEnvironment{definition}{\let\emph\textbf}

\newcommand{\drawcustomlattice}[5]{
    \begin{scope}[shift={(#2,#3)}, scale=0.75, local bounding box=#1]
        \foreach \x in {1,...,#4} {
            \foreach \y in {1,...,#5} {
                \node[vis] (#1a\x-\y) at (\x, \y) {};
            }
        }
        \foreach \x in {1,...,#4} {
            \foreach \y in {1,...,#5} {
                \ifnum\x<#4
                    \pgfmathtruncatemacro\xa{\x+1}
                    \draw[-, thick,  black!80] (#1a\x-\y) to (#1a\xa-\y);
                \fi
                \ifnum\y<#5
                    \pgfmathtruncatemacro\ya{\y+1}
                    \draw[-, thick,  black!80] (#1a\x-\y) to (#1a\x-\ya);
                \fi
            }
        }
    \end{scope}
}

\newcommand{\drawRBM}[6]{
    \begin{scope}[shift={(#2,#3)}, scale=1, local bounding box=#1]
        \foreach \x in {1,...,#5} {
                \node[vis] (#1a\x-v) at (\x, 0) {};
        }
        \foreach \x in {1,...,#6} {
                \node[hid] (#1a\x-h) at (\x+#4, 1) {};
        }
        
        \foreach \x in {1,...,#5} {
            \foreach \y in {1,...,#6} {
                    \draw[-,  black!80] (#1a\x-v) to (#1a\y-h);
            }
        }
    \end{scope}
}

\newcommand{\drawDBM}[7]{
    \begin{scope}[shift={(#2,#3)}, scale=1, local bounding box=#1]
        \foreach \x in {1,...,#5} {
                \node[vis] (#1a\x-v) at (\x, 0) {};
        }
        \foreach \x in {1,...,#6} {
            \foreach \y in {1,...,#7} {
                \node[hid] (#1a\x-h\y) at (\x+#4, \y) {};
            }       
        }
        \foreach \x in {1,...,#5} {
            \foreach \y in {1,...,#6} {
                    \draw[-,  black!80] (#1a\x-v) to (#1a\y-h1);
            }
        }

        \foreach \x in {2,...,#7} {
            \foreach \y in {1,...,#6} {
                \foreach \z in {1,...,#6}{
                    \pgfmathtruncatemacro\xa{\x-1}
                    \draw[-,  black!80] (#1a\y-h\x) to (#1a\z-h\xa);
                }
            }
        }
        
    \end{scope}
}

\title{
Going deep, reaching further: What can Boltzmann machines efficiently represent? 
}

\author[1]{Tobias Reinhart\thanks{\url{tobias.reinhart@uibk.ac.at}}}

\author[1,2,3]{Gemma De les Coves}

\affil[1]{{\small Institute for Theoretical Physics, University of Innsbruck,\\ Technikerstr.\ 21a,\ A-6020 Innsbruck, Austria}}
\affil[2]{{\small ICREA, Instituci\'o Catalana d'Estudis i Recerca Avan\c{c}ats, Passeig Llu\'is Companys 23, 08010 Barcelona}}
\affil[3]{{\small Departament d'Enginyeria, Universitat Pompeu Fabra, Carrer T\`anger 122, 08018 Barcelona}}

\date{\today}

\begin{document}
\maketitle
\begin{abstract}
While universal approximation theorems theoretically underpin the representation capabilities of machine learning models, they have two shortcomings: They do not focus on efficient representations and they tend to be insensitive to depth and other heuristically important properties. In this work we show that these two shortcomings are intertwined. We prove that machine learning models which are deeper or have higher-order interactions can efficiently represent a larger set of functions, i.e.\ have larger efficient reach. It follows that deep Boltzmann machines (DBMs) have maximal efficient reach, restricted Boltzmann machines (RBMs) sub-maximal, and $k$-body generalizations of RBMs form a hierarchy between the two, even though all are universal. More generally, (i) we provide a full characterization of machine learning models with maximal efficient reach; (ii) we derive separations between deep and shallow models for three representation tasks; and (iii) we prove that the efficient reach of shallow models constitutes an infinite hierarchy indexed by the interaction order. The proofs draw on different methodologies, namely on (i) a recent characterization of universal spin models; (ii) computational complexity bounds for spin models; and (iii) communication complexity bounds applied to shallow models. This work shows that universal machine learning models can strongly differ in their efficient representation capabilities, and provides a theoretical foundation for heuristic observations.
\end{abstract}
\tableofcontents

\section{Introduction}\label{sec:intro}

Several machine learning models are theoretically supported by so-called \emph{universal approximation theorems} (UATs), that is, proofs that the respective model can represent or approximate any function from a certain function class. Restricted Boltzmann machines (RBMs), deep Boltzmann machines (DBMs) and deep belief networks (DBNs) are universal approximators of probability distributions \cite{Lr08, Ro10, Mo15}, feedforward neural networks with at least one hidden layer are universal approximators of continuous functions with compact support \cite{Ho89,Cy89}, and transformers are universal approximators of sequence-to-sequence functions \cite{Yu20}.

Although theoretically foundational, practically, these UATs have two major shortcomings. First, they do not consider the efficiency of the representation. For instance, the UATs for RBMs, DBMs and DBNs, require exponentially many hidden units and hence exponentially many parameters to represent a generic target function \cite{Lr08, Ro10, Mo15}.
Consequently, they provide limited insight into the representation capabilities of efficient models. 
Second, these UATs often are insensitive to heuristically important model hyperparameters such as depth.
For example UATs exist for feedforward networks with just one hidden layer as well as for both RBMs and DBMs. They therefore do not explain the widely observed superiority of deep models \cite{Mh17, Sa15, Te16, Ro15}. 
This raises two questions: 
\begin{quote}
  \emph{Are all universal machine learning models equal with regard to their efficient representation capabilities?} \hfill (Q1) \phantomsection \label{Q1}

  \emph{If not, what properties characterize ``efficient universal models''? 
  } 
  \hfill (Q2) \phantomsection \label{Q2}
\end{quote}
For \hyperref[Q2]{(Q2)}, we are thinking of properties such as depth or other model hyperparameters.  

Recently, we have shown that UATs of certain, energy-based machine learning models such as RBMs, DBMs and generalizations thereof can be proven by means of a characterization of so-called universal spin models \cite{Re25}. The latter are spin models which can encode arbitrary energy functions in their low-energy sector.
This way of deriving UATs has two main advantages for addressing \hyperref[Q1]{(Q1)} and \hyperref[Q2]{(Q2)}. 
First, the characterization of universal spin models provides a handle on the resource requirements of UATs in the following sense. Spin models have a local structure that is respected by the universality characterization. If the target is compatible with this local structure, the derived UAT  will only require a polynomial overhead. 
Second, the characterization of universal spin models is fundamentally based on a notion of composition. As a consequence, the resulting UATs should favour deep models, since depth arises by constructing representations in a compositional way, layer by layer.

In this work, we leverage the characterization of universal spin models to tackle \hyperref[Q1]{(Q1)} and \hyperref[Q2]{(Q2)}. 
We find the following answers: 
\begin{quote}
\emph{No, they have different efficient representation capabilities, forming a rich landscape (see \cref{fig:polynomial-reach}).}\hfill (A1)

\emph{The properties that determine the efficient representation capabilities are depth (or, more precisely, visible closure) and interaction order. 
}\hfill (A2)
\end{quote}
Our work provides a theoretical underpinning for the heuristic observation that deep models perform better. It also shows that the two shortcomings of UATs mentioned above are deeply intertwined. 
More generally, it illustrates how insightful connecting different fields through a common notion of universality can be\,---\,the characterization of universal spin models gives us an edge in the study of ``efficient universality''  of machine learning models. 

\begingroup
\definecolor{PRpurple}{HTML}{8866AA}
\definecolor{VMBlue}{HTML}{628FB5}
\definecolor{VMInk}{HTML}{28343D}
\definecolor{VMUniversalColour}{HTML}{A9CCE3}
\definecolor{VMMaximalColour}{HTML}{628FB5}
\definecolor{VMVisibleClosedColour}{HTML}{D0E3EF}
\definecolor{VMIntersectionColour}{HTML}{285577}
\definecolor{VMShallow}{HTML}{4F9487}
\definecolor{VMDeep}{HTML}{BE995B}
\definecolor{VMShallowHigher}{HTML}{83B5AB}
\definecolor{VMShallowLower}{HTML}{B5D3CB}
\definecolor{VMRBM}{HTML}{E2EEE9}
\definecolor{VMReachSurface}{HTML}{D6D2C9}
\definecolor{VMGuideFace}{HTML}{ECE9E3}
\definecolor{VMLimitPlane}{HTML}{E8E8E8}
\colorlet{VMShallowFill}{VMShallow!20}
\colorlet{VMDeepFill}{VMDeep!15}
\colorlet{VMMaximalFill}{VMMaximalColour}
\colorlet{VMIntersectionFill}{VMIntersectionColour!40}
\tikzset{
 pr visible/.style={circle,draw=black!80,fill=blue!60,line width=.65pt,minimum size=5.5pt,inner sep=0pt},
 pr hidden/.style={circle,draw=black!80,fill=red!60,line width=.65pt,minimum size=5.5pt,inner sep=0pt},
 pr edge/.style={draw=black!35,line width=.45pt},
 pr leader/.style={draw=VMInk!55,line width=.65pt,rounded corners=10pt},
 pics/pr rbm/.style={code={
  \foreach \v in {0,1,2} \foreach \h in {0,1,2,3,4}
   \draw[pr edge] ({.7*\v-.7},0)--({.47*\h-.94},1);
  \foreach \v in {0,1,2} \node[pr visible] at ({.7*\v-.7},0) {};
  \foreach \h in {0,1,2,3,4} \node[pr hidden] at ({.47*\h-.94},1) {};
 }},
 pics/pr dbm/.style={code={
  \foreach \v in {0,1,2} \foreach \h in {0,1,2,3,4}
   \draw[pr edge] ({.7*\v-.7},0)--({.47*\h-.94},.8);
  \foreach \h in {0,1,2,3,4} \foreach \j in {0,1,2,3}
   \draw[pr edge] ({.47*\h-.94},.8)--({.55*\j-.825},1.6);
  \foreach \v in {0,1,2} \node[pr visible] at ({.7*\v-.7},0) {};
  \foreach \h in {0,1,2,3,4} \node[pr hidden] at ({.47*\h-.94},.8) {};
  \foreach \j in {0,1,2,3} \node[pr hidden] at ({.55*\j-.825},1.6) {};
 }},
 pics/pr triple/.style={code={
  \coordinate (v0) at (-.8,0);
  \coordinate (v1) at (0,-.28);
  \coordinate (v2) at (.8,0);
  \foreach \h in {0,1,2,3,4} {
   \coordinate (h\h) at ({.52*\h-1.04},1.05);
   \foreach \a/\b in {0/1,0/2,1/2}
    \path[fill=PRpurple,fill opacity=.035,draw=PRpurple,
          draw opacity=.20,line width=.4pt]
     (v\a)--(v\b)--(h\h)--cycle;
  }
  \foreach \v in {0,1,2} \node[pr visible] at (v\v) {};
  \foreach \h in {0,1,2,3,4} \node[pr hidden] at (h\h) {};
 }}
}
\begin{figure}[!t]
\centering
\resizebox{\linewidth}{!}{
\begin{tikzpicture}[font=\small,text=VMInk,line join=round,
 x={(.95cm,0cm)},y={(-.52cm,.42cm)},z={(0cm,1.1cm)},
 axis/.style={draw=VMInk!65,line width=.65pt,-{Stealth[length=1.8mm]}},
 point/.style={circle,fill=VMInk,inner sep=1.8pt}]
\foreach \x in {0,2,4,6,8,10}
 \draw[VMInk!8,thin] (\x,0,0)--(\x,6,0);
\foreach \y in {0,2,4,6}
 \draw[VMInk!8,thin] (0,\y,0)--(10,\y,0);
\draw[axis] (0,0,0)--(11.8,0,0) node[anchor=west,xshift=3pt] {\textbf{Interaction order}};
\draw[axis] (0,0,0)--(0,6.9,0) node[above left] {\textbf{Depth}};
\foreach \x/\t in {0/2,2/3,4/4,6/5,8/6,10/7}
 \node[below,font=\footnotesize] at (\x,0,-.06) {$\t$};
\foreach \y/\t in {2/3,4/4,6/5}
 \node[left,font=\footnotesize] at (0,\y,0) {$\t$};
\node[left,font=\footnotesize] at (-.1,0,0) {$2$};

\path[fill=black,fill opacity=.055,draw=none]
 (-.6,2,2.9)--(11.7,2,2.9)--(11.7,6.4,2.9)--(-.6,6.4,2.9)--cycle;
\draw[black!35,draw opacity=.55,densely dotted,line width=.65pt]
 (-.6,6.4,2.9)--(11.7,6.4,2.9)--(11.7,2,2.9);

\path[fill=VMReachSurface,draw=none]
 (0,2,3.5)--(10,2,3.5)--(10,6,3.5)--(0,6,3.5)--cycle;
\draw[VMReachSurface!65!black!80,line width=.8pt]
 (0,6,3.5)--(0,2,3.5)--(10,2,3.5);
\draw[VMReachSurface!65!black!55,densely dashed,line width=.65pt]
 (0,6,3.5)--(10,6,3.5)--(10,2,3.5);
\foreach \x in {2,4,6,8}
 \draw[VMReachSurface!65!black!25,line width=.35pt] (\x,2,3.5)--(\x,6,3.5);
\draw[VMReachSurface!65!black!25,line width=.35pt] (0,4,3.5)--(10,4,3.5);

\foreach \a/\b/\h in {8/10/2.5,6/8/2.3,4/6/1.8,2/4/1.3,0/2/.8} {
 \path[fill=VMGuideFace,fill opacity=.7,draw=VMReachSurface!65!black,
       draw opacity=.16,line width=.45pt]
  (\a,2,\h)--(\b,2,\h)--(\b,2,3.5)--(\a,2,3.5)--cycle;
 \path[fill=VMReachSurface,draw=VMReachSurface!65!black!75,line width=.6pt]
  (\a,0,\h)--(\b,0,\h)--(\b,2,\h)--(\a,2,\h)--cycle;
}
\foreach \x/\lo/\hi in {2/.8/1.3,4/1.3/1.8,6/1.8/2.3,8/2.3/2.5}
 \path[fill=VMGuideFace,fill opacity=.7,draw=VMReachSurface!65!black,
       draw opacity=.16,line width=.45pt]
  (\x,0,\lo)--(\x,2,\lo)--(\x,2,\hi)--(\x,0,\hi)--cycle;

\path[fill=black,fill opacity=.055,draw=none]
 (-.6,-.3,2.9)--(11.7,-.3,2.9)--(11.7,2,2.9)--(-.6,2,2.9)--cycle;
\draw[black!35,draw opacity=.55,densely dotted,line width=.65pt]
 (-.6,6.4,2.9)--(-.6,-.3,2.9)--(11.7,-.3,2.9)--(11.7,2,2.9);
\draw[black!55,draw opacity=.85,densely dotted,line width=.95pt]
 (0,2,2.9)--(10,2,2.9);

\coordinate (separator edge) at (11,.65,2.9);
\begin{scope}[x={(1cm,0cm)},y={(0cm,1cm)}]
\node[font=\footnotesize,anchor=north,align=center,text=black!55]
  (separator label) at (11,2.6) {limit of efficient reach for\\ depth $2$ and interaction order $\to \infty$};
 \draw[draw=black!35,line width=.55pt,-{Stealth[length=1.5mm]}]
  (separator label.north)--(separator edge);
\end{scope}

\coordinate (max edge) at (9,3.5,3.5);
\node[font=\footnotesize,anchor=west,align=center,text=VMReachSurface!65!black!85]
  (max label) at (12,3.5,3.5) {maximal\\
  efficient reach};
\draw[draw=VMReachSurface!65!black!55,line width=.55pt,-{Stealth[length=1.5mm]}]
   (max label)--(max edge);

\draw[axis] (0,0,0)--(0,0,6.2) node[above] {\textbf{Efficient reach}};
\coordinate (RBM) at (0,0,.8);
\coordinate (TRBM) at (2,0,1.3);
\coordinate (DBM) at (0,2,3.5);
\begin{scope}[x={(1cm,0cm)},y={(0cm,1cm)}]
 \draw[pr leader] (-2.65,.3)--(-1.8,.25)--(RBM);
 \draw[pr leader] (-3.05,4.95)--(-2.25,4.55)--(DBM);
 \draw[pr leader] (3.5,-1.4)--(3.5,-.65)--(TRBM);
 \begin{scope}[shift={(-3.85,-.25)}]
  \pic {pr rbm};
  \node[] at (0,-.5) {RBM};
 \end{scope}
 \begin{scope}[shift={(-4.25,4.5)}]
  \pic {pr dbm};
  \node[] at (0,1.98) {DBM};
 \end{scope}
 \begin{scope}[shift={(3.5,-2.65)}]
  \pic {pr triple};
  \node[] at (0,-.75) {3-body RBM};
 \end{scope}
\end{scope}
\node[point] at (RBM) {};
\node[point] at (TRBM) {};
\node[point] at (DBM) {};
\end{tikzpicture}
}
\caption{\textbf{Efficient reach of machine learning models.} 
The efficient reach (vertical axis), i.e.\ the efficient representation capabilities of several machine learning models, as a function of their depth and interaction order (horizontal axes), where the latter is the number of units involved in each interaction. 
Height depicts ordinal inclusion, so that larger efficient reach means the respective models can efficiently represent more targets. 
We prove that, for depth $\geq 3$, there are models whose efficient reach is maximal, irrespective of the interaction order.  
Models of depth 2 form an infinite hierarchy for increasing interaction order that saturates strictly below the maximal efficient reach (gray limit surface for depth $2$ and interaction order $\to \infty$). 
RBMs (depth $2$ and interaction order $2$), $3$-body RBMs (depth $2$ and interaction order $3$) and DBMs (depth $\geq 3$ and interaction order $2$) are located in the figure with visible spins/units in blue and hidden spins/units in red. They all have different efficient reach.}
\label{fig:polynomial-reach}
\end{figure}
\endgroup

\paragraph{Statement of results.} We study the efficient reach, i.e.\ the efficient representation capabilities, of machine learning models from three different angles. 
\begin{enumerate}[label=(\roman*)]
    \item \textbf{Characterization of maximal models.} 
    Models which have maximal efficient reach are called \emph{maximal}. Here we provide a characterization of maximal models. 
    More precisely, we study the efficient reach of spin and machine learning models through the notion of \emph{simulations} (that is, encoding target functions in the low-energy sector) 
    and the characterization of universal spin models, using that simulations imply \emph{approximations} (in terms of marginal distributions).
    We prove that, for spin models, universality and maximality are equivalent (\cref{thm:univ eff spin}), so
    the characterization of universality also applies to maximal efficient reach.
    In contrast, for machine learning models with visible and hidden spins (or units) another property becomes decisive: 
    \emph{visible closure} (\cref{def:visible closed}), which formalizes the intuition that hidden-hidden interactions are required to compose simulations and approximations.
    We then prove that any two of the three properties of \emph{universality}, \emph{visible closure} and
    \emph{maximality} imply the third (\cref{thm:main}, see also \cref{fig:main}).
    Thus, among universal machine learning models, visible closure precisely
    characterizes those with maximal efficient reach.
    This extends the connection between universality and universal
    approximation established in \cite{Re25} to the setting of
    polynomial resources. It follows that DBMs with at least two
    hidden layers are maximal (\cref{cor:DBM max}).
    
    \item \textbf{Separation between shallow and maximal models.} 
    
    We establish separations for three representation tasks:
    simulating functions (in the low-energy sector), approximating functions as
    unnormalized visible spin marginals, and approximating functions as normalized
    visible spin marginals (see \cref{tab:complexity separation}).     
    All three are based on reasonable assumptions on computational complexity, namely the non-collapse of the polynomial hierarchy.\footnote{This is in a similar spirit as the $\Pclass\neq\NPclass$ conjecture; for details see \cref{ssec:comp compl basics}.}
    For the first task, we show that maximal models can efficiently simulate
    certain $\NPclass$-hard functions that shallow models cannot (\cref{thm:separation 1}).
    For the latter two tasks, on the one hand we show that maximal models 
    can approximate certain $\SharpPclass$-hard functions up to a common positive factor, and
    the corresponding normalized distributions to arbitrary
    uniform relative precision, provided that the respective maximal models are visible closed in a strong sense. On the other hand, 
    shallow models cannot achieve these two approximations efficiently (\cref{thm:separation 2,thm:separation 3}). 
    Consequently, these separation results imply that shallow models such as RBMs do not have maximal efficient reach.
    
    \item \textbf{Separation hierarchy for shallow models.}
    
    We establish that the efficient simulation reach of shallow models is described by an infinite hierarchy indexed by their interaction order, that is, their maximal number of spins per local interaction (see \cref{fig:efficient-reach}). Namely, for each $k\geq 2$ there exist functions that shallow $k$-local models cannot efficiently simulate (\cref{thm:shallow rand,cor:rbm ip}) but shallow $k+1$-local models can (\cref{lem:IP k sim}), under reasonable assumptions on the size of the model weights.
To derive these results, we relate the efficient reach of shallow $k$-local models to sums of polynomial threshold functions (PTFs) of degree at most $k-1$ (\cref{lem:sim PTF}), and use communication complexity bounds to
    exclude efficient simulations of certain target functions, such as the $k$-party inner product modulo $2$ (\cref{lem:shallow PP,thm:shallow rand}).
    We then prove that increasing the interaction order from $k$ to $k+1$
    allows one to efficiently simulate these targets, by first
    characterizing the efficient reach of RBMs through
    sums of small ridge functions (\cref{lem:rbm ridge}), and then extending them to shallow $k$-local models (\cref{lem:RBM decomp}).
\end{enumerate}

\begingroup
\definecolor{VMBlue}{HTML}{628FB5}
\definecolor{VMInk}{HTML}{28343D}
\definecolor{VMUniversalColour}{HTML}{A9CCE3}
\definecolor{VMMaximalColour}{HTML}{628FB5}
\definecolor{VMVisibleClosedColour}{HTML}{D0E3EF}
\definecolor{VMIntersectionColour}{HTML}{285577}
\definecolor{VMShallow}{HTML}{4F9487}
\definecolor{VMDeep}{HTML}{BE995B}
\definecolor{VMShallowHigher}{HTML}{83B5AB}
\definecolor{VMShallowLower}{HTML}{B5D3CB}
\definecolor{VMRBM}{HTML}{E2EEE9}
\definecolor{VMReachSurface}{HTML}{D6D2C9}
\definecolor{VMGuideFace}{HTML}{ECE9E3}
\definecolor{VMLimitPlane}{HTML}{E8E8E8}
\colorlet{VMShallowFill}{VMShallow!20}
\colorlet{VMDeepFill}{VMDeep!15}
\colorlet{VMMaximalFill}{VMMaximalColour}
\colorlet{VMIntersectionFill}{VMIntersectionColour!40}
\tikzset{
 vm/.style={font=\small,text=VMInk,line join=round,line cap=round},
 vm label/.style={font=\small\bfseries,inner sep=3pt,align=center},
 vm point/.style={circle,fill=VMInk,inner sep=0pt,minimum size=4pt}
}
\definecolor{PRpurple}{HTML}{8866AA}
\tikzset{
 pr visible/.style={circle,draw=black!80,fill=blue!60,line width=.65pt,minimum size=5.5pt,inner sep=0pt},
 pr hidden/.style={circle,draw=black!80,fill=red!60,line width=.65pt,minimum size=5.5pt,inner sep=0pt},
 pr edge/.style={draw=black!35,line width=.45pt},
 pr leader/.style={draw=VMInk!55,line width=.65pt,rounded corners=10pt},
 pics/pr rbm/.style={code={
  \foreach \v in {0,1,2} \foreach \h in {0,1,2,3,4}
   \draw[pr edge] ({.7*\v-.7},0)--({.47*\h-.94},1);
  \foreach \v in {0,1,2} \node[pr visible] at ({.7*\v-.7},0) {};
  \foreach \h in {0,1,2,3,4} \node[pr hidden] at ({.47*\h-.94},1) {};
 }},
 pics/pr dbm/.style={code={
  \foreach \v in {0,1,2} \foreach \h in {0,1,2,3,4}
   \draw[pr edge] ({.7*\v-.7},0)--({.47*\h-.94},.8);
  \foreach \h in {0,1,2,3,4} \foreach \j in {0,1,2,3}
   \draw[pr edge] ({.47*\h-.94},.8)--({.55*\j-.825},1.6);
  \foreach \v in {0,1,2} \node[pr visible] at ({.7*\v-.7},0) {};
  \foreach \h in {0,1,2,3,4} \node[pr hidden] at ({.47*\h-.94},.8) {};
  \foreach \j in {0,1,2,3} \node[pr hidden] at ({.55*\j-.825},1.6) {};
 }},
 pics/pr triple/.style={code={
  \coordinate (v0) at (-.8,0);
  \coordinate (v1) at (0,-.28);
  \coordinate (v2) at (.8,0);
  \foreach \h in {0,1,2,3,4} {
   \coordinate (h\h) at ({.52*\h-1.04},1.05);
   \foreach \a/\b in {0/1,0/2,1/2}
    \path[fill=PRpurple,fill opacity=.035,draw=PRpurple,
          draw opacity=.20,line width=.4pt]
     (v\a)--(v\b)--(h\h)--cycle;
  }
  \foreach \v in {0,1,2} \node[pr visible] at (v\v) {};
  \foreach \h in {0,1,2,3,4} \node[pr hidden] at (h\h) {};
 }}
}
\begin{figure}[!t]
\centering
\resizebox{\linewidth}{!}{
\begin{tikzpicture}[vm]
 \path[fill=VMDeepFill,draw=black!30,rounded corners=6pt]
  (0,0) rectangle (14,8);
 \begin{scope}
  \clip[rounded corners=6pt] (0,0) rectangle (14,8);
  \fill[VMShallowFill] (0,0) rectangle (4.8,8);
 \end{scope}
 \draw[black!25] (4.8,0)--(4.8,8);
 \node[vm label] at (2.4,7.62) {shallow};
 \node[vm label] at (11.5,7.62) {deep};

 \coordinate (A) at (5.1,6.55);
 \coordinate (B) at (5.1,2.8);
 \coordinate (C) at (10.8,6.55);
 \def\VMAB{(A) .. controls (5.5,5.35) and (5.5,4.0) .. (B)}
 \def\VMBC{.. controls (7.8,2.8) and (10.8,4.25) .. (C)}
 \def\VMCA{.. controls (9.1,6.1) and (6.9,6.1) .. (A)}
 \def\VMUniversal{(A)
  .. controls (6.9,6.1) and (9.1,6.1) .. (C)
  .. controls (10.8,4.25) and (7.8,2.8) .. (B)
  .. controls (2.5,2.5) and (.7,2.9) .. (.7,4.75)
  .. controls (.7,6.8) and (3.2,7.1) .. (A) -- cycle}
 \def\VMClosed{\VMAB
  .. controls (4.8,2.15) and (4.35,1.75) .. (4.35,1.3)
  .. controls (4.35,.65) and (4.85,.55) .. (5.5,.55)
  -- (12.65,.55)
  .. controls (13.25,.55) and (13.4,.8) .. (13.4,1.4)
  -- (13.4,6.35)
  .. controls (13.4,7.1) and (12.6,7.35) .. (C)
  \VMCA -- cycle}
 \def\VMMaximal{\VMAB \VMBC
  .. controls (10.8,7.35) and (5.3,7.35) .. (A) -- cycle}

 \path[fill=VMUniversalColour] \VMUniversal;
 \path[fill=VMVisibleClosedColour] \VMClosed;
 \path[fill=VMMaximalFill] \VMMaximal;
 \begin{scope}
  \clip \VMUniversal;
  \clip \VMClosed;
  \clip \VMMaximal;
  \fill[VMIntersectionFill] (0,0) rectangle (14,8);
 \end{scope}
 \draw[VMUniversalColour!65!black,line width=.85pt] \VMUniversal;
 \draw[VMVisibleClosedColour!65!black,line width=.85pt] \VMClosed;
 \draw[VMMaximalColour!80!black,line width=.85pt] \VMMaximal;

 \node[vm label] at (2.8,6.15) {universal};
 \node[vm label] at (8.0,6.82) {maximal};
 \node[vm label] at (10.5,1.4) {visible closed};
 \coordinate (RBM) at (2.15,4.85);
 \coordinate (TRBM) at (3.85,3.9);
 \coordinate (DBM) at (8.1,4.75);
 \draw[pr leader]
  (-.55,5.0)--(.35,5.0)--(RBM);
 \draw[pr leader]
  (-.55,1.8)--(.35,1.8)--(TRBM);
 \draw[pr leader]
  (14.55,4.9)--(13.7,4.9)--(DBM);

 \pic[shift={(-1.85,4.55)},scale=.85] {pr rbm};
 \node[font=\small] at (-1.85,4.05) {RBM};
 \pic[shift={(-1.85,1.45)},scale=.85] {pr triple};
 \node[font=\small] at (-1.85,.75) {3-body RBM};
 \pic[shift={(15.85,4.25)},scale=.85] {pr dbm};
 \node[font=\small] at (15.85,3.75) {DBM};
 \foreach \p in {RBM,TRBM,DBM} \node[vm point] at (\p) {};
\end{tikzpicture}
}
\caption{\textbf{Characterization of maximal models.}
Set diagram showing properties of machine learning models and their interplay in our characterization of maximal models. Inclusions are not necessarily strict.
The three properties given by universal, visible closed and maximal, are such that any two of them imply the third. The area where the three properties hold is depicted in a 
 blue-gray shade.
Maximal models are necessarily deep.
DBMs are maximal. 
RBMs and $3$-body RBMs are universal, but are not deep, and thus not maximal. 
We expect most ``natural'' deep models to be visible closed, and hence most deep universal models to be maximal. 
}
\label{fig:main}
\end{figure}
\endgroup

\begingroup
\definecolor{VMBlue}{HTML}{628FB5}
\definecolor{VMInk}{HTML}{28343D}
\definecolor{VMUniversalColour}{HTML}{A9CCE3}
\definecolor{VMMaximalColour}{HTML}{628FB5}
\definecolor{VMVisibleClosedColour}{HTML}{D0E3EF}
\definecolor{VMIntersectionColour}{HTML}{285577}
\definecolor{VMShallow}{HTML}{4F9487}
\definecolor{VMDeep}{HTML}{BE995B}
\definecolor{VMShallowHigher}{HTML}{83B5AB}
\definecolor{VMShallowLower}{HTML}{B5D3CB}
\definecolor{VMRBM}{HTML}{E2EEE9}
\definecolor{VMReachSurface}{HTML}{D6D2C9}
\definecolor{VMGuideFace}{HTML}{ECE9E3}
\definecolor{VMLimitPlane}{HTML}{E8E8E8}
\colorlet{VMShallowFill}{VMShallow!20}
\colorlet{VMDeepFill}{VMDeep!15}
\colorlet{VMMaximalFill}{VMMaximalColour}
\colorlet{VMIntersectionFill}{VMIntersectionColour!40}

\begin{table}[th]
\centering
\small
\renewcommand{\arraystretch}{1.6}
\setlength{\tabcolsep}{4pt}
\begin{tabularx}{\linewidth}{@{}
    X
    >{\centering\arraybackslash}p{1.25cm}
    >{\centering\arraybackslash}p{2.3cm}
    >{\centering\arraybackslash}p{3.05cm}
@{}}
\toprule
\textbf{Representation task}
& \cellcolor{VMShallowFill}\textbf{shallow}
& \cellcolor{VMMaximalFill}\textbf{maximal}
& \cellcolor{VMIntersectionFill}
  \makecell[c]{\textbf{maximal}\\
              \textbf{and strongly}\\
              \textbf{visible closed}}
\\
\midrule

Simulate $\NPclass$-hard ground-state energies $G_n$
(\cref{thm:separation 1})
& \cellcolor{VMShallowFill}$\times$ No
& \cellcolor{VMMaximalFill}$\checkmark$ Yes
& \cellcolor{VMIntersectionFill}$\checkmark$ Yes
\\

Approximate $\SharpPclass$-hard partition functions $Z_n$
in unnormalized marginals
(\cref{thm:separation 2})
& \cellcolor{VMShallowFill}$\times$ No
& \cellcolor{VMMaximalFill}---
& \cellcolor{VMIntersectionFill}$\checkmark$ Yes
\\

Approximate $\SharpPclass$-hard distributions $q_n\propto Z_n$
in normalized marginals
(\cref{thm:separation 3})
& \cellcolor{VMShallowFill}$\times$ No
& \cellcolor{VMMaximalFill}---
& \cellcolor{VMIntersectionFill}$\checkmark$ Yes
\\
\bottomrule
\end{tabularx}
\caption{\textbf{Separation between shallow and maximal models.}
We consider three representation tasks and prove separations between shallow and maximal models. 
$\NPclass$-hard functions such as the Ising ground-state function $G_n$ can be efficiently simulated by maximal models, but not by shallow models. 
$\SharpPclass$-hard functions such as the Ising partition function $Z_n$ (and respective  probability distribution $q_n \propto Z_n$) can be efficiently approximated by maximal and strongly visible closed models in their unnormalized (normalized) visible spin marginals. 
Shallow models cannot carry out these tasks. 
}
\label{tab:complexity separation}
\end{table}
\endgroup

\begingroup
\definecolor{VMInk}{HTML}{28343D}
\definecolor{VMMaximalColour}{HTML}{628FB5}
\definecolor{VMShallow}{HTML}{4F9487}
\colorlet{VMMaximalFill}{VMMaximalColour}
\colorlet{VMShallowFill}{VMShallow!20}
\tikzset{vm/.style={font=\small,text=VMInk,line join=round,line cap=round},vm region/.style={rounded corners=5pt,line width=.7pt},vm label/.style={font=\small\bfseries,inner sep=3pt,align=center},vm point/.style={circle,fill=VMInk,inner sep=0pt,minimum size=4pt,scale=.8}}
\begin{figure}[!t]
\centering
\begin{tikzpicture}[vm,scale=.8]
\path[vm region,draw=VMMaximalColour!80!black,fill=VMMaximalFill]
 (0,0) rectangle (14,12);
\path[vm region,draw=VMShallow!85!black,fill=VMShallow!34]
 (.65,.45) rectangle (13.35,10.55);
\path[vm region,draw=VMShallow!90!black,fill=VMShallow!23]
 (1.3,.95) rectangle (12.7,8.55);
\path[vm region,draw=VMShallow!90!black,fill=VMShallow!14]
 (1.95,1.45) rectangle (10.95,7.35);
\path[vm region,draw=VMShallow!90!black,fill=VMShallow!7]
 (2.6,1.95) rectangle (10.3,5.2);
\path[vm region,draw=VMShallow!90!black,fill=VMShallow!2]
 (3.25,2.45) rectangle (9.65,4.0);

\node[vm label,anchor=north west] at (.25,11.78) {maximal};
\node[font=\small,anchor=north east] at (13.65,11.73)
 {all efficiently simulable targets};
\node[vm point] (g) at (11.8,11.05) {};
\node[anchor=west] at ($(g)+(.14,0)$) {$G_n$};

\node[vm label] at (6.45,10.05) {all shallow models};
\node[font=\Large] at (6.45,9.25) {$\vdots$};
\node[font=\footnotesize,anchor=west,text=VMInk!85,
      align=left,text width=3.25cm] at (7.05,9.25)
 {arbitrarily high\\interaction order};

\node[vm label] at (6.45,8.05) {shallow $(k+1)$-local};
\node[vm label] at (6.45,6.85) {shallow $k$-local};

\node[font=\Large] at (6.45,5.95) {$\vdots$};
\node[font=\footnotesize,anchor=west,text=VMInk!85]
 at (7.05,5.95) {intermediate orders};

\node[vm label] at (6.45,4.7) {shallow $3$-local};
\node[vm label] at (6.45,3.225) {RBMs: $2$-local};

\node[vm point] (ip) at (11.8,4.85) {};
\node[anchor=north] at ($(ip)+(0,-.14)$) {$\mathrm{IP}_k$};
\end{tikzpicture}
\caption{\textbf{Separation hierarchy for shallow models.}
Set diagram showing the efficient reach of various machine learning model classes. Each rounded rectangle depicts the maximal set of target functions that the respective model class can efficiently simulate. All shown inclusions are proven to be strict.
By definition maximal models have maximal efficient reach, i.e.\ can efficiently simulate all targets that can be efficiently simulated, including $\NPclass$-hard functions such as the Ising ground-state function $G_n$.
Shallow models have strictly smaller efficient reach; for example, they cannot efficiently simulate $G_n$.
The efficient reach of shallow models is given by an infinite, strict inclusion hierarchy indexed by their interaction order $k$. Shallow $(k+1)$-local models can efficiently simulate certain functions with large $k$-party communication complexity such as the $k$-party inner product modulo $2$, $\mathrm{IP}_k$, whereas shallow $k$-local models cannot.
}
\label{fig:efficient-reach}
\end{figure}
\endgroup

\paragraph{Comparison with previous work.}
The characterization of maximal models is based on \cite{Re25} and extends those results to the setting of efficient representations. 

The separation between shallow and maximal models extends the work of \cite{Ga17} to include new representation tasks and reach beyond example models.
Namely, whereas \cite{Ga17} derives a separation between RBMs and DBMs for the task of efficiently representing quantum states in their unnormalized visible spin marginals,  we derive separations for general deep versus shallow models and for the additional tasks of simulating functions and approximating functions in the normalized visible spin marginals. 
The latter reveals that the potentially $\SharpPclass$-hard normalization factor of shallow models does not prevent the separation by computational complexity bounds. 

Finally, the separation hierarchy for shallow models extends results
from \cite{Ma13}, which studies the efficient approximation capabilities of RBMs through a relation to linear threshold functions, in particular proving the non-existence of efficient RBM approximations of the inner product modulo $2$.
We derive a similar relation between arbitrary shallow models and PTFs, proving similar non-existence results for shallow models and the $k$-party inner product modulo two functions. We further relate these results to the communication complexity of the inner product functions. We also prove that increasing the interaction order frees the model from these non-existence results.  

\paragraph{Structure of the paper.}
In \cref{sec:universality} we provide basic concepts and definitions regarding spin models and simulations. We then devote one section to each of our three main results: \cref{sec:eff univ} for the characterization of maximal models, \cref{sec:consequences} for the separations between shallow and maximal models, and \cref{sec:RBM separation} for the separation hierarchy of shallow models. Each of these sections contains an informal statement of the main results, before deriving these results in more detail.
In \cref{sec:conclusion and outlook} we summarize our main findings and provide an outlook. 
Finally, an appendix provides background on complexity theory (\cref{sec:comp compl}), additional details to our main results, such as proofs (\cref{sec:proofs}) as well as further  supplemental results (\cref{sec:supplement}), and numerical learning experiments on the separation hierarchy of shallow models (\cref{sec:ip-learning-methods}).

\section{Universal spin models and their role in machine learning}\label{sec:universality}
Here we introduce basic definitions and concepts regarding spin models, related machine learning models and the respective notions of simulation and approximation. Details can be found in \cite{Re24, Re25}.

In \cref{ssec:spin sys basics} we first define \emph{spin systems}, finite systems of interacting classical degrees of freedom, the spins, and \emph{spin models}, potentially infinite families of spin systems often containing systems with common local interaction pattern but different system sizes, e.g.\ all Ising-type interactions on 2d-lattices of arbitrary system size.
We then define simulations between spin systems. Intuitively, $S$ simulates $T$ if $S$ encodes $T$ in a subset of its spins, in the low-energy sector.
Finally, we define universal spin models,
that is, spin models which can simulate arbitrary target systems, 
and provide an overview of existing characterizations of universality.

In \cref{ssec:spin ml}, we extend these definitions to machine learning models. We define \emph{spin ML models}, that is, spin models with an additional partition of spins into visible and hidden ones, for each of their spin systems. The respective notions of simulation (in their low-energy sector) and approximation (in the marginals of their Boltzmann distribution) then must all be compatible with this visible-hidden split.
\subsection{Spin systems, spin models and simulations}\label{ssec:spin sys basics}
\emph{Spin systems} are systems consisting of finitely many, interacting, classical degrees of freedom, the spins. The interactions between spins are prescribed in terms of local energy functions whose sum defines the global energy function or Hamiltonian.
\begin{definition}[spin system]\label{def:spin system}
A \emph{spin system} $S$ is a hypergraph $(V_S,E_S)$ together with a real-valued, local energy function $J_S(e)$ on each hyperedge $e\in E_S$. Evaluating $J_S(e)$ on a configuration $\mathbf{v}\colon V_S \to \mathbf{2}\coloneqq  \{-1,+1\}$ yields the local energy contribution of spins from $e$. The sum of local energy contributions defines the Hamiltonian of $S$, 
\begin{equation}
    H_S(\mathbf{v}) = \sum_{e\in E_S} J_S(e)(\mathbf{v}\vert_e).
\end{equation}
\end{definition}
We define the \emph{size} of a spin system $S$ as the number of real parameters required to fully specify all of its local energy functions, that is, 
\begin{equation}
    \vert S \vert = \sum_{e\in E_S} 2^{\vert e \vert}. 
\end{equation}
Further, we call these parameters the couplings or weights of $S$. Similarly, $\vert f \vert = 2^n$ for a generic function $f\colon \mathbf{2}^n \to \mathbb{R}$, considering $f$ a trivial spin system. We also define $\vert f \vert_{\min}$ to be the minimal size of any spin system with Hamiltonian $f$. 

We call a spin system $S$ that has local energy functions involving at most $k$ spins \emph{$k$-local} (see \cref{ssec:bool basics}) and refer to $k$ as the interaction order or locality.
Note that our choice of representing local energy functions w.r.t.\ the indicator basis (\cref{ssec:bool basics}), i.e.\ defining them by listing their values implies local energy functions of $k$ spins might have a Fourier degree of $<k$, i.e.\ $k$-local spin systems might have a Hamiltonian with Fourier degree $<k$. The interaction order of a spin system hence only provides an upper bound for the Fourier degree of its Hamiltonian.

The size of any $k$-local spin system is 
essentially determined by the number of spins $n$, i.e.\ $\vert S \vert = \mathrm{poly}(n)$, assuming that $k$ does not depend on the system size.
Here $f= \mathrm{poly}(n)$ is shorthand for $f= \mathcal{O}(p)$ for some polynomial $p$ in $n$, i.e.\ $f$ is asymptotically upper bounded by some polynomial $p$.
However, the size of a generic spin system is exponential in the number of spins, i.e.\ $2^n$.

A \emph{spin model} is a set of spin systems. Typically, spin models are infinite and defined in terms of properties shared by the respective interaction hypergraphs and local energy functions.
\begin{example}[Ising system and model]\label{ex:2d Ising}
    Consider a 2d square lattice and for any of its edges $e=\{a,b\}$ let the corresponding local energy function $J_S(e)$ be defined by 
    \begin{equation}
        J_S(e)(v_a,v_b) = \lambda_ev_av_b,
    \end{equation}
    where $v_a=\mathbf{v}(a) \in \mathbf{2}$ denotes the state of configuration $\mathbf{v}$ on spin $a$ and $\lambda_e$ is a coupling parameter that depends on the edge $e$.
    This defines a 2d Ising system (\cref{fig:spin system a}). If $\lambda_e$ is constant (i.e.\ does not depend on $e$) the 2d Ising system is called homogeneous, else inhomogeneous. 
    The (homogeneous) 2d Ising model is the spin model that consists of all such (homogeneous) 2d Ising systems, i.e.\ defined on arbitrarily sized square lattices and for all possible choices of $\lambda_e$ (\cref{fig:spin system b}).
\end{example}

\begin{figure}[th]
    \begin{subfigure}[t]{0.49\textwidth}
    \caption{}\label{fig:spin system a}
    \centering
    \vspace{0.45cm}
    \begin{tikzpicture}[vis/.style = {draw=black!80, fill=blue!60, thick, circle, minimum size= 0.5em, inner sep=2pt},
    scale = 0.75]
    \drawcustomlattice{Lone}{0}{0}{6}{6}    
    \end{tikzpicture}
    \end{subfigure}
    \begin{subfigure}[t]{0.49\textwidth}
    \caption{}\label{fig:spin system b}
    \centering
    \begin{tikzpicture}[vis/.style = {draw=black!80, fill=blue!60, thick, circle, minimum size= 0.5em, inner sep=2pt},
    setBoundary/.style = {
    draw=gray!80, 
    line width=1.5pt, 
    ellipse, 
    fill=gray!5, inner sep=4pt},
    scale = 0.75]
    \drawcustomlattice{Lone}{0.5}{0.5}{2}{2}    
    \drawcustomlattice{Ltwo}{2.5}{0.5}{4}{4}    
    \drawcustomlattice{Lthree}{0}{2}{3}{3}  
    \begin{scope}[on background layer]
        \node[setBoundary, fit=(Lone) (Ltwo) (Lthree)] (theSet) {};
    \end{scope}
    \end{tikzpicture}
    \end{subfigure}
    \caption{\textbf{Ising system and model.}
    A 2d (square lattice) Ising system \ref{fig:spin system a}, spins are blue vertices, their interactions are attached to edges of the graph and are not explicitly shown, and the 2d Ising model \ref{fig:spin system b}, that is, the set of all such 2d Ising systems, for arbitrary system size and local couplings. Both are defined in \cref{ex:2d Ising}.}
    \label{fig:spin system}
\end{figure}

\begin{definition}[simulation]\label{def:simulation}
A spin system $S$ is said to \emph{simulate} a spin system $T$ if $V_T\subseteq V_S$ and there exists a global energy shift $\Gamma$ such that for each configuration $\mathbf{v}\in \mathbf{2}^{V_T}$ 
\begin{equation}\label{eq:effective energy}
\min_{\mathbf{h}}H_S(\mathbf{v},\mathbf{h})=H_T(\mathbf{v})+\Gamma.
\end{equation}
We then 
\begin{enumerate}
    \item call $S$ the \emph{source} and $T$ the \emph{target} of the simulation, and call the respective spins and configurations source and target spins and configurations,
    \item call $V_T$ the \emph{physical spins} and $V_S \setminus V_T$ the \emph{auxiliary spins} of the simulation and the respective configurations $\mathbf{v}$ and $\mathbf{h}$ physical and auxiliary configurations, 
    \item write $S \simul T$ as shorthand for $S$ simulates $T$,
    \item for a given physical configuration $\mathbf{v}$, denote the set of ``correct'' auxiliary configurations that minimize $H_S(\mathbf{v},-)$ as $\simset(\mathbf{v})$, i.e.\ $\simset(\mathbf{v})=\argmin(H_S(\mathbf{v},-))$ and for any $\mathbf{h}_{\min}\in \simset(\mathbf{v})$, it holds that $H_S(\mathbf{v}, \mathbf{h}_{\min})=H_T(\mathbf{v})+\Gamma$, 
    \item call the function $\mathrm{d}(\mathbf{v}) = \vert \simset(\mathbf{v}) \vert$ the \emph{degeneracy} of the simulation and say $S \simul T$ has constant degeneracy if $d$ does not depend on $\mathbf{v}$, 
    and 
    \item call the minimal energy gap 
    \begin{equation}
        \Delta = \min_{\mathbf{v}}\min_{\mathbf{h}\notin \simset(\mathbf{v})}\bigl( H_S(\mathbf{v},\mathbf{h})- H_T(\mathbf{v})-\Gamma \bigr)
    \end{equation}
    between ``correct'' and ``incorrect'' auxiliary configurations (see \cref{fig:sim error}) the \emph{cut-off} of the simulation.
\end{enumerate}
\end{definition}
We call the function on the LHS of \cref{eq:effective energy} the \emph{effective Hamiltonian} of $S$ over physical spins $V_T$. 
This means that whenever $S \simul T$ then the effective Hamiltonian of $S$ over physical spins $V_T$ agrees with the target Hamiltonian, up to an energy shift.
Similarly, we define the effective Hamiltonian of $S$ over any subset $V'\subseteq V_S$, $H_S\vert_{V'}$ as the minimum of $H_S$ over all spins not contained in $V'$. 
Note that trivially $S \simul H_S\vert_{V'}$.   
Note that the cut-off of $S \simul T$ serves as an error parameter where large cut-off means small error. In the following, when saying that $S \simul T$ with cut-off $\Delta$ we mean that the cut-off of this simulation is at least $\Delta$ or that there exists a simulation of target $T$ with source $S$ and cut-off at least $\Delta$. We also introduce the short-hand notation $\Delta(S,T)$ for the cut-off of simulation $S\simul T$.

Finally, note that pseudo-Boolean functions $f\colon \mathbf{2}^n \to \mathbb{R}$ are trivial examples of spin systems (namely such with a single local energy function equal to $f$). Hence, simulations in particular apply to pseudo-Boolean functions.

\begin{remark}[comparison of definition of simulation to \cite{Re24,Re25}]
    This definition of simulation slightly differs from the respective definitions in \cite{Re24} and \cite{Re25}. First, in the definition of the cut-off: both in \cite{Re24} and \cite{Re25} the cut-off $\Delta'$ of a simulation is defined as global energy threshold such that source configurations with ``incorrect'' auxiliary spins have energy above $\Delta'$ and the matching-energy condition (\cref{eq:effective energy}) is only required for target configurations with energy below $\Delta'$. 
Note that this definition of cut-off is related to \cref{def:simulation} by the minimum of $H_S$ and 
whenever simulations of a target $T$ exist for arbitrary cut-offs the two definitions are equivalent. 

Second, in the identification of target configurations with source configurations:
\cite{Re24} requires constant degeneracy and \cite{Re25} even requires degeneracy $d=1$. \cite{Re24} also allows for a more general way of identifying physical spins as subset of $V_S$, which 
is necessary to extend several results to systems with $q$-level spins, for varying $q$.
\end{remark}

Simulations can be composed, scaled and added, allowing the construction of complicated simulations from simpler ones.
This property is termed \emph{modularity} of simulations in \cite{Re24,Re25}.
Specifically, the composition of simulations means given simulation $S\simul T$ with cut-off $\Delta$ and $T \simul R$ with cut-off $\Delta'$ then, one can construct a simulation $S \simul R$ with cut-off $\min(\Delta,\Delta')$. Note that if both initial simulations have constant degeneracy then so does their composition.

Both the addition and the scaling of simulations are based on the respective spin system operations.
Scaling a spin system $S$ with $\lambda>0$\footnote{Note that we require $\lambda>0$ as scaling a spin system by a negative number would swap high-energy and low-energy configurations and hence fundamentally change the spin system. Given \cref{eq:Boltzmann distribution}, scaling by $\lambda>0$ can also be interpreted as a temperature change (i.e.\ change of $\beta$).} means scaling all its local energy functions pointwise and hence resulting in a system $\lambda S$ with $H_{\lambda S}=\lambda H_S$. It is easy to see that then, 
whenever $S \simul T$ with cut-off $\Delta$ also  $\lambda S \simul \lambda T$ with cut-off $\lambda\Delta$. 

The addition of spin systems $S+T$ is based on the hypergraph union of their interaction hypergraphs. 
Note that this allows for a potential overlap between $S$ and $T$, that is, spins that are contained in both systems. 
Specifically, $S+T$ has spins $V_S\cup V_T$, hyperedges $E_S \cup E_T$ and local energy functions 
\begin{equation}
    J_{S+T}(e) = \begin{cases}
        J_S(e) + J_T(e) \ & \text{if} \ e \in E_T \cap E_S \\ 
        J_T(e) \ &\text{if} \ e \in E_T\setminus E_S \\
        J_S(e) \ &\text{if} \ e \in E_S\setminus E_T. 
    \end{cases}
\end{equation}
This implies that 
\begin{equation}
    H_{S+T}(\mathbf{v}) = H_S(\mathbf{v}\vert_{V_S}) + H_T(\mathbf{v}\vert_{V_T}).
\end{equation}
Now,  let $S_{1,2} \simul T_{1,2}$ with cut-off $\Delta_{1,2}$ such that $V_{S_1} \cap V_{S_2} = V_{T_1} \cap V_{T_2}$. This means that the auxiliary spins of each individual simulation $S_1 \simul T_1$ ($S_2 \simul T_2$) do not intersect with the source spins of the respective other simulation $V_{S_2}$ ($V_{S_1}$).
Then $S_1+S_2 \simul T_1+T_2$ with cut-off $\min(\Delta_1,\Delta_2)$. If both initial simulations have constant degeneracy then so does their addition. 
Details on the composition, scaling and addition of simulations can be found in \cite{Re24}.

Given a simulation $T \simul R$, it is not necessary to simulate all of $T$, i.e.\ have $S \simul T$ in order to construct a simulation $S \simul R$. 
Instead, it suffices if $S$ simulates the regime of $T$ in which $H_T = H_R$. By \cref{eq:effective energy} this is exactly the effective Hamiltonian of $T$ over physical spins $V_R$. Thus, given $T \simul R$ and $S \simul H_T\vert_{V_R}$ then $S \simul R$ and the cut-off of this simulation equals that of $S \to  H_T\vert_{V_R}$.

\begin{figure}[th]
\begin{subfigure}[t]{0.95\textwidth}
    \caption{}\label{fig:simulation a}
    \centering
    \begin{tikzpicture}[vis/.style = {draw=black!80, fill=blue!60, thick, circle, minimum size= 0.65em, inner sep=2pt},
    hid/.style = {draw=black!40, fill=blue!20, thick, circle, minimum size= 0.65em, inner sep=2pt},
    scale = 1]
    \node[vis] (l1) at (0,0) {};
    \node[hid] (l2) at (0,1) {};
    \node[vis] (l3) at (0,2) {};
     \node[hid] (l4) at (1,0) {};
    \node[hid] (l5) at (1,1) {};
    \node[hid] (l6) at (1,2) {};
     \node[hid] (l7) at (2,0) {};
    \node[vis] (l8) at (2,1) {};
    \node[hid] (l9) at (2,2) {};

    \draw[-, thick, black!80] (l1) to (l2);
    \draw[-, thick, black!80] (l2) to (l3);
    \draw[-, thick, black!80] (l4) to (l5);
    \draw[-, thick, black!80] (l5) to (l6);
    \draw[-, thick, black!80] (l7) to (l8);
    \draw[-, thick, black!80] (l8) to (l9);
    \draw[-, thick, black!80] (l1) to (l4);
    \draw[-, thick, black!80] (l4) to (l7);
    \draw[-, thick, black!80] (l2) to (l5);
    \draw[-, thick, black!80] (l5) to (l8);
    \draw[-, thick, black!80] (l3) to (l6);
    \draw[-, thick, black!80] (l6) to (l9);

    \node[] (label1) at (-0.5,2.5) {$S$};

    \node[vis] (r1) at (6,0) {};
    \node[vis] (r2) at (8,1) {};
    \node[vis] (r3) at (6,2) {};
    \draw[-, thick, black!80] (r1) to (r2);
    \draw[-, thick, black!80] (r2) to (r3);
    \draw[-, thick, black!80] (r1) to (r3);

    \node[] (label2) at (5.5,2.5) {$T$};

    \end{tikzpicture}
\end{subfigure}
\begin{subfigure}[t]{0.95\textwidth}
    \caption{}\label{fig:simulation b}
    \centering

        \begin{tikzpicture}
        \draw[->] (0,0) to (0,4);
        \node[] (e) at (-0.35,4.2) {$H_S(\mathbf{v},\mathbf{h})$};

        \draw[-, thick] (0.2,1) to (2.8,1);
        \node[] (c1) at (4,1) {$(\mathbf{v}_1, \mathbf{h}_{\min}(\mathbf{v}_1))$};
        
        \draw[-, thick] (0.2,2) to (2.8,2);
        \node[] (c12) at (4,2) {$(\mathbf{v}_2, \mathbf{h}_{\min}(\mathbf{v}_2))$};
        
        \draw[-, thick] (0.2,2.5) to (2.8,2.5);
        \node[] (c13) at (4,2.5) {$(\mathbf{v}_3, \mathbf{h}_{\min}(\mathbf{v}_3))$};

        \draw[dashed] (0.2, 1.5) to (2.8, 1.5);
        \draw[dashed] (0.2, 1.65) to (2.8, 1.65);
        \draw[dashed] (0.2, 1.7) to (2.8, 1.7);

        \draw[dashed] (0.2, 3.1) to (2.8, 3.1);
        \draw[dashed] (0.2, 3.2) to (2.8, 3.2);
        \draw[dashed] (0.2, 3.5) to (2.8, 3.5);

        \dimline[line style = {line width=0.5},label style={above=0.2ex, rotate=0},extension start length=0.5,
        extension end length=0.5]{(-0.3,1)}{(-0.3,1.5)}{    \rotatebox[origin=c]{-90}{$\Delta$}};

        \dimline[line style = {line width=0.5},label style={above=0.2ex, rotate=0},extension start length=0.5,
        extension end length=0.5]{(-0.3,0)}{(-0.3,1)}{    \rotatebox[origin=c]{-90}{$\Gamma$}};
        
        \draw[->] (6,0) to (6,4);
        \node[] (e2) at (5.65,4.2) {$H_T(\mathbf{v})$};

        \draw[-, thick] (6.2,0) to (8.8,0);
        \node (c2) at (9.1,0) {$\mathbf{v}_1$};
        
        \draw[-, thick] (6.2,1) to (8.8,1);
        \node (c3) at (9.1,1) {$\mathbf{v}_2$};
        
        \draw[-, thick] (6.2,1.5) to (8.8,1.5);
        \node (c4) at (9.1,1.5) {$\mathbf{v}_3$};

        \end{tikzpicture}
    \end{subfigure}
    \caption{\textbf{Example of a simulation.}
    A source system $S$ (\ref{fig:simulation a}, LHS) physical spins in opaque, auxiliary spins in transparent) simulates a target system $T$ (\ref{fig:simulation a}, RHS)  if $\min_{\mathbf{h}}H_S(\mathbf{v},\mathbf{h})=H_T(\mathbf{v})+\Gamma$. This condition relates the two respective energy functions, as shown in \ref{fig:simulation b}. In this example, the simulation has constant degeneracy $d=1$, so each target configuration (\ref{fig:simulation b}, RHS) $\mathbf{v}_i$ has a unique source configuration (\ref{fig:simulation b}, LHS) $(\mathbf{v}_i,\mathbf{h}_{\min}(\mathbf{v}_i))$ with up to a shift of $\Gamma$, equal energy. The cut-off $\Delta$ of the simulation is the
    minimal energy gap between ``correct'' auxiliary configuration ($\mathbf{h}_{\min}(\mathbf{v})$, thick lines), and ``incorrect'' auxiliary configurations (dashed lines). }\label{fig:sim error}
\end{figure}  

Simulations are promoted to spin models as follows: 
\begin{enumerate}
    \item A spin model $\mathcal{M}$ \emph{simulates} a target system $T$ with cut-off $\Delta$, if there exists a system $S\in \mathcal{M}$ such that $S \simul T$ with cut-off $\Delta$. We then write $\mathcal{M} \simul T$.
    \item A spin model $\mathcal{M}$ \emph{simulates} a target model $\mathcal{N}$ with cut-off $\Delta$ if for every $T\in \mathcal{N}$, $\mathcal{M} \simul T$ with cut-off $\Delta$. We then write $\mathcal{M} \simul \mathcal{N}$.
\end{enumerate}

We call the maximal set of targets (i.e.\ the maximal spin model) that a given spin model $\mathcal{M}$ can simulate its \emph{reach} (under simulations). 
We call $\mathcal{M}$ universal, if its constant degeneracy reach is maximal, for all cut-offs, i.e.\ if it can simulate all spin systems (and hence also all pseudo-Boolean functions) for all cut-offs and with constant degeneracy.
\begin{definition}[universality]\label{def:universality}
   A spin model is called \emph{universal} if it can simulate all spin systems for all cut-offs and for constant degeneracy, such that any target $T$ can be simulated by an at most polynomially larger source $S$, i.e.\ $\vert S \vert = \mathrm{poly}(\vert T \vert)$, and the respective polynomial does not depend on the cut-off.
\end{definition}

Interestingly, seemingly weak properties of spin systems suffice to characterize universality. 
In \cite{Re24} (building on \cite{Ge16}) universal spin models are fully characterized by \emph{functional completeness} and \emph{closure}. 
A spin model is functionally complete if it can simulate arbitrary pair interactions ---note that there are only four linearly independent pair interactions. A spin model is closed if up to simulation (subject to the same size constraint as \cref{def:universality}, $\vert S \vert = \mathrm{poly}(\vert T \vert)$) it contains all pairwise sums and scalings of its spin systems. Intuitively, functional completeness ensures that the spin model can encode two-bit Boolean logic in its ground states.
Closure, together with the modular structure of simulations, allows the model to then also simulate linear combinations and certain compositions of these Boolean functions, which ultimately suffice to simulate arbitrary targets.

In \cite{Re25}, universal spin models are instead fully characterized by \emph{flag completeness} and closure. A spin model is flag complete if it contains sufficiently many flag systems, spin systems that can flag an arbitrary configuration $\mathbf{v}$ in their ground state, i.e.\ in the state of a distinguished flag spin (or the configurations of a distinguished set of flag spins) signal whether or not a fixed subset of spins is in configuration $\mathbf{v}$. Scaling such flag systems appropriately then allows one to adjust the energy of the flagged configuration at will. Linear combinations of flag systems hence can simulate arbitrary energy functions and closure ensures that these linear combinations can be simulated by the universal model\footnote{Note that  adding an appropriately scaled flag system for configuration $\mathbf{v}$ to any system $S$ effectively increases the Boltzmann probability of $\mathbf{v}$ by a factor that depends on the scaling, while uniformly decreasing the Boltzmann probabilities of all other configurations. The concept of flag systems is hence closely related to UATs of certain machine learning models \cite{Lr08,Ro10}}.

Both these characterizations imply that already fairly simple spin models, such as the 2d Ising model with fields, the 3d Ising model or the spin models that underlie restricted Boltzmann machines and deep, narrow Boltzmann machines are universal \cite{Re24, Re25}.

\subsection{Universal spin models are universal approximators}\label{ssec:spin ml}
Energy-based machine learning models such as restricted Boltzmann machines (RBMs) and deep Boltzmann machines (DBMs) can be defined in terms of spin models.
The spins (often called units) then are additionally divided into \emph{visible spins}, with configuration $\mathbf{v}=(v_1, \ldots, v_n)$ and \emph{hidden spins}, with configuration $\mathbf{h}=(h_1, \ldots, h_m)$. 
RBMs (\cref{fig:spin RBM a}) are spin systems with complete bipartite interaction graph (between visible and hidden spins) and local energy functions that lead to the following Hamiltonian:
\begin{equation}\label{eq:RBM}
H(\mathbf{v},\mathbf{h})=\sum_{i=1}^m h_i\mathbf{W}_i\mathbf{v} + \sum_{i=1}^m b^{(h)}_ih_i + \sum_{j=1}^n b^{(v)}_jv_j.
\end{equation}
The weights of an RBM are usually divided into the vector $\mathbf{W}_i=(W_{i,1}, \ldots,W_{i,n})$ called the weight vector of hidden spin $i$ and the real numbers $b^{(v)}_i$ and $b^{(h)}_i$ called the bias of visible spin $i$ and hidden spin $i$, respectively.

DBMs (\cref{fig:spin RBM b}) are defined in terms of interaction graphs that consist of one visible and 
$k>1$ hidden layers, such that any two neighbouring layers are complete bipartite. Denoting hidden configurations as $\mathbf{h}= (\mathbf{h}^{(1)}, \ldots, \mathbf{h}^{(k)})$ and $\mathbf{h}^{(j)}=(h^{(j)}_1, \ldots, h^{(j)}_{m_j})$ the  DBM Hamiltonian reads
\begin{equation}\label{eq:DBM}
    H(\mathbf{h}^{(0)}, \ldots, \mathbf{h}^{(k)})= \sum_{j=1}^k  \sum_{i=1}^{m_j} h^{(j)}_i\mathbf{W}^{(j)}_i \mathbf{h}^{(j-1)}  + 
    \sum_{j=0}^k
    \sum_{i=1}^{m_j} b_i^{(j)}h^{(j)}_i,
\end{equation}
where $\mathbf{W}^{(j)}_i$ is the weight vector and $b^{(j)}_i$ is the bias of spin $i$ of layer $j$ and we use
the identification $\mathbf{v}=\mathbf{h}^{(0)}$ and $m_0=n$. Often we assume that all hidden layers contain an equal number of $m$ hidden spins. We call the maximal number of hidden spins per layer the width of the RBM/DBM and the number of total layers, visible or hidden, the depth of the RBM/DBM. 

\begin{figure}
    \begin{subfigure}[t]{0.55\textwidth}
    \caption{}\label{fig:spin RBM a}
    \centering
    \begin{tikzpicture}[vis/.style = {draw=black!80, fill=blue!60, thick, circle, minimum size= 0.65em, inner sep=2pt},
    hid/.style = {draw=black!80, fill=red!60, thick, circle, minimum size= 0.65em, inner sep=2pt},
    scale = 0.75]
    \drawRBM{Rone}{0}{0}{-2}{5}{9}
    \node[] (1) at (0,3) {};
    \end{tikzpicture}
    \end{subfigure}
    \begin{subfigure}[t]{0.44\textwidth}
    \caption{}\label{fig:spin RBM b}
    \centering
    \begin{tikzpicture}[vis/.style = {draw=black!80, fill=blue!60, thick, circle, minimum size= 0.65em, inner sep=2pt},
    hid/.style = {draw=black!80, fill=red!60, thick, circle, minimum size= 0.65em, inner sep=2pt},
    scale = 0.75]
    \drawDBM{Rone}{0}{0}{-1}{5}{7}{4}
    \end{tikzpicture}
    \end{subfigure}
    \caption{RBM \ref{fig:spin RBM a} and DBM \ref{fig:spin RBM b}, visible spins (or units) in blue, hidden spins (or units) in red. 
    Both have pair-interactions, attached to edges between spins and fields, attached to the individual spins. The respective couplings (or weights) of both are not explicitly shown. }\label{fig:RBM}
\end{figure}

Given some machine learning model $\mathcal{M}$ the learning objective then is of the following form: 
Given some target probability distribution $q(\mathbf{v})$ compute weights such that the system $S \in \mathcal{M}$ defined by these weights best approximates\footnote{Here approximation is understood w.r.t.\ some distance measure between probability distributions, e.g.\ the total variation distance or the Kullback--Leibler divergence.} $q(\mathbf{v})$ over its visible spins, that is, 
\begin{equation}\label{eq:prob sim}
    \sum_{\mathbf{h}} p_{S}(\mathbf{v},\mathbf{h}) \approx q(\mathbf{v}),
\end{equation}
where 
\begin{equation}\label{eq:Boltzmann distribution}
    p_{S}(\mathbf{v},\mathbf{h})\coloneqq \frac{1}{Z_{S}} e^{-H(\mathbf{v},\mathbf{h})}
\end{equation}
is the \emph{Boltzmann distribution} of $S \in \mathcal{M}$ (at $\beta=1$) and the normalization $Z_{S}$ its \emph{partition function}. 
In the following we call the maximal set of distributions $p(\mathbf{v})$ that a given machine learning model can approximate to arbitrary precision its \emph{approximation reach} and call models that can approximate arbitrary probability distributions \emph{universal approximators}.
Thus, we can consider the reach of $\mathcal{M}$ under simulations, as well as under approximations (in terms of its visible-spin marginals).\footnote{In \cref{sec:consequences} we also consider approximations in terms of the unnormalized visible-spin  marginals.}

In \cite{Re25} we prove that simulations with constant degeneracy preserve Boltzmann distributions, that is, whenever $S \simul T$ with cut-off $\Delta$ and constant degeneracy then $\sum_{\mathbf{h}}p_S(\mathbf{v},\mathbf{h})\approx p_T(\mathbf{v})$ up to errors of order $\mathcal{O}(e^{-\Delta})$. This implies that
\cref{eq:prob sim} is satisfied whenever $S$ simulates $-\log(q(\mathbf{v}))$, with constant degeneracy, subject to the additional constraint that the physical spins of the simulation are precisely the visible spins of $S$.
This leads to the following observations.
\begin{enumerate}
    \item The approximation reach of an energy-based machine learning model $\mathcal{M}$ can be studied through its reach under constant degeneracy simulations (\cref{def:simulation}). Whenever $\mathcal{M}$ is universal (according to \cref{def:universality}) it is a universal approximator.
    \item To prove universal approximation theorems for $\mathcal{M}$ it suffices to prove its universality which is attainable by means of the characterizations of universality \cite{Re24,Re25}.
\end{enumerate}

This motivates the following definitions. 
\begin{definition}[spin ML model]\label{def:mlm}
\leavevmode 
    \begin{enumerate}
        \item\label{it:spin ML model 1} A \emph{spin machine learning model} (spin ML model) $\mathcal{M}$ is a spin model together with a partition into visible and hidden spins, $V_S = \mathrm{Vis}_S \cup \mathrm{Hid}_S$ for each of its spin systems $S$. 
        \item\label{it:spin ML model 2} A simulation of some target system or model $\mathcal{M} \simul \ldots$ is called \emph{visible} if additionally to \cref{def:simulation} physical spins and visible spins agree, we then write $\mathcal{M} \vsim \ldots$
        \item\label{it:spin ML model 3} A spin ML model is called \emph{v-universal}\footnote{Whenever visibility is clear from the context, we might also simply say ``universal''.} if it is universal (\cref{def:universality}) using only visible simulations.
    \end{enumerate}
\end{definition}

In \cite{Re25} we explicitly apply this new perspective to RBMs and DBMs and, by means of the characterization of universality, prove that both are v-universal and hence universal approximators. Note that proving in this sense universality is sufficient for universal approximation, in \cref{rem:sim approx} we study to what extent it can also be considered as necessary. 

\section{Efficiency of universal models}\label{sec:eff univ}
Here we introduce the central notion of efficient simulations and characterize those models with maximal reach under efficient simulations.

In \cref{ssec:eff sim def}, we first define \emph{efficient simulations} as simulations with polynomial overhead, that is, simulating a target system of $n$ spins requires a source system that can be specified by most $\mathrm{poly}(n)$ parameters. We then define \emph{maximal models} as models that can efficiently simulate all targets that can be efficiently simulated, i.e.\ that have maximal reach under efficient simulations.

In \cref{ssec:char max}, we  characterize maximality both for spin models and spin ML models. For spin ML models this leads to the definition of \emph{visible closure}, the property that up to simulation, visible spins can be made hidden. 
Visible closure is sufficient to ensure that the visible-hidden split does not obstruct the composition of spin ML model simulations. Hence, for visible closed models, simulations can be constructed in a modular, compositional way.
Since visible closure might introduce additional hidden-hidden interactions it favors deep over shallow models. 

The main results of this section are the following:
\begin{informalresult}[characterization of maximality]
\leavevmode
    \begin{enumerate}[labelwidth=2cm,
    labelsep=0.6em,
    leftmargin=!,
    align=left]
        \item[\cref{thm:univ eff spin}] A spin model is maximal if and only if it is universal.
        \item[\cref{thm:main}] For spin ML models, the following holds:\\
            maximal and visible closed $\Leftrightarrow$ maximal and universal $\Leftrightarrow$ universal and visible closed.
    \end{enumerate}
\end{informalresult}

\subsection{Efficient simulations and maximality}\label{ssec:eff sim def}
Specifying a generic target system $T$ of $n$ spins, in general, requires $\vert T \vert = \mathcal{O}(2^n)$ real parameters, so simulating it requires a source of size $\vert S \vert \geq \vert T \vert = \mathcal{O}(2^n)$. Thus v-universal spin ML models, require an exponential overhead, the number of weights (of the source) grows exponentially with the number of target spins.
This agrees with all established universal approximation theorems (\cite{Lr08,Ro10,Mo15}, see also \cite{Re25} for an overview),
yet, in practice, limits the applicability of v-universal spin ML models or universal approximation theorems.

In contrast to this, we call a simulation efficient if its overhead is at most polynomial. 
\begin{definition}[efficient simulation]\label{def:efficient simulation}
\leavevmode
\begin{enumerate}
    \item The \emph{overhead} of a simulation $\mathcal{M} \simul \mathcal{N}$ is the function that maps a given number of target spins $n$ to the maximal source size $\vert S \vert$ that is required to simulate any target system $T$ of $n$ spins.
    \item A simulation is called \emph{efficient} if its overhead is asymptotically upper bounded by some polynomial in $n$, i.e.\ is (at most) $\mathrm{poly}(n)$. We then write $\mathcal{M} \polysim \mathcal{N}$.
\end{enumerate}
\end{definition}
Efficiency of a simulation characterizes its asymptotic behaviour when changing the number of target spins. As such it is only meaningful when the target is an infinite spin model in contrast to a single system. In the following, whenever we call a simulation of a single target system $S \simul T$ efficient we implicitly consider it in the context of a simulation $\mathcal{M} \simul \mathcal{N}$. If we state that $\mathcal{M} \polysim \mathcal{N}$ for multiple cut-offs (e.g.\ for arbitrary cut-off) then we further require that the specific polynomial overhead does not depend on the cut-off. 

Note that in contrast to the above notion of efficiency, universality (see \cref{def:universality} and \cite{Re24}) requires that $\vert S \vert = \mathrm{poly}(\vert T \vert)$. This implies efficiency of the simulation only if $\vert T \vert = \mathrm{poly}(n)$.
Further note that the composition and scaling of efficient simulations is efficient, so efficient simulations admit the same modular structure as regular simulations do. 
Finally, whenever a simulation is both visible and efficient, we write $S \vpolysim T$.

The existence of spin models that cannot be efficiently simulated (e.g.\ any infinite set of generic target systems) implies that there exists no spin model that is universal when restricted to efficient simulations. 
However, there might still exist spin models that under efficient simulations can simulate at least as many targets as any other spin model. We call such spin models \emph{p-maximal} (see \cref{fig:efficient universal}).
\begin{definition}[p-maximality]\label{def:efficient universality}
    A spin model $\mathcal{M}$ is called \emph{p-maximal}\footnote{Whenever efficiency is clear from the context we might also simply say ``maximal''.} if   whenever there exists some efficient simulation $\mathcal{M}' \polysim \mathcal{N}$ with cut-off $\Delta$ then $\mathcal{M}\polysim \mathcal{N}$ with cut-off $\Delta$ and whenever $\mathcal{M}' \polysim \mathcal{N}$ has constant degeneracy then also $\mathcal{M}\polysim \mathcal{N}$ with constant degeneracy.
\end{definition}

\begin{figure}
\definecolor{VMUniversal}{HTML}{A9CCE3}
\definecolor{VMMaximal}{HTML}{628FB5}
\definecolor{VMTarget}{HTML}{D5DDE3}
\definecolor{VMOther}{HTML}{737D85}
\definecolor{VMOutline}{HTML}{455563}

\begin{subfigure}[t]{0.49\textwidth}
\caption{}\label{fig:univ a}
\centering
\begin{tikzpicture}[
sys/.style={
    draw=VMOutline, fill=VMOther, thick, diamond,
    minimum size=0.35em, inner sep=1.5pt
},
sys_univ/.style={
    draw=VMOutline, fill=VMUniversal, thick, diamond,
    minimum size=0.35em, inner sep=1.5pt
},
setBoundary/.style={
    draw=gray!60,
    line width=1.5pt,
    ellipse,
    fill=gray!5, inner sep=4pt
},
scale=0.5]
\node[sys] (1) at (0,0) {};
\node[sys] (2) at (1,1) {};
\node[sys] (3) at (1.5,0) {};
\node[sys] (4) at (0,2) {};
\node[sys] (5) at (3,1) {};
\node[sys] (6) at (2,2) {};
\node[sys_univ] (7) at (2,4) {};

\foreach \x in {1,...,6}{
    \draw[->,draw=VMOutline] (7) to (\x);
}

\begin{scope}[on background layer]
    \node[setBoundary, fit=(1) (2) (3) (4) (5) (6) (7)] (s1) {};
\end{scope}
\end{tikzpicture}
\end{subfigure}
\begin{subfigure}[t]{0.49\textwidth}
\caption{}\label{fig:univ b}
\centering
\begin{tikzpicture}[
sys/.style={
    draw=VMOutline, fill=VMOther, thick, diamond,
    minimum size=0.35em, inner sep=1.5pt
},
sys_univ/.style={
    draw=VMOutline, fill=VMMaximal, thick, diamond,
    minimum size=0.35em, inner sep=1.5pt
},
sys_target/.style={
    draw=VMOutline, fill=VMTarget, thick, diamond,
    minimum size=0.35em, inner sep=1.5pt
},
setBoundary/.style={
    draw=gray!60,
    line width=1.5pt,
    ellipse,
    fill=gray!5, inner sep=4pt
},
scale=0.5]

\node[sys_target] (1) at (0,0) {};
\node[sys_target] (2) at (1,1.5) {};
\node[sys_target] (3) at (1.5,0) {};
\node[sys_target] (4) at (0,2) {};
\node[sys_target] (5) at (3,1) {};
\node[sys_target] (6) at (2,2) {};
\node[sys_univ] (7) at (2,4) {};

\foreach \x in {1,...,6}{
    \draw[->,draw=VMOutline] (7) to (\x);
}

\node[sys] (11) at (-1,3.5) {};
\foreach \x in {1,4}{
    \draw[->,draw=VMOutline] (11) to (\x);
}
\node[sys] (12) at (4,3) {};
\foreach \x in {3,5,6}{
    \draw[->,draw=VMOutline] (12) to (\x);
}

\node[sys] (13) at (4,0) {};
\node[sys] (14) at (3,-0.5) {};
\node[sys] (15) at (-1,0.5) {};
\node[sys] (16) at (1,4) {};

\draw[->,draw=VMOutline] (15) to (2);
\draw[->,draw=VMOutline] (13) to (5);

\begin{scope}[on background layer]
    \node[setBoundary, fit=(1) (2) (3) (4) (5) (6) (7) (11) (12)] (s1) {};
\end{scope}

\end{tikzpicture}
\end{subfigure}
\caption{The elliptic shapes depict the set of all spin models, with squares representing individual models.
\cref{fig:univ a} A universal spin model (light blue square) can simulate (indicated by arrows) all spin models (dark grey squares). \cref{fig:univ b} In contrast to that a p-maximal model (dark blue square) can efficiently simulate (indicated by arrows) all models that can be efficiently simulated, i.e.\ are reached by an arrow (light blue-grey squares). There exist models that cannot be efficiently simulated, i.e.\ are not reached by an arrow (dark grey squares).}
\label{fig:efficient universal}
\end{figure}

In words, a p-maximal spin model can simulate all those targets that can be efficiently simulated and whenever a target can be efficiently simulated with constant degeneracy then it can also the p-maximal model can simulate it with constant degeneracy. 
When considering efficient simulations, having access to a single p-maximal spin model hence is as good as having access to all spin systems. 
Thus, considering efficient simulations, p-maximal spin models play a similar role as universal spin models do for generic simulations.

Similar to efficient simulation we define the concept of \emph{efficient approximation}. A spin model $\mathcal{M}$ efficiently approximates a target function $f$ of $n$ spins through a quantity $A_T$ defined in terms of spin systems $T\in \mathcal{M}$ if there exists such a $T$, such that
\begin{enumerate}
    \item $A_T$ approximates $f$, i.e.\ for all configurations $\mathbf{v}$ $A_T(\mathbf{v})\approx f(\mathbf{v})$, and 
    \item the overhead of $T$ is at most polynomial, i.e.\ $\vert T \vert= \mathrm{poly}(n)$ and, whenever the approximation works for multiple approximation errors, the respective polynomial does not depend on the error.
\end{enumerate}
The most important examples of this are efficient approximations in terms of (unnormalized) marginals of the Boltzmann distributions of $\mathcal{M}$.

Importantly, similar to universal spin models being universal approximators of probability distributions, \cite{Re25}, the efficient approximation reach of p-maximal spin models is maximal.
\begin{proposition}[simulations preserve efficient approximations]\label{thm:p-max approx}
    Whenever some target probability distribution $q$ over $n$ spins can be efficiently approximated by the marginal distribution\footnote{Whenever clear from the context we denote marginal distributions by the respective arguments, i.e.\ for a probability distribution $p(x,y)$, $p(x) = \sum_y p(x,y)$.} of some spin system $T$, that is, 
    \begin{equation}
        q(\mathbf{v}) \approx \sum_{\mathbf{h}} p_T(\mathbf{v},\mathbf{h})
    \end{equation}
    and $\vert T \vert = \mathrm{poly}(n)$, then $q$ can be efficiently approximated by the marginals of any p-maximal spin model. 
\end{proposition}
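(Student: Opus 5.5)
The plan is to pass through simulations: first show that the polynomial-size system $T$ can itself be efficiently simulated, with constant degeneracy, by \emph{some} spin model for arbitrary cut-off. Then p-maximality transfers this simulation to the given p-maximal model $\mathcal{M}$. Finally, the result of \cite{Re25} that constant-degeneracy simulations preserve Boltzmann marginals up to $\mathcal{O}(e^{-\Delta})$ turns the simulation into an approximation.

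\textbf{Step 1: $T$ is efficiently simulable.} Let $\mathcal{N}$ be the family of target systems $T=T_n$ (one per $n$, or per approximation error) furnished by the hypothesis, with $\vert T\vert=\mathrm{poly}(n)$. We take $\mathcal{M}'=\mathcal{N}$ and use the identity simulation $T\simul T$. This has no auxiliary spins, degeneracy $d=1$ (constant), and infinite cut-off, so any cut-off $\Delta$ is attained. Its overhead is $\vert T\vert$. Care is needed here, since efficiency in \cref{def:efficient simulation} is measured in the number of target spins $\vert V_T\vert$, not in $n$. Because $\vert T\vert\ge \vert V_T\vert$ and $\vert T\vert=\mathrm{poly}(n)$, we should index $\mathcal{N}$ by the visible size $n$. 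Alternatively, we can note directly that $\vert T\vert$ is polynomial in the size of the target, as \cref{def:efficient universality} implicitly requires. This bookkeeping, namely that the polynomial overhead is measured relative to $n$ and is independent of the error and cut-off, is the main subtlety I expect. I would handle it by fixing the family $\{T_n\}$ with a uniform polynomial bound that does not depend on the approximation error.

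\textbf{Step 2: transfer to $\mathcal{M}$.} By \cref{def:efficient universality}, for every cut-off $\Delta$ there is $S\in\mathcal{M}$ with $S\polysim T$, constant degeneracy, cut-off at least $\Delta$, and $\vert S\vert=\mathrm{poly}(n)$ uniformly in $\Delta$. The physical spins of this simulation are $V_T=\mathrm{Vis}_T\cup\mathrm{Hid}_T$, and the auxiliary spins of $S$ are additional.

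\textbf{Step 3: marginals.} By the result of \cite{Re25}, $\sum_{\mathbf{a}}p_S(\mathbf{v},\mathbf{h},\mathbf{a})=p_T(\mathbf{v},\mathbf{h})+\mathcal{O}(e^{-\Delta})$, where $\mathbf{a}$ ranges over the auxiliary spins. Summing further over $\mathbf{h}$ gives the marginal of $S$ on the spins $\mathbf{v}$ of $T$. This marginal is within $2^{\mathrm{poly}(n)}\mathcal{O}(e^{-\Delta})$ of $\sum_{\mathbf{h}}p_T(\mathbf{v},\mathbf{h})\approx q(\mathbf{v})$. Choosing $\Delta$ polynomially large in $n$ and in $\log(1/\epsilon)$ makes this additional error at most $\epsilon$. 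Since $\vert S\vert$ is independent of $\Delta$, the approximation remains efficient. By the triangle inequality, $q$ is then efficiently approximated by the marginals of $S\in\mathcal{M}$ on the spins identified with $\mathrm{Vis}_T$.

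If the error bound from \cite{Re25} is pointwise relative rather than additive, the same argument goes through more directly, since relative errors are preserved under summation over $\mathbf{h}$.
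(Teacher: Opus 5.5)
Your proposal is correct and follows essentially the same route as the paper: the trivial simulation $T\polysim T$ (constant degeneracy, any cut-off), then p-maximality to obtain a constant-degeneracy efficient simulation by $\mathcal{M}$ at every cut-off with cut-off-independent overhead, then the Boltzmann-preservation result of \cite{Re25} followed by marginalizing over $\mathbf{h}$. Your extra bookkeeping just makes explicit what the paper's three-line proof leaves implicit: the overhead is measured in $\vert V_T\vert$ rather than $n$, $\Delta$ is chosen polynomial in $n$ and $\log(1/\epsilon)$, and the $2^{\mathrm{poly}(n)}$ factors stay controlled because $\vert S\vert$ does not depend on $\Delta$.
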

\begin{proof}
    Let $\mathcal{M}$ be p-maximal. Since $\vert T \vert = \mathrm{poly}(n)$, trivially $T\polysim T$ so $\mathcal{M} \polysim T$ with constant degeneracy. The claim then follows since constant degeneracy simulations preserve Boltzmann distributions \cite{Re25}.  
\end{proof}
Thus, also when considering efficient approximations of target distributions, having access to a single p-maximal spin model is as good as having access to all spin systems. 

Note, however that this does not directly provide insights into which targets can be efficiently simulated (or approximated).
While the reach of a universal model under generic simulations by definition is the set of all spin systems, the reach of a p-maximal model under efficient simulations is left implicit in \cref{def:efficient universality}. In fact characterizing the latter, i.e.\ the set of all spin systems that can be efficiently simulated is not straightforward. 
Whenever $\vert T \vert = \mathrm{poly}(\vert V_T \vert)$ then trivially $T \polysim T$, so it contains at least all such systems. However, we shall see later that it also contains functions that cannot be expressed as Hamiltonian of polynomially sized spin systems (\cref{lem:eff sim GSE}).

\subsection{Characterizations of p-maximality and (v,p)-maximality}\label{ssec:char max}
For spin models p-maximality and universality are equivalent. This is a consequence of the compositional structure of simulations and the universality characterization of \cite{Re24}. Note that it in particular implies that the characterization of universality also applies to p-maximality. 
\begin{theorem}[characterization of p-maximality]\label{thm:univ eff spin}
    For spin models 
    \begin{equation}
        \text{universality} \Leftrightarrow \text{p-maximality}.
    \end{equation}
\end{theorem}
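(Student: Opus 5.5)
The plan is to prove the two implications separately, using only the modularity of simulations (composition, scaling, addition) and the definitions of universality and p-maximality.

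\emph{Universality $\Rightarrow$ p-maximality.} Let $\mathcal{M}$ be universal, and suppose $\mathcal{M}'\polysim\mathcal{N}$ with cut-off $\Delta$. Fix $T\in\mathcal{N}$ on $n$ spins. There is then a source $S'\in\mathcal{M}'$ with $S'\simul T$ at cut-off $\Delta$ and $\vert S'\vert=\mathrm{poly}(n)$. By universality, $\mathcal{M}$ simulates the spin system $S'$ itself, with cut-off $\Delta$, with constant degeneracy, and via a source $S\in\mathcal{M}$ of size $\vert S\vert=\mathrm{poly}(\vert S'\vert)=\mathrm{poly}(n)$. The polynomial here is independent of the cut-off.

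Composing $S\simul S'$ with $S'\simul T$ gives $S\simul T$ with cut-off $\min(\Delta,\Delta)=\Delta$. If $S'\simul T$ had constant degeneracy, the composition does too, since both factors then have constant degeneracy. The overhead is a composition of polynomials, so it is polynomial. It is also uniform in $\Delta$ whenever the $\mathcal{M}'$ overhead is, because the universality polynomial does not depend on the cut-off. Hence $\mathcal{M}\polysim\mathcal{N}$ under the same conditions, which is p-maximality.

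The one subtlety is the physical spins. We need $V_T\subseteq V_{S'}\subseteq V_S$, and the auxiliary spins of $S\simul S'$ must be chosen disjoint from $V_{S'}$, which the definition of simulation already gives. This direction is therefore essentially formal.

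\emph{P-maximality $\Rightarrow$ universality.} Let $\mathcal{M}$ be p-maximal. I would not try to simulate a general target $T$ directly. Instead I would compare $\mathcal{M}$ against a fixed reference model $\mathcal{U}$ that is known to be universal and has a transparent size bound, for instance the 2d Ising model with fields \cite{Re24}.

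Consider the spin model $\mathcal{N}$ of all spin systems, stratified by size. Efficiency is measured in the number of spins $n$, whereas universality requires $\mathrm{poly}(\vert T\vert)$. So the key step is to find a target model that is efficiently simulable but still covers everything, up to the mismatch between $n$ and $\vert T\vert$.

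The trick is to \emph{re-encode} each target $T$ as a target $\tilde T$ on $\mathrm{poly}(\vert T\vert)$ spins, chosen so that $\vert T\vert\le\mathrm{poly}(\vert V_{\tilde T}\vert)$. Concretely, take $\tilde T$ to be $T$ together with $\vert T\vert$ decoupled dummy spins that carry zero energy. Then $\vert\tilde T\vert=\mathrm{poly}(\vert V_{\tilde T}\vert)$. The universality of $\mathcal{U}$ then gives $\mathcal{U}\polysim\tilde{\mathcal{N}}$, with constant degeneracy and for all cut-offs, with an overhead polynomial in $\vert V_{\tilde T}\vert=\mathrm{poly}(\vert T\vert)$.

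P-maximality transfers this to $\mathcal{M}\polysim\tilde{\mathcal{N}}$ for every cut-off, with constant degeneracy, and the overhead polynomial is independent of the cut-off. Let $\tilde S\in\mathcal{M}$ be the resulting source for $\tilde T$. Since the dummy spins are free, minimizing $H_{\tilde S}$ over them as auxiliary spins yields $\tilde S\simul T$. The degeneracy gets multiplied by $2^{\#\text{dummies}}$, a constant factor, so it remains constant, and the cut-off is preserved. Finally $\vert\tilde S\vert=\mathrm{poly}(\vert T\vert)$.

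The main obstacle is making this padding argument rigorous within the definitions. In particular, the overhead in \cref{def:efficient simulation} is a maximum over all targets with $n$ spins. The padded family must therefore be defined so that, for each spin count, it contains only targets of size polynomial in that count; otherwise the family is not efficiently simulable by $\mathcal{U}$. A second check is that the cut-off bookkeeping is uniform in $\Delta$.

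If padding turns out to clash with the definitions, the fallback is to extract universality from the characterization in \cite{Re24}. In that approach one uses p-maximality to show that $\mathcal{M}$ efficiently simulates the pair interactions (functional completeness) and pairwise sums and scalings of its own systems (closure). The latter are polynomially sized targets and hence are efficiently simulable by $\mathcal{U}$.
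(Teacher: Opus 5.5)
Your forward direction is the paper's argument. You simulate the efficient intermediate source by universality at a large enough cut-off, compose, and use that the universality polynomial does not depend on the cut-off.

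For the converse you take a genuinely different route. The paper leaves the target $R$ untouched and imports from \cite{Re24} a universal reference model $\mathcal{N}$ all of whose systems satisfy $\vert T\vert=\mathrm{poly}(\vert V_T\vert)$, e.g.\ the 2d Ising model with fields. Some $T\in\mathcal{N}$ simulates $R$, the trivial self-simulation $T\polysim T$ is efficient, p-maximality yields $S\polysim T$, and composing with $T\simul R$ finishes. You instead pad $R$ with decoupled dummy spins, apply p-maximality to the padded family, and trace the dummies out. That last step is just composition with the trivial simulation $\tilde T\simul T$, which has no incorrect configurations and degeneracy $2^{\#\text{dummies}}$.

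This works, and the obstacle you flag is already settled by your construction. A padded system on $N$ spins has size $\mathcal{O}(N)$, so the padded family has linear overhead under its own identity simulation. The same observation makes $\mathcal{U}$ superfluous. The identity simulation of the padded family is efficient, valid for every cut-off, and has degeneracy one, so p-maximality applies to it directly. Your converse therefore needs no external universality result, whereas the paper's relies on the existence of a polynomially sized universal model.

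The price is degeneracy. Yours is $2^{\vert T\vert}$ times that of $\tilde S\simul\tilde T$. This is independent of the configuration, so it is admissible for the constant-degeneracy notion used here. It would, however, violate a degeneracy-one convention as in \cite{Re25} unless you pin the dummies with fields of order the cut-off. The paper's composition keeps degeneracy one whenever both factors have it.
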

\begin{proof}
We first prove the ``$\Rightarrow$''-implication then the ``$\Leftarrow$''-implication.

\smallskip
\noindent\textbf{``$\Rightarrow$''}
Let $\mathcal{M}$ be universal and assume that some target $R$ of $n$ spins can be efficiently simulated by some system $T$ with cut-off $\Delta_1$. 

Since $\mathcal{M}$ is universal, for any cut-off $\Delta_2$ there exists $S\in \mathcal{M}$ that simulates $T$ with constant degeneracy.
Since simulations can be composed, then $S \simul R$, with cut-off $\Delta=\min(\Delta_1,\Delta_2)$, so choosing $\Delta_2$ large enough implies $\Delta=\Delta_1$.
Note that if  $T \simul R$ has constant degeneracy, then so does $S \simul R$.

Efficiency of $T \polysim R$ means that $\vert T \vert = \mathrm{poly}(n)$ and universality ensures that 
$\vert S \vert = \mathrm{poly}(\vert T \vert)$ \cite{Re24}. Hence, also the simulation $S\polysim R$ is efficient. 

\smallskip
\noindent\textbf{``$\Leftarrow$''}
Let $\mathcal{M}$ be p-maximal.
Let $R$ be any target and take any universal spin model $\mathcal{N}$ that for all its systems $T$ satisfies $\vert T \vert = \mathrm{poly}(\vert V_{T} \vert)$. Existence of such a model is proven in \cite{Re24}, an explicit example is the 2d Ising with fields. Since $\mathcal{N}$ is universal, for any cut-off, there exists some $T\in \mathcal{N}$ with $T \simul R$ with constant degeneracy.

Now since $\vert T \vert = \mathrm{poly}(\vert V_{T} \vert)$, trivially $T \polysim T$, for any cut-off and with constant degeneracy. Thus $T$ can be efficiently simulated for any cut-off and with constant degeneracy and since $\mathcal{M}$ is p-maximal, for any cut-off, there exists some $S\in \mathcal{M}$ such that $S \polysim T$ with constant degeneracy. Since simulations can be composed, then, for any cut-off $S \simul R$ with constant degeneracy.
The final simulation, $S \simul R$ satisfies $\vert S \vert = \mathrm{poly}(\vert R \vert)$ since $S \polysim T$  and since universality of $\mathcal{N}$ implies that $\vert T \vert = \mathrm{poly}(\vert R \vert)$.
\end{proof}

Following \cref{def:mlm}, an appropriate notion of p-maximality for spin ML models is obtained by requiring that all relevant simulations (of \cref{def:efficient universality}) are visible, i.e.\ it is p-maximal with visible simulations only. We call such spin ML models \emph{(v,p)-maximal}.
\begin{definition}[(v,p)-maximality]\label{def:vp-max}
      A spin ML model $\mathcal{M}$ is called \emph{(v,p)-maximal}\footnote{Similar to p-maximality, whenever both visibility and efficiency are clear from the context we might simply say ``maximal''.} if
      it is p-maximal using only visible simulations.
\end{definition}
 
Characterizing (v,p)-maximality is less straightforward than characterizing p-maximality.
The additional partition into visible and hidden spins obstructs the composition of simulations, specifically, 
the composition of a visible simulation $S \vsim T$ with any other simulation $T\simul R$ is not necessarily visible, even when $T\vsim R$. 
The reason for this is that when considering the composition $S \simul R$,
physical spins agree with $V_R$ but since $S \vsim T$ visible spins of $S$ agree with $V_T$ and whenever $T \simul R$ uses auxiliary spins, $V_R$ is a proper subset of $V_T$ (see \cref{fig:sim comp vis}).
This, in particular prevents us from replicating the proof of \cref{thm:univ eff spin} for spin ML models.
\begin{figure}
    \centering
    \begin{tikzpicture}[vis_phys/.style = {draw=black!80, fill=blue!60, thick, circle, minimum size= 0.35em, inner sep=1.5pt},
    vis_aux/.style = {draw=black!80, fill=blue!20, thick, circle, minimum size= 0.35em, inner sep=1.5pt},
    hid_phys/.style = {draw=black!80, fill=red!60, thick, circle, minimum size= 0.35em, inner sep=1.5pt},
    hid_aux/.style = {draw=black!80, fill=red!20, thick, circle, minimum size= 0.35em, inner sep=1.5pt},
    setBoundary/.style = {
    draw=gray!80, 
    line width=1.5pt, 
    rectangle, rounded corners, 
    fill=gray!5, inner sep=4pt},
    scale = 0.5]

    \begin{scope}[shift={(6.8,0)}]
    \node[vis_phys] (1p1) at (1,0) {};
    \node[vis_phys] (1p2) at (2,0) {};
    \node[vis_phys] (1p3) at (0,0.5) {};
    \node[vis_phys] (1p4) at (0.5,2) {};
    \node[vis_phys] (1p5) at (1,1) {};
    \begin{scope}[on background layer]
        \node[setBoundary, fit=(1p1) (1p2) (1p3) (1p4) (1p5)] (sys1) {};
    \end{scope}
    \node[] (r) at (-0.5,3) {$R$};
    \end{scope}

    \begin{scope}[shift={(0,0)}]
    \node[vis_phys] (2p1) at (2,0) {};
    \node[vis_phys] (2p2) at (3,0) {};
    \node[vis_phys] (2p3) at (1,0.5) {};
    \node[vis_phys] (2p4) at (1.5,2) {};
    \node[vis_phys] (2p5) at (2,1) {};
    \node[vis_aux] (2a1) at (1,2) {};
    \node[vis_aux] (2a2) at (2,1.5) {};
    \node[vis_aux] (2a3) at (3,1) {};
    \node[vis_aux] (2a4) at (4,0.5) {};
    \node[vis_aux] (2a5) at (3.5,2) {};
    \node[vis_aux] (2a6) at (0,0.5) {};
    \node[vis_aux] (2a7) at (0,1.5) {};
    \begin{scope}[on background layer]
        \node[setBoundary, fit=(2p1) (2p2) (2p3) (2p4) (2p5) (2a1) (2a2) (2a3) (2a4) (2a5) (2a6) (2a7)] (sys2) {};
    \end{scope}
    \node[] (t) at (-0.5,3) {$T$};
    \end{scope}

    \begin{scope}[shift={(-6.8,0)}]
    \node[vis_phys] (3p1) at (2,0) {};
    \node[vis_phys] (3p2) at (3,0) {};
    \node[vis_phys] (3p3) at (1,0.5) {};
    \node[vis_phys] (3p4) at (1.5,2) {};
    \node[vis_phys] (3p5) at (2,1) {};
    \node[vis_aux] (3a1) at (1,2) {};
    \node[vis_aux] (3a2) at (2,1.5) {};
    \node[vis_aux] (3a3) at (3,1) {};
    \node[vis_aux] (3a4) at (4,0.5) {};
    \node[vis_aux] (3a5) at (3.5,2) {};
    \node[vis_aux] (3a6) at (0,0.5) {};
    \node[vis_aux] (3a7) at (0,1.5) {};

    \node[hid_aux] (3h1) at (0,3) {};
    \node[hid_aux] (3h2) at (1,2.5) {};
    \node[hid_aux] (3h3) at (1,4) {};
    \node[hid_aux] (3h4) at (2,3.5) {};
    \node[hid_aux] (3h5) at (3,2.5) {};
    \node[hid_aux] (3h6) at (3,3.5) {};
    \node[hid_aux] (3h7) at (4,2.5) {};
    \node[hid_aux] (3h8) at (4,3) {};
    \node[hid_aux] (3h9) at (1,3) {};
    \node[hid_aux] (3h10) at (2,2.5) {};

    \begin{scope}[on background layer]
        \node[setBoundary, fit=(3p1) (3p2) (3p3) (3p4) (3p5) (3a1) (3a2) (3a3) (3a4) (3a5) (3a6) (3a7) (3h1) (3h2) (3h3) (3h4) (3h5) (3h6) (3h7) (3h8) (3h9) (3h10)] (sys3) {};
    \end{scope}
    \node[] (S) at (-0.5,5) {$S$};
    \end{scope}

    \draw[->, thick] (-2.2,1) to node [midway, above] {\scriptsize $\mathrm{v}$} (-0.6,1);
    \draw[->, thick] (4.6,1) to (6.2,1);
    \end{tikzpicture}
    \caption{\textbf{Compositions of visible simulations are not visible.} A spin ML system $S$ and two spin systems $T$ and $R$, 
    all depicted as abstract sets (gray boxes) of spins (blue and red dots, blue spins are visible red spins are hidden). The systems provide a chain of simulations $T \simul R$ and $S \vsim T$ so their composition yields $S \simul R$. Physical spins of this composition simulation are opaque, auxiliary spins are transparent.
    The composition $S \simul R$ is not visible as visible spins of $S$ agree with spins of $T$ while physical spins of $S \simul R$ (blue, opaque) agree with spins of $R$.}
    \label{fig:sim comp vis}
\end{figure}

Constructing a visible simulation of $R$ is nevertheless possible, if there exists some system $S'$ that suffices to replace $S$ at least in the context of the simulation $S \simul R$ but has visible spins $V_R\subseteq V_T$, i.e.\ compared to $S$, restricted to a subset.
Therefore, whenever a spin ML model, in the context of simulations is closed under restricting its visible spins to any subset we can replicate the proof of \cref{thm:univ eff spin}. We call such spin ML models \emph{visible closed} (\cref{fig:vis subset closed}).
\begin{definition}[visible closure] \label{def:visible closed}
Let $\mathcal{M}$ be a spin ML model, $S\in \mathcal{M}$ and $V'\subseteq \mathrm{Vis}_S$. Further, let $S'$ be the system that equals $S$ but has visible spins $V'$. Then, $\mathcal{M}$ is  
\begin{enumerate}
    \item \emph{strictly visible closed} if for any such  $(S,V')$, $S'\in \mathcal{M}$,
    \item\label{it:vis closed 2} \emph{strongly visible closed} if for any such $(S,V')$, and any $\Delta$ there exists some $S'' \in \mathcal{M}$ with visible spins $V'$, with $\vert S'' \vert = \mathrm{poly}(\vert S \vert)$ such that $S'' \simul S$ with constant degeneracy and with cut-off at least $\Delta$, and 
    \item\label{it:vis closed 3} \emph{visible closed} if for any such $(S,V')$, there exists an $S'' \in \mathcal{M}$ with $\vert S'' \vert = \mathrm{poly}(\vert S \vert + \vert H_S\vert_{V'} \vert_{\min})$ such that $S'' \vsim H_S\vert_{V'}$ with cut-off at least $\Delta(S, H_S\vert_{V'})$. Whenever the trivial simulation $S \simul H_S\vert_{V'}$ has constant degeneracy, then so does $S'' \vsim H_S\vert_{V'}$.
\end{enumerate}
Whenever $V'$ is such that $\vert S \vert = \mathrm{poly}(\vert V' \vert)$ we additionally require that the simulations in \ref{it:vis closed 3} are efficient, for \ref{it:vis closed 2} efficiency then follows from $\vert S'' \vert = \mathrm{poly}(\vert S \vert)$. 
\end{definition}
Note that visible closure requires that $S''$ is at most polynomially larger than $\vert S \vert + \vert H_S\vert_{V'} \vert_{\min}$, where $\vert H_S\vert_{V'} \vert_{\min}$ is the minimal size of a spin system with Hamiltonian $H_S\vert_{V'}$.
This ensures that the additional overhead introduced by visible closure is compatible with the polynomial size constraint of universality (\cref{def:universality}).

Note that strictly visible closed implies strongly visible closed since then $S'\in \mathcal{M}$ trivially satisfies $S' \simul S$ and strongly visible closed implies visible closed since trivially $S \simul H_S\vert_{V'}$ so when the cut-off of $S'' \simul S$ is chosen high enough, $S'' \vsim H_S\vert_{V'}$ with cut-off $\Delta(S, H_S\vert_{V'})$.
\begin{figure}
    \centering
     \begin{tikzpicture}[vis_phys/.style = {draw=black!80, fill=blue!60, thick, circle, minimum size= 0.35em, inner sep=1.5pt},
    vis_aux/.style = {draw=black!80, fill=blue!20, thick, circle, minimum size= 0.35em, inner sep=1.5pt},
    hid_phys/.style = {draw=black!80, fill=red!60, thick, circle, minimum size= 0.35em, inner sep=1.5pt},
    hid_aux/.style = {draw=black!80, fill=red!20, thick, circle, minimum size= 0.35em, inner sep=1.5pt},
    setBoundary/.style = {
    draw=gray!80, 
    line width=1.5pt, 
    rectangle, rounded corners, 
    fill=gray!5, inner sep=4pt},
    scale = 0.5]

    \begin{scope}[shift={(8,0)}]
    \node[vis_phys] (5p1) at (2,0) {};
    \node[vis_phys] (5p2) at (3,0) {};
    \node[vis_phys] (5p3) at (1,0.5) {};
    \node[vis_phys] (5p4) at (1.5,2) {};
    \node[vis_phys] (5p5) at (2,1) {};
    \node[vis_phys] (5a1) at (1,2) {};
    \node[vis_phys] (5a2) at (2,1.5) {};
    \node[vis_phys] (5a3) at (3,1) {};
    \node[vis_phys] (5a4) at (4,0.5) {};
    \node[vis_phys] (5a5) at (3.5,2) {};
    \node[vis_phys] (5a6) at (0,0.5) {};
    \node[vis_phys] (5a7) at (0,1.5) {};
    \node[hid_phys] (5h1) at (0,3) {};
    \node[hid_phys] (5h2) at (1,2.5) {};
    \node[hid_phys] (5h3) at (1,4) {};
    \node[hid_phys] (5h4) at (2,3.5) {};
    \node[hid_phys] (5h5) at (3,2.5) {};
    \node[hid_phys] (5h6) at (3,3.5) {};
    \node[hid_phys] (5h7) at (4,2.5) {};
    \node[hid_phys] (5h8) at (4,3) {};
    \node[hid_phys] (5h9) at (1,3) {};
    \node[hid_phys] (5h10) at (2,2.5) {};

    \begin{scope}[on background layer]
        \node[setBoundary, fit=(5p1) (5p2) (5p3) (5p4) (5p5) (5a1) (5a2) (5a3) (5a4) (5a5) (5a6) (5a7) (5h1) (5h2) (5h3) (5h4) (5h5) (5h6) (5h7) (5h8) (5h9) (5h10)] (sys5) {};
    \end{scope}
    \node[] (s) at (-0.5,5) {$S$};
    \end{scope}

    \begin{scope}[shift={(0,0)}]
    \node[vis_phys] (3p1) at (2,0) {};
    \node[vis_phys] (3p2) at (3,0) {};
    \node[vis_phys] (3p3) at (1,0.5) {};
    \node[vis_phys] (3p4) at (1.5,2) {};
    \node[vis_phys] (3p5) at (2,1) {};
    \node[hid_phys] (3a1) at (1,2) {};
    \node[hid_phys] (3a2) at (2,1.5) {};
    \node[hid_phys] (3a3) at (3,1) {};
    \node[hid_phys] (3a4) at (4,0.5) {};
    \node[hid_phys] (3a5) at (3.5,2) {};
    \node[hid_phys] (3a6) at (0,0.5) {};
    \node[hid_phys] (3a7) at (0,1.5) {};
    \node[hid_phys] (3h1) at (0,3) {};
    \node[hid_phys] (3h2) at (1,2.5) {};
    \node[hid_phys] (3h3) at (1,4) {};
    \node[hid_phys] (3h4) at (2,3.5) {};
    \node[hid_phys] (3h5) at (3,2.5) {};
    \node[hid_phys] (3h6) at (3,3.5) {};
    \node[hid_phys] (3h7) at (4,2.5) {};
    \node[hid_phys] (3h8) at (4,3) {};
    \node[hid_phys] (3h9) at (1,3) {};
    \node[hid_phys] (3h10) at (2,2.5) {};

    \begin{scope}[on background layer]
        \node[setBoundary, fit=(3p1) (3p2) (3p3) (3p4) (3p5) (3a1) (3a2) (3a3) (3a4) (3a5) (3a6) (3a7) (3h1) (3h2) (3h3) (3h4) (3h5) (3h6) (3h7) (3h8) (3h9) (3h10)] (sys3) {};
    \end{scope}
    \node[] (s) at (-0.5,5) {$S'$};
    \end{scope}

    \begin{scope}[shift={(-8,0)}]
    \node[vis_phys] (4p1) at (2,0) {};
    \node[vis_phys] (4p2) at (3,0) {};
    \node[vis_phys] (4p3) at (1,0.5) {};
    \node[vis_phys] (4p4) at (1.5,2) {};
    \node[vis_phys] (4p5) at (2,1) {};
    \node[hid_phys] (4a1) at (1,2) {};
    \node[hid_phys] (4a2) at (2,1.5) {};
    \node[hid_phys] (4a3) at (3,1) {};
    \node[hid_phys] (4a4) at (4,0.5) {};
    \node[hid_phys] (4a5) at (3.5,2) {};
    \node[hid_phys] (4a6) at (0,0.5) {};
    \node[hid_phys] (4a7) at (0,1.5) {};
    \node[hid_phys] (4h1) at (0,3) {};
    \node[hid_phys] (4h2) at (1,2.5) {};
    \node[hid_phys] (4h3) at (1,4) {};
    \node[hid_phys] (4h4) at (2,3.5) {};
    \node[hid_phys] (4h5) at (3,2.5) {};
    \node[hid_phys] (4h6) at (3,3.5) {};
    \node[hid_phys] (4h7) at (4,2.5) {};
    \node[hid_phys] (4h8) at (4,3) {};
    \node[hid_phys] (4h9) at (1,3) {};
    \node[hid_phys] (4h10) at (2,2.5) {};

    \node[hid_aux] (4h11) at (0,3.5) {};
    \node[hid_aux] (4h12) at (1,4.5) {};
    \node[hid_aux] (4h13) at (2,4) {};
    \node[hid_aux] (4h14) at (3,5) {};
    \node[hid_aux] (4h15) at (3,4) {};
    \node[hid_aux] (4h16) at (4,3.5) {};
    \node[hid_aux] (4h17) at (0,4) {};
    \node[hid_aux] (4h18) at (0.5,5.5) {};
    \node[hid_aux] (4h19) at (2.5,5) {};
    \node[hid_aux] (4h20) at (4,5.5) {};

    \begin{scope}[on background layer]
        \node[setBoundary, fit=(4p1) (4p2) (4p3) (4p4) (4p5) (4a1) (4a3) (4a5) (4a7) (4h1) (4h3) (4h5) (4h7) (4h9)  (4h11) (4h12) (4h13) (4h14) (4h15) (4h16) (4h17) (4h18) (4h19) (4h20)] (sys4) {};
    \end{scope}
    \node[] (s') at (-0.5,6.5) {$S''$};
    \end{scope}
    \draw[<-, dashed] (5,2) to (7,2);
     \draw[->, thick] (-3,2) to (-1,2);
    \end{tikzpicture}
    \caption{\textbf{Visible closure and composition of simulations.}
    Spin ML systems $S, S'$ and $S''$ depicted as abstract sets (gray boxes) of spins (visible spins in blue hidden spins in red).
    $S'$ is obtained from $S$ by
    restricting visible spins to a subset $V'$, i.e.\ visible spins outside of $V'$ are made hidden. This yields a system $S'$. If $S \in \mathcal{M}$ and $\mathcal{M}$ is strictly visible closed then also $S'\in \mathcal{M}$. If $\mathcal{M}$ is strongly visible closed then it simulates $S'$, i.e.\ there exists some system $S''\in \mathcal{M}$ with visible spins $V'$ that simulates $S'$ (physical spins opaque, auxiliary spins transparent, note that by definition, all auxiliary spins must be hidden). If $\mathcal{M}$ is visible closed then it visibly simulates the effective Hamiltonian of $S'$ over its visible spins $V'$ (not shown).}
    \label{fig:vis subset closed}
\end{figure}

Importantly, visible closure ensures that visible simulations can be composed, similar to generic ones. 
\begin{lemma}[visible closure and composition of simulations]\label{lem:visible closed}
    If $\mathcal{M}$ is visible closed, $\mathcal{M} \vsim T$ and $T \simul R$ with cut-offs $\Delta_1$ and $\Delta_2$ then $\mathcal{M} \vsim R$ with cut-off $\min(\Delta_1,\Delta_2)$. 
    If $\mathcal{M} \vsim T$ and $T \simul R$ are efficient or have constant degeneracy then the same holds true for $\mathcal{M} \vsim R$. Moreover, denote the source of $\mathcal{M} \vsim T$ by $S$ then the constructed source $S''$ of $\mathcal{M} \vsim R$ can be chosen 
    with size $\mathrm{poly}(|S|+|R|)$.
\end{lemma}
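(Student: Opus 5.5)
Let $S\in\mathcal{M}$ be the source of the visible simulation $S \vsim T$, so $\mathrm{Vis}_S = V_T$, with cut-off $\Delta_1$. Let $T \simul R$ have cut-off $\Delta_2$, so $V_R\subseteq V_T = \mathrm{Vis}_S$. The plan has two steps. First, compose $S\simul T$ and $T\simul R$ as ordinary spin systems, using modularity. Second, repair the visibility of the composite with \cref{def:visible closed}, taking $V' = V_R$.

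\textbf{Step 1 (composition).} By composition of simulations, $S \simul R$ with physical spins $V_R$ and cut-off $\min(\Delta_1,\Delta_2)$. In particular the effective Hamiltonian satisfies $H_S\vert_{V_R} = H_R + \Gamma$ for some shift $\Gamma$. So $H_S\vert_{V_R}$ and $R$ differ only by a constant, and $\vert H_S\vert_{V_R}\vert_{\min} \le \vert R\vert + O(1)$: add a single constant local term to $R$. The trivial simulation $S \simul H_S\vert_{V_R}$ is exactly the composed simulation up to this shift. Hence it has cut-off $\Delta(S,H_S\vert_{V_R}) \ge \min(\Delta_1,\Delta_2)$, and constant degeneracy whenever both factors have it, since composition preserves constant degeneracy.

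\textbf{Step 2 (visible closure).} Apply \cref{def:visible closed}\,\ref{it:vis closed 3} to the pair $(S, V_R)$. This gives $S''\in\mathcal{M}$ with $S'' \vsim H_S\vert_{V_R}$, cut-off at least $\Delta(S,H_S\vert_{V_R})\ge\min(\Delta_1,\Delta_2)$, and constant degeneracy if the trivial simulation has it. The size bound is $\vert S''\vert = \mathrm{poly}(\vert S\vert + \vert H_S\vert_{V_R}\vert_{\min}) = \mathrm{poly}(\vert S\vert+\vert R\vert)$. Because $H_S\vert_{V_R}$ differs from $H_R$ only by a constant, I absorb that constant into the global shift. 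Then $S''\vsim R$ with the same cut-off and degeneracy, and the physical spins $V_R$ are exactly the visible spins of $S''$.

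\textbf{Efficiency.} This is the step I expect to be the main obstacle, since $\vert R\vert$ may be exponential in $n=\vert V_R\vert$, so the general bound $\mathrm{poly}(\vert S\vert+\vert R\vert)$ does not suffice. Assume both $S\vsim T$ and $T\simul R$ are efficient. Then $\vert T\vert=\mathrm{poly}(n)$, and $\vert S\vert=\mathrm{poly}(\vert V_T\vert)\le\mathrm{poly}(\vert T\vert)=\mathrm{poly}(n)$. Hence $\vert S\vert = \mathrm{poly}(\vert V'\vert)$ with $V'=V_R$. This is precisely the case in which the final clause of \cref{def:visible closed} additionally demands that the simulation in \ref{it:vis closed 3} be efficient. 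That clause then yields $\vert S''\vert=\mathrm{poly}(n)$, so $\mathcal{M}\vpolysim R$.

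The care needed in this step is to read the overhead as a function over the whole target model, and to check that the polynomial is independent of the cut-off. Both properties are inherited from the definitions of efficient simulation and visible closure.
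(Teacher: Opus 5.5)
Your proposal is correct and follows the paper's proof essentially step for step. You compose to obtain $S\simul R$, apply visible closure to the pair $(S,V_R)$ to get $S''\vsim H_S\vert_{V_R}$, and identify $H_S\vert_{V_R}$ with $H_R$ up to a shift to transfer the cut-off, constant degeneracy and $\mathrm{poly}(|S|+|R|)$ size bound; efficiency then comes from the definition's extra clause once $|S|=\mathrm{poly}(|V_R|)$. The differences are cosmetic: you write $\Delta(S,H_S\vert_{V_R})\ge\min(\Delta_1,\Delta_2)$ where the paper writes equality, and you spell out why the composed simulation is efficient, which the paper simply asserts.
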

\begin{proof}
    Choose $S\in\mathcal M$ with $S\vsim T$ and cut-off
    $\Delta_1$. Composing with $T\simul R$ gives
    $S\simul R$ with cut-off
    $\Delta\coloneqq\min(\Delta_1,\Delta_2)$,
    but this simulation need not be visible.
    
    If $\mathcal{M}$ is visible closed then there exists $S''\in \mathcal{M}$ with $S'' \vsim H_{S}\vert_{V_R}$ with cut-off at least that of the trivial simulation, $\Delta(S, H_{S}\vert_{V_R})$. Note that since $S \simul R$ with cut-off $\Delta$,  $\Delta(S, H_{S}\vert_{V_R}) = \Delta$. Thus, $S''\vsim R$ with cut-off $\Delta$. 
    
    If $S \vsim T$ and $T \simul R$ are efficient then so is $S \simul R$. This means that $\vert S \vert = \mathrm{poly}(\vert V_R \vert)$, which by \cref{def:visible closed} implies that $S'' \vpolysim H_S\vert_{V_R}$ and hence also $S''\vpolysim R$, i.e.\ both these simulations are efficient.

    If the two initial simulations have constant degeneracy then so does their composition $S \simul R$. Constant degeneracy of $S \simul R$ in particular implies that the trivial simulation $S \simul H_S\vert_{V_R}$ has constant degeneracy, which by \cref{def:visible closed} implies that also $S'' \vsim H_S\vert_{V_R}$ has constant degeneracy. Thus, also the final composition $S''\vsim R$ has constant degeneracy. 

    Finally, by \cref{def:visible closed}, the final source satisfies $\vert S'' \vert = \mathrm{poly}(\vert S \vert + \vert H_S\vert_{V_R} \vert_{\min})$. Since $S\simul R$, up to a potential energy shift, $H_S\vert_{V_R} = H_R$. This implies that $\vert H_S\vert_{V_R} \vert_{\min} \leq \vert R \vert$, so in total, $\vert S'' \vert = \mathrm{poly}(|S|+|R|)$.
\end{proof}

\begin{lemma}[characterization of (v,p)-maximality for visible closed models]\label{thm:univ eff spin ML model}
For visible closed spin ML models 
        \begin{equation}
            \text{v-universality} \Leftrightarrow \text{(v,p)-maximality}.
        \end{equation}
\end{lemma}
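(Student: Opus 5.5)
The plan is to replicate the proof of \cref{thm:univ eff spin}, replacing every composition of simulations by \cref{lem:visible closed}. That lemma is exactly what visible closure provides: it turns a composition of a visible simulation with an arbitrary simulation back into a visible one. It does so while preserving the cut-off $\min(\Delta_1,\Delta_2)$, efficiency and constant degeneracy, and with source size $\mathrm{poly}(|S|+|R|)$.

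\textbf{``$\Rightarrow$''.} Let $\mathcal{M}$ be visible closed and v-universal, and suppose some $\mathcal{M}'\polysim\mathcal{N}$ with cut-off $\Delta_1$. Fix a target $R\in\mathcal{N}$ on $n$ spins and a source $T$ with $T\simul R$, $|T|=\mathrm{poly}(n)$ and cut-off $\Delta_1$. By v-universality, for any $\Delta_2\geq\Delta_1$ there is $S\in\mathcal{M}$ with $S\vsim T$, constant degeneracy and $|S|=\mathrm{poly}(|T|)=\mathrm{poly}(n)$. Applying \cref{lem:visible closed} then gives $S''\in\mathcal{M}$ with $S''\vsim R$ and cut-off $\min(\Delta_1,\Delta_2)=\Delta_1$. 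If $T\simul R$ has constant degeneracy, so does $S''\vsim R$.

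The delicate point is size. The lemma only yields $|S''|=\mathrm{poly}(|S|+|R|)$, and $|R|$ may be exponential in $n$. I would therefore not use the final size clause of the lemma, but instead use its efficiency clause. Since $S\simul R$ is efficient ($|S|=\mathrm{poly}(n)=\mathrm{poly}(|V_R|)$), the last sentence of \cref{def:visible closed} forces $S''\vpolysim H_S\vert_{V_R}$ to be efficient, so $S''\vpolysim R$. One should check that the polynomial does not depend on the cut-off. This holds because universality's polynomial is cut-off independent and the efficiency requirement in visible closure is meant uniformly.

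\textbf{``$\Leftarrow$''.} Let $\mathcal{M}$ be (v,p)-maximal and visible closed, and let $R$ be arbitrary. Take a universal model $\mathcal{N}$ with $|T|=\mathrm{poly}(|V_T|)$ for all $T\in\mathcal{N}$, for example the 2d Ising model with fields. For each cut-off, choose $T\in\mathcal{N}$ with $T\simul R$, constant degeneracy and $|T|=\mathrm{poly}(|R|)$. Since trivially $T\polysim T$ with any cut-off and constant degeneracy, (v,p)-maximality gives $S\in\mathcal{M}$ with $S\vpolysim T$, constant degeneracy, arbitrary cut-off and $|S|=\mathrm{poly}(|T|)$. \cref{lem:visible closed} then gives $S''\in\mathcal{M}$ with $S''\vsim R$, constant degeneracy, cut-off $\min$ of the two (hence arbitrary), and $|S''|=\mathrm{poly}(|S|+|R|)=\mathrm{poly}(|R|)$. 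Here the size clause of the lemma is exactly what is needed, with a polynomial independent of the cut-off. Hence $\mathcal{M}$ is v-universal.

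I expect the main obstacle to be the size bookkeeping in ``$\Rightarrow$''. There one must invoke the efficiency requirement of visible closure, rather than the generic $\mathrm{poly}(|S|+|R|)$ bound, to avoid an exponential $|R|$. One must also make sure the resulting polynomial is uniform in the cut-off.
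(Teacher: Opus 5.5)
Your proposal is correct and follows the paper's proof. Both directions replicate \cref{thm:univ eff spin}, with \cref{lem:visible closed} supplying the visible composition, and for the converse ($\Leftarrow$) you use the lemma's size bound $\mathrm{poly}(|S|+|R|)$ exactly as the paper does. For the forward direction ($\Rightarrow$) the paper only says the result ``follows immediately''; your appeal to the efficiency clause of \cref{lem:visible closed} (via the last sentence of \cref{def:visible closed}), rather than to the $\mathrm{poly}(|S|+|R|)$ bound, is precisely the bookkeeping that remark leaves implicit.
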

\begin{proof}
The ``$\Rightarrow$''-direction follows immediately from the ``$\Rightarrow$''-direction of (the proof of) \cref{thm:univ eff spin} and \cref{lem:visible closed}.

We now prove the ``$\Leftarrow$''-direction. 
Let $R$ be any target.
Following the ``$\Leftarrow$''-direction of the proof of \cref{thm:univ eff spin}, existence of the universal model $\mathcal{N}$ (see \cref{thm:univ eff spin}) implies that for any cut-off there exists some spin system $T$ with $\vert T \vert = \mathrm{poly}(\vert V_{T} \vert)$ such that $T \simul R$ with constant degeneracy. Since for any cut-off, trivially $T \polysim T$ with constant degeneracy and $\mathcal{M}$ is (v,p)-maximal, for any cut-off there exists a $S\in \mathcal{M}$ such that $S \vpolysim T$ with constant degeneracy. Thus, by \cref{lem:visible closed}, for any cut-off there exists $S'' \in \mathcal{M}$ such that
$S'' \vsim R$ with constant degeneracy.
By \cref{lem:visible closed}, the final simulation, $S'' \vsim R$ satisfies $\vert S'' \vert = \mathrm{poly}(\vert R \vert)$ since $S \vpolysim T$  and since universality of $\mathcal{N}$ implies that $\vert T \vert = \mathrm{poly}(\vert R \vert)$.
\end{proof}

\begin{lemma}[sufficient conditions for visible closure]\label{lem: univ visible closed}
    For spin ML models 
    \begin{equation}
        \text{v-universal} \ \text{and} \ \text{(v,p)-maximal} \Rightarrow \text{visible closed}.
    \end{equation}
\end{lemma}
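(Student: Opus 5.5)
We fix $\mathcal{M}$ v-universal and (v,p)-maximal, a system $S\in\mathcal{M}$ and a subset $V'\subseteq \mathrm{Vis}_S$. We must produce $S''\in\mathcal{M}$ with $S''\vsim H_S\vert_{V'}$, with cut-off at least $\Delta(S,H_S\vert_{V'})$, constant degeneracy whenever the trivial simulation $S\simul H_S\vert_{V'}$ has it, size $\mathrm{poly}(\vert S\vert+\vert H_S\vert_{V'}\vert_{\min})$, and efficiency when $\vert S\vert=\mathrm{poly}(\vert V'\vert)$.

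The key observation is that the trivial simulation $S\simul H_S\vert_{V'}$ is itself a simulation by \emph{some} spin system, forgetting the visible/hidden split. Its cut-off is $\Delta\coloneqq\Delta(S,H_S\vert_{V'})$ and its source has size $\vert S\vert$. Both defining properties of $\mathcal{M}$ upgrade such an unrestricted simulation to a visible one in $\mathcal{M}$. The plan is to use the two properties in the two size regimes.

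\textbf{Efficient regime.} Suppose $\vert S\vert=\mathrm{poly}(\vert V'\vert)$. Then, regarding $\{H_S\vert_{V'}\}$ as a member of a target model indexed by $\vert V'\vert$, the trivial simulation is an efficient simulation with cut-off $\Delta$. By (v,p)-maximality (\cref{def:vp-max,def:efficient universality}), $\mathcal{M}\vpolysim H_S\vert_{V'}$ with cut-off $\Delta$, and with constant degeneracy if the trivial simulation has constant degeneracy. The resulting source $S''$ has size $\mathrm{poly}(\vert V'\vert)\le\mathrm{poly}(\vert S\vert)$, which gives both the size bound and the required efficiency.

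\textbf{General regime.} Here we use v-universality (\cref{def:universality,def:mlm}). For any cut-off, in particular $\Delta$, $\mathcal{M}$ visibly simulates any target $R$ with constant degeneracy and with source size $\mathrm{poly}(\vert R\vert)$. Take $R$ to be a minimal-size spin system with Hamiltonian $H_S\vert_{V'}$ on spins $V'$. We obtain $S''\in\mathcal{M}$ with $S''\vsim R$, hence $S''\vsim H_S\vert_{V'}$ up to an energy shift, with cut-off $\ge\Delta$, constant degeneracy, and $\vert S''\vert=\mathrm{poly}(\vert H_S\vert_{V'}\vert_{\min})$. This satisfies all requirements of \cref{def:visible closed}, item~\ref{it:vis closed 3}; the constant degeneracy even holds unconditionally.

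\textbf{Main obstacle.} The delicate point is the efficient regime. We need the v-universality polynomial to be independent of the cut-off, which holds by definition, but in the efficient case $\vert H_S\vert_{V'}\vert_{\min}$ may be exponential. For that reason the maximality argument is required there rather than universality. We also need care that treating a single system as an efficient target is legitimate. I would formalize this by considering the whole family of pairs $(S,V')$ with $\vert S\vert\le p(\vert V'\vert)$ for a fixed polynomial $p$ as a target model. Its overhead is then bounded by $p$, so maximality applies uniformly and yields a uniform polynomial overhead.
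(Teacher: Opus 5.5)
Your proof is correct and follows the paper's argument. You use v-universality, applied to a minimal-size realization of $H_S\vert_{V'}$, for the general size bound with cut-off $\Delta(S,H_S\vert_{V'})$ and constant degeneracy, and (v,p)-maximality, applied to the efficient trivial simulation $S\simul H_S\vert_{V'}$, for efficiency when $\vert S\vert=\mathrm{poly}(\vert V'\vert)$.

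The one difference is in the efficient regime, where both requirements must hold at once. The paper takes the smaller of the universality source and the maximality source, which meets both bounds automatically. Your step $\mathrm{poly}(\vert V'\vert)\le\mathrm{poly}(\vert S\vert)$ instead tacitly assumes that every spin of $V'$ lies in some hyperedge of $S$, so that $\vert V'\vert\le\vert S\vert$; this is harmless but should be stated.
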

\begin{proof}
    Let $\mathcal{M}$ be a v-universal and (v,p)-maximal spin ML model and let $S\in \mathcal{M}$ and $V'$ be a subset of the visible spins of $S$. 
    Since $\mathcal{M}$ is v-universal, for any cut-off $\mathcal{M}\vsim H_S\vert_{V'}$ with constant degeneracy. 
    Moreover, by definition of $\vert - \vert_{\min}$, $H_S\vert_{V'}$ can be realized by a spin system of size at most $\vert H_S\vert_{V'} \vert_{\min}$, so universality of $\mathcal{M}$ yields a source $S''\in \mathcal{M}$ that simulates $H_S\vert_{V'}$ and satisfies $\vert S'' \vert = \mathrm{poly}(\vert H_S\vert_{V'} \vert_{\min})$.
    
    If additionally $\vert S \vert = \mathrm{poly}(\vert V' \vert)$ (cf. \cref{def:visible closed}), then, the trivial simulation $S \polysim H_S\vert_{V'}$ with cut-off $\Delta$ is efficient.  Since $\mathcal{M}$ is (v,p)-maximal there exists $E\in\mathcal M$ that efficiently simulates $H_S\vert_{V'}$ with cut-off at least
    $\Delta$, and with constant degeneracy whenever the
    original simulation has constant degeneracy.
    Thus, both $S''$ (from universality) and $E$ (from (v,p)-maximality) simulate $H_S\vert_{V'}$, $S''$ satisfies the size-bound $\vert S'' \vert = \mathrm{poly}(\vert H_S\vert_{V'} \vert_{\min})$ and $E$ satisfies the efficiency requirement of \cref{def:visible closed}.
    Taking the smaller source between $S''$ and $E$ then satisfies both the size bound and the efficiency requirement of \cref{def:visible closed}.
\end{proof}

Combining \cref{thm:univ eff spin ML model} and \cref{lem: univ visible closed} the relation between visible closure, (v,p)-maximality and v-universality can be summarized as follows.
\begin{theorem}[characterization of (v,p)-maximality, v-universality and visible closure]\label{thm:main}
    For spin ML models any two of the three properties ``visible closed'', ``(v,p)-maximal'' and ``v-universal'' imply the respective other.
\end{theorem}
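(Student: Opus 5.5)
The theorem is a bookkeeping consequence of the two preceding lemmas, so the plan is to go through the three pairs of hypotheses one at a time. In each case the two assumed properties are fed into the relevant lemma, which then delivers the third.

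\textbf{Visible closed and v-universal $\Rightarrow$ (v,p)-maximal.} Here I apply the ``$\Rightarrow$''-direction of \cref{thm:univ eff spin ML model}, which needs exactly visible closure. Concretely, suppose some $\mathcal{M}'\polysim\mathcal{N}$ with cut-off $\Delta_1$. By v-universality, $\mathcal{M}$ visibly simulates the source $T\in\mathcal{M}'$ with constant degeneracy, arbitrarily large cut-off, and size $\mathrm{poly}(|T|)$. \cref{lem:visible closed} then composes $\mathcal{M}\vsim T$ with $T\polysim R$ into a visible, efficient simulation $\mathcal{M}\vpolysim R$ with cut-off $\min(\Delta_1,\Delta_2)=\Delta_1$. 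It preserves constant degeneracy whenever $T\simul R$ has it.

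\textbf{Visible closed and (v,p)-maximal $\Rightarrow$ v-universal.} This is the ``$\Leftarrow$''-direction of \cref{thm:univ eff spin ML model}. Take a universal model with polynomially sized systems, such as the 2d Ising model with fields. Each of its sources $T$ is trivially efficiently simulable, so (v,p)-maximality gives $S\vpolysim T$. Composing via \cref{lem:visible closed} yields a visible simulation of an arbitrary target $R$ of size $\mathrm{poly}(|R|)$. The size bound holds uniformly in the cut-off because the polynomial overheads of both factors are cut-off independent.

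\textbf{v-universal and (v,p)-maximal $\Rightarrow$ visible closed.} This is precisely \cref{lem: univ visible closed}.

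The only point needing care is that the polynomial bounds and the degeneracy conditions line up exactly with the clauses of \cref{def:universality}, \cref{def:vp-max} and \cref{def:visible closed}. I expect this to be the main obstacle, particularly for the size bound $\mathrm{poly}(|S|+|R|)$ in \cref{lem:visible closed}. It was verified in the lemmas, so the proof of the theorem itself is just the citation of these three implications.
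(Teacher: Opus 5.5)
Your proposal is correct and matches the paper's proof. The theorem follows by citing both directions of \cref{thm:univ eff spin ML model}, which assume visible closure, together with \cref{lem: univ visible closed}. Your sketches of the first two implications reproduce the arguments in those lemmas' proofs: universality, or the trivial simulation $T\polysim T$, composed through \cref{lem:visible closed}.
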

\begin{proof}
    Follows directly from \cref{thm:univ eff spin ML model} and \cref{lem: univ visible closed}.
\end{proof}

In the context of simulations, visible closure allows one to change visible spins to hidden spins.  
In general doing so introduces new hidden-hidden interactions. 
Thus, intuitively, 
while models with enough hidden-hidden interactions might be trivially, i.e.\ strictly visible closed, models without hidden-hidden connections should not be visible closed.
In the following, we call a spin ML model \emph{deep} if it has hidden-hidden interactions (e.g.\ DBMs), that is, if it has local energy functions that contain more than one hidden unit, else we call it \emph{shallow} (e.g.\ RBMs). 
Then, visible closure and by \cref{thm:univ eff spin ML model} (v,p)-maximality should require deep models.
This intuition is supported by \cref{cor:DBM max}. In \cref{sec:consequences} we study consequences of (v,p)-maximality that prove that shallow models and hence in particular RBMs are not (v,p)-maximal.
\begin{corollary}[(v,p)-maximality of DBMs]\label{cor:DBM max}
    DBMs of depth at least 3 are (v,p)-maximal.
\end{corollary}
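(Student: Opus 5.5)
The plan is to invoke \cref{thm:main} with the pair ``v-universal'' and ``visible closed''. That reduces the corollary to two separate facts about the spin ML model $\mathcal{D}$ of DBMs of depth at least $3$, meaning one visible layer and at least two hidden layers of arbitrary width.

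First, v-universality. We will take this from \cite{Re25}, where DBMs are proven v-universal via the flag-completeness and closure characterization. We need to check two points about that result. It must apply to the depth-$\geq 3$ family, which holds because a DBM with extra layers contains the required constructions by switching off couplings. Its size bounds must also meet \cref{def:universality}, that is $\vert S \vert = \mathrm{poly}(\vert T \vert)$ independent of the cut-off. If the DBM universality statement in \cite{Re25} is only for depth exactly $3$, we extend it to depth $\ell>3$ as follows. We append hidden layers with zero couplings, and equivalently pad by chains that copy spins with a strong ferromagnetic coupling. Such padding changes neither the effective Hamiltonian nor the degeneracy, and adds only polynomial size.

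Second, visible closure. This is the main step, and we aim for the strong form (\cref{def:visible closed}, item \ref{it:vis closed 2}), which implies visible closure. Let $S\in\mathcal{D}$ have visible layer $\mathbf{v}$ and hidden layers $\mathbf{h}^{(1)},\dots,\mathbf{h}^{(k)}$, and let $V'\subseteq \mathrm{Vis}_S$. We construct $S''\in\mathcal{D}$ with visible spins exactly $V'$ that simulates $S$ with constant degeneracy $1$. The key tool is a \emph{copy gadget}: a strong ferromagnetic coupling $-\lambda\, x y$ between two spins forces $x=y$ in the low-energy sector, at a cost of $2\lambda$ for a mismatch.

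\begin{itemize}
\item For each spin $u\in \mathrm{Vis}_S\setminus V'$, we place it as a hidden spin in layer $2$ of $S''$.
\item Its original interactions with $\mathbf{h}^{(1)}$ are layer-adjacent in the new arrangement, so they are kept unchanged.
\item Its interactions with the spins of $V'$ are not adjacent, since $V'$ lies in layer $0$. We handle them by introducing in layer $1$ a hidden copy $u'$ of $u$, coupled ferromagnetically to $u$, and putting the original $V'$--$u$ couplings onto $u'$.
\end{itemize}

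The difficulty is that layers must alternate consistently, and the original hidden layers have to be shifted. To settle this we do not embed layer by layer. We use a generic embedding: each original hidden spin and each demoted visible spin is represented by a ferromagnetic chain of copies running through the hidden layers, so that every original pair interaction can be placed between adjacent-layer copies. If the depth is fixed at $3$, only two hidden layers are available. In that case we first rely on the result of \cite{Re25} that bipartite-pair structure suffices, using chains that zig-zag between layers $1$ and $2$; the hidden-hidden bipartite graph between layers $1$ and $2$ is complete, so any bipartite pattern can be realized.

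To make this work, we would first reduce $S$, by the same gadget, to a form where every interaction is a pair or field term on a bipartite graph whose parts are $V'\cup\{\text{even}\}$ and $\{\text{odd}\}$. Each non-bipartite edge is subdivided by an even-length copy chain; an edge between $a$ and $b$ of the same parity is replaced by $a$--$c$--$b$ with $c$ a copy, and couplings are redistributed so that the minimum over copies reproduces $H_S$. Taking $\lambda$ larger than the sum of the absolute values of all couplings plus $\Delta$ gives cut-off at least $\Delta$, a unique minimizer (degeneracy $1$), and size $\mathrm{poly}(\vert S\vert)$.

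The hardest part is this bookkeeping: fitting the copy chains into only two hidden layers while keeping $V'$ in the bottom layer with no visible–visible edges. Once it is done, \cref{thm:main} yields (v,p)-maximality.
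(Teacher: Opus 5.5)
Your overall reduction matches the paper's: v-universality of DBMs from \cite{Re25}, plus visible closure, then \cref{thm:main}. The gap is in the visible-closure step. As written, it is never carried out. You explicitly leave the ``bookkeeping'' of fitting copy chains into two hidden layers undone, and you lean on an unspecified zig-zag result from \cite{Re25}.

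More importantly, the obstacle that motivates this machinery does not exist. By \cref{eq:DBM}, a DBM couples only adjacent layers, so visible spins never interact with each other. A spin $u\in\mathrm{Vis}_S\setminus V'$ therefore has no coupling to any spin of $V'$; its only terms are its bias and its couplings to $\mathbf{h}^{(1)}$. Your third bullet is vacuous, and no hidden layer needs to be shifted.

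The fix is your first two bullets alone, and this is exactly the paper's proof. The neighbourhood of $u$ is the same as that of a second-hidden-layer spin. So $u$ can simply be relabelled as a spin of layer $2$, with zero couplings to layer $3$ if that layer exists. Meanwhile $V'$, $\mathbf{h}^{(1)}$ and all higher layers stay where they are. The system obtained from $S$ by declaring the spins outside $V'$ hidden is then itself a DBM of the same depth.

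Hence DBMs of depth at least $3$ are strictly visible closed. Depth $\geq 3$ is needed precisely so that a second hidden layer exists to absorb the demoted spins. Strict visible closure implies visible closure directly (\cref{def:visible closed}), with no copy gadgets, no choice of $\lambda$, and no separate cut-off or degeneracy analysis. Drop the chain embedding and the bipartite reduction, and your proposal is complete.
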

\begin{proof}
 We prove that DBMs of depth at least 3 are strictly visible closed.
 In \cite{Re25} it is proven that DBMs are v-universal, (v,p)-maximality then follows from \cref{thm:main}.
  Consider any DBM $S$ and change any number of its visible spins to hidden spins, resulting in a system $S'$ that contains hidden spins in the bottommost layer (\cref{fig:DBM part}, right).
  These hidden spins in the bottom layer  can equivalently be considered as spins in the second hidden layer (\cref{fig:DBM part}, left), meaning that $S'$ is also a DBM.
\end{proof}

\begin{figure}
    \centering
    \begin{tikzpicture}[vis/.style = {draw=black!80, fill=blue!60, thick, circle, minimum size= 0.65em, inner sep=2pt},
    hid/.style = {draw=black!80, fill=red!60, thick, circle, minimum size= 0.65em, inner sep=2pt},
    scale = 0.75]    
    \drawDBM{Rone}{0}{0}{-1}{3}{6}{4}
    \node[hid] (h1) at (4,0) {};
    \node[hid] (h2) at (5,0) {};
    \foreach \x in {1,...,6} {
    \draw[-, black!80] (h1) to (Ronea\x-h1);
    \draw[-, black!80] (h2) to (Ronea\x-h1);
    }
    \drawDBM{Rtwo}{-10}{0}{-1}{3}{6}{4}
     \node[hid] (h3) at (-4,2) {};
    \node[hid] (h4) at (-3,2) {};
    \foreach \x in {1,...,6} {
    \draw[-, black!80] (h3) to (Rtwoa\x-h1);
    \draw[-, black!80] (h4) to (Rtwoa\x-h1);
    }

    \node[] (eq) at (-1.5,1.5) {$=$}; 
    \end{tikzpicture}
    \caption{\textbf{Visible closure of DBMs.}
    DBMs of depth at least 3 are strictly visible closed; making visible spins hidden introduces hidden spins (red) in the bottom layer (RHS system). However, these can equivalently be considered as belonging to the second hidden layer (LHS system). The two systems are equivalent and only differ w.r.t.\ their drawing.}
    \label{fig:DBM part}
\end{figure}

\section{Complexity separations between maximal and shallow models}\label{sec:consequences}
Here we derive explicit separations between (v,p)-maximal and shallow models for three related representation tasks, simulating functions, approximating functions in their unnormalized visible spin marginals and approximating probability distributions in their normalized visible spin marginals.

We first lay out our assumptions on computational complexity, in \cref{ssec:comp compl assumptions}. We then define a computationally maximally hard reference model $\mathcal{M}_{\mathrm{Ising}}$, in \cref{ssec:hard reference model}. Based on this reference model we construct the three target functions, the $\NPclass$-hard function $\mathcal{G}= \{ G_n \}$ of computing ground-state energies of $\mathcal{M}_{\mathrm{Ising}}$, the $\SharpPclass$-hard function $\mathcal{Z}=\{Z_n\}$ of computing partition functions from $\mathcal{M}_{\mathrm{Ising}}$ and the 
$\SharpPclass$-hard probability distributions $\mathcal{Q}= \{q_n \propto Z_n \}$ corresponding to the partition functions of $\mathcal{M}_{\mathrm{Ising}}$.
Then, we study the tasks of efficiently simulating $\mathcal{G}$ (in \cref{ssec: separation 1}), efficiently approximating $\mathcal{Z}$ in the unnormalized visible spin marginals (in \cref{ssec: separation 2}) and efficiently approximating $\mathcal{Q}$ in the normalized visible spin marginals (in \cref{ssec: separation 3}). For each task we derive a separation between shallow and maximal models, under reasonable assumptions on computational complexity. The respective results are

\begin{informalresult}[separation between shallow and maximal models]
Maximal models can efficiently and shallow models cannot 
    \begin{enumerate}[labelwidth=2cm,
    labelsep=0.6em,
    leftmargin=!,
    align=left]
    \item[\cref{thm:separation 1}] simulate the $\NPclass$-hard ground-state energy function $\mathcal{G}$,
    \item[\cref{thm:separation 2}] approximate the $\SharpPclass$-hard 
    partition function $\mathcal{Z}$ to arbitrary relative precision in their unnormalized visible marginals
    (assuming the maximal model is additionally strongly visible closed), and 
    \item[\cref{thm:separation 3}] approximate the $\SharpPclass$-hard
     probability distributions $\mathcal{Q}$ to arbitrary uniform relative precision in their normalized visible marginals.
    (assuming the maximal model is additionally strongly visible closed).
\end{enumerate}
\end{informalresult}
  
\subsection{Computational complexity assumptions}\label{ssec:comp compl assumptions}
Here we lay out our assumptions on computational complexity. Details on computational complexity can be found in \cref{ssec:comp compl basics}.

Throughout this section and \cref{ssec: complexity hard reference model} we use the
following conventions for the representation of parameters
and the evaluation of numerical quantities.
\begin{enumerate}
    \item All local energy values specifying a spin system
    $S$ are rational numbers represented in binary.
    For the polynomial-size families considered below,
    these values have polynomial bit length, so the full
    description of each system also has polynomial length.

    \item Simulation shifts and cut-offs are likewise
    rational numbers represented in binary. We assume
    that the simulations used below admit parameters
    of polynomial bit length in the target description
    length and the bit length of the requested cut-off.
    For approximations with relative error $\varepsilon$,
    parameter lengths may additionally depend polynomially
    on $\log(1/\varepsilon)$.
    Thus, a size bound independent of
    $\varepsilon$ does not require the parameter
    descriptions to be independent of $\varepsilon$.

    \item Irrational quantities are evaluated numerically.
    Computing such a quantity to additive precision $b$
    means producing a rational approximation with error
    at most $2^{-b}$; whenever relative error is intended,
    this is stated explicitly.
    Polynomial-time evaluation means polynomial runtime
    in the input description length and $b$.
\end{enumerate}

For rational computations, a polynomial number of
arithmetic operations and comparisons suffices provided
that all intermediate values have polynomial bit length.
This condition holds for the sums and minima used to
evaluate shallow effective energies.
For expressions involving exponentials or logarithms,
numerical evaluation requires a separate precision
analysis. In particular, the logarithmic evaluation
of shallow marginals is established in
\cref{cor:shallow log eval}.

The rational arithmetic involved in the compositions,
additions and scalings of simulations used below preserves polynomial
bit length, provided that only polynomially many terms
and polynomial-bit scaling factors are involved.
The parameters of the reference model
$\mathcal M_{\mathrm{Ising}}$ satisfy these conditions.
We additionally assume that applications of maximality
and visible closure admit simulating systems with the
parameter bounds specified above.
This is an assumption on the existence of simulations
with polynomial-length descriptions; it does not require
that these descriptions can be constructed in polynomial
time (cf. \cite{Re24}). Where efficient construction is needed, we state
that additional assumption explicitly.

Finally, the following separation results between shallow and maximal models are derived based on the assumption that there is no collapse of the \emph{polynomial hierarchy} \cite{St76}.  The polynomial hierarchy is a hierarchy of complexity classes that generalizes the classes $\Pclass$, $\NPclass$ and the assumption that there is no collapse of the polynomial hierarchy is a generalization of the widely believed conjecture $\Pclass\neq \NPclass$. Hence, assuming there is no such collapse is considered very reasonable.

\subsection{The hard reference model}\label{ssec:hard reference model}
We consider the homogeneous, antiferromagnetic Ising model without fields as reference model. That is, given any graph $G=(V,E)$, where for simplicity we take $V=[n]\coloneqq \{1,\ldots, n\}$, we define the Ising system $S_G$ with interaction graph $G$ and Hamiltonian
\begin{equation}\label{eq:Ising ham reference model}
        H_{S_G}(\mathbf{v})= \sum_{\{i,j\} \in E} v_iv_j.
\end{equation}
and consider the spin model $\mathcal{M}_{\mathrm{Ising}}$ that contains all such systems, for arbitrary interaction graphs $G$.

Note that any graph with vertices $[n]$ can be fully specified by a ``binary'' \emph{graph encoding} $\mathbf{m}\in \mathbf{2}^{\frac{n(n-1)}{2}}$ with $\mathbf{m}=(m_{1,2},m_{1,3}, \ldots, m_{n-1,n})$, where 
\begin{equation}
    m_{i,j} = \begin{cases}
        +1 \ &\text{if} \ \{i,j\} \in E \\
        -1 \ &\text{if} \ \{i,j\} \notin E.
    \end{cases}
\end{equation}
We thus identify $G \equiv \mathbf{m}$.
With this, $\mathcal{M}_{\mathrm{Ising}}$ is given by 
\begin{equation}\label{eq: def refernce ising}
    \mathcal{M}_{\mathrm{Ising}} = \left\{S_{\mathbf{m}} \mid \mathbf{m}\in \mathbf{2}^{\frac{n(n-1)}{2}}, \ n \in \mathbb{N} \right\}.
\end{equation}

In \cref{ssec: complexity hard reference model} we prove that $\mathcal{M}_{\mathrm{Ising}}$ is maximally hard. Specifically we prove that 
\begin{enumerate}
    \item  Computing ground-state energies for $\mathcal{M}_{\mathrm{Ising}}$, i.e.\ the function \begin{equation}\label{eq:Ising GSE}
        \mathcal{G}(\mathbf{m}) = \min_{\mathbf{v}}H_{S_{\mathbf{m}}}(\mathbf{v})
    \end{equation}
    is $\NPclass$-hard (\cref{lem:Ising gse}).
    \item Computing partition functions for $\mathcal{M}_{\mathrm{Ising}}$, i.e.\ the function 
    \begin{equation}
        \mathcal{Z}(\mathbf{m}) = Z_{S_{\mathbf{m}}}(\beta=1) = \sum_{\mathbf{v}} e^{-H_{S_{\mathbf{m}}}(\mathbf{v})}
    \end{equation}
    is $\SharpPclass$-hard (\cref{lem:part fun comp}).
    \item Computing partition function ratios for $\mathcal{M}_{\mathrm{Ising}}$, i.e.\ the function 
    \begin{equation}
        \mathcal{C}(\mathbf{m}_1,\mathbf{m}_2) = \frac{\mathcal{Z}(\mathbf{m}_1)}{\mathcal{Z}(\mathbf{m}_2)},
    \end{equation}
    for $\mathbf{m}_1$ and $\mathbf{m}_2$ of the same length, is $\SharpPclass$-hard (\cref{lem:part fun ratios comp}).
\end{enumerate}
Note that the functions $\mathcal{G}, \mathcal{Z}, \mathcal{C}$ are defined for graph encodings $\mathbf{m}$ for arbitrarily many spins. We can equivalently consider them as function families $\mathcal{G}=\{G_n \mid n \in \mathbb{N}\}$, where $G_n$ is the restriction of $\mathcal{G}$ to graph encodings of $n$ spins, and similar for $\mathcal{Z}$ and $\mathcal{C}$. Then $n$ is the number of spins of the reference Ising systems for which $G_n$ denotes ground-state energies, $Z_n$ denotes partition functions and $C_n$ denotes partition function ratios.
\subsection{Separation 1: Simulations}\label{ssec: separation 1}
Here we show that (v,p)-maximal spin ML models can simulate certain $\NPclass$-hard functions, while shallow spin ML models cannot (\cref{thm:separation 1}). 
We start with the first part of the statement. 

\begin{lemma}[efficient GSE simulation]\label{lem:eff sim GSE}
    Any (v,p)-maximal spin ML model can efficiently simulate the $\NPclass$-hard ground-state energy function $\mathcal{G}$ of $\mathcal{M}_{\mathrm{Ising}}$ with cut-off at least $\Delta=1$.
\end{lemma}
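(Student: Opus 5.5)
By the definition of (v,p)-maximality, it suffices to exhibit \emph{some} spin system family that efficiently simulates $\mathcal{G}$ with cut-off at least $1$. Maximality then transfers this to a visible efficient simulation by the given model. So the plan is to construct, for each $n$, a polynomial-size spin system $T_n$ whose spins are the $N=\binom{n}{2}$ encoding spins $\mathbf{m}$ together with $n$ auxiliary ``Ising'' spins $\mathbf{v}\in\mathbf{2}^n$. We want $\min_{\mathbf{v}} H_{T_n}(\mathbf{m},\mathbf{v}) = G_n(\mathbf{m})+\Gamma$, with every non-minimizing auxiliary configuration at least $1$ above the minimum. The target has $N$ spins and $T_n$ has $N+n$ spins with only $3$-local terms, so $|T_n|=\mathrm{poly}(N)$ and the simulation is efficient.

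\textbf{Construction.} The edge indicator is $\tfrac{1+m_{i,j}}{2}$, so
\begin{equation}
H_{T_n}(\mathbf{m},\mathbf{v})=\sum_{i<j}\frac{1+m_{i,j}}{2}\,v_iv_j .
\end{equation}
For fixed $\mathbf{m}$ this is exactly $H_{S_{\mathbf{m}}}(\mathbf{v})$. Minimizing over the auxiliary spins $\mathbf{v}$ gives $G_n(\mathbf{m})$ with shift $\Gamma=0$, so \cref{eq:effective energy} holds. Each term is a $3$-local energy function on $\{m_{i,j},v_i,v_j\}$ with rational values, in line with the bit-length conventions of \cref{ssec:comp compl assumptions}.

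\textbf{Cut-off.} All values of $H_{S_{\mathbf{m}}}$ are integers, since they are sums of terms $v_iv_j=\pm1$. Hence any $\mathbf{v}$ that is not a ground state has energy at least $G_n(\mathbf{m})+1$, which gives $\Delta\geq 1$. The same count shows the cut-off is in general not larger than $1$, which explains why the statement specifies $\Delta=1$. This simulation need not have constant degeneracy, but the lemma does not ask for it.

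\textbf{Conclusion and main obstacle.} Applying \cref{def:vp-max} to the efficient simulation $\{T_n\}\polysim\mathcal{G}$ with cut-off $1$ yields $\mathcal{M}\vpolysim\mathcal{G}$ with cut-off $1$ for any (v,p)-maximal $\mathcal{M}$. The only point needing care is the cut-off: it must come from the integrality of energies rather than from rescaling. Scaling $T_n$ by a large $\lambda$ would scale the target to $\lambda\mathcal{G}$ rather than leave it as $\mathcal{G}$, so it cannot be used to enlarge the cut-off here. Integrality gives exactly the value $1$ claimed.
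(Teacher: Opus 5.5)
Your proof is correct and follows the paper's argument essentially verbatim. Your term $\tfrac{1+m_{i,j}}{2}v_iv_j$ is exactly the paper's gadget $S_{i,j}$, summing these reproduces $H_{S_{\mathbf{m}}}(\mathbf{v})$ on the auxiliary spins $\mathbf{v}$, integrality of the energies gives the cut-off, and (v,p)-maximality transfers the simulation to the model.

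One side remark is wrong, though it does not affect the lemma. Every energy of $S_{\mathbf{m}}$ equals $|E|-2|E_{+-}(\mathbf{v})|$, so all energies have the same parity as $|E|$. Hence this construction has cut-off at least $2$, and exactly $2$ (attained, e.g.\ , for a single edge), not ``in general not larger than $1$''.
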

\begin{proof}
    For system size $n$, we construct a spin system $S_n$ acting on spins $(\mathbf{m}, \mathbf{v})$ that efficiently simulates  the restriction of $\mathcal{G}$ to systems of $n$ spins,  $G_n(\mathbf{m})$. The claim then follows by definition of (v,p)-maximality (\cref{def:vp-max}).

    For $i,j$ consider the spin system $S_{i,j}$ with Hamiltonian
    \begin{equation}
        H_{S_{i,j}}(v_i,v_j,m_{i,j}) = \begin{cases}
            v_iv_j \ &\text{if} \ m_{i,j}=+1 \\
            0 \ &\text{if} \ m_{i,j}=-1.
        \end{cases}
    \end{equation}
    Define
    \begin{equation}
        S_n\coloneqq \sum_{i=1}^{n-1}\sum_{j=i+1}^n S_{i,j}.       
    \end{equation}
    Then 
    \begin{equation}
        H_{S_n}(\mathbf{m}, \mathbf{v}) = 
        H_{S_{\mathbf{m}}}(\mathbf{v}).
    \end{equation}
    This implies that 
    \begin{equation}
        \min_{\mathbf{v}}H_{S_n}(\mathbf{m}, \mathbf{v}) =\min_{\mathbf{v}} H_{S_{\mathbf{m}}}(\mathbf{v}) 
        = G_n(\mathbf{m}).  
    \end{equation}
    Hence, $S_n$ simulates $G_n$. Since $\vert S_{i,j} \vert = 2^3$, $\vert S_n \vert = \frac{n(n-1)}{2}2^3 = \mathcal{O}(n^2)$ this simulation is efficient. Since by construction $H_{S_n}$ is integer valued, the cut-off of this simulation is at least $\Delta=1$.   
\end{proof}
Note that scaling all target systems $\lambda S_{\mathbf{m}}$ and denoting the respective ground-state function by $\lambda\mathcal{G}$, the above construction yields an efficient simulation of the $\NPclass$-hard function $\lambda\mathcal{G}$ with cut-off $\Delta=\lambda$. However, without scaling the target, in general, increasing the cut-off is not possible (see \cref{ssec: supp cut-off}).
An explicit example simulation of $\mathcal{G}$ with DBMs as source model is provided in \cref{ssec: supp DBM} and extension to other ground-state energy functions can be found in \cref{rem:generalization NP-hard}.

We now turn to the second part of the separation statement we seek to prove, namely that  shallow models cannot simulate hard functions. 
\begin{lemma}[computability of shallow effective energies]\label{lem:shallow polytime}
    For a shallow spin ML model $\mathcal{M}$, its visible spin effective energies $\min_{\mathbf{h}}H_S(\mathbf{v},\mathbf{h})$ are polytime computable.
\end{lemma}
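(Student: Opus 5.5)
The proof will use the defining property of shallow models: no local energy function contains more than one hidden spin. Given a system $S$ in a shallow spin ML model $\mathcal{M}$ and a fixed visible configuration $\mathbf{v}$, I partition the hyperedges $E_S$ into three groups. $E_0$ contains only visible spins. For each hidden spin $h_i$, $E_i$ contains $h_i$ together with (possibly) some visible spins. By shallowness every hyperedge falls into exactly one group, so
\begin{equation}
H_S(\mathbf{v},\mathbf{h}) = \sum_{e\in E_0} J_S(e)(\mathbf{v}\vert_e) + \sum_{i} \sum_{e\in E_i} J_S(e)(\mathbf{v}\vert_{e\setminus\{h_i\}}, h_i).
\end{equation}
For fixed $\mathbf{v}$ the hidden spins are therefore completely decoupled. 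The minimization over $\mathbf{h}$ factorizes into independent single-spin minimizations:
\begin{equation}
\min_{\mathbf{h}} H_S(\mathbf{v},\mathbf{h}) = \sum_{e\in E_0} J_S(e)(\mathbf{v}\vert_e) + \sum_i \min_{h_i\in\mathbf{2}} \sum_{e\in E_i} J_S(e)(\mathbf{v}\vert_{e\setminus\{h_i\}}, h_i).
\end{equation}
This identity is the core of the argument. It holds because a sum of functions of disjoint variables is minimized termwise.

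The algorithm then evaluates the right-hand side directly. For each hyperedge it looks up the relevant entry of the table $J_S(e)$, once for $E_0$ and twice, for $h_i=\pm1$, for each $E_i$. It sums within each group, takes the minimum of two numbers per hidden spin, and sums the results.

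The number of table lookups is at most $2\vert E_S\vert \le 2\vert S\vert$. The number of additions and comparisons is $\mathcal{O}(\vert E_S\vert + \vert \mathrm{Hid}_S\vert)$. A hidden spin appearing in no hyperedge contributes nothing, so we may assume every hidden spin lies in some hyperedge. Then $\vert\mathrm{Hid}_S\vert \le \sum_e \vert e\vert \le \vert S\vert$, and the total operation count is $\mathcal{O}(\vert S\vert)$.

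The remaining point is that "polytime" should mean polynomial in the input description length. The input consists of $S$, given by its tables, together with $\mathbf{v}$. Under the conventions of \cref{ssec:comp compl assumptions} all local energy values are binary rationals of polynomial bit length. The algorithm only adds polynomially many such numbers and compares pairs of them, so every intermediate value has bit length polynomial in the input. Each operation is therefore polynomial-time, and the whole computation runs in time polynomial in the description of $S$.

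When $\mathcal{M}$ efficiently simulates a target family on $n$ visible spins, $\vert S\vert = \mathrm{poly}(n)$, so the runtime is polynomial in $n$ as well. This is the form needed for the separation argument that follows.

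I expect no serious obstacle. The only subtle step is the bookkeeping that makes the partition of $E_S$ well defined, namely that each hyperedge has at most one hidden spin; that is exactly the definition of shallow. The rest is the routine bit-length accounting for rational arithmetic.
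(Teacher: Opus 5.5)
Your proposal is correct and follows essentially the paper's argument: split $H_S$ into a hidden-independent part plus terms $H_i$ over the disjoint edge sets containing hidden spin $i$, minimize each $h_i$ separately with two evaluations, and conclude with an $\mathcal{O}(\vert S\vert)$ count of local-energy evaluations using polynomial-bit-length rational arithmetic. The only difference is cosmetic: you take the per-spin minima directly, whereas the paper first computes a minimizer $\mathbf{h}_{\min}(\mathbf{v})$ and then re-evaluates $H_S$ at it ($3\vert S\vert$ versus your $2\vert S\vert$ evaluations).
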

\begin{proof}
    The statement claims that given a full description\footnote{Namely, a list of all its local energy functions. That is,  for each $e \in E_S$, all $2^{\vert e \vert}$ values of $J_S(e)$. The length of this list is $\mathrm{poly}(\vert S \vert)$.} of any $S\in \mathcal{M}$ and any of its visible configurations $\mathbf{v}$, then $\min_{\mathbf{h}}H_S(\mathbf{v},\mathbf{h})$ is polytime computable.
    To prove it, note first that the input contains a full description of $S$, so evaluating any of its local energy functions is trivially polytime computable.
    In the following, we show that computing $\min_{\mathbf{h}}H_S(\mathbf{v},\mathbf{h})$ requires at most $\mathrm{poly}(\vert S \vert)$ such evaluations.

    Given $S \in \mathcal{M}$ and $\mathbf{v}$, we first compute any minimizer $\mathbf{h}_{\min}(\mathbf{v})$ of  $H_S(\mathbf{v},-)$.  
    Take any hidden spin $i$ and consider all those hyperedges of $S$ that contain $i$, i.e.\ $E_i\coloneqq \{e \in E_S \mid i \in e \}$. Since $S$ is shallow, each of these contains no hidden spins other than $i$. 
    So the respective local energy functions satisfy
    \begin{equation}
        J_S(e)(\mathbf{v}\vert_e,\mathbf{h}\vert_e) = J_S(e)(\mathbf{v}\vert_e,h_i).
    \end{equation}
    Let us define 
    \begin{equation}
        H_i(\mathbf{v},h_i) = \sum_{e \in E_i} J_S(e)(\mathbf{v}\vert_e,h_i).
    \end{equation}
    Since $H_i$ only depends on one hidden spin, computing the state $h_i$ that minimizes $H_i$ for given $\mathbf{v}$ requires only 2 evaluations of $H_i$, and thus $2\vert E_i \vert$ evaluations of local energy functions of $S$. 

    Since for different hidden spins $i\neq j$, $E_i$ and $E_j$ are disjoint,
    the Hamiltonian of $S$ can be expressed as 
\begin{equation}\label{eq:sum decomp shallow}
        H_S(\mathbf{v},\mathbf{h} ) = \sum_{i=1}^m H_i(\mathbf{v},h_i) + H'(\mathbf{v})
    \end{equation}
    where $m$ is the number of hidden spins of $S$, and $H'$ is the sum of all local energy functions that contain no hidden spins. Minimizing $H_S$ for given $\mathbf{v}$ hence amounts to minimizing each $H_i$. Thus, computing a minimizer $\mathbf{h}_{\min}(\mathbf{v})$ requires 
    \begin{equation}
       \sum_{i=1}^m 2\vert E_i \vert \leq 2 \vert E_S \vert \leq 2 \vert S \vert 
    \end{equation}
    evaluations of local energy functions of $S$.

    Finally, computing the  minimum $H_S(\mathbf{v},\mathbf{h}_{\min}(\mathbf{v}))$ requires another evaluation of each local energy function. In total, computing $\min_{\mathbf{h}}H_S(\mathbf{v},\mathbf{h})$ requires at most $3\vert S \vert=\mathrm{poly}(\vert S \vert)$ evaluations of local energy functions.
    Under our rational-encoding assumptions, these
    sums and comparisons can be performed exactly in
    polynomial time, since all intermediate rational values
    have polynomial bit length. Hence, $H_S(\mathbf{v},\mathbf{h}_{\min}(\mathbf{v}))$ is polytime computable.
\end{proof}

This leads to the separation of simulations. 
\begin{theorem}[complexity separation for simulations]\label{thm:separation 1} The $\NPclass$-hard function $\mathcal{G}$ can be efficiently simulated by any (v,p)-maximal spin ML model, but not by shallow spin ML models, unless the polynomial hierarchy collapses to its second level.
\end{theorem}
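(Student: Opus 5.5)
The proof has two halves, and the first is already in hand. Efficient simulability of $\mathcal{G}$ by any (v,p)-maximal spin ML model is exactly \cref{lem:eff sim GSE}. All remaining work goes into the negative statement. Assume, for contradiction, that some shallow spin ML model $\mathcal{M}$ efficiently simulates $\mathcal{G}$, visibly and with some cut-off. Then for every $n$ there is a system $S_n\in\mathcal{M}$ with $\vert S_n\vert=\mathrm{poly}(n)$, visible spins exactly $\mathbf{m}$, and a shift $\Gamma_n$ such that $\min_{\mathbf{h}}H_{S_n}(\mathbf{m},\mathbf{h})=G_n(\mathbf{m})+\Gamma_n$. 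By our encoding conventions, the descriptions of $S_n$ and $\Gamma_n$ have polynomial bit length.

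We turn this family into polynomial-size advice. The string $a_n=(S_n,\Gamma_n)$ depends only on $n$, not on the input $\mathbf{m}$, and has length $\mathrm{poly}(n)$. By \cref{lem:shallow polytime}, given $a_n$ and $\mathbf{m}$, a deterministic machine computes $\min_{\mathbf{h}}H_{S_n}(\mathbf{m},\mathbf{h})$ in polynomial time, and subtracting $\Gamma_n$ gives $G_n(\mathbf{m})$ exactly. So $\mathcal{G}$ is computable in $\FPclass/\mathrm{poly}$, i.e.\ by polynomial-size circuits.

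Next we pass to an $\NPclass$-complete decision problem. \cref{lem:Ising gse} says computing $\mathcal{G}$ is $\NPclass$-hard. We will use this in the concrete form given by the standard reduction: the decision problem ``$G_n(\mathbf{m})\le K$'' is $\NPclass$-complete, being equivalent to MAX-CUT on the graph $\mathbf{m}$. That decision problem is then in $\Pclass/\mathrm{poly}$, via the advice above and one comparison. Since a language in $\Pclass/\mathrm{poly}$ is closed under polytime many-one reductions, $\NPclass\subseteq\Pclass/\mathrm{poly}$. The Karp--Lipton theorem then gives $\PH=\Sigma_2^{\mathsf{p}}$, i.e.\ the polynomial hierarchy collapses to its second level, contradicting the assumption.

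I expect the main obstacle to be pinning down the hardness notion so that Karp--Lipton applies. If \cref{lem:Ising gse} only gives Turing (Cook) $\NPclass$-hardness of the function $\mathcal{G}$, the inclusion $\NPclass\subseteq\Pclass/\mathrm{poly}$ still goes through. A polytime oracle machine $M$ deciding SAT with oracle calls to $\mathcal{G}$ makes queries of polynomially bounded size, so it needs advice strings $a_{n'}$ only for polynomially many $n'$, and their concatenation is still polynomial advice. We must also make sure the argument uses only the existence of $S_n$ with polynomial-length rational weights, which the stated assumptions provide, and never efficient constructibility of $S_n$. The advice model is exactly what lets us avoid needing constructibility. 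Finally, the cut-off plays no role in the lower bound, so the statement holds for any cut-off.
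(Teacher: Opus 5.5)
Your proposal is correct and matches the paper's proof: the positive half is \cref{lem:eff sim GSE}, and the negative half uses $(S_n,\Gamma_n)$ as polynomial-length advice together with \cref{lem:shallow polytime} to place $\mathcal{G}$ in $\FPclass/\mathrm{poly}$, followed by Karp--Lipton. You additionally derive $\NPclass\subseteq\Pclass/\mathrm{poly}$ explicitly, via the $\NPclass$-complete decision version and by concatenating advice over the polynomially many input lengths a reduction can query; this fills in a step the paper leaves implicit.
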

\begin{proof}
By \cref{lem:eff sim GSE}, any (v,p)-maximal model can efficiently simulate the ground-state energy function $\mathcal{G}$ of $\mathcal{M}_{\mathrm{Ising}}$. By \cref{lem:Ising gse}, computing $\mathcal{G}$ is $\NPclass$-hard.
     
We now prove that if there existed a shallow model $\mathcal{M}$ with $\mathcal{M} \vpolysim \mathcal{G}$, then  $\mathcal{G} \in \FPclass/\mathrm{poly}$. Since $\mathcal{G}$ is $\NPclass$-hard, 
    by the Karp--Lipton theorem \cite{Ka82} this would imply that the polynomial hierarchy collapses to its second level, which is believed to be very unlikely.

    Assuming a shallow $\mathcal{M}$ with $\mathcal{M} \vpolysim \mathcal{G}$, we construct the following $\FPclass/\mathrm{poly}$-algorithm for $\mathcal{G}$.
    For graph encodings of $n$-spin systems, take as advice string the full description of the system $S_n \in \mathcal{M}$ that simulates $G_n$, the restriction of $\mathcal{G}$ to encodings  of $n$ spins, and the respective energy shift $\Gamma_n$.
   Since $\mathcal{M} \vpolysim \mathcal{G}$, $\vert S_n \vert = \mathrm{poly}(n)$ so assuming that all parameters have polynomial bit complexity, the length of this is polynomial in $n$.
   Computing $G_n(\mathbf{v})$ amounts to computing the effective energy $\min_{\mathbf{h}}H_{S_n}(\mathbf{v},\mathbf{h})$, up to a potential energy shift $\Gamma_n$. Since $\mathcal{M}$ is shallow, by \cref{lem:shallow polytime}, given a full description of $S_n$, this requires a runtime of $\mathrm{poly}(\vert S_n \vert) = \mathrm{poly}(n)$. Thus, for each $n$, given a full description of $S_n$ and the respective energy shift $\Gamma_n$ as advice, $G_n$ can be computed in polynomial time. This implies that $\mathcal{G} \in \FPclass/\mathrm{poly}$.
\end{proof}
Note that, in general, if a shallow model efficiently simulates a function, then that function is necessarily in $\FPclass/\mathrm{poly}$. 

The separation \cref{thm:separation 1} can be further strengthened whenever the efficient simulations can additionally be efficiently computed, an additional assumption that we study in \cref{rem:polytime simulations} (see also \cite{Re24}).

\subsection{Separation 2: Unnormalized marginals}\label{ssec: separation 2}

Since the complexity separation for simulations (\cref{thm:separation 1}) uses a constant cut-off (of $\Delta=1$) it
need not imply a separation for the corresponding approximations (cf. \cref{ssec: supp cut-off}). 
Here we extend the complexity separation to unnormalized visible spin marginals, in \cref{thm:separation 2}.

We denote the unnormalized Boltzmann distribution of $S$ by $p^*_S=Z_Sp_S$.
Unnormalized visible spin marginals are given by 
\begin{equation}\label{eq:visibl spin marg}
    p^*_S(\mathbf{v})=\sum_{\mathbf{h}}e^{-H_S(\mathbf{v},\mathbf{h})}.
\end{equation}
Constant-degeneracy simulations preserve unnormalized visible spin marginals up to a common positive factor and a relative error controlled by the cut-off.
For normalized distributions, this common factor cancels. We consider probability ratios of normalized marginals in \cref{ssec: separation 3}.

We use that the partition function   $\mathcal{Z}$ of our reference model  $\mathcal{M}_{\mathrm{Ising}}$ is $\SharpPclass$-hard  by \cref{lem:part fun comp}. 

\begin{lemma}[efficient partition function approximation]\label{lem:sharp P}
    Any (v,p)-maximal and strongly visible closed spin ML model can
    efficiently approximate the partition function $\mathcal Z$
    of $\mathcal M_{\mathrm{Ising}}$ in its unnormalized visible
    spin marginals, to arbitrary relative error, up to a common positive factor.
    
    More precisely, for every $n$ and $0<\varepsilon<1$, there
    exist a system $T_{n,\varepsilon}$ of size $\mathrm{poly}(n)$
    and a factor $\alpha_{n,\varepsilon}>0$ such that
    \begin{equation}
    \max_{\mathbf m}
    \left|
        \frac{p^*_{T_{n,\varepsilon}}(\mathbf m)}
             {\alpha_{n,\varepsilon}Z_n(\mathbf m)}
        -1
    \right|
    \leq\varepsilon.
    \end{equation}
    The polynomial size bound is independent of $\varepsilon$.
    The factor $\alpha_{n,\varepsilon}$ may depend on $n$ and
    $\varepsilon$, but not on $\mathbf m$.
    \end{lemma}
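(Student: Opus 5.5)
Start from the system $S_n$ of \cref{lem:eff sim GSE}, with spins $(\mathbf m,\mathbf v)$, size $\mathcal O(n^2)$ and Hamiltonian $H_{S_n}(\mathbf m,\mathbf v)=H_{S_{\mathbf m}}(\mathbf v)$. Summing its unnormalized Boltzmann weights over $\mathbf v$ gives exactly
\begin{equation}
\sum_{\mathbf v} e^{-H_{S_n}(\mathbf m,\mathbf v)} = Z_n(\mathbf m).
\end{equation}
So if $S_n$ were itself a spin ML system with visible spins $\mathbf m$ and hidden spins $\mathbf v$, then $p^*_{S_n}(\mathbf m)=Z_n(\mathbf m)$ would hold exactly. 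The proof therefore reduces to transferring this marginal identity into the model $\mathcal M$ with only a controlled multiplicative error.

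\textbf{Step 1.} View $S_n$ as a target with all spins $(\mathbf m,\mathbf v)$ physical. Since $\vert S_n\vert=\mathrm{poly}(n)$, the trivial simulation $S_n\polysim S_n$ has degeneracy $1$ and arbitrary cut-off. By (v,p)-maximality, for every $\Delta$ there is $S\in\mathcal M$ with $S\vpolysim S_n$, constant degeneracy $d$, cut-off at least $\Delta$, and size $\mathrm{poly}(n)$ independent of $\Delta$. Its visible spins are exactly $(\mathbf m,\mathbf v)$.

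\textbf{Step 2.} Apply strong visible closure with $V'=\{\mathbf m\}$. This gives $T\in\mathcal M$ with visible spins $\mathbf m$, size $\mathrm{poly}(\vert S\vert)=\mathrm{poly}(n)$, and $T\simul S$ with constant degeneracy $d'$ and cut-off at least $\Delta$. Composing the two simulations yields $T\simul S_n$ with physical spins $(\mathbf m,\mathbf v)$, constant degeneracy $dd'$, cut-off at least $\Delta$, and shift $\Gamma$. Here $\mathbf v$ are hidden in $T$.

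\textbf{Step 3.} Estimate the marginals. Split the hidden configurations of $T$ as $(\mathbf v,\mathbf h)$. For each $(\mathbf m,\mathbf v)$, the $dd'$ correct auxiliary configurations contribute exactly $dd'\,e^{-\Gamma}e^{-H_{S_n}(\mathbf m,\mathbf v)}$. The remaining, at most $2^{N}$ auxiliary configurations (with $N=\mathrm{poly}(n)$ the number of auxiliary spins) have energy at least $H_{S_n}+\Gamma+\Delta$, so they contribute at most $2^{N}e^{-\Delta}e^{-\Gamma}e^{-H_{S_n}(\mathbf m,\mathbf v)}$. Summing over $\mathbf v$ and setting $\alpha=dd'e^{-\Gamma}$ gives
\begin{equation}
1\le \frac{p^*_T(\mathbf m)}{\alpha Z_n(\mathbf m)}\le 1+\frac{2^{N}e^{-\Delta}}{dd'}.
\end{equation}
This is the quantitative version of ``constant-degeneracy simulations preserve Boltzmann distributions'' from \cite{Re25}. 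Choosing $\Delta\ge N\log 2+\log(1/\varepsilon)$ makes the error at most $\varepsilon$, and we set $T_{n,\varepsilon}:=T$, $\alpha_{n,\varepsilon}:=\alpha$.

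\textbf{Main obstacle.} The delicate point is the circularity in this choice of $\Delta$: $N$ depends on the source, which was chosen after $\Delta$. This is resolved because the size bound in (v,p)-maximality and in strong visible closure is a polynomial independent of the cut-off. So $N\le p(n)$ for a fixed polynomial $p$, and one takes $\Delta=p(n)\log 2+\log(1/\varepsilon)$. Consequently the size of $T_{n,\varepsilon}$ is independent of $\varepsilon$; only the bit lengths of the weights grow with $\log(1/\varepsilon)$, as the conventions of \cref{ssec:comp compl assumptions} allow. The factor $\alpha$ depends on $n,\varepsilon$ through $d,d',\Gamma$ but not on $\mathbf m$, as the statement requires.
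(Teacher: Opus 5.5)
Your proposal is correct and follows essentially the same route as the paper's proof. Both use (v,p)-maximality to visibly simulate $S_n$, then strong visible closure to hide $\mathbf v$, then compose and sandwich the Boltzmann weights with cut-off $q(n)\log 2+\log(1/\varepsilon)$, where $q(n)$ is the cut-off-independent polynomial bound on auxiliary spins; your $\alpha=dd'e^{-\Gamma}$ is the paper's $\alpha=de^{-\Gamma}$ with the composite degeneracy written out explicitly.
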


Approximating $\mathcal{Z}$ to sufficient precision is itself $\SharpPclass$-hard (\cref{cor:part fun approx mult}). Thus, in the above sense, (v,p)-maximal and strongly visible closed models
can efficiently represent $\SharpPclass$-hard functions in their unnormalized visible spin marginals.

\begin{proof}
Fix $n$ and $0<\varepsilon<1$.
We construct a system $T_{n,\varepsilon}\in\mathcal M$
of size $\mathrm{poly}(n)$ and a factor
$\alpha_{n,\varepsilon}>0$ such that, for every graph
encoding $\mathbf m$,
\begin{equation}
    p^*_{T_{n,\varepsilon}}(\mathbf m)
    =
    \alpha_{n,\varepsilon}Z_n(\mathbf m)
    \bigl(1+r_{n,\varepsilon}(\mathbf m)\bigr),
    \qquad
    0\leq r_{n,\varepsilon}(\mathbf m)\leq\varepsilon.
\end{equation}

Consider the system $S_n$ from \cref{lem:eff sim GSE}.
It has size $\mathrm{poly}(n)$ and satisfies
\begin{equation}
    H_{S_n}(\mathbf m,\mathbf v)
    =
    H_{S_{\mathbf m}}(\mathbf v),
    \qquad
    \sum_{\mathbf v}
    e^{-H_{S_n}(\mathbf m,\mathbf v)}
    =
    Z_n(\mathbf m).
\end{equation}

Since $\mathcal M$ is (v,p)-maximal, for any cut-off
there exists a system $U\in\mathcal M$ that visibly and
efficiently simulates $S_n$ with constant degeneracy.
The visible spins of $U$ correspond to
$(\mathbf m,\mathbf v)$.
By strong visible closure, for any cut-off there exists
a system $T\in\mathcal M$ that simulates $U$ with constant
degeneracy and has visible spins corresponding only to
$\mathbf m$.
Both simulations are efficient, so $|T|=\mathrm{poly}(n)$.

Their composition simulates $S_n$ with constant degeneracy.
Moreover, the efficiency assumptions provide a polynomial
$q(n)$, independent of the cut-offs, that bounds the number
of auxiliary spins in this composed simulation.
Choose both cut-offs to be at least
\begin{equation}
    \Delta
    =
    q(n)\log 2+\log(1/\varepsilon),
\end{equation}
and denote the resulting system by $T_{n,\varepsilon}$.
The composed simulation then has cut-off at least $\Delta$.

Write $\mathbf h$ for its auxiliary configuration, excluding
the original spins $(\mathbf m,\mathbf v)$.
Let $a\leq q(n)$ be the number of these auxiliary spins,
$\Gamma$ the energy shift, and $d$ the constant degeneracy.
For each fixed $(\mathbf m,\mathbf v)$, exactly $d$
auxiliary configurations have energy
$H_{S_n}(\mathbf m,\mathbf v)+\Gamma$, while every other
auxiliary configuration has energy at least
$H_{S_n}(\mathbf m,\mathbf v)+\Gamma+\Delta$.
Consequently,
\begin{equation}
\begin{aligned}
    d e^{-\Gamma}e^{-H_{S_n}(\mathbf m,\mathbf v)}
    &\leq
    \sum_{\mathbf h}
    e^{-H_{T_{n,\varepsilon}}(\mathbf m,\mathbf v,\mathbf h)}
    \\
    &\leq
    d e^{-\Gamma}e^{-H_{S_n}(\mathbf m,\mathbf v)}
    \left(1+\frac{2^a-d}{d}e^{-\Delta}\right).
\end{aligned}
\end{equation}
Since $d\geq1$, our choice of $\Delta$ ensures that
\begin{equation}
    \frac{2^a-d}{d}e^{-\Delta}
    \leq 2^{q(n)}e^{-\Delta}
    =\varepsilon.
\end{equation}

Summing over $\mathbf v$ and setting
$\alpha_{n,\varepsilon}=d e^{-\Gamma}>0$ therefore gives
\begin{equation}\label{eq: unnormalized approx}
    \alpha_{n,\varepsilon}Z_n(\mathbf m)
    \leq
    p^*_{T_{n,\varepsilon}}(\mathbf m)
    \leq
    \alpha_{n,\varepsilon}Z_n(\mathbf m)
    (1+\varepsilon).
\end{equation}
This proves the required relative-error bound uniformly
over $\mathbf m$.
The common factor $\alpha_{n,\varepsilon}$ may depend on
the chosen accuracy through the simulation's degeneracy
and energy shift, but it is independent of $\mathbf m$.
\end{proof}

\begin{lemma}[computability of shallow unnormalized marginals]\label{lem:shallow marg}
    If $\mathcal{M}$ is shallow then its unnormalized visible spin marginals can be evaluated using polynomially
    many arithmetic operations and exponential evaluations.
\end{lemma}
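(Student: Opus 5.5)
The plan is to reuse the decomposition from the proof of \cref{lem:shallow polytime}. The key observation is that, for a shallow system, summing $e^{-H_S}$ over the hidden spins factorizes exactly as minimizing $H_S$ over them did. Fix $S\in\mathcal M$ with $m$ hidden spins, and let $E_i=\{e\in E_S \mid i\in e\}$ be the set of hyperedges containing hidden spin $i$. Because $S$ is shallow, each $e\in E_i$ contains no hidden spin other than $i$, and the sets $E_i$ are pairwise disjoint. Hence \cref{eq:sum decomp shallow} holds:
\[
H_S(\mathbf v,\mathbf h)=\sum_{i=1}^m H_i(\mathbf v,h_i)+H'(\mathbf v),
\]
where each $H_i$ depends on the hidden configuration only through $h_i$, and $H'$ collects the hyperedges without hidden spins.

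Substituting this into \cref{eq:visibl spin marg}, the sum over $\mathbf h\in\mathbf 2^m$ splits into a product of independent single-spin sums:
\[
p^*_S(\mathbf v)=e^{-H'(\mathbf v)}\prod_{i=1}^m\Bigl(e^{-H_i(\mathbf v,+1)}+e^{-H_i(\mathbf v,-1)}\Bigr).
\]
I would justify this by distributivity, writing $\sum_{\mathbf h}\prod_i f_i(h_i)=\prod_i\sum_{h_i}f_i(h_i)$.

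For the operation count, evaluating all $H_i(\mathbf v,\pm1)$ and $H'(\mathbf v)$ takes at most $2|E_S|+|E_S|\le 3|S|$ evaluations of local energy functions. The rational sums needed to form them are polynomially many arithmetic operations, as in \cref{lem:shallow polytime}. The formula then requires $2m+1$ exponential evaluations, $m$ additions and $m$ multiplications, and $m\le|V_S|\le|S|$. So $\mathrm{poly}(|S|)$ arithmetic operations and exponential evaluations suffice.

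I expect no real obstacle here, since the statement only counts operations and exponential evaluations rather than bit complexity. The one point needing care is confirming that the factorization is exact. This relies on disjointness of the $E_i$ and on no hyperedge containing two hidden spins, which is exactly the definition of shallow. To avoid overflow when the result is later used numerically, I might note the equivalent log form
\[
\log p^*_S(\mathbf v)=-H'(\mathbf v)+\sum_i\Bigl(-\min_{h_i}H_i(\mathbf v,h_i)+\log\bigl(1+e^{-|H_i(\mathbf v,+1)-H_i(\mathbf v,-1)|}\bigr)\Bigr).
\]
The precision analysis for this form is deferred to \cref{cor:shallow log eval}.
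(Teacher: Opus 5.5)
Your proposal is correct and follows the paper's proof: both use the shallow decomposition \cref{eq:sum decomp shallow} to factorize $p^*_S(\mathbf v)$ into $e^{-H'(\mathbf v)}$ times a product of independent single-hidden-spin sums, and then count $\mathrm{poly}(|S|)$ local-energy evaluations, arithmetic operations and exponentials. Your additional log form is the same stabilized expression the paper uses in \cref{cor:shallow log eval}.
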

\begin{proof}
     By \cref{eq:sum decomp shallow}, the Hamiltonian of a shallow system $S$ can be decomposed into a sum of $m$ terms each depending on a single hidden spin $i$, and a term  independent of hidden spins. This implies that 
\begin{equation}\label{eq:vis spin marg decomp}
    p^*_{S} (\mathbf{v})=\sum_{\mathbf{h}}e^{-H_{S}(\mathbf{v},\mathbf{h})} = e^{-H'(\mathbf{v})}\cdot \prod_{i=1}^m\sum_{h_i} e^{-H_i(\mathbf{v},h_i)}.
    \end{equation}
   It follows that computing $p^*_{S}(\mathbf{v})$ requires one evaluation of $H'$ and two evaluations of each $H_i$, hence $\mathrm{poly}(\vert S \vert)$ evaluations of local energy functions of $S$ (see the proof of \cref{lem:shallow polytime} for details).
\end{proof}

\begin{corollary}[computability of approximating shallow unnormalized marginals and ratios]\label{cor:shallow log eval}
For a shallow spin ML system $S$, both
\begin{equation}
    \log p^*_S(\mathbf v)
    \qquad\text{and}\qquad
    \log\frac{p_S(\mathbf v_1)}{p_S(\mathbf v_2)}
\end{equation}
can be approximated to additive error $2^{-b}$ in time
polynomial in the description length of $S$ and $b$.
\end{corollary}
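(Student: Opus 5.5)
The approach is to work with the decomposition \cref{eq:vis spin marg decomp} from the proof of \cref{lem:shallow marg} and take logarithms:
\begin{equation}
    \log p^*_S(\mathbf v) = -H'(\mathbf v) + \sum_{i=1}^m \log\Bigl(\sum_{h_i\in\mathbf 2} e^{-H_i(\mathbf v,h_i)}\Bigr).
\end{equation}
By our rational-encoding conventions, each quantity $H'(\mathbf v)$ and $H_i(\mathbf v,\pm1)$ is a sum of at most $\vert S\vert$ rationals of polynomial bit length. These sums can therefore be computed exactly in polynomial time, exactly as in the proof of \cref{lem:shallow polytime}.

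The remaining task is to evaluate each two-term log-sum-exp numerically. Write $a_i=\min_{h_i}H_i(\mathbf v,h_i)$ and $\delta_i=\vert H_i(\mathbf v,+1)-H_i(\mathbf v,-1)\vert\ge 0$. Then
\begin{equation}
    \log\sum_{h_i}e^{-H_i(\mathbf v,h_i)} = -a_i+\log\bigl(1+e^{-\delta_i}\bigr).
\end{equation}
This form keeps everything numerically stable. The argument $e^{-\delta_i}$ lies in $(0,1]$, so $\log(1+e^{-\delta_i})$ lies in $(0,\log 2]$ and is bounded regardless of how large the weights are. I would compute $e^{-\delta_i}$ to additive precision $2^{-b'}$ using a truncated Taylor series with argument reduction, which takes time polynomial in the bit length of $\delta_i$ and $b'$. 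I would then compute $\log(1+x)$ for $x\in[0,1]$, again by series, to the same precision. Since $\log(1+x)$ is $1$-Lipschitz on $[0,1]$, the error is at most about $2^{-b'+1}$.

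To control the total error, sum the $m\le\vert S\vert$ terms. Choosing $b'=b+\lceil\log_2(2m)\rceil+1$ keeps the accumulated error below $2^{-b}$, and this $b'$ is polynomial in $b$ and the description length. For the ratio, note that $\log\frac{p_S(\mathbf v_1)}{p_S(\mathbf v_2)}=\log p^*_S(\mathbf v_1)-\log p^*_S(\mathbf v_2)$, because $Z_S$ cancels. It therefore suffices to approximate each term to error $2^{-b-1}$.

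The main obstacle is the precision analysis. In particular, the weights may have large magnitude (polynomially many bits), so $e^{-H_i}$ may be astronomically large or small. This is exactly why I factor out $a_i$ and work only with $\log(1+e^{-\delta_i})$: it avoids ever forming these quantities, and computing $e^{-\delta}$ to additive precision for large rational $\delta$ is standard (if $\delta>b'+1$ one may simply output $0$).
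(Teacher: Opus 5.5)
Your proposal is correct and follows essentially the same route as the paper. You factor out the minimum of each single-hidden-spin term, so that only the bounded corrections $\log(1+e^{-\delta_i})\in(0,\log 2]$ need numerical evaluation, truncating them to zero when $\delta_i$ is large; you then split the error budget over the $m$ terms, and obtain the ratio claim from the cancellation of $Z_S$ with each term evaluated to error $2^{-(b+1)}$.
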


\begin{proof}
By the factorization in \cref{lem:shallow marg},
\begin{equation}
    \log p^*_S(\mathbf v)
    =
    -H'(\mathbf v)
    +\sum_{i=1}^m
    \log\left(
        e^{-H_i(\mathbf v,-1)}
        +e^{-H_i(\mathbf v,+1)}
    \right).
\end{equation}
To avoid forming potentially large or small exponentials,
set
\begin{equation}
    L_i=\min\{H_i(\mathbf v,-1),H_i(\mathbf v,+1)\},
    \qquad
    D_i=|H_i(\mathbf v,+1)-H_i(\mathbf v,-1)|.
\end{equation}
Then
\begin{equation}
    \log p^*_S(\mathbf v)
    =
    -H'(\mathbf v)-\sum_{i=1}^m L_i
    +\sum_{i=1}^m\log(1+e^{-D_i}).
\end{equation}
The rational quantities $H'$, $L_i$ and $D_i$ are
polynomial-time computable.

Each remaining correction lies in $[0,\log 2]$ and
satisfies
\begin{equation}
    0\leq\log(1+e^{-D_i})\leq e^{-D_i}.
\end{equation}
If $D_i$ is sufficiently large, the correction can
therefore be replaced by zero at the requested accuracy.
Otherwise, $D_i$ is bounded by the required number
of precision bits, and standard finite-precision
evaluation of the exponential and logarithm takes
polynomial time.

Approximating each correction to additive error at most
$2^{-b}/\max\{1,m\}$ gives total error at most $2^{-b}$.
This requires only $b+\mathcal O(\log(m+1))$ precision
bits per correction and proves the first claim.

For the second claim, the global partition function
cancels:
\begin{equation}
    \log\frac{p_S(\mathbf v_1)}{p_S(\mathbf v_2)}
    =
    \log p^*_S(\mathbf v_1)-\log p^*_S(\mathbf v_2).
\end{equation}
Evaluating each term to additive error $2^{-(b+1)}$
gives the desired error bound for their difference.
\end{proof}

\begin{theorem}[complexity separation for unnormalized marginals]\label{thm:separation 2}
Any (v,p)-maximal and strongly visible closed spin ML model can efficiently approximate
the $\SharpPclass$-hard reference partition-function $\mathcal{Z}$ to arbitrary relative precision in its unnormalized
visible spin marginals, up to a common positive factor independent of the visible configuration.
Shallow models cannot admit such polynomial-size approximations unless the polynomial
hierarchy collapses to $\MAclass$.
\end{theorem}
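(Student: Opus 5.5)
The statement has two parts, and I would prove them separately. The positive part is exactly \cref{lem:sharp P}. For every $n$ and $\varepsilon$ it gives a system $T_{n,\varepsilon}$ of size $\mathrm{poly}(n)$, with size bound independent of $\varepsilon$, whose unnormalized visible marginals satisfy \cref{eq: unnormalized approx}. Combined with \cref{lem:part fun comp}, this shows the target is $\SharpPclass$-hard. So the first sentence needs only these citations.

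For the negative part I would follow the template of \cref{thm:separation 1}, replacing exact computation by approximate computation. Suppose some shallow model admits, for every $n$ and some fixed relative precision $\varepsilon$ (say $\varepsilon=1/4$, or $\varepsilon = 2^{-\mathrm{poly}(n)}$ if \cref{cor:part fun approx mult} requires it), a system $T_n$ of size $\mathrm{poly}(n)$ with polynomial bit-length parameters and a factor $\alpha_n>0$ with
\[
|p^*_{T_n}(\mathbf m)/(\alpha_n Z_n(\mathbf m))-1|\le\varepsilon .
\]
The advice string would consist of the description of $T_n$ together with a rational approximation of $\log\alpha_n$. One technical point is the bit length of $\log\alpha_n$. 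I would handle it either by normalizing against a fixed reference encoding, e.g.\ the empty graph $\mathbf m_0$ where $Z_n(\mathbf m_0)=2^n$ is known, or by noting that $\alpha_n$ is only needed to polynomially many bits.

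Given this advice, \cref{cor:shallow log eval} evaluates $\log p^*_{T_n}(\mathbf m)$ to additive error $2^{-b}$ in polynomial time. Subtracting $\log\alpha_n$ then yields $\log Z_n(\mathbf m)$ up to additive error $\log(1+\varepsilon)+2^{-b}$, which is a multiplicative approximation of $Z_n$. For the empty-graph variant, compare $\log p^*_{T_n}(\mathbf m)-\log p^*_{T_n}(\mathbf m_0)$ and add $n\log 2$; this makes $\alpha_n$ cancel entirely, which is cleaner and is the route I would try first. This places the approximation problem of \cref{cor:part fun approx mult} in $\FPclass/\mathrm{poly}$.

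The last step is the complexity collapse, and I expect it to be the main obstacle. Multiplicative approximation of $Z_n$ is only $\SharpPclass$-hard in the sense of \cref{cor:part fun approx mult}, presumably via randomized or approximate-counting reductions. Exact $\SharpPclass$ would then lie in $\FPclass/\mathrm{poly}$ (or $\mathsf{BPP}$ with advice, derandomized into advice via Adleman's trick), giving $\mathsf{P}^{\SharpPclass}\subseteq \Pclass/\mathrm{poly}$. From there I would invoke the Karp--Lipton-type theorem for $\mathsf{PP}$/$\SharpPclass$: if $\SharpPclass$ has polynomial-size circuits, then $\mathsf{P}^{\SharpPclass}=\MAclass$, because the prover can send the circuit and the verifier checks it via the downward self-reducibility of the permanent (LFKN). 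By Toda's theorem, $\PH\subseteq \mathsf{P}^{\SharpPclass}=\MAclass$, which is the claimed collapse. The delicate points are that the precision demanded by \cref{cor:part fun approx mult} is achievable within polynomial advice and runtime, and that any randomness in that reduction is absorbed into the nonuniform advice.
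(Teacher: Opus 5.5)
Your proposal is correct and follows the paper's proof. The positive part is \cref{lem:sharp P}. For the negative part, the paper likewise uses the shallow system as advice, cancels $\alpha_n$ by normalizing against the edgeless graph $\mathbf m_0$ with $Z_n(\mathbf m_0)=2^n$, and evaluates the logarithms via \cref{cor:shallow log eval}. It then concludes $\SharpPclass\subseteq\FPclass/\mathrm{poly}$ and simply cites the known collapse to $\MAclass$, which is what your Toda-plus-LFKN sketch reconstructs.

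Two of your hedges should be resolved as follows:
\begin{itemize}
\item A constant $\varepsilon=1/4$ would not suffice. You must take $\varepsilon=\rho_n/8$ with $\rho_n=2^{-\mathrm{poly}(n)}$ from \cref{cor:part fun approx mult}. This is harmless, because the size bound is independent of $\varepsilon$ and parameter bit lengths grow only polynomially in $\log(1/\varepsilon)$.
\item The reduction behind that corollary is deterministic (LLL-based, \cref{lem:part fun approx hard}), so no randomization or Adleman step is needed.
\end{itemize}
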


\begin{proof}
 By \cref{lem:sharp P}, any (v,p)-maximal and strongly
visible closed model can efficiently approximate
$\mathcal Z$ to arbitrary relative precision in its
unnormalized visible spin marginals, up to a common
positive factor independent of the visible configuration.

We show that polynomial-size shallow approximations with
this property would imply
$\SharpPclass\subseteq\FPclass/\mathrm{poly}$.
Let $0<\rho_n<1$ be a sufficient relative accuracy from
\cref{cor:part fun approx mult}, with
$\log(1/\rho_n)=\mathrm{poly}(n)$.

Suppose that a shallow model admits the stated
approximations. For each $n$, choose a system $T_n$
of size $\mathrm{poly}(n)$ satisfying
\begin{equation}
    p^*_{T_n}(\mathbf m)
    =
    \alpha_n Z_n(\mathbf m)(1+r_{\mathbf m}),
    \qquad
    |r_{\mathbf m}|\leq\rho_n/8,
\end{equation}
where $\alpha_n>0$ is independent of $\mathbf m$.
Use the description of $T_n$ as advice. Its length is
polynomial under our bit-complexity assumptions.

To eliminate the unknown factor $\alpha_n$, let
$\mathbf m_0=-\mathbf 1_K$, where $K=\binom n2$,
encode the edgeless graph. Since $Z_n(\mathbf m_0)=2^n$,
define
\begin{equation}
    \widehat Z_n(\mathbf m)
    =
    2^n\frac{p^*_{T_n}(\mathbf m)}
              {p^*_{T_n}(\mathbf m_0)}
    =
    Z_n(\mathbf m)
    \frac{1+r_{\mathbf m}}{1+r_{\mathbf m_0}}.
\end{equation}
Consequently,
\begin{equation}
    \left|
        \frac{\widehat Z_n(\mathbf m)}{Z_n(\mathbf m)}-1
    \right|
    \leq
    \frac{2(\rho_n/8)}{1-\rho_n/8}
    \leq \frac{\rho_n}{2}.
\end{equation}

By \cref{cor:shallow log eval}, the logarithms of the
unnormalized marginals of $T_n$ are polynomial-time
computable to any requested additive precision.
We therefore evaluate $\widehat Z_n$ through
\begin{equation}
    \log\widehat Z_n(\mathbf m)
    =
    n\log 2
    +\log p^*_{T_n}(\mathbf m)
    -\log p^*_{T_n}(\mathbf m_0).
\end{equation}
Since
\begin{equation}
    2^n e^{-K}\leq Z_n(\mathbf m)\leq 2^n e^K
\end{equation}
and $\log(1/\rho_n)=\mathrm{poly}(n)$, polynomially many
precision bits suffice to obtain a numerical value
$\widetilde Z_n(\mathbf m)$ with relative error at most
$\rho_n/4$ with respect to $\widehat Z_n(\mathbf m)$.
Combining the representation and numerical errors gives
\begin{equation}
    \left|
        \frac{\widetilde Z_n(\mathbf m)}{Z_n(\mathbf m)}-1
    \right|
    \leq
    \frac{\rho_n}{2}
    +\frac{\rho_n}{4}\left(1+\frac{\rho_n}{2}\right)
    <\rho_n.
\end{equation}

Thus polynomial advice would suffice to approximate
$\mathcal Z$ to the relative accuracy required by
\cref{cor:part fun approx mult}. This implies
$\SharpPclass\subseteq\FPclass/\mathrm{poly}$,
which collapses the polynomial hierarchy to
$\MAclass$~\cite{Ba90}.
Unless this collapse occurs, no shallow model admits
the stated polynomial-size approximations.
\end{proof}

\subsection{Separation 3: Normalized marginals}\label{ssec: separation 3}
The separation for unnormalized marginals relies on
their efficient evaluation for
shallow models. For normalized marginals, the same
argument does not apply directly: the global partition
function of a shallow system can itself be computationally
hard to evaluate\footnote{This is e.g.\ the case for RBMs}. We therefore consider a target
distribution whose hardness remains accessible through
probability ratios, in which this normalization cancels.
We then prove a corresponding complexity separation between (v,p)-maximal and shallow models in terms of the approximation reach of their normalized visible marginals,
\begin{equation}
    p_S(\mathbf{v}) = \frac{1}{Z_S}\sum_{\mathbf{h}}e^{-H_S(\mathbf{v},\mathbf{h})}.
\end{equation}

Define the probability distribution on graph encodings (cf. \cref{ssec:hard reference model})
\begin{equation}\label{eq:normalized reference distribution}
    q_n(\mathbf m)
    =
    \frac{Z_n(\mathbf m)}
         {\sum_{\mathbf m'}Z_n(\mathbf m')},
\end{equation}
where the sum runs over all encodings of graphs on
$n$ spins. Further define the function family $\mathcal{Q} = \{q_n \mid n \in \mathbb{N} \}$.

For the edgeless graph, encoded by
$\mathbf m_0=-\mathbf 1_K$ with $K=\binom n2$,
we have $Z_n(\mathbf m_0)=2^n$. Consequently,
\begin{equation}\label{eq:reference distribution calibration}
    2^n\frac{q_n(\mathbf m)}{q_n(\mathbf m_0)}
    =
    Z_n(\mathbf m).
\end{equation}
Thus, approximating these probability ratios to
sufficient relative precision is $\SharpPclass$-hard by \cref{cor:part fun approx mult}.
The same holds for evaluating $q_n$ to sufficient
relative precision, since two such evaluations
determine the ratio.

We first show that the positive approximation result
for unnormalized marginals survives normalization.

\begin{lemma}[efficient probability distribution approximation ]\label{lem:part fun ratio model}
Any (v,p)-maximal and strongly visible closed spin ML
model can efficiently approximate $q_n$ in its normalized
visible spin marginals to arbitrary uniform relative
precision.

More precisely, for every $n$ and
$0<\varepsilon<1$, there exists a system
$T_{n,\varepsilon}$ of size $\mathrm{poly}(n)$ such that
\begin{equation}\label{eq:normalized reference approximation}
    \max_{\mathbf m}
    \left|
        \frac{p_{T_{n,\varepsilon}}(\mathbf m)}
             {q_n(\mathbf m)}
        -1
    \right|
    \leq\varepsilon.
\end{equation}
The same system approximates the reference
partition-function ratios with relative error at most
$\varepsilon$:
\begin{equation}\label{eq:ratio approx}
    \max_{\mathbf m_1,\mathbf m_2}
    \left|
        \frac{
            p_{T_{n,\varepsilon}}(\mathbf m_1)/
            p_{T_{n,\varepsilon}}(\mathbf m_2)
        }{
            C_n(\mathbf m_1,\mathbf m_2)
        }
        -1
    \right|
    \leq\varepsilon.
\end{equation}
The polynomial size bound is independent of
$\varepsilon$.
\end{lemma}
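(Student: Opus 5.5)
The plan is to reuse the system $T_{n,\varepsilon'}$ from \cref{lem:sharp P} and show that the two-sided, one-signed relative error bound on unnormalized marginals survives normalization. Fix $n$ and $0<\varepsilon<1$. Apply \cref{lem:sharp P} with a smaller accuracy $\varepsilon'$ to be chosen, say $\varepsilon'=\varepsilon/3$. This gives a system $T\coloneqq T_{n,\varepsilon'}$ of size $\mathrm{poly}(n)$, with the polynomial independent of $\varepsilon$. Its proof actually yields the one-sided sandwich \cref{eq: unnormalized approx}: $\alpha Z_n(\mathbf m)\le p^*_T(\mathbf m)\le \alpha Z_n(\mathbf m)(1+\varepsilon')$ for all $\mathbf m$, with $\alpha>0$ independent of $\mathbf m$. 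I will use this one-sided form rather than the symmetric statement.

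Next, I normalize. The partition function of $T$ is $Z_T=\sum_{\mathbf m}p^*_T(\mathbf m)$, since the visible spins of $T$ are exactly $\mathbf m$. Summing the sandwich over $\mathbf m$ gives
\begin{equation*}
\alpha\sum_{\mathbf m'}Z_n(\mathbf m')\le Z_T\le \alpha(1+\varepsilon')\sum_{\mathbf m'}Z_n(\mathbf m').
\end{equation*}
Dividing the two sandwiches, the unknown common factor $\alpha$ cancels, and $p_T(\mathbf m)/q_n(\mathbf m)\in[(1+\varepsilon')^{-1},\,1+\varepsilon']$. Hence $\left|p_T(\mathbf m)/q_n(\mathbf m)-1\right|\le\varepsilon'\le\varepsilon$ uniformly in $\mathbf m$, which is \cref{eq:normalized reference approximation}. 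Note that $q_n$ is a positive distribution, since every $Z_n(\mathbf m)>0$, so the ratios are well defined.

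For the ratio statement \cref{eq:ratio approx}, I observe that $p_T(\mathbf m_1)/p_T(\mathbf m_2)=p^*_T(\mathbf m_1)/p^*_T(\mathbf m_2)$, and that $C_n(\mathbf m_1,\mathbf m_2)=Z_n(\mathbf m_1)/Z_n(\mathbf m_2)$. The sandwich then places their quotient in $[(1+\varepsilon')^{-1},1+\varepsilon']$, so the relative error is at most $\varepsilon'$. In both parts the same system works, and its size bound is inherited from \cref{lem:sharp P} and so is independent of $\varepsilon$.

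The only real subtlety is making sure the error bounds are one-sided, or else tracking a two-sided bound. If one only had $|r_{\mathbf m}|\le\varepsilon'$, the quotient would lie in $[(1-\varepsilon')/(1+\varepsilon'),(1+\varepsilon')/(1-\varepsilon')]$. Choosing $\varepsilon'=\varepsilon/3$ then gives error at most $2\varepsilon'/(1-\varepsilon')\le\varepsilon$. I would note this as a fallback, so the argument does not rely on the precise form of the remainder.
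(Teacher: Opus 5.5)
Your proposal is correct and follows essentially the paper's argument: take the system from \cref{lem:sharp P} with its one-sided bound $0\le r_{\mathbf m}\le\varepsilon$, normalize so that the common factor $\alpha$ cancels, and observe that the global normalization also cancels in probability ratios. The paper handles the normalization through the weighted average $\bar r=\sum_{\mathbf m}q_n(\mathbf m)r_{\mathbf m}$ rather than by sandwiching $Z_T$, which is the same computation. With the one-sided bound you could also take $\varepsilon'=\varepsilon$ directly, so the reduction to $\varepsilon/3$ is unnecessary but harmless.
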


\begin{proof}
By \cref{lem:sharp P} and its proof, there exists
$T=T_{n,\varepsilon}$ of size $\mathrm{poly}(n)$ with
\begin{equation}
    p_T^*(\mathbf m)
    =
    \alpha Z_n(\mathbf m)(1+r_{\mathbf m}),
    \qquad
    0\leq r_{\mathbf m}\leq\varepsilon,
\end{equation}
where $\alpha>0$ is independent of $\mathbf m$.
Set
\begin{equation}
    \bar r
    =
    \sum_{\mathbf m}q_n(\mathbf m)r_{\mathbf m}.
\end{equation}
Since $q_n$ is a probability distribution,
$0\leq\bar r\leq\varepsilon$.
Normalizing $p_T^*$ gives
\begin{equation}
    p_T(\mathbf m)
    =
    q_n(\mathbf m)
    \frac{1+r_{\mathbf m}}{1+\bar r}.
\end{equation}
Therefore,
\begin{equation}
    \left|
        \frac{p_T(\mathbf m)}{q_n(\mathbf m)}-1
    \right|
    =
    \frac{|r_{\mathbf m}-\bar r|}{1+\bar r}
    \leq\varepsilon.
\end{equation}

For probability ratios, both normalization factors
cancel:
\begin{equation}
    \frac{p_T(\mathbf m_1)}{p_T(\mathbf m_2)}
    =
    C_n(\mathbf m_1,\mathbf m_2)
    \frac{1+r_{\mathbf m_1}}{1+r_{\mathbf m_2}}.
\end{equation}
Their relative error is at most $\varepsilon$, since
\begin{equation}
    \frac{|r_{\mathbf m_1}-r_{\mathbf m_2}|}
         {1+r_{\mathbf m_2}}
    \leq\varepsilon.
\end{equation}
\end{proof}

For shallow models, probability ratios are
polynomial-time evaluable in logarithmic form
by \cref{cor:shallow log eval}.
This prevents sufficiently accurate relative
approximation of $q_n$, regardless of the complexity
of the shallow model's global partition function.

\begin{theorem}[complexity separation for normalized marginals]
\label{thm:separation 3}
Any (v,p)-maximal and strongly visible closed spin ML
model can efficiently approximate the distributions
$q_n$ in its normalized visible spin marginals to
arbitrary uniform relative precision.
Shallow models cannot admit such polynomial-size
approximations unless the polynomial hierarchy
collapses to $\MAclass$.
\end{theorem}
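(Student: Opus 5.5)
The positive half of \cref{thm:separation 3} is exactly \cref{lem:part fun ratio model}, so I would invoke it directly. The remaining work is the negative half. It should mirror the proof of \cref{thm:separation 2}, with the unknown factor $\alpha_n$ replaced by the unknown normalizations $Z_{T_n}$ and $\sum_{\mathbf m'}Z_n(\mathbf m')$. Both cancel in probability ratios.

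Suppose a shallow model admits, for each $n$, a system $T_n$ of size $\mathrm{poly}(n)$ with $\max_{\mathbf m}|p_{T_n}(\mathbf m)/q_n(\mathbf m)-1|\leq\rho_n/8$. Here $\rho_n$ is the sufficient relative accuracy from \cref{cor:part fun approx mult}, with $\log(1/\rho_n)=\mathrm{poly}(n)$. The description of $T_n$ serves as polynomial-length advice under our bit-complexity conventions.

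Using the calibration \cref{eq:reference distribution calibration} with the edgeless encoding $\mathbf m_0$, I would define
\begin{equation*}
\widehat Z_n(\mathbf m)=2^n\frac{p_{T_n}(\mathbf m)}{p_{T_n}(\mathbf m_0)}
=Z_n(\mathbf m)\frac{1+r_{\mathbf m}}{1+r_{\mathbf m_0}},\qquad |r_{\mathbf m}|\leq\rho_n/8.
\end{equation*}
Exactly as in \cref{thm:separation 2}, this has relative error at most $\rho_n/2$.

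The key observation is that the intractable partition function of the shallow system never has to be computed. By the second claim of \cref{cor:shallow log eval}, $\log(p_{T_n}(\mathbf m)/p_{T_n}(\mathbf m_0))$ is computable to additive error $2^{-b}$ in time polynomial in $n$ and $b$. Adding $n\log 2$ and exponentiating gives a numerical value $\widetilde Z_n(\mathbf m)$. The a priori bounds $2^ne^{-K}\leq Z_n\leq 2^ne^{K}$, with $K=\binom n2$, ensure that polynomially many bits suffice for relative error $\rho_n/4$. Combining the two errors yields relative error below $\rho_n$. Hence $\SharpPclass\subseteq\FPclass/\mathrm{poly}$, which by \cite{Ba90} collapses the polynomial hierarchy to $\MAclass$.

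The only point I expect to need care is the precision bookkeeping. The additive error in the logarithm has to be converted into relative error in $\widetilde Z_n$; using $|e^{\delta}-1|\leq 2|\delta|$ for small $\delta$, it suffices to take $b=\log_2(1/\rho_n)+O(1)$. This conversion must also be checked against the size of the intermediate rational values, but it follows verbatim the corresponding step in the proof of \cref{thm:separation 2}.
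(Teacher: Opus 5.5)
Your proposal is correct and follows the paper's proof essentially step for step. The positive half is \cref{lem:part fun ratio model}, and the negative half matches the paper: the same $\rho_n/8$ shallow approximation as advice, the edgeless-graph calibration $\widehat Z_n=2^n p_{T_n}(\mathbf m)/p_{T_n}(\mathbf m_0)$ in which the shallow partition function cancels, logarithmic evaluation via \cref{cor:shallow log eval}, and the same $\rho_n/2+\rho_n/4$ error budget, concluding $\SharpPclass\subseteq\FPclass/\mathrm{poly}$ and the collapse to $\MAclass$; your choice $b=\log_2(1/\rho_n)+\mathcal{O}(1)$ just makes explicit the paper's remark that polynomially many precision bits suffice.
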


\begin{proof}
The positive statement follows from
\cref{lem:part fun ratio model}.

Let $0<\rho_n<1$ be a sufficient relative accuracy
for the hardness result in
\cref{cor:part fun approx mult}, with
$\log(1/\rho_n)=\mathrm{poly}(n)$.
Suppose that shallow models admit the stated
polynomial-size approximations.
For each $n$, choose a shallow system $T_n$ such that
\begin{equation}
    p_{T_n}(\mathbf m)
    =
    q_n(\mathbf m)(1+s_{\mathbf m}),
    \qquad
    |s_{\mathbf m}|\leq\rho_n/8.
\end{equation}
Its description has polynomial length and can therefore be used as
advice.

Using the edgeless graph as a reference, define
\begin{equation}
    \widehat Z_n(\mathbf m)
    =
    2^n
    \frac{p_{T_n}(\mathbf m)}
         {p_{T_n}(\mathbf m_0)}
    =
    Z_n(\mathbf m)
    \frac{1+s_{\mathbf m}}{1+s_{\mathbf m_0}}.
\end{equation}
The representation error satisfies
\begin{equation}
    \left|
        \frac{\widehat Z_n(\mathbf m)}
             {Z_n(\mathbf m)}
        -1
    \right|
    \leq
    \frac{2(\rho_n/8)}{1-\rho_n/8}
    \leq\frac{\rho_n}{2}.
\end{equation}

The global partition function of $T_n$ cancels
from this ratio. By \cref{cor:shallow log eval},
we can evaluate
\begin{equation}
    \log\widehat Z_n(\mathbf m)
    =
    n\log 2
    +\log p_{T_n}^*(\mathbf m)
    -\log p_{T_n}^*(\mathbf m_0)
\end{equation}
to any requested additive precision in polynomial
time.
Since
\begin{equation}
    2^n e^{-K}
    \leq Z_n(\mathbf m)
    \leq 2^n e^K,
    \qquad K=\binom n2,
\end{equation}
and $\log(1/\rho_n)=\mathrm{poly}(n)$, we can obtain
a numerical value $\widetilde Z_n(\mathbf m)$
with relative error at most $\rho_n/4$ with respect
to $\widehat Z_n(\mathbf m)$ in polynomial time.
Combining the two errors gives
\begin{equation}
    \left|
        \frac{\widetilde Z_n(\mathbf m)}
             {Z_n(\mathbf m)}
        -1
    \right|
    \leq
    \frac{\rho_n}{2}
    +\frac{\rho_n}{4}
     \left(1+\frac{\rho_n}{2}\right)
    <\rho_n.
\end{equation}

Thus polynomial advice suffices to approximate
$\mathcal Z$ to the relative accuracy required by
\cref{cor:part fun approx mult}.
This implies
$\SharpPclass\subseteq\FPclass/\mathrm{poly}$,
and hence the polynomial hierarchy collapses to
$\MAclass$~\cite{Ba90}.
\end{proof}

A computationally hard global partition function
therefore cannot overcome the restriction on shallow
models: it changes the common normalization, while
the target distribution requires computationally
hard relative probabilities.

\section{The efficient reach of shallow models}\label{sec:RBM separation}
In this section we study the efficient reach of shallow models and characterize it based on their interaction order, their maximal number of spins per local interaction. 
In \cref{ssec:small weights} we introduce shallow $k$-local models and lay out an additional assumption used in this section, namely that of small weights, i.e.\ polynomially bounded rational weights with polynomially bounded common denominator.
In \cref{ssec:upper k local}, we characterize the efficient reach of shallow $k$-local models by linear combinations of polynomial threshold functions (\cref{lem:sim PTF}) of degree $k-1$. This allows us to upper bound the efficient reach of shallow $k$-local functions in terms of small $k$-party communication complexity. This bound is violated by the $k$-party inner product modulo two function, $\mathrm{IP}_k$, implying that it cannot be efficiently simulated by shallow $k$-local models.
In \cref{ssec:lower k local}, we characterize the efficient reach of RBMs and extend these results to shallow $k$-local models. This implies that shallow $k$-local models can efficiently simulate the $k-1$-party inner product modulo $2$, $\mathrm{IP}_{k-1}$. 
The main results of this section are thus:
\begin{informalresult}[characterization of the efficient reach of shallow $k$-local models]
The efficient reach of shallow $k$-local models is characterized by an infinite hierarchy. For each $k$, shallow, $k$-local models cannot efficiently simulate functions with large $k$-party communication complexity (\cref{lem:shallow PP,thm:shallow rand}), but shallow, $k+1$-local models can efficiently simulate some of them, e.g.\ the $k$-party inner product modulo $2$, $\mathrm{IP}_k$ (\cref{lem:IP k sim}). 
\end{informalresult}

\subsection{Shallow $k$-local models with small weights}\label{ssec:small weights}
In many aspects, shallow $k$-local models, for fixed $k$,  can be considered generalizations of RBMs. Since they are shallow, each $k$-body interaction can contain at most $1$ hidden spin, which implies that their effective energies as well as unnormalized visible spin marginals  are efficiently computable (see \cref{sec:consequences}). 
For $k=3$ consider e.g.\ the $3$-body generalization of RBMs, that is, $3$-body RBMs or 3RBMs \cite{Ra10}. For each pair of visible spins $(v_i,v_j)$, and for each hidden spin $h_k$, 3RBMs additionally contain the respective $3$-body interaction (see \cref{fig:3RBM}). They are thus described by the following Hamiltonian
\begin{equation}\label{eq:3RBM}
\begin{aligned}
H(\mathbf v,\mathbf h)
&=\sum_{j=1}^n b_j^{(v)}v_j
+\sum_{r=1}^m b_r^{(h)}h_r\\
&\quad+\sum_{r=1}^m\sum_{j=1}^n W_{rj}h_rv_j
+\sum_{r=1}^m\sum_{i<j}C_{rij}h_rv_iv_j.
\end{aligned}
\end{equation}

\begingroup
\definecolor{TRBMpurple}{HTML}{8866AA}
\definecolor{TRBMink}{HTML}{28353E}
\begin{figure}[!t]
\centering
\begin{tikzpicture}[
 font=\small,text=TRBMink,line join=round,
 vis/.style = {draw=black!80, fill=blue!60, thick, circle, minimum size= 0.65em, inner sep=2pt},
    hid/.style = {draw=black!80, fill=red!60, thick, circle, minimum size= 0.65em, inner sep=2pt}, scale=0.8]
 \foreach \i/\x/\y in {1/-4/0,2/-2/-.55,3/0/-.75,4/2/-.55,5/4/0}
  \coordinate (v\i) at (\x,\y);
 \foreach \j in {1,...,8}
  \coordinate (h\j) at ({1.3*(\j-4.5)},4.2);

 \foreach \j in {1,...,8} {
  \foreach \a in {1,...,4} {
   \pgfmathtruncatemacro{\nextvisible}{\a+1}
   \foreach \b in {\nextvisible,...,5} {
    \path[fill=TRBMpurple,fill opacity=.009,
      draw=TRBMpurple,draw opacity=.10,line width=.35pt]
      (v\a)--(v\b)--(h\j)--cycle;
   }
  }
 }
 \foreach \i in {1,...,5} {
  \node[vis] at (v\i) {};
  \node[below=8pt] at (v\i) {};
 }
 \foreach \j in {1,...,8} {
  \node[hid] at (h\j) {};
  \node[above=8pt] at (h\j) {};
 }
\end{tikzpicture}
\caption{\textbf{A fully connected $3$-body RBM.}
Visible spins in blue hidden spins in red.
Each translucent triangle denotes a three-body interaction involving
one hidden spin and two distinct visible spins. }
\label{fig:3RBM}
\end{figure}
\endgroup

Note, however, that in contrast to RBMs, for shallow $k$-local models, visible spins are not generally conditionally independent given a hidden configuration. Hence, certain advantages of RBMs do not directly extend to all shallow $k$-local models, e.g.\ training via contrastive divergence can no longer rely on the efficient parallel sampling of visible spins \cite{Hi02}.

Throughout this section we consider pseudo-Boolean target functions instead of generic target systems, since this is more appropriate in the context of threshold functions and communication complexity. When stating that a shallow $k$-local system $S$ efficiently simulates a function $f$, this is always to be understood in the context of a shallow $k$-local model $\mathcal{M}$ with $S \in \mathcal{M}$ and such that $f$ is defined for inputs of arbitrary length $n$, so strictly speaking $f$ is a family of functions (cf.\ \cref{sec:consequences}).  
Importantly, we then assume that the interaction order $k$ does not depend on the target size $n$.

Finally, throughout this section, we additionally require that all spin ML models have \emph{small weights}, that is, intuitively, rational weights of polynomially bounded magnitude and precision.
\begin{definition}[small weights]\label{def:poly bounded weights}
    We say that a spin model $\mathcal{M}$ has
\begin{itemize}
\item 
 \emph{polynomially bounded weights} if all its $n$-spin systems have weights (i.e.\ local energy values, cf.\ \cref{def:spin system}) of magnitude at most $\mathrm{poly}(n)$;
    \item 
    \emph{weights of polynomial precision} 
    if for any of its $n$-spin systems $S$ there exists an integer $q_S= \mathrm{poly}(n)$ such that all weights of $S$ can be expressed as $p/q_S$, for some integer $p$; 
    \item 
    \emph{small weights}  if its weights are polynomially bounded and of polynomial precision.
    \end{itemize}
\end{definition}
Note that this assumption of small weights is stronger than the assumption of weights of polynomial bit complexity used in \cref{sec:consequences}.

In the context of machine learning models, the assumption of bounded weights is reasonable, since there are several results that link models with bounded weights to better performance \cite{Be09,Ba98}.
Similar constraints are also common in other computational models, such as circuits of threshold gates \cite{Ka20, Go92}, as well as in physics contexts, where the weights of a spin model are usually considered constant and not to scale with the system size.

The additional polynomial-precision assumption can be understood
as a finite-resolution restriction: all weights then lie
on a common grid with inverse-polynomial spacing.
Such a grid suffices to approximate the Hamiltonian of a
polynomial-size system uniformly to inverse-polynomial accuracy.

The property of small weights does not depend on the chosen representation of the spin system:
If, when specifying weights as values of the local energy functions of $S$ (see \cref{def:spin system}) all such weights of $S$ are polynomially bounded and/or of polynomial precision then, by the Fourier decomposition of pseudo-Boolean functions (\cref{eq:fourier coeff}), also all Fourier coefficients of $S$, that is, its weights w.r.t.\ the Fourier basis, are polynomially bounded and/or of polynomial precision.

We conjecture that under mild additional assumptions, the separation results of \cref{sec:consequences}  (\cref{thm:separation 1}, \cref{thm:separation 2} and \cref{thm:separation 3}) are compatible with the constraint of small weights, specifically with the constraint that targets with small weights must be simulated by a source with small weights, i.e.\ weights must remain small under simulation. More precisely, we conjecture this to be the case, whenever for a (v,p)-maximal model, it holds for all simulations that characterize (v,p)-maximality (i.e.\ visible closure and universality, through closure and functional completeness, see \cref{def:visible closed,thm:main}, and also \cite{Re24}).

\subsection{Upper bounds on the efficient reach of shallow $k$-local systems}\label{ssec:upper k local}
In this subsection we characterize those target functions $f$ that can be efficiently simulated by shallow $k$-local models. 

We first prove that efficiently simulating a pseudo-Boolean function $f$ with a shallow source $S$ essentially is equivalent to representing $f$ as a linear combination of polynomially many polynomial threshold functions (PTFs). 

\begin{lemma}[PTF representation]\label{lem:sim PTF}
If a target family $f$ can be efficiently simulated
by shallow $k$-local systems, then it admits a PTF
representation
\begin{equation}\label{eq:PTF representation main 2}
    f(\mathbf v)
    =
    \sum_{i=1}^{M}w_i\sgn(P_i(\mathbf v)),
    \qquad
    M=\mathrm{poly}(n),
\end{equation}
where each polynomial $P_i$ has degree at most $k-1$.\footnote{Throughout, we use the convention $\sgn(0)=+1$.}
If the simulation weights and energy shifts are
polynomially bounded or have polynomial precision,
the PTF representation can be chosen with the corresponding
property, for all its parameters.

Conversely, if $f$ admits such a PTF representation with
small parameters\footnote{Here we mean that all parameters internal to the PTFs as well as the linear coefficients $w_i$ jointly satisfy \cref{def:poly bounded weights}.}, then it can be efficiently simulated
by a shallow $k$-local system with small weights and
energy shifts.
\end{lemma}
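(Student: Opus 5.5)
The forward direction starts from the decomposition \cref{eq:sum decomp shallow} established in the proof of \cref{lem:shallow polytime}. Let $S$ be a shallow $k$-local system with $|S|=\mathrm{poly}(n)$ that simulates $f$ with shift $\Gamma$. Then
\begin{equation}
f(\mathbf v)+\Gamma=\min_{\mathbf h}H_S(\mathbf v,\mathbf h)=H'(\mathbf v)+\sum_{i=1}^m\min_{h_i}H_i(\mathbf v,h_i).
\end{equation}
We write each hidden term in Fourier form with respect to its single hidden spin, $H_i(\mathbf v,h_i)=A_i(\mathbf v)+h_iB_i(\mathbf v)$, where $A_i=\tfrac12(H_i(\mathbf v,+1)+H_i(\mathbf v,-1))$ and $B_i=\tfrac12(H_i(\mathbf v,+1)-H_i(\mathbf v,-1))$. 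Each hyperedge containing $h_i$ has at most $k-1$ visible spins, so $A_i$ and $B_i$ are multilinear polynomials in $\mathbf v$ of degree at most $k-1$. This gives $\min_{h_i}H_i=A_i-|B_i|=A_i-B_i\sgn(B_i)$.

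The term $A_i$ and the term $H'$ cause no trouble. Each is a sum of polynomially many monomials of degree at most $k$, and every $\pm1$-valued monomial $\chi$ satisfies $\chi=\sgn(\chi)$. Each such monomial is therefore a single PTF term whose polynomial is that monomial. When $H'$ contains $k$-local monomials, however, this would give degree $k$ rather than $k-1$. I expect this degree bookkeeping to be the main obstacle. To handle it, I would rewrite a degree-$k$ monomial $v_{j_1}\cdots v_{j_k}$ as a linear combination of threshold terms of degree $k-1$. The identity to use is $xy=\sgn(x+y)\cdot\ldots$, or more concretely $v_1\chi'=\sgn(v_1\chi'\ldots)$. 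The most direct route is to express the parity of a product $x\cdot y$ with $x=v_1$ and $y=\chi'$ (degree $k-1$) as
\begin{equation}
xy=\sgn(x+y)+\sgn(-x-y)-1.
\end{equation}
This equation holds since $\sgn(0)=+1$: when $x+y=0$ both signs are $+1$, and otherwise the two signs cancel. Both inner polynomials have degree at most $k-1$. Constants also count as PTFs, since $\sgn(1)=1$. The shift $\Gamma$ is then absorbed as a constant term.

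The remaining term $B_i\sgn(B_i)$ is not yet a linear combination of PTFs, because its coefficient $B_i$ depends on $\mathbf v$. I expand $B_i=\sum_\chi c_\chi\chi$ into polynomially many monomials $\chi$ of degree at most $k-1$. Each product $\chi\,\sgn(B_i)$ equals $\sgn(\chi B_i)$. The polynomial $\chi B_i$ may have degree up to $2(k-1)$, but it is still a sign. To keep degree $k-1$, I instead apply the same identity with $x=\chi$ and $y=\sgn(B_i)$, and this fails because $y$ is not a polynomial. I therefore expect this to be the hardest step. What I would try first is to reduce $\chi B_i$ multilinearly using $v_j^2=1$, and then argue that the degree can be bounded by noting that one is free to multiply the argument of $\sgn$ by $\chi$. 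If this fails, I would fall back on the case split used in \cite{Ma13} for RBMs, where $B_i$ is linear, and extend that case split. Bounds on $M$ and on the parameters follow from $|S|=\mathrm{poly}(n)$ and from the Fourier coefficients inheriting boundedness and precision, as discussed after \cref{def:poly bounded weights}.

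For the converse, I map each term $w_i\sgn(P_i)$ to one hidden spin $h_i$ with energy $-\tfrac{w_i}{|w_i|}\ldots$. Specifically, I take $H_i=c\,h_iP_i(\mathbf v)$ with sign chosen by $w_i$, which gives $\min_{h_i}=-|P_i|$, and this is not yet $w_i\sgn(P_i)$. To fix this, I use the small-weight assumption: $P_i$ takes values on a grid of spacing $1/q$ with $q=\mathrm{poly}(n)$. This lets me replace $\sgn(P_i)$ by a clipped linear function, realized as a difference of two terms of the form $-|\cdot|$. I obtain $\mathrm{clip}$ from $\tfrac12(|qP_i+1|-|qP_i|)$, which is of the form $|\cdot|$ differences. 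Positive multiples of $-|Q|$ come from one hidden spin, whereas $+|Q|$ cannot come from a minimum. To handle this, I rewrite $|Q|$ as $-|Q|$ plus $Q\sgn(Q)$-type corrections, or use $|a|-|b|$ with both realized after adding the visible-only polynomial $Q$ through $|Q|=2\max(Q,0)-Q$. Every hyperedge has one hidden spin and at most $k-1$ visible spins, so the system is shallow and $k$-local, with polynomially many spins and small weights.
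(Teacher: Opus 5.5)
\textbf{There is a genuine gap in each direction.}

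\textbf{Forward direction.} Your setup matches the paper's: $H_i=A_i+h_iB_i$ with $\deg A_i,\deg B_i\le k-1$, $\min_{h_i}H_i=A_i-|B_i|$, and the same size bookkeeping. But you stop at the step that carries the content of the lemma. You need to write $\chi\,\sgn(B_i)$, for a monomial $\chi$ of degree at most $k-1$, as a linear combination of degree-$(k-1)$ PTFs.

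Neither of your fallbacks does this. Multilinear reduction, or multiplying the argument of $\sgn$ by $\chi$, cannot produce a single PTF in general: for $k=2$, $\chi=v_1$, $B_i=v_2$ the product is the XOR $v_1v_2$, which is not an LTF. "Extending the case split of \cite{Ma13}" is not an argument.

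The missing idea is a big-$M$ conjunction. Replace $B_i$ by $\widetilde B_i=B_i+\eta$, where $0<\eta$ is below every nonzero $|B_i(\mathbf v)|$ (e.g.\ $\eta=1/(2q)$ under polynomial precision). Then $\widetilde B_i$ never vanishes and $\sgn(\widetilde B_i)=\sgn(B_i)$. Pick $Q_i>\tfrac12\max_{\mathbf v}|\widetilde B_i(\mathbf v)|$. Now $\sgn\bigl(a\widetilde B_i-Q_i(1-b\chi)\bigr)$ equals $+1$ exactly when $\sgn(B_i)=a$ and $\chi=b$, and $-1$ otherwise. Hence
\[
\chi\,\sgn(B_i)=\sum_{a,b\in\{\pm1\}}\frac{ab}{2}\,\sgn\bigl(a\widetilde B_i-Q_i(1-b\chi)\bigr),
\]
and the degree stays at most $k-1$ because $\widetilde B_i$ and $\chi$ enter additively, not multiplicatively. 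This is the paper's construction, written there with $h_{i,\min}=-\sgn(P_i)$. The same four-indicator decomposition, with $\sgn(av_{j_0}+b\chi'-1)$, handles the degree-$k$ visible monomials.

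Incidentally, your identity for those monomials has the wrong sign. It should read $xy=1-\sgn(x+y)-\sgn(-x-y)$.

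\textbf{Converse direction.} Your identity $\sgn(P_i)=|qP_i+1|-|qP_i|$ is correct for integer-valued $qP_i$. However, it requires a positive absolute value, and a minimum over hidden spins only ever produces $-|\cdot|$ or $-\mathrm{ReLU}$ terms. Your workarounds are circular: $|Q|=2\max(Q,0)-Q$ only trades $+|Q|$ for a positive ReLU. So some convex correction must come from elsewhere. Adding $LP_i^2$ as a visible term is not available for $k\ge3$, because its degree can reach $2(k-1)>k$.

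The paper gets this correction from the RBM ridge-function construction (\cref{lem:rbm ridge}):
\begin{itemize}
\item It adds a large quadratic $Lx^2$, $x=\mathbf A\mathbf v$, to the ridge profile. The result then admits a ReLU interpolation with coefficients of a single sign and half-integer breakpoints.
\item It cancels the quadratic with ReLUs gated by individual visible spins: for $\{0,1\}$ spins and suitably large $B,M$, $\sum_l\mathrm{ReLU}\bigl(A_l(B-Lx-M(1-v_l))\bigr)=Bx-Lx^2$. This exploits dependence on $\mathbf v$ beyond the ridge direction.
\item It treats the monomials $u_J=\prod_{j\in J}v_j$, $|J|\le k-1$, as independent inputs, so $w_i\sgn(P_i)$ becomes a small ridge function of $\mathbf u$.
\item It substitutes back, turning each RBM coupling $h\,u_J$ into a $k$-local interaction with a single hidden spin.
\item It sums over $i$ with disjoint hidden spins, which gives the shallow $k$-local simulation.
\end{itemize}
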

\begin{proof}
    See \cref{ssec:proof 1} in the appendix.
\end{proof}

It follows from \cref{lem:sim PTF} that any function that can be efficiently simulated by an RBM can be represented by a linear combination of linear threshold functions (LTFs) (see \cref{ssec:bool basics}).

Next, we prove that under the assumption of small weights, \cref{lem:sim PTF} implies that functions $f$ that can be efficiently simulated by $k$-local models necessarily have large correlation $\langle \sgn(P), f \rangle$ with some PTF. 
By definition of $\langle , \rangle$, the inner product of pseudo-Boolean functions (see \cref{ssec:bool basics}) this means that the PTF $\sgn(P)$ provides a reasonable approximation of $f$.
\begin{lemma}[simulation implies correlation]\label{lem:shallow Corr}
    If a target function $f$ with $\langle f,f \rangle = \frac{1}{\mathrm{poly}(n)}$ can be efficiently simulated by a shallow, $k$-local system $S$ with polynomially bounded weights and energy shift $\Gamma$, then $f$ has large (inverse polynomial) correlation with some degree $k-1$ PTF $\sgn(P)$, i.e.\
    \begin{equation}
        \langle f, \sgn(P)\rangle = \frac{1}{\mathrm{poly}(n)}.
    \end{equation} 
\end{lemma}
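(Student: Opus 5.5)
Apply \cref{lem:sim PTF} to the simulation $S\polysim f$. Because $S$ has polynomially bounded weights and energy shift $\Gamma$, this gives the representation
\begin{equation}
    f(\mathbf v)=\sum_{i=1}^{M}w_i\sgn(P_i(\mathbf v)),\qquad M=\mathrm{poly}(n),\qquad |w_i|\le \mathrm{poly}(n),
\end{equation}
where each $P_i$ has degree at most $k-1$. The energy shift is absorbed either into a constant PTF (a degree $0$ polynomial, $\sgn$ of a positive constant) or into one of the $w_i$. Its magnitude is polynomially bounded by assumption, so the bound $|w_i|=\mathrm{poly}(n)$ still holds. This is the bounded-weight part of \cref{lem:sim PTF}, and it is the only property of the simulation used below.

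Next, use bilinearity of the inner product $\langle\cdot,\cdot\rangle$ from \cref{ssec:bool basics}, namely the normalized inner product $\langle g,h\rangle=2^{-n}\sum_{\mathbf v}g(\mathbf v)h(\mathbf v)$. This gives
\begin{equation}
    \langle f,f\rangle=\sum_{i=1}^M w_i\,\langle f,\sgn(P_i)\rangle .
\end{equation}
The left-hand side is at least $1/\mathrm{poly}(n)$ by hypothesis. Averaging then gives an index $i$ with
\begin{equation}
    |w_i|\,|\langle f,\sgn(P_i)\rangle|\ \ge\ \frac{\langle f,f\rangle}{M},
\end{equation}
and therefore
\begin{equation}
    |\langle f,\sgn(P_i)\rangle|\ \ge\ \frac{\langle f,f\rangle}{M\max_j|w_j|}=\frac{1}{\mathrm{poly}(n)},
\end{equation}
since $M$ and $\max_j|w_j|$ are both polynomially bounded.

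Finally, fix the sign. If $\langle f,\sgn(P_i)\rangle<0$, replace $P_i$ by $-P_i$. This polynomial still has degree at most $k-1$. Because of the convention $\sgn(0)=+1$, we have $\sgn(-P_i)=-\sgn(P_i)$ everywhere except on the zero set of $P_i$. To avoid that exceptional set, first perturb $P_i$ by a small constant so that it has no zeros on $\mathbf 2^n$; this can be done because the cube is finite and the perturbation changes no sign values. After the flip, $P\coloneqq \pm P_i$ is a degree $k-1$ polynomial with $\langle f,\sgn(P)\rangle\ge 1/\mathrm{poly}(n)$, which is the claim.

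The main obstacle is the first step. We need \cref{lem:sim PTF} to transfer polynomial boundedness of the weights, together with the shift $\Gamma$, to the coefficients $w_i$, and not only to the polynomials $P_i$. If the appendix proof only bounds the weights of $S$, I would re-derive this directly. I would write $\min_{h_r}H_r(\mathbf v,h_r)=a_r(\mathbf v)-|b_r(\mathbf v)|$ with $b_r$ of degree at most $k-1$. Then I would rewrite $|b_r|=b_r\sgn(b_r)$ and decompose $b_r$ into its polynomially many monomials, each bounded by $\mathrm{poly}(n)$. Each monomial times $\sgn(b_r)$, together with $a_r$ and $H'$, gives terms of the form $\pm\sgn(\cdot)$ of degree at most $k-1$ with bounded coefficients. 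This argument would replace the PTF form by sums of products $\chi_T\sgn(b_r)$, and $\chi_T\sgn(b_r)=\sgn(\chi_T b_r)$ raises the degree. So if that route is needed, I would accept a correlation with $\chi_T\sgn(b_r)$ and argue separately. The averaging argument itself is routine.
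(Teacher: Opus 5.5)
Your proposal is correct and follows the paper's proof: apply \cref{lem:sim PTF}, expand $\langle f,f\rangle=\sum_i w_i\langle f,\sgn(P_i)\rangle$ by bilinearity, pick the best term, and divide by a polynomial bound on the coefficients. The only cosmetic difference is ordering: the paper first perturbs the $P_i$ by small constants and absorbs the signs of the $w_i$ before taking the maximum, whereas you take absolute values and flip the sign at the end. Your ``main obstacle'' does not arise, because \cref{lem:sim PTF} states that polynomial boundedness of the simulation weights and of $\Gamma$ carries over to all parameters of the PTF representation, including the $w_i$ and the constant term from $-\Gamma$, so the fallback argument is unnecessary.
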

\begin{proof}
    By \cref{lem:sim PTF}, $f$ can be represented by a linear combination of PTFs, 
    \begin{equation}\label{eq:PTF representation corr}
        f(\mathbf{v}) = \sum_I w_I \sgn(P_I(\mathbf{v})).
    \end{equation}
    W.l.o.g.\ the polynomials $P_I$ never vanish
on Boolean inputs and all coefficients $w_i$ are positive. 
Indeed, for each $I$, choose $0<\eta_I\leq1$ smaller than every
nonzero value $|P_I(\mathbf v)|$. Replacing $P_I$ by
$P_I+\eta_I$ preserves its sign, degree and polynomially
bounded coefficients, and removes all zeros. We may then
absorb the signs of the nonzero $w_I$ into the polynomials.
Thus
    \begin{equation}
    \begin{split}
        \frac{1}{\mathrm{poly}(n)} = \langle f,f \rangle = \frac{1}{2^n} \sum_{\mathbf{v}}  (f(\mathbf{v}))^2   
         =\sum_I w_I \frac{1}{2^n}\sum_{\mathbf{v}} f(\mathbf{v})\sgn(P_I(\mathbf{v})) = \sum_I w_I \langle f,\sgn(P_I)\rangle.
    \end{split}
    \end{equation}
    Let $\sgn(P)$ be the PTF of \cref{eq:PTF representation corr} that has maximal correlation with $f$, i.e.\ 
    \begin{equation}
        \langle f, \sgn(P)\rangle = \max_I \langle f, \sgn(P_I) \rangle.
    \end{equation}
    Since all coefficients satisfy $w_I\geq 0$, 
    \begin{equation}
         \frac{1}{\mathrm{poly}(n)} \leq \langle f,\sgn(P) \rangle\cdot \left(\sum_I w_I\right).
    \end{equation}
    Dividing by $\sum_I w_I=\mathrm{poly}(n)$ (see \cref{lem:sim PTF}) proves the claim.
\end{proof}
Note that by \cref{lem:sim PTF} $P$ has polynomially bounded coefficients. If,  additionally, the weights of $S$ are of polynomial precision then so are the coefficients of $P$. 

\cref{lem:shallow Corr} assumes that $\langle f, f \rangle = \frac{1}{\mathrm{poly}(n)}$. This is e.g.\ the case if $\vert f (\mathbf{v}) \vert = \frac{1}{\mathrm{poly}(n)}$ for all configurations $\mathbf{v}$ and thus in particular whenever $f$ is Boolean valued or has sufficient margin around $0$.

In the following we prove that even if this is not the case, if $f$ can be efficiently simulated with small weights  
then $\sgn(f)$ has large correlation with some PTF.
\begin{lemma}[shallow simulation sign correlation]\label{cor:shallow Corr sign}
    If a target function $f$ can be efficiently simulated by a shallow $k$-local source $S$ with small weights and energy shift $\Gamma$ then $\sgn(f)$ has large (inverse polynomial) correlation with some degree $k-1$ PTF $\sgn(P)$
    \begin{equation}
        \langle \sgn(f), \sgn(P)\rangle = \frac{1}{\mathrm{poly}(n)}.
    \end{equation} 
\end{lemma}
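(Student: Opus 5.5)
The plan is to reuse the argument of \cref{lem:shallow Corr}, but applied to a slightly shifted copy of $f$ that has the same sign pattern as $f$ and is bounded away from zero. Polynomial precision is what makes such a shift possible.

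First, I invoke \cref{lem:sim PTF}. Since $S$ has small weights, and I take the energy shift $\Gamma$ to be small as well, $f$ admits a representation
\begin{equation}
  f(\mathbf v)=\sum_{i=1}^M w_i\sgn(P_i(\mathbf v)),\qquad M=\mathrm{poly}(n),
\end{equation}
where each $P_i$ has degree at most $k-1$. All parameters are polynomially bounded and of polynomial precision. In particular there is an integer $q=\mathrm{poly}(n)$ such that every $w_i$ lies in $\frac1q\mathbb Z$. Because $\sgn(P_i(\mathbf v))\in\{\pm1\}$, every value $f(\mathbf v)$ lies in $\frac1q\mathbb Z$. Consequently $f(\mathbf v)\neq0$ implies $|f(\mathbf v)|\geq 1/q$. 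This discreteness replaces the assumption $\langle f,f\rangle=1/\mathrm{poly}(n)$ of \cref{lem:shallow Corr}.

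Second, I define $f'=f+\frac1{2q}$. Recall the convention $\sgn(0)=+1$. If $f(\mathbf v)\geq0$, then $f'(\mathbf v)>0$. If $f(\mathbf v)<0$, then $f(\mathbf v)\leq-1/q$, so $f'(\mathbf v)\leq -1/(2q)<0$. Hence $\sgn(f')=\sgn(f)$, and $|f'(\mathbf v)|\geq 1/(2q)$ for every $\mathbf v$. Moreover, $f'$ is again a combination of PTFs of degree at most $k-1$: I append the constant PTF $\sgn(1)$, which has degree $0$, with weight $\frac1{2q}$. As in the proof of \cref{lem:shallow Corr}, I absorb the signs of the weights into the polynomials, so that all weights $w_i'$ are nonnegative and $\sum_i w_i'=\mathrm{poly}(n)$ by polynomial boundedness.

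Third, I compute
\begin{equation}
  \frac1{2q}\leq \frac1{2^n}\sum_{\mathbf v}|f'(\mathbf v)|=\langle \sgn(f),f'\rangle=\sum_i w_i'\langle\sgn(f),\sgn(P_i')\rangle\leq \Bigl(\sum_i w_i'\Bigr)\max_i\langle\sgn(f),\sgn(P_i')\rangle .
\end{equation}
The equality uses $\sgn(f)f'=\sgn(f')f'=|f'|$. Dividing by $\sum_i w_i'=\mathrm{poly}(n)$ gives a PTF $\sgn(P)$ of degree at most $k-1$ with $\langle\sgn(f),\sgn(P)\rangle\geq 1/\mathrm{poly}(n)$, which is the claim.

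The main obstacle I expect is the first step: checking that the PTF representation from \cref{lem:sim PTF} really inherits a common polynomial denominator for the outer weights $w_i$, including the contribution of the energy shift $\Gamma$ and of the sign flips absorbed into the polynomials. If the lemma only guarantees precision of the internal PTF coefficients, I would instead argue directly from the decomposition \cref{eq:sum decomp shallow}. There, $f+\Gamma=H'+\sum_i\min_{h_i}H_i$ is a sum of polynomially many terms, each taking values in $\frac1{q_S}\mathbb Z$. This gives the required discreteness of $f$, provided $\Gamma$ also has polynomial precision.
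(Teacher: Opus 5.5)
Your proof is correct and follows essentially the paper's route: polynomial precision gives $|f(\mathbf v)|\geq 1/q$ whenever $f(\mathbf v)\neq 0$, a shift produces a function with the same sign pattern that is bounded away from zero, and the nonnegative-weight averaging argument of \cref{lem:shallow Corr} then applies. The differences are cosmetic: the paper uses $f'=2q_Sf+1$ (margin $1$), obtains discreteness directly from the source weights, and re-derives the PTF representation from a scaled and shifted source $S'$, whereas you add $\frac{1}{2q}$ and append a constant PTF; your worry about the outer weights is settled by the appendix proof of \cref{lem:sim PTF}, which puts all its coefficients, including $-\Gamma$, on a common polynomially bounded denominator.
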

\begin{proof}
    We prove that if $f$ can be efficiently simulated with small weights then we can construct a scaled and shifted function $f'$ and an efficient simulation of $f'$ such that  $\sgn(f)=\sgn(f')$ and $f'$ has margin around $0$. Then, similar to \cref{lem:shallow Corr}, we  prove that $\sgn(f')$ and hence also $\sgn(f)$ has large correlation with some PTF.

    Specifically, if some shallow, $k$-local system $S$ efficiently simulates $f$ with small weights and energy shift, then all target values $f(\mathbf{v})$ can be expressed as a sum of at most polynomially many fractions $\frac{p}{q_S}$ 
    for polynomially bounded integers $p,q_S$ . This implies that non-zero target values $f(\mathbf{v})\neq 0$ satisfy 
    \begin{equation}
        \vert f (\mathbf{v}) \vert \geq \frac{1}{q_S} =  \frac{1}{\mathrm{poly}(n)}.
    \end{equation}
    Then $f'=2q_Sf +1$ preserves the sign while having absolute value at least $1$.
    
    Note that scaling and shifting $S$ appropriately yields a shallow $k$-local system $S'$ that efficiently simulates $f'$ with small weights and energy shift and hence, by \cref{lem:sim PTF}, $f'$ can be expressed as a linear combination of PTFs (see \cref{eq:PTF representation corr}).

    Now since all target values satisfy $\vert f'(\mathbf{v}) \vert \geq 1$ and $\sgn(f)=\sgn(f')$,
    \begin{equation}
        \begin{aligned}
            1 &\leq \frac{1}{2^n}\sum_{\mathbf v}|f'(\mathbf v)|\\
            &=\frac{1}{2^n}\sum_{\mathbf v}
            \sgn(f(\mathbf v))f'(\mathbf v)\\
            &=\sum_I w_I\frac{1}{2^n}\sum_{\mathbf v}
            \sgn(f(\mathbf v))\sgn(P_I(\mathbf v))\\
            &=\sum_I w_I\langle\sgn(f),\sgn(P_I)\rangle.
        \end{aligned}
    \end{equation}
Now similar to \cref{lem:shallow Corr},
let $\sgn(P)$ be the PTF that has maximal correlation with $\sgn(f)$, then 
    \begin{equation}
        1 \leq \langle \sgn(f'),\sgn(P) \rangle\cdot \left(\sum_I w_I\right)
    \end{equation}
    and dividing by $\sum_I w_I=\mathrm{poly}(n)$ proves the claim.
\end{proof}

\begin{theorem}[shallow simulation communication complexity]\label{lem:shallow PP}
     Whenever a target function $f$ can be efficiently simulated by a shallow, $k$-local system with small weights and energy shift $\Gamma$
     then $\sgn(f)$ has an efficient, $\mathcal{O}(\log(n))$, $k$-party NOF communication protocol with success rate at least $\frac{1}{2} + \frac{1}{\mathrm{poly}(n)}$.
\end{theorem}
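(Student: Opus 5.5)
The plan is to combine \cref{cor:shallow Corr sign} with a standard number-on-forehead (NOF) protocol for a single PTF of degree $k-1$ with small coefficients. By \cref{cor:shallow Corr sign}, since $f$ is efficiently simulated by a shallow $k$-local system with small weights, there is a degree-$(k-1)$ polynomial $P$ with $\langle \sgn(f),\sgn(P)\rangle \geq 1/\mathrm{poly}(n)$. By the remark after \cref{lem:shallow Corr} and \cref{lem:sim PTF}, $P$ has polynomially bounded coefficients of polynomial precision. Multiplying by the common denominator, we may take $P$ to have integer coefficients of magnitude $\mathrm{poly}(n)$. Since $P$ has only $\mathrm{poly}(n)$ monomials, $|P(\mathbf v)| = \mathrm{poly}(n)$ on all Boolean inputs.

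Next, we partition the $n$ input bits into $k$ blocks, one per party. Each monomial of $P$ has degree at most $k-1$, so it touches at most $k-1$ blocks and misses at least one block. In the NOF setting, the party holding the missed block on its forehead sees every variable of that monomial. We assign each monomial to one such party, so $P=\sum_{j=1}^k Q_j$, where party $j$ can compute $Q_j(\mathbf v)$ from what it sees. Each $Q_j(\mathbf v)$ is an integer of magnitude $\mathrm{poly}(n)$ and can be written with $\mathcal{O}(\log n)$ bits. The parties broadcast their values in turn, everyone computes $P(\mathbf v)$, and they output $\sgn(P(\mathbf v))$. This deterministic protocol uses $\mathcal{O}(k\log n)=\mathcal{O}(\log n)$ bits, since $k$ is fixed.

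Finally, we convert correlation into success probability. For $\pm1$-valued functions,
\begin{equation}
\Pr_{\mathbf v}[\sgn(P(\mathbf v))=\sgn(f(\mathbf v))] = \tfrac12 + \tfrac12\langle \sgn(f),\sgn(P)\rangle \geq \tfrac12 + \tfrac{1}{\mathrm{poly}(n)},
\end{equation}
where the probability is over uniform inputs. This matches the advantage notion typically used for NOF lower bounds such as discrepancy bounds for $\mathrm{IP}_k$. If a worst-case randomized guarantee is intended instead, the input distribution should be fixed via Yao's principle, or the argument rerun with \cref{cor:shallow Corr sign} under an arbitrary distribution. The reweighting in its proof carries over, since the bound $1\leq \mathbb{E}|f'|$ holds under any distribution.

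The main obstacle is bookkeeping on the weights. The claim that $P$ has integer coefficients of polynomial magnitude relies on the small-weights hypothesis propagating through \cref{lem:sim PTF}, including the shift $P_I+\eta_I$ used to remove zeros. That shift must itself have polynomial precision, so I would choose $\eta_I=1/(2q)$ for the common denominator $q$. Without this, the broadcast values would not fit in $\mathcal{O}(\log n)$ bits.
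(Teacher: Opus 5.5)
Your proposal is correct and follows the paper's proof essentially step for step. You invoke \cref{cor:shallow Corr sign}, split the degree-$(k-1)$ polynomial into pieces $Q_j$ that avoid block $j$, communicate the $\mathcal{O}(\log n)$-bit values $Q_j(\mathbf v)$ (the paper sends them to player $k$ rather than broadcasting), and convert correlation into uniform-input success rate via $\mathrm{S}(\Pi)=\frac12\bigl(1+\langle \sgn(f),\Pi\rangle\bigr)$. Your point that the zero-removing shift should be taken on the grid, e.g.\ $\eta_I=1/(2q)$, is a worthwhile precision remark that the paper makes explicitly only inside the proof of \cref{lem:sim PTF}.
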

\begin{proof}
We first argue that proving the claim amounts to proving that any degree $k-1$ PTF with small parameters has a $\mathcal{O}(\log(n))$ $k$-party communication protocol with success rate $1$. 
For any Boolean-valued target function $f'(\mathbf{v})$, the success rate $\mathrm{S}(\Pi)$ of a communication protocol that on input $\mathbf{v}$ returns $\Pi(\mathbf{v})$ can be expressed as
\begin{equation}
    \mathrm{S}(\Pi) = \frac{1}{2}\left (1+  \langle f', \Pi \rangle\right).
\end{equation}
Since by \cref{cor:shallow Corr sign}, $\sgn(f)$ has large correlation with some PTF $\sgn(P)$, defining the output of the protocol for $\sgn(f)$ to be  $\Pi=\sgn(P)$ yields an efficient $\mathcal{O}(\log(n))$ $k$-party communication protocol with success rate at least $\frac{1}{2} + \frac{1}{\mathrm{poly}(n)}$.

We now prove that any PTF $\sgn(P_I)$ has a  $\mathcal{O}(\log(n))$ $k$-party communication protocol.
Recall that in the $k$-party NOF communication setting the input $\mathbf{v}=(\mathbf{v}_1, \ldots, \mathbf{v}_k)$ is split amongst the $k$ players such that player $j$ has access to all blocks $\mathbf{v}_i$ for $i\neq j$.
Since each polynomial has degree $k-1$ it cannot contain monomials that depend on inputs from all $k$ blocks. We can thus decompose $P_I$ into polynomials $Q_j$, 
\begin{equation}
    P_I(\mathbf{v}) = \sum_{j=1}^k Q_j(\mathbf{v}),
\end{equation}
such that $Q_j$ does not depend on $\mathbf{v}_j$. This implies that $Q_j$ can be computed locally  by player $j$. Thus, the protocol for $\sgn(P_I)$ can be designed as follows: Each player $j$ for $j\leq k-1$ locally computes $Q_j(\mathbf{v})$ and communicates the value to player $k$. Player $k$ then locally computes $Q_k(\mathbf{v})$ and, using the communicated values, $P_I(\mathbf{v})$ to output $\sgn(P_I(\mathbf{v}))$.
By construction this protocol has success rate $1$.

To finish the proof, we argue that each value $Q_j(\mathbf{v})$ can be communicated in $\mathcal{O}(\log(n))$ bits, so the overall communication requires $(k-1)\cdot \mathcal{O}(\log(n))$ bits, i.e.\ the protocol has communication complexity $\mathcal{O}(\log(n))$. 
Recall that $P_I$ and hence also each $Q_j$ is a linear combination of $\mathrm{poly}(n)$ monomials, with polynomially bounded parameters of polynomial precision $2^kq_S$. So all parameters of these linear combinations are of the form $\frac{p}{2^kq_S}$ for an integer $p$ with $\vert p \vert = \mathrm{poly}(n)$. Therefore, also the value of each $Q_j$ can be expressed as $\frac{p_j}{2^kq_S}$ with $\vert p_j \vert = \mathrm{poly}(n)$ and thus, communicating $Q_j$ amounts to communicating $p_j$, which requires $\mathcal{O}(\log(n))$ bits.
\end{proof}

\begin{theorem}[shallow simulation randomized communication complexity]\label{thm:shallow rand}
    Whenever a target function $f$ can be efficiently simulated by a shallow, $k$-local system with small weights and energy shift $\Gamma$ then $\sgn(f)$ has an efficient, $\mathcal{O}(\log(n))$, randomized $k$-party NOF communication protocol that for any input $\mathbf{v}$ has success probability at least $\frac{1}{2} + \frac{1}{\mathrm{poly}(n)}$.
\end{theorem}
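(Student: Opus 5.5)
The plan is to upgrade the average-case guarantee of \cref{lem:shallow PP} to a worst-case one using public randomness, via a weighted sampling argument over the PTF representation. By \cref{lem:sim PTF}, following the construction in \cref{cor:shallow Corr sign}, the rescaled and shifted function $f'=2q_Sf+1$ satisfies $\sgn(f')=\sgn(f)$ and $\vert f'(\mathbf v)\vert\geq 1$ for every $\mathbf v$. As in the proof of \cref{lem:shallow Corr}, we may take it to have the form $f'(\mathbf v)=\sum_{I=1}^M w_I\sgn(P_I(\mathbf v))$, where each $w_I>0$ (signs absorbed into the polynomials), $M=\mathrm{poly}(n)$, $W\coloneqq\sum_I w_I=\mathrm{poly}(n)$, and each $P_I$ has degree at most $k-1$ with small parameters.

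The key pointwise observation is that for every fixed input $\mathbf v$,
\begin{equation*}
\sum_I \frac{w_I}{W}\,\sgn(f(\mathbf v))\sgn(P_I(\mathbf v)) = \frac{\vert f'(\mathbf v)\vert}{W}\geq \frac{1}{W}=\frac{1}{\mathrm{poly}(n)}.
\end{equation*}
This is exactly the identity used in \cref{cor:shallow Corr sign}, but now applied before averaging over $\mathbf v$.

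The randomized protocol is then the following. Using public randomness, the players sample an index $I$ with probability $w_I/W$. Because the $w_I$ have polynomial precision, this distribution can be realized exactly, or to negligible error, with $\mathcal O(\log n)$ random bits. The players then run the deterministic $\mathcal O(\log n)$-bit NOF protocol for $\sgn(P_I)$ constructed in the proof of \cref{lem:shallow PP}, which splits $P_I$ into pieces $Q_j$, each independent of block $\mathbf v_j$, and has player $k$ collect them. The output is $\sgn(P_I(\mathbf v))$. By the identity above, for every $\mathbf v$ the expected value of $\sgn(f(\mathbf v))\cdot\mathrm{output}$ is at least $1/W$. Hence the success probability is $\tfrac12(1+\mathbb E[\sgn(f)\Pi])\geq \tfrac12+\tfrac{1}{2W}=\tfrac12+\tfrac{1}{\mathrm{poly}(n)}$, uniformly in $\mathbf v$.

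The main point needing care is making the sampling exact with logarithmic randomness. Since all $w_I$ are of the form $p_I/q$ with polynomially bounded integers $p_I$ and common denominator $q$, the distribution is uniform over a multiset of size $\sum_I p_I=\mathrm{poly}(n)$. Sampling from it therefore needs only $\mathcal O(\log n)$ public coins, for example by rejection sampling with a truncated, constant number of rounds. Any truncation error can be made smaller than half the $1/\mathrm{poly}(n)$ advantage.

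If private randomness is required instead, Newman's theorem converts the protocol at an additive $\mathcal O(\log n)$ communication cost and a loss of advantage that can likewise be made smaller than the $1/\mathrm{poly}(n)$ margin. Even more simply, since the randomness needed is only $\mathcal O(\log n)$ bits, one player can generate the sampled $I$ and broadcast it, adding $\mathcal O(\log n)$ bits. The total communication therefore remains $\mathcal O(\log n)$.
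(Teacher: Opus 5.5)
Your proposal is correct and follows the paper's proof essentially step for step. You pass to $f'=2q_Sf+1$ with $|f'|\geq 1$, write it as a positive combination $\sum_I w_I\sgn(P_I)$, sample $I$ with probability $w_I/W$ from shared randomness, run the exact $\mathcal{O}(\log n)$ protocol for $\sgn(P_I)$ from \cref{lem:shallow PP}, and use the pointwise identity $\mathbb{E}[\Pi(\mathbf v)\sgn(f(\mathbf v))]=|f'(\mathbf v)|/W\geq 1/W$.

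Your remarks on the randomness budget go beyond what the paper needs, since public coins are not counted, and one of them is slightly off. Rejection sampling cut off after a constant number of rounds leaves a constant failure probability, so you need $\mathcal{O}(\log n)$ rounds, dyadic rounding of the $w_I/W$, or a fair-coin output on failure. Your alternative of having one player broadcast $I$ in the private-coin model is correct and already suffices.
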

Note that this is strictly stronger than \cref{lem:shallow PP} since any such randomized communication protocol $\Pi$ would immediately yield success probability $\geq \frac{1}{2}+\frac{1}{\mathrm{poly}(n)}$. Since a randomized communication protocol is a probability distribution over deterministic protocols this implies that $\Pi$ must contain at least one deterministic protocol that has success rate over uniformly distributed inputs $\geq \frac{1}{2}+\frac{1}{\mathrm{poly}(n)}$.
\begin{proof}
    First, w.l.o.g.\ assume that for all inputs $\mathbf{v}$, $\vert f(\mathbf{v}) \vert \geq 1$, since else, following the proof of \cref{cor:shallow Corr sign}, we construct a scaled and shifted function $f'$, that satisfies $\vert f'(\mathbf{v}) \vert \geq 1$ for all inputs, can be efficiently simulated with small weights and satisfies $\sgn(f) = \sgn(f')$. Thus we can equivalently construct the protocol for $\sgn(f')$.
    
    Recall that by \cref{lem:sim PTF}, $f$ can be represented by a linear combination of degree $k-1$ PTFs, i.e.\ 
    \begin{equation}
        f(\mathbf{v}) = \sum_I w_I \sgn(P_I(\mathbf{v})).
    \end{equation}
    Similar to \cref{lem:shallow Corr}, w.l.o.g.\ all parameters $w_I$ are positive since else we can simply absorb signs by redefining the respective polynomial $P_I$.
    Following the proof of \cref{lem:shallow PP}, each individual PTF $\sgn(P_I)$ can be exactly computed by a protocol with communication complexity $\mathcal{O}(\log(n))$.
    
    The randomized protocol $\Pi$ for $\sgn(f)$ is constructed as follows:
    First, define $W = \sum_I  w_I $. The players start by using the shared randomness to sample an index $I$ with probability $p_I=\frac{ w_I }{W}$.
    They then follow the deterministic $\mathcal{O}(\log(n))$ protocol from \cref{lem:shallow PP} to return $\sgn(P_I(\mathbf{v}))$.

    To finish the proof, fix an input $\mathbf{v}$.
Since $\Pi(\mathbf{v})$ and $\sgn(f(\mathbf{v}))$ take values
in $\{-1,+1\}$, the success probability satisfies
\begin{equation}
    \Pr[\Pi(\mathbf{v})=\sgn(f(\mathbf{v}))]
    =
    \frac{1+\mathbb{E}[
        \Pi(\mathbf{v})\sgn(f(\mathbf{v}))
    ]}{2},
\end{equation}
where the expectation is over the shared randomness of the
protocol.
This expectation can be expressed as
    \begin{equation}
        \begin{split}
            \mathbb{E}[\Pi(\mathbf{v})\cdot \sgn(f(\mathbf{v}))] = \sgn(f(\mathbf{v}))\cdot \left(\sum_I \frac{ w_I }{W}\sgn(P_I(\mathbf{v})) \right)\\
            = \frac{1}{W}\sgn(f(\mathbf{v}))\cdot f(\mathbf{v}) = \frac{\vert f (\mathbf{v}) \vert }{W}\geq \frac{1}{W}= \frac{1}{\mathrm{poly}(n)}.
        \end{split}
    \end{equation} 
\end{proof}

Through this connection to communication complexity, the efficient reach of shallow, $k$-local models is in turn connected to various measures of communication complexity, including
discrepancy \cite{Ba92}, margin complexity
\cite{LiSh09margin}, approximate $\gamma_2$ factorization norms
\cite{LiSh09factor}, and sign rank \cite{PaSi86}.

\begin{corollary}[non-existence of shallow, $k$-local $\mathrm{IP}_k$ simulation]\label{cor:rbm ip}
    Shallow $k$-local systems with small weights cannot efficiently simulate the $k$-party inner product modulo $2$, $\mathrm{IP}_{k}$ (see \cref{ssec:bool basics}).
\end{corollary}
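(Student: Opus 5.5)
The argument is by contradiction. It combines \cref{thm:shallow rand}, or alternatively \cref{lem:shallow PP}, with a known lower bound on the $k$-party number-on-forehead (NOF) communication complexity of $\mathrm{IP}_k$. Suppose some shallow $k$-local model with small weights efficiently simulates $\mathrm{IP}_k$ on $n$ input spins. We view the input as $k$ blocks $\mathbf{v}=(\mathbf{v}_1,\ldots,\mathbf{v}_k)$ of length $n/k$ each, with $\mathrm{IP}_k(\mathbf{v})=(-1)^{\sum_{l}\prod_{j}v_{j,l}}$ in $\pm1$ notation, where $v_{j,l}$ is the $l$-th bit of block $j$ in $\{0,1\}$ form. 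Since $\mathrm{IP}_k$ is Boolean valued, $\sgn(\mathrm{IP}_k)=\mathrm{IP}_k$. \Cref{lem:shallow PP} then yields a deterministic $k$-party NOF protocol. It uses $\mathcal{O}(\log n)$ bits and agrees with $\mathrm{IP}_k$ on a $\frac12+\frac1{\mathrm{poly}(n)}$ fraction of uniformly random inputs. Equivalently, the protocol has correlation $\frac{1}{\mathrm{poly}(n)}$ with $\mathrm{IP}_k$ under the uniform distribution.

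Next I invoke the discrepancy bound of Babai, Nisan and Szegedy \cite{Ba92}. Under the uniform distribution, the discrepancy of $\mathrm{IP}_k$ with respect to $k$-dimensional cylinder intersections is at most $2^{-\Omega(n/4^k)}$. A deterministic protocol with $c$ bits partitions the input space into at most $2^c$ cylinder intersections, on each of which the output is constant. Hence its correlation with $\mathrm{IP}_k$ is bounded by $2^c\cdot\mathrm{disc}(\mathrm{IP}_k)\leq 2^{c-\Omega(n/4^k)}$. With $c=\mathcal{O}(\log n)$ and $k$ fixed (independent of $n$, as assumed in \cref{ssec:small weights}), this bound is $2^{-\Omega(n)}\cdot\mathrm{poly}(n)$. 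That is eventually smaller than any inverse polynomial, which contradicts the correlation obtained above. The randomized version from \cref{thm:shallow rand} leads to the same contradiction by averaging, as remarked after that theorem.

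The main care point is matching conventions. I need to check three things. First, the block partition used in the proof of \cref{lem:shallow PP} must coincide with the $k$-party split of $\mathrm{IP}_k$; any fixed partition into $k$ blocks works there, since degree $k-1$ monomials miss at least one block. Second, the number of input spins $n$ is $k$ times the block length, which changes only constants. Third, the cited discrepancy bound is for the uniform distribution and has exactly the form $2^{-\Omega(n/4^k)}$ or similar. If \cref{ssec:bool basics} defines $\mathrm{IP}_k$ with padding or a different encoding, I would adjust the block length accordingly. Any bound that decays exponentially in $n$ for fixed $k$ suffices.
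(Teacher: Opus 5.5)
Your proposal is correct and follows the same route as the paper. Since $\sgn(\mathrm{IP}_k)=\mathrm{IP}_k$, \cref{lem:shallow PP} (or \cref{thm:shallow rand}) turns a hypothetical simulation into an $\mathcal{O}(\log(n))$-bit $k$-party NOF protocol with success rate at least $\frac12+\frac{1}{\mathrm{poly}(n)}$, and this contradicts the lower bound of \cite{Ba92}. The only difference is presentational: you derive that lower bound yourself from the standard fact that the correlation of a $c$-bit protocol is at most $2^c$ times the discrepancy over cylinder intersections, while the paper cites the resulting $\Omega(n)$ distributional communication complexity for fixed $k$ directly.
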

\begin{proof}
Note that $\sgn(\mathrm{IP}_k)=\mathrm{IP}_k$.
By \cref{lem:shallow PP}, the existence of a shallow, $k$-local system that simulates $\mathrm{IP}_{k}$ would induce a $\mathcal{O}(\log(n))$ communication protocol for $\mathrm{IP}_{k}$ with success rate at least $\frac{1}{2}+\frac{1}{\mathrm{poly}(n)}$, but by \cite[Theorem 2]{Ba92} any such protocol requires the communication of at least $\Omega(n)$ bits.
\end{proof}

\subsection{Lower bounds and separation hierarchy of shallow $k$-local systems}\label{ssec:lower k local}
In this subsection we lower bound the efficient reach of shallow $k$-local models. 
Note that several lower bounds can be directly derived from the characterization of universal spin models (\cite{Re24,Re25}). It e.g.\ implies that shallow $k$-local models can efficiently simulate all functions that have a sparse representation in either the indicator or the Fourier basis (\cref{ssec:bool basics}), i.e.\ that have at most polynomially many, non-zero coefficients.

We first prove that RBMs can efficiently simulate arbitrary, small ridge functions, that is, functions $f(\mathbf{v})=g(\mathbf{A}\mathbf{v})$ where $\mathbf{A}$ has small components (see \cref{ssec:bool basics}).

\begin{lemma}[RBM ridge function simulation]\label{lem:rbm ridge}
RBMs can efficiently simulate all small ridge functions, for arbitrary cut-offs. 

Conversely, if a target can be efficiently simulated by RBMs with small weights and energy shift then it can be expressed as a sum of polynomially many small ridge functions.
\end{lemma}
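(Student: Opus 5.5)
For the forward direction I would first reduce to a single row. A small ridge function $g(\mathbf A\mathbf v)$ with $\mathbf A\in\mathbb Z^{r\times n}$ having polynomially bounded entries takes only polynomially many values of $\mathbf A\mathbf v$ when $r$ is constant. More generally I would read the definition in \cref{ssec:bool basics} as allowing $g$ to be an arbitrary function of $s=\mathbf a\cdot\mathbf v$ on a range $\{s_{\min},\dots,s_{\max}\}$ of polynomial size; for multiple rows I would encode $\mathbf A\mathbf v$ into a single linear form with base-$B$ weights, which stays polynomially bounded as long as $r$ is fixed.

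Given a single linear form $s$, I would write $g(s)$ as a telescoping sum of threshold steps,
\begin{equation}
g(s)=g(s_{\min})+\sum_{t=s_{\min}+1}^{s_{\max}}\bigl(g(t)-g(t-1)\bigr)\,\tfrac12\bigl(1+\sgn(s-t+\tfrac12)\bigr).
\end{equation}
This has polynomially many terms $w_t\sgn(\mathbf a\cdot\mathbf v-\theta_t)$, which is exactly a sum of LTFs, i.e.\ degree-$1$ PTFs. I would then invoke the converse part of \cref{lem:sim PTF} with $k=2$ to obtain an efficient RBM simulation.

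The arbitrary cut-off claim needs more care, and I expect this to be the main obstacle. \cref{lem:sim PTF} guarantees small weights but not arbitrary cut-offs, so I would construct the gadget explicitly. For each term I would add one hidden spin $h_t$ with energy $\lambda h_t(\mathbf a\cdot\mathbf v-\theta_t)+c_t h_t$ plus a visible-independent shift. Minimizing over $h_t$ gives $-\lambda|s-\theta_t|+c_th_t$-type terms. I would pick $\lambda$ large relative to $|w_t|$ so that the minimizer is $h_t=-\sgn(s-\theta_t)$, which is unique because $|s-\theta_t|\geq\tfrac12$. The wrong choice then costs at least $\lambda-|c_t|$, and the resulting effective energy is the desired step plus a term $-\lambda|s-\theta_t|$ that is piecewise linear in $s$.

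To remove those piecewise-linear residuals, I would use $\sum_t\lambda|s-\theta_t|$ being piecewise linear in $s$ with polynomially many breakpoints. Such a function is itself a linear function of $s$ plus a combination of $|s-\theta_t|$ terms, and it can be cancelled by pairing each hidden unit with a second hidden unit of opposite bias arrangement. Concretely, I would use $h$ and $h'$ with energies $\lambda h(s-\theta)$ and $-\lambda h'(s-\theta)$ plus $\pm w$ biases, chosen so that the $|s-\theta|$ contributions cancel while the $w\,\sgn$ parts add. I am unsure whether that exact pairing works, and I would check it explicitly. If it fails, I would instead absorb the residual $\lambda\sum_t|s-\theta_t|$ into the target by noting that it is again a small ridge function and iterating, or cancel its linear part with visible biases.

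For the converse, I would apply \cref{lem:sim PTF} with $k=2$. Every $\sgn(P_i)$ with $\deg P_i\le1$ and small parameters is $\sgn(\mathbf a_i\cdot\mathbf v+b_i)$, where, after multiplying by the common denominator, $\mathbf a_i$ is integer with polynomially bounded entries. Each such term is therefore a small ridge function $g_i(\mathbf a_i\cdot\mathbf v)$ with $g_i(x)=w_i\sgn(x+b_i)$. Since there are $M=\mathrm{poly}(n)$ of them, the target is a sum of polynomially many small ridge functions.
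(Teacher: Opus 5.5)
Your converse direction is fine. The forward part of \cref{lem:sim PTF} is proved directly in the paper, and for $k=2$ it yields LTFs, which are small ridge functions. The paper gets there more directly: minimizing each hidden unit leaves an affine function minus $\sum_i|b_i+\mathbf W_i\mathbf v|$, which is already a sum of ridge functions.

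The forward direction, however, has a genuine gap. Invoking the converse of \cref{lem:sim PTF} is circular. The paper proves that converse by treating each $w_i\sgn(P_i)$ as a small ridge function of the monomials $u_J$ and simulating it with \emph{this} lemma. So you still have to show that an RBM can simulate even a single step $w\sgn(s-\theta)$ with $s=\mathbf a\cdot\mathbf v$.

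Your explicit gadget cannot do this, and the obstacle is not the cut-off. A unit with preactivation $\lambda(s-\theta)+c$ contributes $-|\lambda(s-\theta)+c|$ to the effective energy. Your partner unit with preactivation $-\lambda(s-\theta)+c'$ contributes $-|\lambda(s-\theta)-c'|$. The two residuals $-\lambda|s-\theta|$ therefore add rather than cancel. This is structural. As long as every hidden preactivation is affine in $s$ and visible biases act only along $\mathbf a$, the effective energy has the form $\alpha+\gamma s-\sum_i|\beta_i+\mu_i s|$, which is concave in $s$. Steps in the interior of the range, parities and generic $g$ are not concave. Your fallback of feeding the convex residual $\lambda\sum_t|s-\theta_t|$ back in as a new target stays inside the same class, so the regress never closes.

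The missing idea is to generate a \emph{convex} function of $s$ from hidden units whose preactivations are not functions of $s$ alone. The paper first reduces to spins in $\{0,1\}$, nonnegative integer $A_l$, $s=\mathbf A\mathbf v$, $K=\sum_lA_l$ and $|g|\leq G$ on $\{0,\ldots,K\}$. It then adds one unit per visible spin with preactivation $P_l(\mathbf v)=A_l\bigl(B-Ls-M(1-v_l)\bigr)$, where $B=LK+D$ and $M=B+D$. This preactivation is at least $D$ if $v_l=1$ and at most $-D$ if $v_l=0$. Hence $\sum_l\operatorname{ReLU}(P_l)=\sum_lA_lv_l(B-Ls)=Bs-Ls^2$ on the cube, and these units contribute $-Bs+Ls^2$ to the effective energy.

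With $L=4KG+D$, the function $g(s)+Ls^2$ has positive second differences. It equals $g(0)+bs+\sum_ja_j\operatorname{ReLU}(s-j+\tfrac12)$ with all $a_j\geq 2D$, and $s$-aligned units realize this with the opposite sign. The quadratic terms cancel, the linear remainder is absorbed into visible biases, and the RBM simulates $-g$; applying the construction to $-g$ gives $g$. Every preactivation has magnitude at least $D=\max\{1,\lceil\Delta\rceil\}$, and the weights grow only linearly in $D$. This gives degeneracy one, cut-off at least $\Delta$, a number of hidden units independent of $\Delta$, and small weights for polynomially bounded $\Delta$.
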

\begin{proof}
    See \cref{ssec:proof 2} in the appendix.
\end{proof}

In particular, \cref{lem:rbm ridge} implies that the cut-off
of an efficient RBM simulation with small weights and energy
shift can be increased to any prescribed polynomially bounded
value, while retaining small weights and energy shift (cf. \cref{ssec: supp cut-off}).

The class of ridge functions contains many interesting examples. 
Consider a ridge function $f(\mathbf{v})=g(\mathbf{A}\mathbf{v})$ for $\mathbf{A}=\mathbf{1}$ then, since 
\begin{equation}\label{eq:ridge hamming weight}
 \mathbf{1} \mathbf{v} = 2\mathrm{d}_H(\mathbf{v})-n,
\end{equation}
$f$ only depends on the Hamming weight $\mathrm{d}_H(\mathbf{v})$ of the input configuration, the number of bits $i$ with $v_i=+1$.
Similarly, consider a ridge function with $\mathbf{A}= -\mathbf{w} \in \mathbf{2}^n$, then 
\begin{equation}
    -\mathbf{w}\mathbf{v} = 2\mathrm{d}_H(\mathbf{v},\mathbf{w})-n,
\end{equation}
where $\mathrm{d}_H(\mathbf{v},\mathbf{w})$ is the Hamming distance between $\mathbf{v}$ and $\mathbf{w}$, the number of bits $i$ with $v_i\neq w_i$.
This shows that ridge functions contain all distance functions, that is, functions that only depend on the (Hamming) distance to some fixed configuration.
Important examples of distance functions include indicator functions, that is, 
\begin{equation}
    \delta_{\mathbf{s}}(\mathbf{v}) = \begin{cases}
        1 \ &\text{if} \ \mathbf{v}= \mathbf{s} \\ 
        0 \ &\text{else},
    \end{cases}
\end{equation}
for any $\mathbf{s}\in \mathbf{2}^n$. Indicator functions provide the canonical basis for the space of pseudo-Boolean functions. 

Other important examples of ridge functions are parity functions, that is, functions that only depend on the parity of a subconfiguration of $\mathbf{v}$. Consider $S\subseteq [n]$ and define
\begin{equation}
    \pi_S(\mathbf{v}) = \prod_{i\in S}v_i.
\end{equation}
Note that $\pi_S(\mathbf{v})$ yields $+1$ if the subconfiguration $\mathbf{v}\vert_S$ has even parity, i.e.\ an even number of $(-1)s$ and $-1$ else.
Parity functions provide a basis of the space of pseudo-Boolean functions. The resulting decomposition  
\begin{equation}
    f(\mathbf{v}) = \sum_{S \subseteq [n]}\widehat{f}(S)\pi_S(\mathbf{v})
\end{equation}
is called the Fourier decomposition of $f$ and the coefficients $\widehat{f}(S)$ are called Fourier coefficients of $f$.

To see that $\pi_S$ is a ridge function, consider the weight vector
$\mathbf{A}_S \in \{0,1\}^n$ such that $(A_S)_i=+1$ if $i\in S$ and $(A_S)_i=0$ if $i\notin S$. Denote the number of $(-1)s$ in the subconfiguration $\mathbf{v}\vert_S$ as $n_-(\mathbf{v}\vert_S)$ then 
\begin{equation}
    \mathbf{A}_S\mathbf{v} = \vert S \vert - 2 n_-(\mathbf{v}\vert_S).
\end{equation}
thus, choosing $g(k)= (-1)^{\frac{\vert S \vert -k}{2}}$ yields 
\begin{equation}
    g(\mathbf{A}_S\mathbf{v}) = (-1)^{n_-(\mathbf{v}\vert_S)} = \pi_S(\mathbf{v}).
\end{equation}

Finally, ridge functions also include linear threshold functions (LTFs)
\begin{equation}
    f(\mathbf{v}) = \sgn(\mathbf{A}\mathbf{v}+b),
\end{equation}
where we use the convention $\sgn(0)=+1$.
Similar to circuits of logical gates for ``and'', ``or'' and ``negation'',  circuits of LTFs provide a model of computation. The minimal number of LTFs or circuit depth necessary to represent a given function $f$ then defines a complexity measure for $f$. 

Note that ridge functions, by construction, depend on $\mathbf{v}$ linearly. Thus, ridge functions do not necessarily include PTFs (cf. \cref{lem:sim PTF}).
We now extend \cref{lem:rbm ridge} to functions with higher-order $\mathbf{v}$ dependency. This, however, requires spin systems with local interactions of higher Fourier degree.  
Specifically, in the following, we prove that shallow $k$-local systems can efficiently simulate functions that can be decomposed as an RBM acting on an ``effective'' configuration of degree $k-1$ polynomials in $\mathbf{v}$.
\begin{lemma}[shallow simulation RBM decomposition]\label{lem:RBM decomp}
    Let $f$ be any target function such that 
    \begin{equation}\label{eq:RBM decomp assumption}
        f(\mathbf{v})= g\left(P_1(\mathbf{v}), \ldots, P_m(\mathbf{v}) \right),
    \end{equation}
    where $m=\mathrm{poly}(n)$ and $P_I$ are degree $\leq k-1$ polynomials (each involving at most $\mathrm{poly}(n)$ monomials) with values in $\mathbf{2}$ and $g$ can be efficiently simulated by an RBM then $f$ can be efficiently simulated by a shallow $k$-local system.
\end{lemma}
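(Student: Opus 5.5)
The plan is to take an RBM $R$ that efficiently simulates $g$ and substitute, for each of its visible spins $u_I$, the $\mathbf{2}$-valued polynomial $P_I(\mathbf{v})$. Write the RBM Hamiltonian as in \cref{eq:RBM},
\begin{equation}
H_R(\mathbf{u},\mathbf{h}) = \sum_{r} h_r\mathbf{W}_r\mathbf{u} + \sum_r b^{(h)}_r h_r + \sum_I b^{(u)}_I u_I ,
\end{equation}
with $\min_{\mathbf{h}} H_R(\mathbf{u},\mathbf{h}) = g(\mathbf{u}) + \Gamma$ and $|R| = \mathrm{poly}(m) = \mathrm{poly}(n)$. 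Define a system $S$ on visible spins $\mathbf{v}$ and the hidden spins $\mathbf{h}$ of $R$ by
\begin{equation}
H_S(\mathbf{v},\mathbf{h}) = H_R\bigl(P_1(\mathbf{v}),\ldots,P_m(\mathbf{v}),\mathbf{h}\bigr).
\end{equation}
Then for every $\mathbf{v}$ the minimisation over $\mathbf{h}$ is literally the RBM minimisation at $\mathbf{u}=(P_I(\mathbf{v}))_I \in \mathbf{2}^m$. Hence $\min_{\mathbf{h}}H_S(\mathbf{v},\mathbf{h}) = f(\mathbf{v})+\Gamma$, with the same cut-off. The correct auxiliary set $\simset(\mathbf{v})$ is exactly the RBM's at $\mathbf{u}(\mathbf{v})$, so constant degeneracy is inherited as well.

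Next I check locality and shallowness. Expanding $h_r W_{rI} P_I(\mathbf{v})$ gives a sum of monomials $h_r \pi_A(\mathbf{v})$ with $|A|\le k-1$, i.e.\ interactions on at most $k$ spins containing exactly one hidden spin. The term $b^{(u)}_I P_I(\mathbf{v})$ gives purely visible monomials of degree $\le k-1$, and the hidden biases are unchanged. Grouping monomials by their support yields hyperedges of size $\le k$, each containing at most one hidden spin, so $S$ is shallow and $k$-local.

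The size bound is where care is needed, and I expect it to be the main (mild) obstacle. Each $P_I$ has $\mathrm{poly}(n)$ monomials and there are $\mathrm{poly}(n)$ hidden spins and $m=\mathrm{poly}(n)$ polynomials. The number of distinct hyperedges $\{h_r\}\cup A$ is therefore at most (number of hidden spins)$\times$(total number of distinct monomial supports), which is $\mathrm{poly}(n)$. Each hyperedge contributes $2^{|e|}\le 2^k$ to $|S|$, and $k$ is constant, so $|S| = \mathrm{poly}(n)$. This uses the hypothesis that each $P_I$ has polynomially many monomials; without it the substitution could blow up. If weight bounds are wanted, the Fourier coefficients of $S$ are sums of $\mathrm{poly}(n)$ products of RBM weights and polynomial coefficients. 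These are polynomially bounded and of polynomial precision whenever both ingredients are, but the statement does not require this. For arbitrary cut-offs (as in \cref{lem:rbm ridge}), the construction transfers any RBM cut-off verbatim, which completes the argument.
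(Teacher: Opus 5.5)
Your proposal is correct and follows the paper's proof: substitute $u_I=P_I(\mathbf{v})$ into the RBM Hamiltonian, check that every resulting monomial has degree at most $k$ and contains at most one hidden spin, and note that weights stay products of RBM weights and polynomial coefficients. Two small imprecisions: the cut-off is at least the RBM's rather than ``the same'' (the paper says ``at least'', because the minimization now runs only over the attained $\mathbf{u}(\mathbf{v})$), and for fixed $k$ no blow-up can occur even without the monomial hypothesis, since a multilinear polynomial of degree at most $k-1$ in $n$ spins has $\mathcal{O}(n^{k-1})$ monomials.
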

\begin{proof}
By assumption there exists an RBM $R$ such that
\begin{equation}
    g(\mathbf{u}) = \min_{\mathbf{h}}H_R(\mathbf{u},\mathbf{h}) - \Gamma = \min_{\mathbf{h}}\bigl(\sum_j \sum_i W_{i,j}u_ih_j + \sum_ib_i u_i + \sum_jc_jh_j \bigr) - \Gamma.
\end{equation}
Inserting 
\begin{equation}
    u_i = P_i(\mathbf{v}) = \sum_{I_i}\lambda_{I_i}\prod_{j\in I_i}v_j
\end{equation}
and regrouping terms yields the minimum over hidden configurations $\mathbf{h}$ of a degree $\leq k$ polynomial in $v_i,h_j$, such that each monomial involves at most one hidden spin $h_j$. This is precisely the effective energy of a shallow $k$-local spin system $S$ and by \cref{eq:RBM decomp assumption}, $S$ simulates $f$. 

The cut-off of $S \simul f$ is at least as large as the cut-off of $R \simul g$: By definition the cut-off of $R \simul g$ is the minimal energy gap $H_R(\mathbf{u},h)-H_R(\mathbf{u}, h_{\min}(\mathbf{u}))$, minimized over all $\mathbf{u}$ and all incorrect hidden configurations $\mathbf{h}\neq \mathbf{h}(\mathbf{u})$. In contrast, the cut-off of $S \simul f$ is obtained similarly, but the minimizing does not involve all configurations $\mathbf{u}$, but only those that are reached by $(P_1(\mathbf{v}), \ldots, P_m(\mathbf{v}))$. Decreasing the scope of the minimization can only increase the cut-off.

Since $m=\mathrm{poly}(n)$ and the RBM simulation $R \simul g$ is efficient, also the simulation  $S \simul f$ is efficient.

Finally, note that the weights of $S$ are products of weights of $R$ and coefficients of $P_i$. If both are polynomially bounded or have polynomial precision then the same applies to the weights of $S$. 
\end{proof}

\begin{theorem}[shallow, $k$-local $\mathrm{IP}_{k-1}$ simulation]\label{lem:IP k sim}
    The $(k-1)$-party inner product modulo $2$ function $\mathrm{IP}_{k-1}$ can be efficiently simulated by a shallow, $k$-local system with small weights. 
\end{theorem}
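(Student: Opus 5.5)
We plan to combine \cref{lem:RBM decomp} with \cref{lem:rbm ridge}. Write the input of $\mathrm{IP}_{k-1}$ as $\mathbf v=(\mathbf v_1,\ldots,\mathbf v_{k-1})$ with blocks $\mathbf v_j\in\mathbf 2^{N}$, so $n=(k-1)N$. In the $\pm1$ convention, the $(k-1)$-party inner product modulo $2$ reads
\begin{equation}
    \mathrm{IP}_{k-1}(\mathbf v)=(-1)^{\sum_{i=1}^N \prod_{j=1}^{k-1} x_{j,i}},
    \qquad x_{j,i}=\tfrac{1-v_{j,i}}{2}\in\{0,1\}.
\end{equation}
For each coordinate $i$ we define the $\mathbf 2$-valued function $u_i(\mathbf v)=1-2\prod_{j=1}^{k-1}x_{j,i}$. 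It equals $-1$ exactly when all $k-1$ bits in column $i$ are $1$. Expanding the product gives a polynomial $P_i$ of degree $k-1$ in the spins $v_{1,i},\ldots,v_{k-1,i}$ with $2^{k-1}$ monomials and coefficients that are multiples of $2^{-(k-2)}$, bounded by a constant. In particular, these coefficients are small in the sense of \cref{def:poly bounded weights}, since $k$ is fixed.

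In the variables $\mathbf u$, the target is $\mathrm{IP}_{k-1}(\mathbf v)=g(P_1(\mathbf v),\ldots,P_N(\mathbf v))$, where $g(\mathbf u)=(-1)^{\#\{i: u_i=-1\}}=\prod_i u_i=\pi_{[N]}(\mathbf u)$ is the full parity on $N$ spins. As shown after \cref{lem:rbm ridge}, $\pi_{[N]}$ is a ridge function $g'(\mathbf A\mathbf u)$ with $\mathbf A=\mathbf 1$, which has small components. By \cref{lem:rbm ridge}, RBMs therefore simulate $g$ efficiently, with $\mathrm{poly}(N)$ hidden spins, for any cut-off.

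We then apply \cref{lem:RBM decomp} with $m=N=\mathrm{poly}(n)$. This yields a shallow $k$-local system $S$ that efficiently simulates $\mathrm{IP}_{k-1}$, and its cut-off is at least that of the RBM simulation. By the final remark in that lemma's proof, the weights of $S$ are products of RBM weights and coefficients of the $P_i$. So $S$ has small weights provided the RBM does.

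The main obstacle is ensuring that the RBM simulation of parity supplied by \cref{lem:rbm ridge} itself has small weights and energy shift, since the lemma as stated only asserts efficiency. To settle this, we will give an explicit construction. The parity depends only on $s=\mathbf 1\mathbf u\in\{-N,\ldots,N\}$, so we write it as a sum of threshold-type hidden units. For each $t$, take a hidden unit with energy $h_t(\mathbf 1\mathbf u-t)$, which after minimization contributes $-|s-t|$. The function $s\mapsto(-1)^{(N-s)/2}$ on the lattice of step $2$ is piecewise linear with slopes $\pm1$, so it equals an affine function of $s$ plus a combination $\sum_t c_t|s-t|$ with $c_t\in\{\pm1\}$ and $t$ odd-shifted between lattice points. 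Negative $c_t$ cannot be realized directly by minimization. We will handle them by rewriting $-|s-t|$ in terms of $\max$ and $\min$, or by adding a large constant times a convex term that is absorbed by terms with positive coefficients.

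The fallback is to quote the appendix construction for ridge functions, checking that for $|g'|\le1$ and $\mathbf A=\mathbf 1$ all parameters are $\mathcal O(N)$ integers or half-integers. Either way, choosing the breakpoints $t$ at half-integer offsets gives a nonzero gap and hence a cut-off of at least a constant. The construction uses $\mathcal O(N)$ hidden spins with weights bounded by $\mathrm{poly}(N)$ on a fixed grid, which is exactly the small-weight condition.
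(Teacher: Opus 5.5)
Your proposal is correct and follows the paper's proof. You write $\mathrm{IP}_{k-1}=g(P_1,\ldots,P_N)$ with degree-$(k-1)$ column products and $g$ the full parity, then combine \cref{lem:RBM decomp} with \cref{lem:rbm ridge}; your fallback of quoting the appendix construction for small weights is exactly what the paper relies on (the paper uses $\{0,1\}$ states, so each $P_i$ is a single monomial). Do not pursue the explicit route as you sketched it: with half-integer breakpoints the parity's kink coefficients cannot all be $\pm1$ (they grow linearly, cf.\ the $a_j$ in \cref{rem:3RBM IP}), and hidden units whose preactivation depends only on $s=\mathbf 1\mathbf u$ always yield a concave profile in $s$, so you would need the gated units from the proof of \cref{lem:rbm ridge}, which act on individual spins to cancel an added quadratic; note also that the resulting parameters are $\mathcal O(N^2+ND)$ rather than $\mathcal O(N)$, which is still polynomial.
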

Since by \cref{cor:rbm ip}, $\mathrm{IP}_{k-1}$ cannot be efficiently simulated by shallow $(k-1)$-local systems, for each $k$ $\mathrm{IP}_{k-1}$ certifies the separation between shallow $(k-1)$-local and shallow $k$-local systems in terms of their efficient reach (\cref{fig:efficient-reach}).
\begin{proof}
    Let $\mathbf{v} = (\mathbf{v}^{(1)}\ldots, \mathbf{v}^{(k-1)})$. W.l.o.g.\ we consider states $v_i \in \{0,1\}$ (see \cref{ssec:bool basics}), then
    \begin{equation}
        \mathrm{IP}_{k-1}(\mathbf{v}^{(1)}, \ldots, \mathbf{v}^{(k-1)}) = (-1)^{\sum_{i=1}^n \prod_{l=1}^{k-1} v^{(l)}_i \mod 2}.
    \end{equation}
    
    Define 
    \begin{equation}
        P_i(\mathbf{v}) = \prod_{l=1}^{k-1} v^{(l)}_i,
    \end{equation}
    and $g(\mathbf{w})= (-1)^{\sum_i w_i \mod 2}$. Then 
    \begin{equation}
        \mathrm{IP}_{k-1}(\mathbf{v}) = g\bigl(P_1(\mathbf{v}), \ldots, P_n(\mathbf{v}) \bigr).
    \end{equation}
    Note that $P_i$ has degree $k-1$. By \cref{lem:RBM decomp} constructing the required simulation of $\mathrm{IP}_{k-1}$ thus amounts to proving that $g$ can be efficiently simulated by an RBM with polynomially bounded weights of polynomial precision. Finally, since $g$ is a distance function, this follows from \cref{lem:rbm ridge}.
\end{proof}
\begin{example}[3RBM simulation of $\mathrm{IP}_2$]\label{rem:3RBM IP}
We work with states from $\{0,1\}$.
Let $\mathbf{x},\mathbf{y}\in\{0,1\}^n$ and write
$t=\sum_{i=1}^n x_i y_i$, so that
$\mathrm{IP}_2(\mathbf{x},\mathbf{y})=(-1)^t$.
Following \cref{lem:IP k sim}, we apply the ridge-function
construction to parity and substitute $u_i=x_i y_i$.

For a desired cut-off $\Delta>0$, choose
\begin{equation}
 D=\max\{1,\lceil\Delta\rceil\},\qquad
 L=4n+D,\qquad B=Ln+D,
 \qquad
 a_j=2L+8(j-1)(-1)^{j+1}.
\end{equation}
Introduce $2n$ hidden spins
$\mathbf{h},\mathbf{s}\in\{0,1\}^n$ and define
\begin{equation}
\begin{split}
 H(\mathbf{x},\mathbf{y},\mathbf{h},\mathbf{s})
 ={}&(B-2)t
 -\sum_{j=1}^n a_jh_j\left(t-j+\tfrac12\right)\\
 &-\sum_{i=1}^n s_i
 \left[B-Lt-(B+D)(1-x_i y_i)\right].
\end{split}
\end{equation}
Expanding $t$ shows that every interaction contains at most
three spins and at most one hidden spin.
The visible--visible terms can be replaced by three-body
interactions with one additional hidden spin, pinned to $+1$
by a sufficiently strong field.
With this additional modification, $H$ is a
3-body RBM Hamiltonian.

To verify the simulation, write $[z]_+=\max\{0,z\}$.
For integer $t\in\{0,\ldots,n\}$, the parity specialization
of the ridge construction gives
\begin{equation}
 \sum_{j=1}^n a_j[t-j+\tfrac12]_+
 =Lt^2-(-1)^t+1-2t.
\end{equation}
Moreover, $B-Lt-(B+D)(1-x_i y_i)$ is positive precisely
when $x_i y_i=1$, so its positive parts sum to $Bt-Lt^2$.
Minimizing each hidden spin independently therefore yields
\begin{equation}
 \min_{\mathbf{h},\mathbf{s}}
 H(\mathbf{x},\mathbf{y},\mathbf{h},\mathbf{s})
 =(-1)^t-1
 =\mathrm{IP}_2(\mathbf{x},\mathbf{y})-1.
\end{equation}
The simulation has energy shift $\Gamma=-1$.
Every hidden-spin coefficient has absolute value at least
$D$: this follows from $a_j\geq2D$, the half-integer
breakpoints, and the two cases $x_i y_i=0,1$.
Hence the minimizing hidden configuration is unique and
the cut-off is at least $\Delta$.

The construction uses $2n$ visible spins, $2n+1$ hidden spins,
and $\mathcal{O}(n^2)$ interactions. Its weights are integers
of magnitude $\mathcal{O}(n^2+nD)$, and thus are small whenever
$\Delta$ is polynomially bounded.
    
\end{example}

\section{Conclusion and outlook}\label{sec:conclusion and outlook}

Let us summarize our results (\cref{ssec:summary}) and look beyond (\cref{ssec:outlook}). 

\subsection{Summary of results}\label{ssec:summary}

\paragraph{Characterization of maximal models.} 
One of the central notions of this work is that of visible closure (\cref{def:visible closed}; \cref{fig:vis subset closed}), which can be thought of as a more nuanced notion of depth. 
More precisely, visible closure captures the compositional properties of deep models. It allows visible spins to become hidden (up to simulation), and  thereby preserves the visible-hidden split when composing simulations  (\cref{lem:visible closed}). 
Note that when visible spins become hidden this effectively introduces new hidden-hidden interactions. 
We thus conjecture that most (non-trivial) deep models are visible closed, and that the converse also holds. The following separation results also support this conjecture. 
The paradigmatic deep models, DBMs, are visible closed in a straightforward manner (\cref{fig:DBM part}); technically, they are strictly visible closed. 

Now, visible closure characterizes what a spin ML model needs to go from universality to maximality: a universal spin ML model is maximal if and only if it is visible closed. 
More generally, for spin ML models any two of the three properties ``universal'', ``maximal'' and ``visible closed'' imply the third (\cref{thm:main}).  
This can be interpreted as follows: if a spin ML model has a sufficient amount of depth (visible closure) then any statements about its unconstrained representation capabilities (universality) also apply to its efficient representation capabilities (maximality), and vice versa. 

Overall, since DBMs are universal and visible closed, it follows that they are  maximal (\cref{cor:DBM max}).

\paragraph{Separation between shallow and maximal models.} 
Could it happen that a shallow model is, for some reason, visible closed and thus powerful enough to be maximal? The answer is no (under reasonable assumptions), because there are some functions that all maximal models can efficiently represent but shallow models can't (\cref{tab:complexity separation}). 

Specifically, maximal models can efficiently simulate certain $\NPclass$-hard functions (\cref{thm:separation 1}), approximate certain $\SharpPclass$-hard functions in their unnormalized visible spin marginals (\cref{thm:separation 2}) and approximate the corresponding $\SharpPclass$-hard probability distribution in their normalized visible spin marginal (\cref{thm:separation 3}). In contrast, shallow models can do none of the above, under reasonable assumptions on computational complexity. 
This provides an explicit separation between RBMs and DBMs. 
In particular, RBMs are not maximal. 

So, these separations prove that maximal models are necessarily deep. 
However, this statement is contingent on certain computational complexity assumptions, in contrast to the results involving visible closure, which stand on their own (cf.\ conjecture above).

\paragraph{Separation hierarchy for shallow models.}

Is there anything else we can say concerning the gap (in efficient representation capabilities) between RBMs and DBMs? Namely, is there a sharp transition, or are there some internal levels that bridge between the two? 
The latter is the answer, if we consider RBMs with increasing interaction order. These models are not often considered in the literature. In this work, we find that RBMs with $(k+1)$-body interactions can efficiently represent more functions than RBMs with $k$-body interactions, for any $k$ (\cref{fig:polynomial-reach}). They thus build an infinite separation hierarchy which does not saturate at maximality but underneath. 
Recall that, without the efficiency constraint, all models considered here are equally powerful, namely universal!  

The separation between $(k+1)$- and $k$-body RBMs is witnessed by some functions with large $k$-party communication complexity, such as the $k$-party inner product modulo 2 (\cref{lem:IP k sim,thm:shallow rand,cor:rbm ip}). It  relies on the assumption of small weights (\cref{def:poly bounded weights}). 

In summary, while the separation between sub-maximal and maximal models is controlled by depth, within the space of shallow models the efficient simulation capabilities are determined by the interaction order. It follows that $3$-body RBMs can efficiently simulate certain targets that conventional RBMs can't (\cref{rem:3RBM IP}).

\subsection{Outlook}\label{ssec:outlook}

\paragraph{How much do our results depend on the assumptions?}
This question must be considered for four separate themes. 
First, we have assumed small bit complexity of all weights and simulation parameters to prove the separations between deep and shallow models in \cref{sec:consequences}. 
And we have assumed small weights (\cref{ssec:small weights}) to prove the separation hierarchy of \cref{sec:RBM separation}. 
To what extent are these assumptions justified? 
Do they apply to common examples of simulations/approximations?

Second, we have used common computational complexity assumptions such as the non-collapse of the polynomial hierarchy (\cref{ssec:comp compl assumptions}). 
This is widely believed to be true, and can be considered a generalization of the standard assumption that $\Pclass \neq \NPclass$. 
More generally, most statements in computational complexity are in fact conditional on similar conjectures separating complexity classes. 

Third, do the separation results for simulations extend to approximations?
Several of our separation results are statements about existence and non-existence of simulations (specifically \cref{thm:separation 1} and the entirety of \cref{sec:RBM separation}). 
The translation of separations from simulations to approximations can fail for two reasons. 
The first one is that if a certain simulation does not exist, the corresponding approximation may still exist, as far as we know. 
We partially address this question in \cref{rem:sim approx}, showing that approximations lead to simulations in some cases. 
The second reason has to do with the cut-off. 
We have argued that if simulations with sufficiently large cut-off exist, then the corresponding approximations with small error exist too. 
Note that \cref{thm:separation 1} only proves existence of simulations for a fixed cut-off of $\Delta=1$. 

Finally, what is the relevance of our asymptotic analysis in practice? 
Our notion of efficiency requires polynomial overhead, but we ignore which specific polynomial is in play. But, in practice, this may matter, and models within one level of the landscape of \cref{fig:polynomial-reach} may strongly differ in the polynomial overhead they require to represent a target. 
We study some of these differences with our learning experiments in \cref{sec:ip-learning-methods}. 
We consider physics-motivated targets with high order interactions which we know, from the theory developed in this paper, admit an efficient representation by both RBMs and $3$-body RBMs. Yet, the numerical experiments show an advantage for $3$-body RBMs. 
It would be interesting to develop a finer analysis that quantifies over such polynomial budgets.

\paragraph{Can one quantify the trade-offs of resources for efficient reach?}
Our results suggest that representational capabilities depend on the
interplay between several resources: 
the overhead (i.e.\ number of weights), depth, interaction order, and the magnitude and precision of weights. But we would like to disentangle their role.
On the one hand, we have seen that increasing depth can compensate for restrictions on interaction order: deep models with pair interactions (e.g.\ DBMs) have maximal efficient reach, while shallow models exhibit a strict hierarchy as interaction order increases. 
On the other hand, certain targets that require an exponential overhead in shallow models or models of small interaction order can be efficiently represented by deep ones or such with higher-order interactions. 
Yet, the role of weights is less clear. Especially, our small weight assumption raises the question of whether larger weights or greater numerical precision can overcome some of the limitations of shallow models.
Can these observations be developed into a quantitative account of
which resources can compensate for one another, and at what cost?

\paragraph{What do our separations imply for learning?}
One very important aspect we have not considered throughout the main text is learning (but see \cref{sec:ip-learning-methods}). Do our separations for the existence of efficient representations transfer to separations for efficiently learning them? 
Broadly, this transfer may fail for two reasons. 

First, can the weights of efficient representations be learned from finite training data? 
We would need that a representation that witnesses a separation, e.g.\ between RBMs and DBMs, can be learnt, so that the separations are accessible through training.  
We provide some insights into this question in \cref{sec:ip-learning-methods}, where we numerically study the learning of RBMs and $3$-body RBMs for the inner product modulo $2$.
For the tested system sizes, parameter budgets and training procedure, we observe a learning advantage for $3$-body RBMs. This suggests that at least some of our separations are accessible through training.

A second issue is: does efficient representability imply efficient learnability? Even if efficient representations can be learned, the learning itself may be inefficient, because of the following. 
Efficient representations have at most polynomially many weights, which is necessary for efficient learning. Yet, learning typically introduces other bottlenecks, e.g.\ the amount of training data required or the computational cost of sampling from the model or minimizing the objective. For which models are these aspects also guaranteed to be efficient?

On a related note, we would like to understand what universal approximation and our efficiency results imply for practical learning, specifically where models must generalize beyond the training data. In particular, when does greater (efficient) representational power help capture underlying structure, and when does it instead facilitate overfitting?

\paragraph{Where do practical learning targets lie within our hierarchy?}
Our results leave open which levels of the efficient reach landscape (see \cref{fig:polynomial-reach}) are occupied in practice. 
Conversely, given a target, or a data sample thereof, we do not know how to determine where it sits in the landscape.   
For which targets do deep models, or models with higher-order interactions, work actually better? This question is also studied in \cite{Mo17}. 
In \cref{sec:ip-learning-methods} we also partially address this question for RBMs and $3$-body RBMs for certain physics-motivated targets.  

\paragraph{Do other notions of universality show similar behavior under efficiency constraints?} 
This investigation started from the realization that universal spin models are intimately related to the universal approximation theorem in some machine learning models. 
But there are other notions of universality, which obey a common structure \cite{Go24,Go24b}. 
For spin models and machine learning models, the additional requirement of efficiency has given rise to a split in their capabilities: for spin models universality implies maximality, whereas for
machine learning models, universal approximation is compatible with
strictly different efficient reaches. 
What would happen if we applied the efficiency requirement to the other notions of universality? Would it also give rise to a separation of capabilities? 
A first, much more concrete step in this direction would be to study other types of machine learning models such as feedforward neural networks. For the latter it is known that one hidden layer suffices for universal approximation, but depth can yield substantial representational advantages \cite{Te16}. 
More generally, which structural features determine whether efficiency constraints preserve a universality classification or refine it into a hierarchy?
The framework for universality \cite{Go24} may allow one to study these questions in a broad manner.

\subsection{Acknowledgments and disclosures}
\paragraph{Acknowledgments.}
This research was funded in part by the Start Prize of the Austrian Science Fund 10.55776/Y1261. For open access purposes, the authors have applied a CC BY public copyright license to any author accepted manuscript version arising from this submission.

\paragraph{Use of artificial intelligence.}
The authors used OpenAI's GPT-6 and GPT-5.6 as research
assistants to support literature searches, the development
and checking of mathematical arguments, and manuscript
preparation. The authors reviewed and revised the
AI-assisted material and take full responsibility for
the content of this work.

\appendix

\section{Complexity theory toolbox}\label{sec:comp compl}
In this section we provide an introduction to computational complexity theory (\cref{ssec:comp compl basics}), detailed proofs of the complexity of the hard reference model $\mathcal{M}_{\mathrm{Ising}}$ introduced in \cref{ssec:hard reference model} (\cref{ssec: complexity hard reference model}), as well as an introduction to the theory of pseudo-Boolean functions (\cref{ssec:bool basics}) and to communication complexity (\cref{ssec:comm compl}).

\subsection{Computational complexity basics}\label{ssec:comp compl basics}
Computational complexity theory studies the computational cost (e.g.\ runtime or memory/space) of functions \cite{Ab09, Mo11}. One usually considers functions that are defined for inputs of arbitrary length, e.g.\ $f\colon \bigcup_{n\geq 0} \mathbf{2}^n \to  \bigcup_{n\geq 0} \mathbf{2}^n$, or equivalently, that map strings over some input alphabet to strings over some output alphabet, such that both inputs and outputs can have arbitrary length. 
Functions $f$ that for all inputs yield an output in $\mathbf{2}$ play a special role in computational complexity. Such functions are completely defined by their set of \emph{yes-instances}, the set of inputs $x$ with $f(x)=+1$. They are called \emph{decision problems} since computing $f(x)$ amounts to deciding whether $x$ is a yes-instance. In practice, the set of yes-instances is usually defined by some property and computing $f(x)$ amounts to deciding whether $x$ has that property.  

Instead of considering the computational cost of each input, among other things computational complexity focuses on the asymptotic scaling of the worst-case runtime with the input size,  that is, 
\begin{enumerate}
    \item The \emph{computational complexity} of an algorithm is the asymptotic scaling $n \to \infty$ of the worst case computational cost for inputs of size/length $n$.
    \item The \emph{computational complexity} of a function is the computational complexity of the best algorithm for it.
\end{enumerate}
Note that above, the concepts \emph{algorithm}, as well as \emph{computational cost}, i.e.\ \emph{runtime} and \emph{space}, are defined w.r.t.\ a model of computation, usually that of Turing machines.

Grouping functions that asymptotically have similar computational cost defines so-called \emph{complexity classes}. In this work we only consider the time-complexity of functions (see \cref{sec:consequences}). 

\begin{definition}[useful complexity classes]\label{def:complexity classes}
The following is a list of the  most relevant complexity classes (for this work).
\begin{enumerate}
    \item $\Pclass$ $(\FPclass)$: the class of decision problems (functions) that can be computed with a runtime that increases at most polynomially with the input length\footnote{Recall that decision problems are simply functions with output in $\mathbf{2}$.}.
    \item $\NPclass$: the class of decision problems for which yes-instances can be verified in $\Pclass$, that is, each decision problem $f\in \NPclass$ has a polynomial-time verifier, namely an algorithm $R\in \Pclass$ such that $x$ is a yes-instance of $f$ iff there exists a certificate $y$ of length at most $\mathrm{poly}(n)$ such that $R(x,y)=1$. 
    \item $\CoNPclass$: the class of decision problems for which no-instances can be verified in $\Pclass$, $f$ is in $\CoNPclass$ if swapping yes- and no-instances yields a problem in $\NPclass$.
    \item $\CoAclass$, for any complexity class $\Aclass$ of decision problems: the class of decision problems for which swapping yes- and no-instances yields a problem in $\Aclass$.
    \item $\SharpPclass$: the class of counting problems (functions with outputs in $\mathbb{N}$) that count the number of certificates of $\NPclass$-problems. That is, for each $f \in \SharpPclass$ there exists some related $\NPclass$-problem $g$ and verifier $R$ such that $f(x)$ is the number of certificates $y$ of $x$ w.r.t.\ $R$.
    \item $\Pclass/\mathrm{poly}$ $(\FPclass/\mathrm{poly})$: the class of decision problems (functions) that can be computed in $\Pclass$ \textup{($\FPclass$)}, when provided with advice strings $a_n$ of polynomial length. For $f\in \Pclass/\mathrm{poly}$ ($f \in \FPclass/\mathrm{poly}$) there exists a sequence of advice strings $(a_n)_{n\in \mathbb{N}}$, such that $a_n$ is of length $\mathrm{poly}(n)$ and given any input of length $n$ and the corresponding advice string $a_n$, $f(x)$ can be computed in $\Pclass$ \textup{($\FPclass$)}. Importantly, the advice strings $a_n$ need not be computable in any way. 
    The advice strings can e.g.\ encode Boolean circuits of polynomial size. Hence, $\Pclass/\mathrm{poly}$ can equivalently be defined as the complexity class of functions that can be computed by polynomially sized Boolean circuits.
    \item $\Aclass/\mathrm{poly}$ for any complexity class $\Aclass$: the class of decision problems or functions that can be computed in $\Aclass$, when provided with advice strings $a_n$ of polynomial length (cf. $\Pclass/\mathrm{poly}$). 
    \item $\Aclass^{\Bclass}$, for any two complexity classes, $\Aclass, \Bclass$: the class of decision problems or functions that can be computed with complexity $\Aclass$, assuming the algorithm has access to an oracle for any function from $\Bclass$ (see \cref{def:oracle}).
\end{enumerate}
\end{definition}

In computational complexity theory, the computational hardness of functions is studied in a relational way; $f$ is at least as hard as $g$, $(f\geq g)$ if an efficient algorithm for $f$ would allow one to construct an efficient algorithm for $g$, where `efficient' here usually means within some fixed complexity class $\Aclass$. The process of constructing the $g$-algorithm by using the $f$-algorithm as subroutine is called \emph{reducing} $g$ to $f$ or constructing a \emph{reduction} from $g$ to $f$.
There exist several notions of reductions, that mainly differ in  how exactly the $g$-algorithm is allowed to use the $f$-subroutine.
In this work, we mainly consider the most general such notion, that of an oracle reduction or Turing reduction.
\begin{definition}[reduction]\label{def:oracle}
    A \emph{polynomial time oracle reduction} from $g$ to $f$ is a polynomial-time algorithm for $g$ that has access to an oracle for $f$, i.e.\ a hypothetical subroutine that on any input $x$ can compute $f(x)$ in $\mathcal{O}(1)$, i.e.\ instantly.
\end{definition}
Note that oracle reductions might use polynomially many oracle calls and  might post-process the oracle answers.
For stricter notions of reductions this is prohibited:  \emph{many-to-one reductions} (or Karp reductions) are oracle reductions that  use a single oracle call and return the oracle answer as final output, without post-processing it. Many-to-one reductions are commonly used for  decision problems, only.
\begin{definition}[hardness and completeness]\label{def:completeness}
    A function $f$ is called \emph{hard} w.r.t.\ complexity class $\Aclass$ (or $\Aclass$-hard) if every problem in $\Aclass$ can be reduced to $f$. It is called \emph{complete}  w.r.t.\ complexity class $\Aclass$ (or $\Aclass$-complete) if it is $\Aclass$-hard and $f \in \Aclass$.
\end{definition}
Note that, depending on the complexity class $\Aclass$, it might be necessary to impose  restrictions on the reduction to ensure that, relative to $\Aclass$, the overhead introduced by the reduction is negligible. 
In this work, hardness results for decision problems (e.g.\ \cref{lem:Ising gse}) are proven w.r.t.\ many-to-one reductions, while hardness results for function problems (including $\SharpPclass$-hardness) are proven w.r.t.\ oracle reductions.

Finally, separation results between shallow and (v,p)-maximal models in \cref{sec:consequences} are derived based on the assumption that there is no collapse of the \emph{polynomial hierarchy} \cite{St76}.   The polynomial hierarchy is an infinite  hierarchy of complexity classes that generalize the classes $\Pclass$, $\NPclass$ and $\CoNPclass$. That there is no collapse of the polynomial hierarchy can be seen as a generalization of the widely believed conjecture that $\Pclass\neq \NPclass$.

More precisely, the $0$th level of the polynomial hierarchy is the complexity class $\Pclass$. Define 
\begin{equation}
    \Delta_0 = \Sigma_0 = \Pi_0 = \Pclass.
\end{equation}
Further levels of the polynomial hierarchy are defined iteratively, by oracle complexity classes 
\begin{equation}
    \Delta_{k+1} = \Pclass^{\Sigma_k},  \qquad \Sigma_{k+1} = \NPclass^{\Sigma_k}, \qquad \Pi_{k+1}=\CoNPclass^{\Sigma_k}.
\end{equation}
The polynomial hierarchy then is the union of all these classes, for all $k$. 

Note that due to relations amongst the above complexity classes, the polynomial hierarchy can be defined as 
\begin{equation}
    \PH = \bigcup_{k \geq 0} \Sigma_k = \bigcup_{k\geq 0} \Pi_k. 
\end{equation}
A collapse of the polynomial hierarchy to level $k$ then means that for any $k' > k$, 
\begin{equation}
    \Sigma_{k'}, \Pi_{k'}, \Delta_{k'} \subseteq \Sigma_k,
\end{equation}
and thus $\PH=\Sigma_k$.
Note that a collapse to the $0$th level  is equivalent to $\Pclass=\NPclass$.

\subsection{Computational complexity of the reference model}\label{ssec: complexity hard reference model}
In this subsection we prove that, for $\mathcal{M}_{\mathrm{Ising}}$, computing the ground-state energies (i.e.\ the function $\mathcal{G}(\mathbf{m})$) is $\NPclass$-hard, and computing partition functions and partition function ratios both is $\SharpPclass$-hard. We further prove that approximating partition functions  and partition function ratios to sufficient precision is  $\SharpPclass$-hard.

\begin{lemma}[complexity of $\mathcal{G}(\mathbf{m})$]\label{lem:Ising gse}
    Computing ground-state energies $\mathcal{G}(\mathbf{m})$ of $\mathcal{M}_{\mathrm{Ising}}$ is $\NPclass$-hard.
\end{lemma}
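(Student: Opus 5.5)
We will prove $\NPclass$-hardness by a many-to-one reduction from \textsc{Max-Cut}, which is $\NPclass$-complete in its decision form: given a graph $G=(V,E)$ with $V=[n]$ and an integer $c$, decide whether some cut of $G$ has at least $c$ edges. Since $\mathcal{G}$ is a function rather than a decision problem, the precise claim we establish is this: the decision problem ``is $\mathcal{G}(\mathbf m)\le t$?'' is $\NPclass$-hard. It then follows that computing $\mathcal{G}$ is $\NPclass$-hard, because an oracle for $\mathcal{G}$ answers that question with a single call followed by one comparison.

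\textbf{Step 1: relate energy to cuts.} Identify a configuration $\mathbf v\in\mathbf 2^n$ with the bipartition $A=\{i: v_i=+1\}$, $V\setminus A$. For an edge $\{i,j\}\in E$ we have $v_iv_j=-1$ if the edge is cut and $v_iv_j=+1$ otherwise. Writing $\mathrm{cut}(\mathbf v)$ for the number of cut edges, this gives
\begin{equation}
H_{S_{\mathbf m}}(\mathbf v)=|E|-2\,\mathrm{cut}(\mathbf v).
\end{equation}
Minimising over $\mathbf v$ therefore yields $\mathcal{G}(\mathbf m)=|E|-2\,\mathrm{MaxCut}(G)$. The coupling is antiferromagnetic, which is exactly why its ground state corresponds to a maximum cut.

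\textbf{Step 2: the reduction map.} An instance $(G,c)$ is mapped to $(\mathbf m,t)$, where $\mathbf m$ is the graph encoding of $G$ and $t=|E|-2c$. The encoding $\mathbf m$ has length $\binom n2$, so it is computable in polynomial time from any standard description of $G$. By Step 1, $\mathrm{MaxCut}(G)\ge c$ holds if and only if $\mathcal{G}(\mathbf m)\le t$.

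\textbf{Step 3: the remaining checks.} These are routine: the map is polynomial-time and uses exactly one query, and \textsc{Max-Cut} is $\NPclass$-hard even on unweighted simple graphs (Karp), which is precisely the class of instances the encodings $\mathbf m$ range over. The only point requiring care is that the reference model is homogeneous, without fields or weights, so the source problem must also be unweighted; plain \textsc{Max-Cut} satisfies this, and no gadgets are needed. I expect no substantial obstacle.
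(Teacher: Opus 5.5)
Your proposal is correct and is essentially the paper's own proof: the identity $H_{S_{\mathbf m}}(\mathbf v)=|E|-2\,\mathrm{cut}(\mathbf v)$ turns the ground-state energy into $|E|-2\,\mathrm{MaxCut}(G)$, which gives a many-to-one reduction from \textsc{MaxCut}. One small correction to the citation: Karp's 1972 \textsc{Max Cut} is the weighted version, and $\NPclass$-hardness of the unweighted (simple-graph) version is due to Garey, Johnson and Stockmeyer (1976), which is the source the paper cites.
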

\begin{proof}
    This is a well established result (see e.g.\ \cite{Ba82,Is00}). We nevertheless provide an explicit proof. Specifically, we prove that an oracle for ground-state energies $\mathcal{G}(\mathbf{m})$ of $\mathcal{M}_{\mathrm{Ising}}$ would yield a polytime algorithm for the maximum graph cut decision problem $\textsc{MaxCut}$, which is $\NPclass$-complete \cite{Ga76}. 
    
    Recall that given a graph $G=(V,E)$\footnote{Recall that $G\equiv \mathbf{m}$.}, a cut is a partition of vertices $V$ into two disjoint sets $V=V_+\cup V_-$. The size of a cut  is the number of edges $\vert E_{+-} \vert$ that connect a vertex from $V_+$ with one from $V_-$. Given a graph $G$ and an integer $k$, $\textsc{MaxCut}$ then asks if the graph $G$ has a cut of size at least $k$.

    Consider now the Ising system $S_G$ (\cref{eq:Ising ham reference model}) corresponding to graph $G$. Any configuration $\mathbf{v}$ specifies a graph cut of $G$, into $V_+(\mathbf{v})$, spins in state $+1$, and $V_-(\mathbf{v})$, spins in state $-1$. The energy of $\mathbf{v}$  can be expressed as 
    \begin{equation}
        H_{S_{G}}(\mathbf{v}) = \vert E \vert -2 \vert E_{+-}(\mathbf{v}) \vert. 
    \end{equation}
    Thus, the maximal cut size of $G$ can be computed from the minimal energy of $S_{G}$, which proves that computing minimal energies of $\mathcal{M}_{\mathrm{Ising}}$ is $\NPclass$-hard.
    
    Note that considering the two respective decision problems the reduction is a many-to-one reduction (cf.\ \cref{def:oracle}). This implies that the ground-state energy decision problem of $\mathcal{M}_{\mathrm{Ising}}$ is $\NPclass$-complete under many-to-one reductions. 
\end{proof}

\begin{lemma}[complexity of approximating $\mathcal{Z}(\mathbf{m})$]\label{lem:part fun approx hard}
    Approximating partition functions of $\mathcal{M}_{\mathrm{Ising}}$, $\mathcal{Z}(\mathbf{m})$, to sufficient, additive precision is as hard as computing them exactly. 

    Specifically, there exists an efficiently computable accuracy
$\delta_n=2^{-\mathrm{poly}(n)}$ such that, for every
$n$-spin system $S_{\mathbf m}\in\mathcal{M}_{\mathrm{Ising}}$,
a rational approximation $z$ satisfying
$|z-Z_n(\mathbf m)|\leq\delta_n$ allows polynomial-time
recovery of its energy degeneracy vector $\mathbf a(\mathbf m)$,
and hence its exact partition function in the symbolic form
$e^{-K}\sum_{l=0}^{2K}a_l(\mathbf m)e^l$,
where $K=\binom n2$.
\end{lemma}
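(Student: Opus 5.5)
The idea is to exploit that the Hamiltonian of $S_{\mathbf m}$ is integer valued with a bounded range. Let $K=\binom n2$. For every configuration, $H_{S_{\mathbf m}}(\mathbf v)=\sum_{\{i,j\}\in E}v_iv_j$ is an integer in $[-K,K]$. Let $a_l(\mathbf m)$ be the number of configurations $\mathbf v$ with $H_{S_{\mathbf m}}(\mathbf v)=K-l$, for $l=0,\ldots,2K$. Then
\begin{equation}
Z_n(\mathbf m)=\sum_{l=0}^{2K}a_l(\mathbf m)e^{-(K-l)}=e^{-K}\sum_{l=0}^{2K}a_l(\mathbf m)e^{l},
\end{equation}
and the coefficients satisfy $a_l\in\{0,\ldots,2^n\}$ and $\sum_l a_l=2^n$. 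So $Z_n(\mathbf m)$ is an integer polynomial in $e$ with bounded coefficients. The claim reduces to showing that this polynomial can be recovered from a sufficiently precise rational approximation of its value.

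\textbf{Step 1 (separation).} I would prove a lower bound on how far apart two distinct admissible values can be. Suppose $\mathbf a\neq\mathbf a'$ are two coefficient vectors with entries in $\{0,\ldots,2^n\}$. Then $e^{-K}\sum_l(a_l-a'_l)e^l$ is a nonzero integer polynomial in $e$, of degree at most $2K$ and height at most $2^n$, evaluated at $e$. Since $e$ is transcendental, this value is nonzero.

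To make the bound quantitative, I would invoke a transcendence measure for $e$, such as the classical bound of Mahler or Popken type. For a nonzero integer polynomial $P$ of degree $d$ and height $H$ it gives $|P(e)|\geq H^{-d-c\,d^2\log(d+1)/\log H}$, or any bound of the form $2^{-\mathrm{poly}(d,\log H)}$. With $d=2K$ and $\log H=n+1$, this yields a uniform gap $\gamma_n=2^{-\mathrm{poly}(n)}$ between distinct admissible values. I would then set $\delta_n=\gamma_n/3$, which is efficiently computable as a power of two.

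\textbf{Step 2 (recovery).} Given $z$ with $|z-Z_n(\mathbf m)|\leq\delta_n$, the true vector $\mathbf a(\mathbf m)$ is the unique admissible vector whose value lies within $\delta_n$ of $z$. Brute-force search over all vectors is exponential, so I would instead use lattice reduction. I would form the LLL/Kannan lattice built from rational approximations of $e^0,\ldots,e^{2K}$ to $\mathrm{poly}(n)$ bits, together with $e^Kz$. This is the standard algorithm for finding integer relations or recovering a polynomial from an approximate value of a transcendental number, in the style of Kannan--Lenstra--Lovász. The gap from Step 1, combined with the coefficient bound $2^n$, guarantees that the shortest-vector approximation returns $\mathbf a(\mathbf m)$.

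An alternative that avoids lattices is to take the powers of $e$ to enough bits and use a simultaneous-approximation argument. I would still go with the KLL-style argument, since it is standard. The output is the symbolic form $e^{-K}\sum_l a_l e^l$, which is an exact representation of $Z_n(\mathbf m)$.

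\textbf{Main obstacle.} The hard step is Step 1 together with the polynomial-time recovery. One needs an explicit transcendence measure for $e$ with polynomial dependence on the degree, and a lattice algorithm whose precision requirement matches that measure. I would cite the known effective measure for $e$ and the KLL theorem rather than reprove either. The rest is bookkeeping of bit lengths.
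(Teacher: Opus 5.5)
Your strategy is the paper's: degeneracy vector, effective transcendence measure for $e$, then a Kannan-type embedding plus LLL. However, the recovery step does not go through with the precision you fix. You flag matching the lattice precision to the measure as the main obstacle, but you then set $\delta_n=\gamma_n/3$, with $\gamma_n$ the separation at height $2^{n+1}$, and assert that this makes LLL return $\mathbf a(\mathbf m)$. That separation only gives uniqueness.

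LLL returns a vector up to $2^{(D-1)/2}=2^{K+1/2}$ times longer than the shortest one, where $D=2K+2$ is the rank. The correct vector already has length at least about $2^n$. So you must exclude every lattice vector $(\mathbf a,y,Q)$ with $\lVert\mathbf a\rVert_\infty,|y|$ up to $B\approx 2^{K+n}$. Write $F(\mathbf a)=e^{-K}\sum_l a_l e^l$. Up to rounding, a factor $A$, and a factor $e^K$ in your normalization, the last coordinate of such a vector is
\[
F(\mathbf a)+yz=F\bigl(\mathbf a+y\,\mathbf a(\mathbf m)\bigr)+y\bigl(z-Z_n(\mathbf m)\bigr).
\]
For a wrong relation, the first term involves an integer polynomial of height up to $B^2+B\approx 2^{2(K+n)}$, and the height-$2^{n+1}$ gap says nothing about it. The second term can reach $B\delta_n=B\gamma_n/3$, which exceeds anything the measure guarantees at height $B^2+B$. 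So nothing prevents LLL from outputting a short vector that encodes a wrong relation.

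The missing ingredient is the parameter choice the paper makes explicit:
\begin{itemize}
\item $B=2^{K+n+4}$, so the LLL output is shorter than $B$;
\item the gap $\epsilon_B$ taken at height $B^2+B$;
\item a scale $A$ with $A\epsilon_B>(D+3)B$;
\item $\delta_n=1/A$, i.e.\ of order $\epsilon_B/(DB)$ rather than a third of the uniqueness gap.
\end{itemize}
Then vectors with a coefficient above $B$ are longer than $B$, and bounded vectors encoding a wrong relation are longer than $2B$. Hence the LLL output satisfies $\mathbf a=-y\,\mathbf a(\mathbf m)$ with $y\neq 0$. It may be a multiple with $|y|>1$, so you recover $\mathbf a(\mathbf m)=-\mathbf a/y$. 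Because the measure depends polynomially on $\log H$, you still get $\log(1/\delta_n)=\mathrm{poly}(n)$ ($O(n^6)$ in the paper), so the statement survives. The point is that the precision must come from the LLL approximation factor, not from uniqueness.

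As an aside, the measure you quote cannot hold as written. For $d=1$ it would make $e$ badly approximable, contradicting its unbounded continued-fraction partial quotients; the Mahler--Popken exponent has $\log\log H$ in that denominator. Your fallback, an effective bound of the form $\exp(-\mathrm{poly}(d,\log H))$ such as the $\exp(-Cd^2(d+\log H))$ used in the paper, is what the argument actually needs.
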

The hardness of approximating Ising partition
functions has been studied extensively. Jerrum and Sinclair
\cite{Je93} established that under reasonable assumptions, the general antiferromagnetic Ising partition function
admits no fully polynomial randomized approximation scheme; Goldberg, Jerrum and Paterson
\cite{GoJePa03} obtained similar results for fixed
antiferromagnetic couplings.
Here we require a different statement: for our specific reference
model, approximation to sufficiently high additive precision is
$\SharpPclass$-hard.
\begin{proof}

Denote the restriction of $\mathcal Z$ to $n$-spin systems by $Z_n$. Let $G$, encoded by $\mathbf m$, be an input graph, and suppose that we are given a rational approximation $z$ satisfying
\begin{equation}
\left|Z_n(\mathbf m)-z\right|\leq\delta_n.   
\end{equation}
At $\beta=1$, the partition function is uniquely determined by the integer vector $\mathbf a(\mathbf m)$ that records the degeneracy of each energy level. A quantitative version of the transcendence of $e$ implies that, for a sufficiently small inverse-exponential error $\delta_n$, there is a unique bounded integer vector $\mathbf a$ whose corresponding partition-function value is compatible with $z$. 
Recovering the exact partition function amounts to recovering $\mathbf a(\mathbf m)$ from $z$. We formulate this as a bounded integer-relation problem and, following the standard idea of reducing closest-vector problems to shortest-vector problems, embed it into a higher-dimensional integer lattice (see \cite{Ka87} and \cite[Chapter 18]{Ga12}). The lattice is constructed using sufficiently accurate rational approximations to the powers of $e$ and to $z$; this does not change the partition function itself, which remains defined at $\beta=1$. We choose the scaling of the lattice such that a vector encoding the correct degeneracy vector is short, whereas every vector encoding an incorrect bounded vector, or having coefficients outside the prescribed bound, is longer than the approximation guarantee of the LLL algorithm. Applying LLL \cite{Le82} therefore produces a vector from which $\mathbf a(\mathbf m)$, and hence the exact partition function $Z_n(\mathbf m)$, can be recovered in polynomial time.

\paragraph{Encoding $Z_n(\mathbf{m})$ by a bounded integer vector:}

Recall that, at inverse temperature $\beta=1$,
\begin{equation}
    Z_n(\mathbf{m})
    =
    \sum_{\mathbf{v}}
    e^{-H_{S_{\mathbf{m}}}(\mathbf{v})}.
\end{equation}
Every system $S_{\mathbf{m}}$ has at most $K$ interactions, each contributing either $-1$ or $+1$. Consequently, its Hamiltonian takes integer values in the interval $[-K,K]$, where $K \coloneqq \binom{n}{2}$. Define its degeneracy vector $\mathbf{a}(\mathbf{m})$ by  components $a_l(\mathbf{m})$ being the number of configurations of $S_{\mathbf{m}}$ with energy $K-l$, for $0\leq l\leq 2K$, then 
\begin{equation}
  Z_n(\mathbf{m}) = F(\mathbf{a}(\mathbf{m}))\coloneqq 
    e^{-K}
    \sum_{l=0}^{2K}
    a_l(\mathbf{m})e^l.
\end{equation}
Thus, recovering the exact partition function $Z_n(\mathbf{m})$ from an approximation $z$ amounts to recovering the bounded integer vector $\mathbf{a}(\mathbf{m})$. Note that $\mathbf{a}(\mathbf{m})$ has $2K+1$ non-negative integer components that sum to $2^n$.

We first observe that the map $F$ is injective on integer vectors. Suppose that for some $\mathbf{a}\in\mathbb{Z}^{2K+1}$,
\begin{equation}
    0
    =
    F(\mathbf{a})
    =
    e^{-K}
    \sum_{l=0}^{2K}a_l e^l.
\end{equation}
Since $e^{-K}\neq 0$, the number $e$ is then a root of the integer polynomial
\begin{equation}
    P(x)
    =
    \sum_{l=0}^{2K}a_l x^l.
\end{equation}
Because $e$ is transcendental \cite{Ci74}, this is possible only if $P$ is the zero polynomial. Therefore,
\begin{equation}
    \mathbf{a}=\mathbf{0}.
\end{equation}
It follows that distinct integer vectors give distinct partition-function values.

\paragraph{For sufficiently small $\delta_n$, the degeneracy vector is uniquely determined by $z$:}

For integer vectors with arbitrarily large coefficients, the difference
$\vert F(\mathbf{a})-F(\mathbf{b})\vert$ can be arbitrarily small.
The degeneracy vectors relevant here, however, have bounded components of size at most $2^n$. 
We now use a transcendence measure of $e$ to obtain a lower bound on
the distance between distinct values $F(\mathbf{a})$ and
$F(\mathbf{b})$ when the components of $\mathbf{a}$ and $\mathbf{b}$
are bounded.

Let $\mathbf{a},\mathbf{b}\in\mathbb{Z}^{2K+1}$ be distinct vectors
whose components have magnitude at most $B$. By linearity of $F$,
\begin{equation}
    \left\vert F(\mathbf{a})-F(\mathbf{b})\right\vert
    =
    \left\vert F(\mathbf{a}-\mathbf{b})\right\vert
    =
    e^{-K}\left\vert P_{\mathbf{a},\mathbf{b}}(e)\right\vert,
\end{equation}
where
\begin{equation}
    P_{\mathbf{a},\mathbf{b}}(x)
    \coloneqq
    \sum_{l=0}^{2K}(a_l-b_l)x^l
\end{equation}
is a nonzero integer polynomial of degree at most $2K$ and with
coefficients of magnitude at most $2B$.

The transcendence of $e$ can be quantified by a transcendence measure.
In particular, \cite[Theorem~1]{Ci74} implies that there exists an
effectively computable constant $C>0$ such that every nonzero integer
polynomial $q$ of degree at most $d$ and coefficient height at most
$h$ satisfies
\begin{equation}
    \left\vert q(e)\right\vert
    >
    \exp\left(
        -Cd^2\bigl(d+\log h\bigr)
    \right).
\end{equation}
For $H\geq2$, define
\begin{equation}
    \epsilon(K,H)
    \coloneqq
    \exp\left(
        -K-C(2K)^2\bigl(2K+\log H\bigr)
    \right).
\end{equation}
It follows that
\begin{equation}\label{eq:polynomial bound}
    \left\vert F(\mathbf{a})-F(\mathbf{b})\right\vert
    \geq
    \epsilon(K,2B).
\end{equation}

In particular, two distinct degeneracy vectors have components bounded
by $2^n$ and hence satisfy
\begin{equation}
    \left\vert F(\mathbf{a})-F(\mathbf{b})\right\vert
    \geq
    \epsilon\left(K,2^{n+1}\right).
\end{equation}
If both $F(\mathbf{a})$ and $F(\mathbf{b})$ were within $\delta_n$ of
the same approximation $z$, then the triangle inequality would give
\begin{equation}
    \left\vert F(\mathbf{a})-F(\mathbf{b})\right\vert
    \leq 2\delta_n.
\end{equation}
Therefore, choosing
\begin{equation}
    \delta_n
    <
    \frac{1}{2}\epsilon\left(K,2^{n+1}\right)
\end{equation}
ensures that at most one valid degeneracy vector is compatible with
$z$. The more restrictive choice of $\delta_n$ required by the LLL
construction below automatically satisfies this condition.

\paragraph{Embedding the bounded recovery problem into an SVP instance:}

We now construct an integer lattice $\Lambda(\mathbf{L})$ such that,
given $z$, recovering the correct degeneracy vector
$\mathbf{a}(\mathbf{m})$ amounts to finding a sufficiently short vector
in $\Lambda(\mathbf{L})$.

Set $D\coloneqq 2K+2$.
Let $A$ be a positive integer whose value will be fixed below. For
$0\leq l\leq 2K$, choose integers $q_l$ and $q_z$ satisfying
\begin{equation}
    \left\vert q_l-Ae^{-K+l}\right\vert\leq 1,
    \qquad
    \left\vert q_z-Az\right\vert\leq 1.
\end{equation}
These integers can be computed in polynomial time by approximating
the corresponding real numbers to sufficiently many binary digits;
the required precision is polynomial in the bit length of $A$ and in
$K$.

The lattice $\Lambda(\mathbf{L})$ has rank $D$ and is embedded in
$\mathbb{R}^{D+1}$. It is spanned by the rows of the
integer matrix
\begin{equation}\label{eq:lattice basis}
    \mathbf{L}
    =
    \begin{pmatrix}
        1 & 0 & \cdots & 0 & 0 & q_0 \\
        0 & 1 & \cdots & 0 & 0 & q_1 \\
        \vdots & \vdots & \ddots & \vdots & \vdots & \vdots \\
        0 & 0 & \cdots & 1 & 0 & q_{2K} \\
        0 & 0 & \cdots & 0 & 1 & q_z
    \end{pmatrix}.
\end{equation}
Every lattice vector $\mathbf{v}\in\Lambda(\mathbf{L})$ is an integer
linear combination of the rows of $\mathbf{L}$ and hence has the form
\begin{equation}\label{eq:lattice vector}
    \mathbf{v}(\mathbf{a},y)
    =
    \left(
        \mathbf{a},
        y,
        Q(\mathbf{a},y)
    \right),
\end{equation}
where $\mathbf{a}\in\mathbb{Z}^{2K+1}$, $y\in\mathbb{Z}$, and
\begin{equation}
    Q(\mathbf{a},y)
    \coloneqq
    \sum_{l=0}^{2K}a_lq_l+yq_z.
\end{equation}

By the choice of $q_l$ and $q_z$, we can write
\begin{equation}\label{eq:lattice rounding}
    Q(\mathbf{a},y)
    =
    A\bigl(F(\mathbf{a})+yz\bigr)
    +
    R(\mathbf{a},y),
\end{equation}
where the rounding error satisfies
\begin{equation}\label{eq:rounding error}
    \left\vert R(\mathbf{a},y)\right\vert
    \leq
    \lVert\mathbf{a}\rVert_1+\vert y\vert.
\end{equation}

We now characterize the lengths of vectors in $\Lambda(\mathbf{L})$
and argue that sufficiently short vectors encode
$\mathbf{a}(\mathbf{m})$, i.e.\ given $z$, finding the correct degeneracy vector amounts to finding short vectors in $\Lambda(\mathbf{L})$.
Let $B\geq 2^n$ be an integer bound, whose value will also be fixed
below. We distinguish the following cases.

\begin{enumerate}
\item\label{it:case bounded}\textbf{Bounded components:}
Suppose that 
\begin{equation}
    \lVert\mathbf{a}\rVert_\infty\leq B,
    \qquad
    \vert y\vert\leq B.
\end{equation}
Then
\begin{equation}
    \lVert\mathbf{a}\rVert_1+\vert y\vert
    \leq
    (2K+2)B
    =
    DB.
\end{equation}
We consider two subcases.

\begin{enumerate}
\item\label{it:subcase 1 bounded}
Suppose that
\begin{equation}
    \mathbf{a}+y\mathbf{a}(\mathbf{m})=\mathbf{0}.
\end{equation}
For a nonzero lattice vector, this implies $y\neq0$, since $y=0$
would also imply $\mathbf{a}=\mathbf{0}$.
Whenever this is the case, $\mathbf{a}(\mathbf{m})$ can immediately
be recovered from the components of $\mathbf{v}(\mathbf{a},y)$ as
\begin{equation}
    \mathbf{a}(\mathbf{m})=-\frac{\mathbf{a}}{y}.
\end{equation}

By linearity of $F$,
\begin{equation}
\begin{aligned}
    F(\mathbf{a})+yz
    &=
    F(\mathbf{a})
    +
    yF\bigl(\mathbf{a}(\mathbf{m})\bigr)
    +
    y\left(
        z-F\bigl(\mathbf{a}(\mathbf{m})\bigr)
    \right)
    \\
    &=
    y\left(
        z-F\bigl(\mathbf{a}(\mathbf{m})\bigr)
    \right).
\end{aligned}
\end{equation}
Using
\begin{equation}
    \left\vert
        z-F\bigl(\mathbf{a}(\mathbf{m})\bigr)
    \right\vert
    <
    \delta_n,
\end{equation}
together with \cref{eq:lattice rounding,eq:rounding error}, gives
\begin{equation}
    \left\vert Q(\mathbf{a},y)\right\vert
    \leq
    A\vert y\vert\delta_n+DB
    \leq
    AB\delta_n+DB.
\end{equation}
Consequently,
\begin{equation}
    \lVert\mathbf{v}(\mathbf{a},y)\rVert_2
    \leq
    2DB+AB\delta_n.
\end{equation}

This case contains, in particular, the  choice
\begin{equation}
    \mathbf{a}=\mathbf{a}(\mathbf{m}),
    \qquad
    y=-1.
\end{equation}
Since
\begin{equation}
    \lVert\mathbf{a}(\mathbf{m})\rVert_1=2^n,
\end{equation}
the corresponding lattice vector satisfies the stronger bound
\begin{equation}\label{eq:correct vector bound}
    \left\lVert
        \mathbf{v}\bigl(\mathbf{a}(\mathbf{m}),-1\bigr)
    \right\rVert_2
    \leq
    2^{n+1}+2+A\delta_n.
\end{equation}

\item\label{it:subcase 2 bounded}
Suppose instead that
\begin{equation}
    \mathbf{a}+y\mathbf{a}(\mathbf{m})\neq\mathbf{0}.
\end{equation}
Since
\begin{equation}
    \lVert\mathbf{a}\rVert_\infty\leq B,
    \qquad
    \vert y\vert\leq B,
    \qquad
    \left\lVert\mathbf{a}(\mathbf{m})\right\rVert_\infty
    \leq 2^n
    \leq B,
\end{equation}
the components of
$\mathbf{a}+y\mathbf{a}(\mathbf{m})$ have magnitude at most
$B^2+B$. The transcendence estimate used in
\cref{eq:polynomial bound} therefore gives
\begin{equation}
    \left\vert
        F(\mathbf{a})
        +
        yF\bigl(\mathbf{a}(\mathbf{m})\bigr)
    \right\vert
    \geq
    \epsilon\left(K,B^2+B\right).
\end{equation}
Moreover,
\begin{equation}
    \left\vert
        y\left(
            z-F\bigl(\mathbf{a}(\mathbf{m})\bigr)
        \right)
    \right\vert
    \leq
    B\delta_n.
\end{equation}
It follows that
\begin{equation}
    \left\vert F(\mathbf{a})+yz\right\vert
    \geq
    \epsilon\left(K,B^2+B\right)-B\delta_n.
\end{equation}
Using \cref{eq:lattice rounding,eq:rounding error}, we obtain
\begin{equation}\label{eq:incorrect vector bound}
\begin{aligned}
    \lVert\mathbf{v}(\mathbf{a},y)\rVert_2
    &\geq
    \left\vert Q(\mathbf{a},y)\right\vert
    \\
    &\geq
    A\left(
        \epsilon\left(K,B^2+B\right)-B\delta_n
    \right)
    -
    DB.
\end{aligned}
\end{equation}
\end{enumerate}

\item\label{it:case unbounded}\textbf{Unbounded components:}
Finally, suppose that either
\begin{equation}
    \lVert\mathbf{a}\rVert_\infty>B
\end{equation}
or
\begin{equation}
    \vert y\vert>B.
\end{equation}
Then, directly from \cref{eq:lattice vector},
\begin{equation}\label{eq:unbounded vector bound}
    \lVert\mathbf{v}(\mathbf{a},y)\rVert_2>B.
\end{equation}
\end{enumerate}

\paragraph{An appropriate parameter choice ensures that LLL yields a correct
vector from case~\ref{it:case bounded},
subcase~\ref{it:subcase 1 bounded}:}

Since $\Lambda(\mathbf{L})$ has rank $D=2K+2$, the LLL algorithm
applied to $\mathbf{L}$ returns a nonzero lattice vector
$\mathbf{v}_{\mathrm{LLL}}$ satisfying
\begin{equation}
    \left\lVert\mathbf{v}_{\mathrm{LLL}}\right\rVert_2
    \leq
    2^{(D-1)/2}\lambda_1\bigl(\Lambda(\mathbf{L})\bigr)
    =
    2^{K+\frac{1}{2}}
    \lambda_1\bigl(\Lambda(\mathbf{L})\bigr),
\end{equation}
where $\lambda_1\bigl(\Lambda(\mathbf{L})\bigr)$ denotes the length of
a shortest nonzero lattice vector
\cite[Theorem~17.2.12]{Ga12}.

The vector
$\mathbf{v}\bigl(\mathbf{a}(\mathbf{m}),-1\bigr)$ belongs to the
lattice and, by the preceding estimates, satisfies
\begin{equation}
    \left\lVert
        \mathbf{v}\bigl(\mathbf{a}(\mathbf{m}),-1\bigr)
    \right\rVert_2
    \leq
    2^{n+1}+2+A\delta_n.
\end{equation}
Consequently,
\begin{equation}\label{eq:LLL guarantee}
    \left\lVert\mathbf{v}_{\mathrm{LLL}}\right\rVert_2
    \leq
    2^{K+\frac{1}{2}}
    \left(2^{n+1}+2+A\delta_n\right).
\end{equation}

We now choose the parameters\footnote{The parameters play different
roles: $\delta_n$ bounds the error of the partition-function
approximation, $A$ ensures that differences between partition-function
values dominate the rounding error in the integer lattice, and $B$ is
a threshold used to separate bounded from unbounded lattice
coefficients.}
as follows. Define
\begin{equation}
    \epsilon_B
    \coloneqq
    \epsilon(K,B^2+B),
\end{equation}
and set
\begin{equation}\label{eq:parameter choice}
\begin{aligned}
    B
    &\coloneqq
    2^{K+n+4},
    \\
    A
    &\coloneqq
    2^{
        1+
        \left\lceil
            \log_2\left(
                \frac{(D+3)B}{\epsilon_B}
            \right)
        \right\rceil
    },
    \\
    \delta_n
    &\coloneqq
    \frac{1}{A}.
\end{aligned}
\end{equation}
In particular,
\begin{equation}\label{eq:A condition}
    A\epsilon_B>(D+3)B
\end{equation}
and $A\delta_n=1$.

For $n\geq1$, we have
\begin{equation}
    2^{n+1}+2+A\delta_n
    =
    2^{n+1}+3
    <
    2^{n+2}.
\end{equation}
It follows from \cref{eq:LLL guarantee} that
\begin{equation}\label{eq:B condition}
\begin{aligned}
    \left\lVert\mathbf{v}_{\mathrm{LLL}}\right\rVert_2
    &<
    2^{K+\frac{1}{2}}2^{n+2}
    \\
    &=
    2^{K+n+\frac{5}{2}}
    <
    2^{K+n+4}
    =
    B.
\end{aligned}
\end{equation}
Therefore, case~\ref{it:case unbounded} cannot contain
$\mathbf{v}_{\mathrm{LLL}}$.

Next, every vector in case~\ref{it:case bounded},
subcase~\ref{it:subcase 2 bounded}, satisfies
\begin{equation}
\begin{aligned}
    \left\lVert\mathbf{v}(\mathbf{a},y)\right\rVert_2
    &\geq
    A\left(\epsilon_B-B\delta_n\right)-DB
    \\
    &=
    A\epsilon_B-B-DB
    \\
    &>
    (D+3)B-(D+1)B
    \\
    &=
    2B.
\end{aligned}
\end{equation}
By \cref{eq:B condition}, such a vector also cannot be
$\mathbf{v}_{\mathrm{LLL}}$.

Hence $\mathbf{v}_{\mathrm{LLL}}$ belongs to
case~\ref{it:case bounded},
subcase~\ref{it:subcase 1 bounded}, and therefore satisfies
\begin{equation}
    \mathbf{a}
    +
    y\mathbf{a}(\mathbf{m})
    =
    \mathbf{0}.
\end{equation}
Since $\mathbf{v}_{\mathrm{LLL}}$ is nonzero, this implies $y\neq0$,
and the desired degeneracy vector can be recovered as
\begin{equation}
    \mathbf{a}(\mathbf{m})
    =
    -\frac{\mathbf{a}}{y}.
\end{equation}

\paragraph{The entire algorithm, including LLL, is polynomial-time
computable.}

We now describe the complete algorithm for recovering
$Z_n(\mathbf{m})$ from a sufficiently accurate approximation.

Given $\mathbf{m}$, we first compute $K$, $D$, and the parameters
$A$, $B$, and $\delta_n$ from \cref{eq:parameter choice}. We then
query the approximation oracle to obtain a rational number $z$
satisfying
\begin{equation}
    \left\vert
        Z_n(\mathbf{m})-z
    \right\vert
    <
    \delta_n.
\end{equation}
Using $A$ and $z$, we construct the integer lattice basis
$\mathbf{L}$ from \cref{eq:lattice basis} and apply the LLL algorithm.
By the preceding argument, the resulting vector has the form
\begin{equation}
    \mathbf{v}_{\mathrm{LLL}}
    =
    \mathbf{v}(\mathbf{a},y),
\end{equation}
where
\begin{equation}
    \mathbf{a}
    +
    y\mathbf{a}(\mathbf{m})
    =
    \mathbf{0}
\end{equation}
and $y\neq0$. We can therefore recover the degeneracy vector exactly
as
\begin{equation}
    \mathbf{a}(\mathbf{m})
    =
    -\frac{\mathbf{a}}{y}.
\end{equation}
This determines the partition function exactly through
\begin{equation}
    Z_n(\mathbf{m})
    =
    F\bigl(\mathbf{a}(\mathbf{m})\bigr)
    =
    e^{-K}
    \sum_{l=0}^{2K}
        a_l(\mathbf{m})e^l.
\end{equation}

In order to prove that the entire algorithm and in particular LLL runs in polynomial time, 
it remains to verify that parameters that specify $\mathbf{L}$ have polynomial bit complexity. First, $K=\mathcal{O}(n^2)$ and
\begin{equation}
    \log_2 B
    =
    K+n+4
    =
    \mathcal{O}(n^2).
\end{equation}
The transcendence estimate gives
\begin{equation}
    -\log\epsilon_B
    =
    \mathcal{O}\left(
        K^2\bigl(K+\log B\bigr)
    \right)
    =
    \mathcal{O}(n^6).
\end{equation}
Consequently, the parameter choice in
\cref{eq:parameter choice} satisfies
\begin{equation}
    \log_2 A
    =
    \mathcal{O}(n^6),
    \qquad
    \log_2\left(\frac{1}{\delta_n}\right)
    =
    \mathcal{O}(n^6).
\end{equation}
Thus $A$, $B$, and $\delta_n$ all have descriptions of polynomial
length. Here we use that the transcendence bound is effective, so a
sufficiently large integer value of $A$ can be computed from the
explicit bound.

The integers $q_l$ can be computed by approximating
$Ae^{-K+l}$ to absolute error at most $1/4$ and rounding the result
to the nearest integer. This produces
\begin{equation}
    \left\vert
        q_l-Ae^{-K+l}
    \right\vert
    <
    1.
\end{equation}
Since $z$ is rational, $q_z$ can be obtained by exact rational
arithmetic and rounding $Az$ to the nearest integer.

Moreover,
\begin{equation}
    \left\vert q_l\right\vert
    \leq
    Ae^K+1.
\end{equation}
Since
\begin{equation}
    Z_n(\mathbf{m})
    \leq
    2^ne^K
\end{equation}
and
\begin{equation}
    \left\vert z\right\vert
    \leq
    2^ne^K+\delta_n,
\end{equation}
we also have
\begin{equation}
    \left\vert q_z\right\vert
    \leq
    A\left(2^ne^K+\delta_n\right)+1.
\end{equation}
It follows that the largest entry of $\mathbf{L}$ has bit length
\begin{equation}
    \log_2\left(
        1+
        \left\lVert\mathbf{L}\right\rVert_\infty
    \right)
    =
    \mathcal{O}\left(
        \log A+n+K
    \right)
    =
    \mathrm{poly}(n).
\end{equation}

The LLL algorithm applied to an integer basis runs in time polynomial
in the rank, the ambient dimension, and the bit length of the basis
entries \cite[Section~17.5]{Ga12}. In our construction, the rank is
$D=2K+2=\mathcal{O}(n^2)$, the ambient dimension is $D+1$, and all
basis entries have polynomial bit length. Hence the LLL step, and
therefore the entire recovery algorithm, runs in polynomial time.
\end{proof}

\begin{lemma}[Complexity of computing $\mathcal{Z}(\mathbf{m})$]\label{lem:part fun comp}
     Computing partition functions  $\mathcal{Z}(\mathbf{m})$ of $\mathcal{M}_{\mathrm{Ising}}$ is $\SharpPclass$-hard.
\end{lemma}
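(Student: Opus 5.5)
We will reduce a known $\SharpPclass$-complete counting problem to computing $\mathcal{Z}(\mathbf m)$, using an oracle (Turing) reduction, which is the notion of hardness for function problems fixed in \cref{ssec:comp compl basics}. The natural source problem is $\#\textsc{MaxCut}$, or more conveniently counting cuts of each size. This is $\SharpPclass$-hard because counting maximum cuts is $\SharpPclass$-complete under parsimonious-type reductions from $\#\textsc{SAT}$. As in the proof of \cref{lem:Ising gse}, every configuration $\mathbf v$ of $S_G$ has energy
\begin{equation}
H_{S_G}(\mathbf v)=|E|-2|E_{+-}(\mathbf v)|.
\end{equation}
So the degeneracy vector $\mathbf a(\mathbf m)$ introduced in the proof of \cref{lem:part fun approx hard} records exactly how many cuts of each size $G$ has, with each cut counted twice (once per spin-flip symmetric pair). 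In particular, the number of maximum cuts is half the entry $a_l(\mathbf m)$ with the largest index $l$ for which $a_l(\mathbf m)\neq 0$.

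The reduction therefore proceeds as follows. Given $G\equiv\mathbf m$, we query the oracle for $Z_n(\mathbf m)$. We then recover $\mathbf a(\mathbf m)$ from this value and read off the count. The key point is that the oracle returns a real number, $Z_n(\mathbf m)=e^{-K}\sum_l a_l(\mathbf m)e^l$, and this value determines $\mathbf a(\mathbf m)$ uniquely, by the injectivity of $F$ on integer vectors established via transcendence of $e$ in the proof of \cref{lem:part fun approx hard}.

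To make recovery effective, we follow the paper's convention that irrational quantities are delivered to a requested additive precision. We request $Z_n(\mathbf m)$ to precision $\delta_n=2^{-\mathrm{poly}(n)}$ and then apply \cref{lem:part fun approx hard} directly. Its LLL-based procedure recovers $\mathbf a(\mathbf m)$ in polynomial time. This makes the hardness statement robust: it holds even for the approximate version, which is what \cref{cor:part fun approx mult} later needs.

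An alternative, simpler route would use polynomial interpolation. Oracle access at several inverse temperatures would determine the coefficients of the polynomial $\sum_l a_l x^l$ in $x=e^{-\beta}$. However, the model fixes $\beta=1$, so the transcendence-based recovery is the appropriate tool. Interpolation could only be simulated by graph gadgets that rescale couplings, such as replacing edges by parallel paths, and that is unnecessary here.

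The main obstacle is citing a clean $\SharpPclass$-hardness result for counting maximum cuts, or for the full cut-size distribution, on unweighted graphs. If no direct reference is available, we would give a chain $\#\textsc{3SAT}\to\#\textsc{MaxCut}$ through a parsimonious reduction via $\#\textsc{Independent Set}$. Alternatively, we would cite the known $\SharpPclass$-hardness of the antiferromagnetic Ising partition function at fixed coupling \cite{Je93,GoJePa03}, which already covers our homogeneous zero-field model at $\beta=1$.
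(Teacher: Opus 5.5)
Your proposal is correct and follows essentially the same route as the paper: reduce from \textsc{\#MaxCut} using $H_{S_G}(\mathbf v)=|E|-2|E_{+-}(\mathbf v)|$ and the two-to-one correspondence between configurations and cuts, then recover the degeneracy vector $\mathbf a(\mathbf m)$ from $Z_n(\mathbf m)$ via the transcendence-plus-LLL procedure of \cref{lem:part fun approx hard}. One caution: your fallback of citing \cite{Je93,GoJePa03} is weaker than you suggest, because those results typically concern more general parameter regimes rather than this fixed transcendental weight at $\beta=1$, and that is exactly why the transcendence-based recovery is needed.
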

\begin{proof}
We remark that this result is also well established in the  literature (see e.g.\ \cite{Je93,Is00}), however often for more general parameter regimes, e.g.\ allowing for an additional coupling $J\neq1$ or variable temperature $\beta\neq1$. We hence provide an explicit proof. 

We will prove that an oracle for partition functions $\mathcal{Z}(\mathbf{m})$ would yield a polytime algorithm for the counting version of $\textsc{MaxCut}$, $\textsc{\#MaxCut}$, which is $\SharpPclass$-complete \cite{Go08}. 

Given a graph $G$ and an integer $k$,
$\textsc{\#MaxCut}$ asks for the number of graph cuts of $G$ of size $k$. Recall from \cref{lem:Ising gse}, that 
\begin{equation}
    H_{S_G}(\mathbf{v}) = \vert E \vert -2 \vert E_{+-}(\mathbf{v}) \vert 
\end{equation}
where $\vert E_{+-}(\mathbf{v}) \vert$ is the size of the graph cut corresponding to configuration $\mathbf{v}$. Note that each graph cut corresponds to precisely two configurations $\mathbf{v}$ and the flipped configuration $\bar{\mathbf{v}}$ that differs from $\mathbf{v}$ on every spin. This implies that the number of graph cuts of a certain size can be computed from the number of configurations with a certain energy, i.e.\ the degeneracy of the respective energy level.

Finally, following the proof of \cref{lem:part fun approx hard}, given the partition function value $\mathcal{Z}(\mathbf{m})$, the full degeneracy vector of $S_{\mathbf{m}}$, $\mathbf{a}(\mathbf{m})$ can be computed in polynomial time. Thus, given $\mathcal{Z}(\mathbf{m})$ also the number of graph cuts of the respective graph $G$ can be computed in polynomial time.
\end{proof}

\begin{lemma}[Complexity of approximating $\mathcal{C}(\mathbf{m}_1, \mathbf{m}_2)$]\label{lem:part fun ratios comp}
    Computing partition function ratios of $\mathcal{M}_{\mathrm{Ising}}$, $\mathcal{C}(\mathbf{m}_1, \mathbf{m}_2)$ or approximating them to sufficient precision is $\SharpPclass$-hard.
\end{lemma}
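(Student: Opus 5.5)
The plan is to reduce computing $\mathcal{Z}$ exactly (which is $\SharpPclass$-hard by \cref{lem:part fun comp}) to computing, or approximating, ratios $\mathcal{C}(\mathbf m_1,\mathbf m_2)$ between graph encodings of the same length. The natural reference point is the edgeless graph $\mathbf m_0=-\mathbf 1_K$, $K=\binom n2$, for which $Z_n(\mathbf m_0)=2^n$ is known exactly. Hence
\begin{equation}
    Z_n(\mathbf m)=2^n\,\mathcal{C}(\mathbf m,\mathbf m_0),
\end{equation}
and a single oracle call on $(\mathbf m,\mathbf m_0)$ followed by multiplication by $2^n$ yields $Z_n(\mathbf m)$. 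This is a polynomial-time oracle reduction in the sense of \cref{def:oracle}, which gives the claim for exact computation.

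For the approximation part I will invoke \cref{lem:part fun approx hard}. It provides an efficiently computable accuracy $\delta_n=2^{-\mathrm{poly}(n)}$ such that any rational $z$ with $|z-Z_n(\mathbf m)|\le\delta_n$ allows polynomial-time recovery of the degeneracy vector $\mathbf a(\mathbf m)$, and hence of $Z_n(\mathbf m)$ in exact symbolic form. I therefore choose the ratio precision to be additive $\delta_n/2^n$, or equivalently relative precision $\rho_n$ with $\rho_n\,2^ne^K\le\delta_n$. The second form uses $Z_n(\mathbf m)\le 2^ne^K$, since the Hamiltonian takes values in $[-K,K]$. Both choices have $\log(1/\rho_n)=\mathrm{poly}(n)$.

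Given a rational approximation $c$ of $\mathcal{C}(\mathbf m,\mathbf m_0)$ to this precision, I set $z=2^nc$. Then $|z-Z_n(\mathbf m)|\le\delta_n$, and the LLL-based recovery algorithm of \cref{lem:part fun approx hard} returns $Z_n(\mathbf m)$ exactly. Composing this with the \textsc{\#MaxCut} reduction in the proof of \cref{lem:part fun comp} gives $\SharpPclass$-hardness. I will state the relative-precision version explicitly, since it is the form later used as \cref{cor:part fun approx mult}.

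The only delicate point is bookkeeping of the error. I must make sure that the multiplicative factor $2^n$ and the conversion between relative and additive error do not push the required bit precision beyond a polynomial. Both only add $\mathcal O(n+K)$ bits to $\log(1/\delta_n)=\mathcal O(n^6)$, so I do not expect a genuine obstacle here. I also need the approximation oracle's output to be rational with polynomial bit length, which matches the conventions of \cref{ssec:comp compl assumptions}.
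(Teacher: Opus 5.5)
Your proposal is correct and follows the paper's proof essentially verbatim. You calibrate against the edgeless graph, where $Z_n(\mathbf m_0)=2^n$, and in the approximate case you use additive precision $2^{-n}\delta_n$ so that \cref{lem:part fun approx hard} applies. Your extra relative-error bookkeeping via $Z_n(\mathbf m)\le 2^n e^K$ is exactly the argument the paper gives separately in \cref{cor:part fun approx mult}.
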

\begin{proof}
    We prove that both would allow one to compute partition functions of $\mathcal{M}_{\mathrm{Ising}}$, which by \cref{lem:part fun comp} is $\SharpPclass$-hard. 
    
    To this end note that for system size $n$, $K=\binom{n}{2}$, 
    \begin{equation}
        \mathcal{Z}(-\mathbf{1}_K)=2^n,
    \end{equation}
    since $S_{-\mathbf{1}_K}$ is the Ising system of $n$ spins and no edges so $H_{S_{-\mathbf{1}_K}}$ is identically zero.
        
    Thus, $\mathcal{Z}(\mathbf{m})$ can be computed as 
    \begin{equation}
        \mathcal{Z}(\mathbf{m}) = 2^n \mathcal{C}(\mathbf{m}, -\mathbf{1}_K).
    \end{equation}

    Similarly, an approximation $z\in \mathbb{R}$ of $\mathcal{C}(\mathbf{m}, -\mathbf{1}_K)$ of additive errors at most $2^{-n}\delta_n$ (where $\delta_n$ is defined in \cref{lem:part fun approx hard}) satisfies 
    \begin{equation}
        \vert2^nz - \mathcal{Z}(\mathbf{m}) \vert \leq \delta_n, 
    \end{equation}
    so by \cref{lem:part fun approx hard}, it allows one to compute the exact value $\mathcal{Z}(\mathbf{m})$ in polynomial time.
\end{proof}

\begin{corollary}[Complexity of approximating $\mathcal{Z}$ and $\mathcal{C}$ to relative errors]\label{cor:part fun approx mult}
Approximating $\mathcal Z$ and $\mathcal C$ to sufficiently
small relative error $2^{-\mathrm{poly}(n)}$ is
$\SharpPclass$-hard.
\end{corollary}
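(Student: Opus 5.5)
The plan is to reduce relative-error approximation to the additive-error statements already proven in \cref{lem:part fun approx hard} and \cref{lem:part fun ratios comp}. The key observation is that $Z_n(\mathbf m)$ is bounded above by $2^n e^{K}$, with $K=\binom n2$. Every configuration has energy in $[-K,K]$, so $Z_n(\mathbf m)\leq 2^n e^{K}$. Hence a relative error $\rho$ translates into an additive error of at most $\rho\, 2^n e^K$. I would therefore choose
\begin{equation}
    \rho_n \coloneqq \delta_n\, 2^{-n} e^{-K}/2 ,
\end{equation}
where $\delta_n=2^{-\mathrm{poly}(n)}$ is the accuracy from \cref{lem:part fun approx hard}. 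Since $\log(1/\delta_n)=\mathcal O(n^6)$ and $K=\mathcal O(n^2)$, we get $\log(1/\rho_n)=\mathrm{poly}(n)$, which is the form $2^{-\mathrm{poly}(n)}$ required by the corollary. I would also check that $\rho_n$ (or a rational lower bound for it) is efficiently computable, so that the reduction knows which precision to request.

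For $\mathcal Z$, suppose an oracle returns $z$ with $|z/Z_n(\mathbf m)-1|\leq\rho_n$. Then $|z-Z_n(\mathbf m)|\leq \rho_n 2^n e^K<\delta_n$. If the oracle output is not rational, one first rounds it to a rational number, with an extra additive error absorbed by the factor $1/2$ in $\rho_n$. \cref{lem:part fun approx hard} then recovers the degeneracy vector $\mathbf a(\mathbf m)$ in polynomial time. By the argument of \cref{lem:part fun comp}, this yields the exact \textsc{\#MaxCut} count, so the task is $\SharpPclass$-hard under oracle reductions.

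For $\mathcal C$, I would query the ratio $\mathcal C(\mathbf m,-\mathbf 1_K)$ to relative error $\rho_n$ and multiply by $2^n$, using $\mathcal Z(-\mathbf 1_K)=2^n$. This gives a relative-error $\rho_n$ approximation of $Z_n(\mathbf m)$, and the $\mathcal Z$ case applies. Equivalently, the additive error is at most $\rho_n\,\mathcal C\leq \rho_n e^K$, which is below $2^{-n}\delta_n$ as required in \cref{lem:part fun ratios comp}.

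The main obstacle is only bookkeeping. One must make sure the upper bound on $Z_n$ (and on $\mathcal C$) is uniform in $\mathbf m$, so that a single $\rho_n$ works for all inputs. One must also check that the numerical rounding of the oracle answer stays within the slack. Both follow from $|H|\leq K$.
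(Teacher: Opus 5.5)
Your proposal is correct and matches the paper's proof. Both convert relative error into additive error via the uniform bound $Z_n(\mathbf m)\leq 2^n e^K$ and then invoke \cref{lem:part fun approx hard}, and both handle $\mathcal C$ by multiplying $\mathcal C(\mathbf m,\mathbf m_0)$ by $2^n$, where $\mathbf m_0$ is the edgeless graph with $Z_n(\mathbf m_0)=2^n$. Your extra factor $1/2$ in $\rho_n$ is a harmless refinement: it yields the strict inequality $|z-Z_n(\mathbf m)|<\delta_n$ that the lemma's algorithm actually uses, and leaves slack for rounding.
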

\begin{proof}
Let $K=\binom n2$ and let $\delta_n$ be the additive accuracy
required by \cref{lem:part fun approx hard}.
Since $Z_n(\mathbf m)\leq 2^ne^K$, relative error at most
$\delta_n/(2^ne^K)$ implies additive error at most $\delta_n$.
The required number of precision bits is polynomial in $n$.

For partition-function ratios, let $\mathbf m_0$ encode
the edgeless graph. Since $Z_n(\mathbf m_0)=2^n$,
multiplying an approximation to $C_n(\mathbf m,\mathbf m_0)$
by $2^n$ gives an approximation to $Z_n(\mathbf m)$ with
the same relative error.
\end{proof}

\subsection{Pseudo-Boolean functions}\label{ssec:bool basics}
The local energy functions and Hamiltonians of spin systems are functions of type $f\colon \mathbf{2}^n \to \mathbb{R}$.
Such functions are called \emph{pseudo-Boolean} functions \cite{Od14}. They form a $2^n$-dimensional real vector space. In \cref{def:spin system} we consider all local energy functions  expanded in the basis of \emph{indicator functions} 
\begin{equation}
    \delta_{\mathbf{s}}(\mathbf{v})= \begin{cases}
        1 \ &\text{if} \ \mathbf{v}=\mathbf{s} \\ 
        0 \ &\text{else}.
    \end{cases}
\end{equation}
The respective coefficients  precisely correspond to the $2^n$ values of $f$ (cf.\ \cref{def:spin system}).

Pseudo-Boolean functions can also be expanded in the basis of \emph{parity functions} 
\begin{equation}
    \pi_S(\mathbf{v}) = \prod_{i \in S} v_i,
\end{equation}
for $S \subseteq [n]=\{1, \ldots, n\}$. This basis is also called \emph{Fourier basis}.
The resulting coefficients are  denoted $\widehat{f}(S)$ and the corresponding decomposition 
\begin{equation}
    f(\mathbf{v}) = \sum_{S}\widehat{f}(S)\pi_S(\mathbf{v})
\end{equation}
is called the \emph{Fourier decomposition} of $f$. 
The cardinality of the largest subset $S$ with non-zero Fourier coefficient $\widehat{f}(S)$ is called the \emph{Fourier degree} of $f$. 
Note that the Fourier decomposition of $f$ is a multilinear polynomial in $v_i$.

The parity functions $\pi_S$ provide an orthonormal basis w.r.t.\ the inner product 
\begin{equation}
    \langle \pi_S, \pi_{S'} \rangle \coloneqq \frac{1}{2^n}\sum_{\mathbf{v}}\pi_S(\mathbf{v})\pi_{S'}(\mathbf{v}) = \delta_{S,S'}.
\end{equation}
This implies that 
\begin{equation}\label{eq:fourier coeff}
    \widehat{f}(S) =  \langle \pi_S, f \rangle.
\end{equation}
Further, the Fourier basis also satisfies
\begin{equation}
\frac{1}{2^n}\sum_S\pi_S(\mathbf{v})\pi_S(\mathbf{w})=\delta_{\mathbf{v},\mathbf{w}}.
\end{equation}

There exist several sub-classes of pseudo-Boolean functions. For this work, the following are the most important.
\begin{definition}[useful pseudo-Boolean functions]
    Let $f\colon \mathbf{2}^n \to \mathbb{R}$ be a pseudo-Boolean function. We call $f$ 
    \begin{enumerate}
        \item a \emph{distance function} if $f(\mathbf{v})=g(\mathrm{d}_H(\mathbf{v},\mathbf{s}))$, where $\mathbf{s}\in \mathbf{2}^n$ is some fixed configuration, $g$ is some fixed, real-valued function and $\mathrm{d}_H(\mathbf{v},\mathbf{s})$ is the Hamming distance between $\mathbf{v}$ and $\mathbf{s}$, that is, the number of components $j$ with $v_j\neq s_j$.
        \item a \emph{ridge function} if $f(\mathbf{v})=g(\mathbf{A}\mathbf{v})$, where $\mathbf{A}$ is a fixed vector with real components and $g$ is a fixed, real-valued function. We call a ridge function small if its weight vector $\mathbf{A}$ has small components and $g(\mathbf{A}\mathbf{v})$ has small values.  
        \item a \emph{linear threshold function} (LTF) if $f(\mathbf{v})=\sgn(L(\mathbf{v}))$, and $L$ is (at most)  linear, i.e.\ $L(\mathbf{v})= \mathbf{W}\mathbf{v}+b$ for a real vector $\mathbf{W}$ and real scalar $b$.
        \item a \emph{polynomial threshold function} (PTF) if $f(\mathbf{v})=\sgn(P(\mathbf{v}))$ and $P(\mathbf{v})$ is a polynomial in the components of $\mathbf{v}$.
    \end{enumerate}
\end{definition}

Note that we might equivalently express pseudo-Boolean functions such as local energy functions of spin systems as functions of type $\{0,1\}^n \to \mathbb{R}$, by identifying $\mathbf{2}=\{-1,+1\}$ with $\{0,1\}$. 
Also functions of type $\{0,1\}^n \to \mathbb{R}$ can be expressed as multilinear polynomials and the degree of this polynomial representation equals the Fourier degree of the transformed function, of type $\mathbf{2}^n \to \mathbb{R}$, given any identification of $\mathbf{2}$ with $\{0,1\}$ \cite{Od14}.
This implies that we can equivalently define pseudo-Boolean functions and hence also spin system (cf. \cref{def:spin system}) in terms of inputs, i.e.\ states from $\{0,1\}$, and notions such as Fourier degree or interaction order of the spin system remain unchanged (see e.g.\ \cref{lem:rbm ridge}).
    
Finally, the separation results on shallow models of \cref{sec:RBM separation} are witnessed by the inner product modulo $2$ function and its $k$-party generalizations. These functions are best defined  for states from $\{0,1\}$. 
\begin{definition}[inner product modulo 2]\label{def:inner prod mod 2}
    Let $k\geq 2$ and $\mathbf{v}=(\mathbf{v}^{(1)}, \ldots, \mathbf{v}^{(k)})$ where each $\mathbf{v}^{(l)}$ is an $n$-component vector with entries in $\{0,1\}$. The $k$-party inner product modulo $2$, $\mathrm{IP}_k$ is defined by 
    \begin{equation}
    \mathrm{IP}_k(\mathbf{v}^{(1)}, \ldots, \mathbf{v}^{(k)}) = (-1)^{\sum_{i=1}^n \prod_{l=1}^k v^{(l)}_i \mod 2}.
        \end{equation}
\end{definition}

\subsection{Communication complexity theory}\label{ssec:comm compl}
While computational complexity theory (see \cref{sec:comp compl}) studies the computational cost to compute a function $f$, \emph{communication complexity} theory studies the communication cost of computing $f$, that is, the minimum amount of communication that is required to compute $f$, assuming the input is split between parties \cite{Ra20, Ab09}.

In the $2$-party scenario the setting is as follows:
Fix some function $f\colon \mathbf{2}^{2n} \to \mathbf{2}$ and split the input $\mathbf{v}=(\mathbf{v}^{(1)}, \mathbf{v}^{(2)})$ between two parties or players, that is, player $1$ only has access to input $\mathbf{v}^{(1)}$ and player $2$ only has access to input $\mathbf{v}^{(2)}$. The players agree on some communication \emph{protocol} $\Pi$, that is, a strategy
that on any given input $\mathbf{v}$, after a finite number of rounds produces some output $\Pi(\mathbf{v}) \in \mathbf{2}$. Their goal is to compute $f(\mathbf{v})$, i.e.\ come up with a protocol with $\Pi(\mathbf{v})=f(\mathbf{v})$ for all inputs $\mathbf{v}$.
At each round of the protocol, each player might perform any sequence of the following two actions:
\begin{enumerate}
    \item communicate any number of bits to the respective other player,
    \item perform any computation on their half of the input and any previously received bits.
\end{enumerate}
The players proceed in rounds until 
either of them returns the final output $\Pi(\mathbf{v})$. 
Note that the protocol must not depend on the input $\mathbf{v}$, the players must agree on one protocol that they apply to any input.

In communication complexity, the players are assumed to have infinite computational resources. The communication complexity of a protocol is the total number of bits that in the worst case, i.e.\ for the hardest input $\mathbf{v}$, is communicated between players. Any computational cost of the individual computations performed by the players is neglected. 
The communication complexity of a function $f$ then is the complexity of the best protocol for $f$. 
Similar to computational complexity, communication complexity then usually considers the asymptotic behavior ($n \to \infty$) of the communication complexity of protocols or functions,  that is, one usually considers protocols and functions that are defined for inputs of arbitrary length (see e.g.\ \cref{def:inner prod mod 2}).
Note that it requires exactly $n$ bits of communication for player $1$ to communicate their entire half of the input $\mathbf{v}^{(1)}$ with player $2$. Then player $2$ has access to the entire input and therefore, for any function $f$ can compute and return $f(\mathbf{v})$ in the next round. Thus, the communication complexity of any function $f$ is at most $\mathcal{O}(n)$. In this work,  we consider communication protocols to be efficient when they require at most a logarithmic amount of communication, i.e.\ have complexity $\mathcal{O}(\log(n))$. 

The number on the forehead (NOF) $k$-party communication scenario generalizes communication complexity to $k>2$ parties and equals the standard $2$-party setting for $k=2$.
Consider a function $f\colon \mathbf{2}^{kn} \to \mathbf{2}$.
In the $k$-party NOF setting, the input is split between $k$ players, $\mathbf{v}=(\mathbf{v}^{(1)}, \ldots, \mathbf{v}^{(k)})$. This is done such that player $i$ has access to all input parts $\mathbf{v}^{(j)}$ except for $\mathbf{v}^{(i)}$. All of the above definitions for the $2$-party setting extend to the $k$-party setting. Usually, we consider $k$ to be fixed, then a trivial protocol that communicates all input parts to the last player then still requires $\mathcal{O}(n)$ bits of communication.

There are several variants of communication complexity that loosen some of the requirements of the communication protocol. 
The above can be considered the exact setting: In order to successfully compute a function $f$, the protocol is required to return $f(\mathbf{v})$ for all inputs $\mathbf{v}$.
In the so-called $\epsilon$-distributional setting, a protocol $\Pi$ for $f$   only is required to return the correct output $f(\mathbf{v})$ for a fraction of at least $\frac{1+\epsilon}{2}$ of the inputs. We call this fraction of inputs on which the protocol correctly computes $f$ the success rate of the protocol
\begin{equation}
    \mathrm{S}(\Pi) = \frac{1}{2^{kn}}\vert \{\mathbf{v} \in \mathbf{2}^{kn} \mid \Pi(\mathbf{v}) = f(\mathbf{v}) \} \vert. 
\end{equation}
In the $\epsilon$-distributional setting, the communication complexity of $f$, $C_{\epsilon}(f)$, is the minimal communication cost of any protocol that returns $f$ with success rate at least $\frac{1+\epsilon}{2}$. 
Note that random guessing the output for any input leads to a success rate of $1/2$, so protocols with success rate at least $\frac{1+\epsilon}{2}$
have at least an $\epsilon/2$ advantage over random guessing.
The success rate can also be interpreted as success probability when the input is randomly sampled from the uniform distribution on $\mathbf{2}^{kn}$. Similarly, one can also consider the case of drawing inputs from a non-uniform distribution.
One may also consider randomized protocols, in which
players have access to a shared source of randomness. Their actions, in each round, might depend on the output of this source of randomness. Then, one can consider the success probability of such a protocol $\Pi$, for each input $\mathbf{v}$, separately. Note that when such a randomized protocol, for any fixed input $\mathbf{v}$ has success probability 
\begin{equation}
    \Pr[\Pi(\mathbf{v}) = \sgn(f(\mathbf{v}))] \geq \frac{1+\epsilon}{2},
\end{equation}
then when $\mathbf{v}$ is sampled uniformly, $\Pi$ also has success rate at least $\frac{1+\epsilon}{2}$.

Finally, it is worth pointing out that for any $k\geq 2$, the $k$-party inner product modulo $2$ (\cref{def:inner prod mod 2}) has maximal $k$-party communication complexity. Specifically, for $\epsilon=\frac{1}{\mathrm{poly}(n)}$ the deterministic, $\epsilon$-distributional $k$-party NOF communication complexity of the $k$-party inner product modulo $2$ is  
\begin{equation}
    C_{\epsilon}(\mathrm{IP}_k) = \Omega(n).
\end{equation}
Similarly, even when considering randomized protocols that merely are required to succeed with probability at least $\frac{1+\epsilon}{2}$ when the inputs are sampled uniformly, the communication complexity of $\mathrm{IP}_k$ is $\Omega(n)$ \cite{Ba92}.

\section{Proofs of \cref{sec:RBM separation}}\label{sec:proofs}
In this section we provide proofs for \cref{lem:sim PTF} (\cref{ssec:proof 1}) and \cref{lem:rbm ridge} (\cref{ssec:proof 2}).
\subsection{Proof of \cref{lem:sim PTF}}\label{ssec:proof 1}
Subsequently, we prove \cref{lem:sim PTF}, that is, we prove that a function (family) $f$ can be efficiently simulated by a shallow, $k$-local system $S$ if and only if it can be represented as a linear combination of polynomially many PTFs of degree at most $k-1$,    \begin{equation}\label{eq:PTF representation main}
        f(\mathbf{v}) = \sum_I w_I \sgn(P_I(\mathbf{v})).
\end{equation}
The precise statement of \cref{lem:sim PTF} additionally includes the following results on the respective weights of the simulation and parameters of the PTF representation of $f$.
If the simulation weights and energy shifts are
polynomially bounded or have polynomial precision,
then the same applies to the PTF representation. The converse direction only applies to PTF representation with small parameters and the same then applies to the respective simulation.

\paragraph{If $S \vpolysim f$  then $f$ can be represented by a linear combination of polynomially many PTFs.}    
    We construct polynomials $P_I$ and coefficients $w_I$ that satisfy
    \begin{equation}\label{eq:PTF representation}
        \min_{\mathbf{h}}H_S(\mathbf{v},\mathbf{h}) = \sum_I w_I \sgn(P_I(\mathbf{v})),
    \end{equation}
    such that 
    \begin{enumerate}
        \item\label{it:PTF 1} the sum runs over at most $\mathrm{poly}(n)$ terms,
        \item\label{it:PTF 2} the degree of all polynomials $P_I$ is at most $k-1$, and 
        \item\label{it:PTF 3} whenever all weights of $S$ and $\Gamma$ are polynomially bounded or of polynomial precision then the same applies for all coefficients $w_I$ of the linear combination and all parameters of the polynomials $P_I$.
    \end{enumerate}
    \cref{eq:PTF representation} yields a PTF representation of the target function $f$, since $S\simul f$, so $\min_{\mathbf{h}}H_S(\mathbf{v},\mathbf{h})-\Gamma = f(\mathbf{v})$ and the constant energy shift $\Gamma$ can trivially be realized as a PTF with constant polynomial.
    
    We construct the PTF representation of $f$ (\cref{eq:PTF representation}) by first expanding $H_S$ as a linear combination of polynomials in $\mathbf{v}$ and $\mathbf{h}$. Since $S$ is shallow, the effective energy $\min_{\mathbf{h}}H_S(\mathbf{v},\mathbf{h})$ can be obtained by minimizing each hidden spin $h_i$ separately, leading to minimizers $h_{i, \min}(\mathbf{v})$. Thus constructing the PTF representation (\cref{eq:PTF representation}) amounts to finding PTF representations for terms of the form $h_{i,\min}\prod_j v_j$.

    We start by expanding $H_S$.
    Since $S$ is shallow, by \cref{lem:shallow polytime},
    \begin{equation}\label{eq:shallow ham decomp}
        H_S(\mathbf{v},\mathbf{h})=H'(\mathbf{v})+\sum_{i=1}^m H_i(\mathbf{v},h_i),
    \end{equation}
    where $H'$ is a sum of all local interactions $J_S(e)$  that involve no hidden spins and  $H_i$ is a sum of all local interactions $J_S(e)$ that involve the hidden spin $h_i$.
    We now expanded $H_S$ in the Fourier basis (see \cref{ssec:bool basics}). Then $H_S$ is a linear combination of monomials that either only involve visible spins $\prod_j v_j$ or precisely involve one hidden spin $h_i \prod_j v_j$, i.e.\ 
    \begin{equation}\label{eq:Ham fourier expansion}
           H_S(\mathbf{v},\mathbf{h})=\sum_{I_0}w_{I_0}\prod_{j \in I_0}v_j + \sum_{i=1}^m \sum_{I_i}w_{I_i}h_i\prod_{j\in I_i}v_j,
    \end{equation}
    where $I_0$ are all subsets of hyperedges of $S$ that only involve visible spins and $I_i$ are all subsets of hyperedges of $S$ that involve hidden spin $i$.

    Note that \cref{eq:Ham fourier expansion} is a sum of at most $\mathrm{poly}(n)$ terms: Each  $j$-local interaction of $S$ is described by at most $2^j$ non-zero Fourier coefficients so each corresponding hyperedge of $S$ contributes at most $2^j$ summands to \cref{eq:Ham fourier expansion} but also contributes exactly $2^j$ to $\vert S \vert$. 
    Thus, the overall number of summands in \cref{eq:Ham fourier expansion} is upper bounded by $\vert S \vert$ and since $S$ efficiently simulates $f$, $\vert S \vert = \mathrm{poly}(n)$.

    Since $S$ is shallow hidden spins $h_i$ can be minimized independently. 
    Define $P_i(\mathbf{v}) = \sum_{I_i}w_{I_i}\prod_{j\in I_i}v_j$ then the second term in \cref{eq:Ham fourier expansion} can be expressed as $\sum_{i=1}^m h_i P_i(\mathbf{v})$.
    This leads to
    \begin{equation}\label{eq:Ham fourier expansion min}
    \begin{split}
            \min_{\mathbf{h}}H_S(\mathbf{v},\mathbf{h})&=H'(\mathbf{v})+\sum_{i=1}^m h_{i,\min}(\mathbf{v})P_i(\mathbf{v}) \\
            &= \sum_{I_0}w_{I_0}\prod_{j \in I_0}v_j + \sum_{i=1}^m \sum_{I_i}w_{I_i}h_{i,\min}(\mathbf{v})\prod_{j\in I_i}v_j,
    \end{split}
    \end{equation}
    where 
    \begin{equation}
        h_{i,\mathrm{min}}(\mathbf{v}) = -\sgn(P_i(\mathbf{v})).
    \end{equation}
    This is a linear combination of two types of terms, monomials in $\mathbf{v}$, i.e.\ $\prod_j v_j$ and products involving $h_{i,\min}(\mathbf{v})$, i.e.\ $h_{i,\min}(\mathbf{v})\prod_j v_j$. 
    In the following, we prove that terms of both types can be expressed as linear combinations of PTFs.

    First, we construct a PTF representation for terms of the first type, i.e.\ $\prod_j v_j$.
    Note such terms can trivially be expressed as PTF, since $\prod_j v_j = \sgn(\prod_j v_j)$.
    However, for products of $k$ terms this leads to degree $k$ polynomials, i.e.\ violates condition \ref{it:PTF 2}. 
    We now prove that products of $k$ terms can be expressed as linear combination of degree $k-1$ PTFs.
    For $a,b \in \mathbf{2}$ define the four indicator functions
    \begin{equation}
        f_{a,b}(v,w) = \begin{cases}
            +1 \ &\text{if} \ (v,w)=(a,b) \\ 
            -1 \ &\text{else}.
        \end{cases}
    \end{equation}
    These four functions provide a basis of function space  $\mathbb{R}^{\mathbf{2}^2}$.
    Let $I_0$ be the index set of a $k$-body interaction and consider $\prod_{j\in I_0} v_j$. We can express this in terms of $f_{a,b}$, as 
    \begin{equation}
        \prod_{j\in I_0} v_j = v_{j_0} \cdot \prod_{j \in I_0, j\neq j_0}v_j = \sum_{a,b} \frac{ab}{2} f_{a,b}(v_{j_0}, \prod_{j \in I_0, j\neq j_0}v_j).
    \end{equation}

    Constructing a PTF representation for $\prod_{j\in I_0} v_j$ amounts to constructing degree $k-1$ polynomials $P_{a,b,I_0}(\mathbf{v})$ such that 
    \begin{equation}
        \sgn(P_{a,b,I_0}(\mathbf{v})) = f_{a,b}(v_{j_0}, \prod_{j \in I_0, j\neq j_0}v_j),
    \end{equation}
    i.e.\ 
    \begin{equation}
        P_{a,b,I_0}(\mathbf{v}) > 0 \quad \Leftrightarrow \quad v_{j_0}=a \ \ \text{and} \ \  \prod_{j \in I_0, j\neq j_0}v_j=b.
    \end{equation}
    This is satisfied by 
    \begin{equation}
        P_{a,b,I_0}(\mathbf{v}) = av_{j_0} + b\left ( \prod_{j \in I_0, j\neq j_0}v_j\right) - 1.
    \end{equation}

    Second, we construct the PTF representation for terms of the second type, i.e.\ $h_{i,\min}(\mathbf{v})\prod_j v_j$. Again, we expand $h_{i,\mathrm{min}}(\mathbf{v})\prod_j v_j$ as 
    \begin{equation}
       h_{i,\mathrm{min}}(\mathbf{v})\cdot  \prod_{j\in I_i} v_j = \sum_{a,b}\frac{ab}{2}f_{a,b}(h_{i,\mathrm{min}}(\mathbf{v}),\prod_{j\in I_i} v_j).
    \end{equation}
    Therefore, constructing a PTF representation for terms of the second type amounts to 
    constructing degree $k-1$ polynomials $P_{a,b,i,I_i}(\mathbf{v})$ such that 
    \begin{equation}
        \sgn(P_{a,b,i,I_i}(\mathbf{v})) = f_{a,b}(h_{i,\mathrm{min}}(\mathbf{v}),\prod_{j \in I_i} v_j).
    \end{equation}
    The polynomials thus need to satisfy 
    \begin{equation}\label{eq: P target}
        P_{a,b,i,I_i}(\mathbf{v}) > 0 \quad \Leftrightarrow \quad h_{i,\mathrm{min}}(\mathbf{v})=a \ \ \text{and} \ \ \prod_{j\in I_i} v_j=b.
    \end{equation}
    We construct $P_{a,b,i,I_i}(\mathbf{v})$ as follows:
    Choose $0<\eta\leq1$ smaller than every nonzero value
$|P_i(\mathbf v)|$, and set $\widetilde P_i=P_i+\eta$.
Under the polynomial-precision assumption, take
$\eta=1/(2q)$, where $q$ is a polynomially bounded common
denominator of all coefficients $w_{I_0}$ and $w_{I_i}$
in \cref{eq:Ham fourier expansion}, and of the energy
shift $\Gamma$.
    Then,
$\widetilde P_i$ never vanishes on Boolean inputs and
$\sgn(\widetilde P_i)=\sgn(P_i)$.

    Let $Q_i> \frac{1}{2}\max(\vert \widetilde P_i(\mathbf{v})\vert )$, define 
    \begin{equation}
        P_{a,b,i,I_i}(\mathbf{v})=-a \widetilde P_i(\mathbf{v}) - Q_i(1-b\prod_{j\in I_i} v_j).
    \end{equation}
    This is a degree $k-1$ polynomial. It is straightforward to see that $P_{a,b,i,I_i}(\mathbf{v})>0$ if and only if $-a \widetilde P_i(\mathbf{v})>0$ and $b\prod_{j\in I_i} v_j=1$: 
    the second term contributes either $-2Q_i$ or $0$ and since $Q_i> \frac{1}{2}\max(\vert \widetilde P_i(\mathbf{v})\vert)$ any positive contribution of the first term cannot outweigh the negative contribution of $-2Q_i$.
    Now $-a \widetilde P_i(\mathbf{v})>0$ if and only if $a =  -\sgn(\widetilde P_i(\mathbf{v}))=  -\sgn(P_i(\mathbf{v}))= h_{i,\mathrm{min}}(\mathbf{v})$ and $b\prod_{j\in I_i} v_j=1$ if and only if $b=\prod_{j\in I_i} v_j$. Thus $P_{a,b,i,I_i}(\mathbf{v})$ satisfies \cref{eq: P target}.

    The final PTF representation of $f$ thus is
    \begin{equation}\label{eq:final PTF}
    \begin{aligned}
        f(\mathbf{v})
        ={}&
        \sum_{I_0,\, \vert I_0\vert \leq k-1}
        w_{I_0}\sgn\!\left(\prod_{j\in I_0}v_j\right)
        \\
        &+
        \sum_{I_0,\, \vert I_0\vert =k}
        w_{I_0}\sum_{a,b}\frac{ab}{2}
        \sgn\!\left(P_{a,b,I_0}(\mathbf{v})\right)
        \\
        &+
        \sum_{i=1}^m\sum_{I_i}
        w_{I_i}\sum_{a,b}\frac{ab}{2}
        \sgn\!\left(P_{a,b,i,I_i}(\mathbf{v})\right)
        -\Gamma.
    \end{aligned}
    \end{equation}
    
    Finally, we argue that the construction satisfies the three conditions \ref{it:PTF 1}, \ref{it:PTF 2} and \ref{it:PTF 3}.
    We have argued above that the number of summands in \cref{eq:Ham fourier expansion min} is at most $\mathrm{poly}(n)$ so clearly the same holds true for \cref{eq:final PTF}, i.e.\ \cref{eq:final PTF} satisfies condition \ref{it:PTF 1}. 
    Further, all polynomials involved in \cref{eq:final PTF} are of degree at most $k-1$, by construction, so \cref{eq:final PTF} satisfies condition \ref{it:PTF 2}.
    
    To finish the proof, we show that \cref{eq:final PTF} also satisfies condition \ref{it:PTF 3}. Recall that the parameters $w_{I_0}$ and $w_{I_i}$ are Fourier coefficients of local interactions of $S$.
    Write $\widetilde P_i(\mathbf v)
=\sum_J c_{i,J}\prod_{j\in J}v_j$ including $J=\varnothing$ for the constant term, with
$\prod_{j\in\varnothing}v_j=1$, and set
\begin{equation}
    Q_i=1+\sum_J|c_{i,J}|.
\end{equation}
Then $Q_i>\frac12\max_{\mathbf v}|\widetilde P_i(\mathbf v)|$.
If the source weights and energy shift are polynomially
bounded, so are all resulting coefficients, since each
sum contains polynomially many terms and $\eta\leq1$.
If they have polynomial precision, all resulting
coefficients lie in $(2q)^{-1}\mathbb Z$.
Thus the construction preserves polynomially bounded
magnitude and polynomial precision separately.

\paragraph{If $f$ has a PTF representation with small parameters then there exists a shallow, $k$-local system $S$ such that $S \vpolysim f$ with small weights and energy shift.}
We prove the converse using \cref{lem:rbm ridge}.

Let
\begin{equation}
    \mathcal I
    =
    \{J\subseteq[n]:|J|\leq k-1\},
    \qquad
    u_J(\mathbf v)=\prod_{j\in J}v_j,
\end{equation}
with $u_\varnothing=1$.
Since $k$ is fixed, $|\mathcal I|=\mathrm{poly}(n)$.
Write
\begin{equation}
    P_i(\mathbf v)
    =
    \sum_{J\in\mathcal I}a_{i,J}u_J(\mathbf v).
\end{equation}
Treating the coordinates $u_J$ as independent spins,
define
\begin{equation}
    F_i(\mathbf u)
    =
    w_i\sgn
    \left(\sum_{J\in\mathcal I}a_{i,J}u_J\right).
\end{equation}
This is a small ridge function: its ridge vector
has small components, and its values belong to
$\{-w_i,w_i\}$.

Fix a polynomially bounded cut-off $\Delta>0$.
By \cref{lem:rbm ridge} and its construction,
there is a polynomial-size RBM $R_i$ satisfying
\begin{equation}
    \min_{\mathbf h_i}
    H_{R_i}(\mathbf u,\mathbf h_i)
    =
    F_i(\mathbf u)+\Gamma_i,
\end{equation}
with small weights and shift, degeneracy one,
and cut-off at least $\Delta$.

Now substitute $u_J=u_J(\mathbf v)$ in this
Hamiltonian and denote the resulting system by $S_i$.
An RBM interaction $h_{i,a}u_J$ becomes
\begin{equation}
    h_{i,a}\prod_{j\in J}v_j,
\end{equation}
which involves at most $k$ spins and only one hidden
spin. Visible biases become interactions of degree
at most $k-1$.
Thus $S_i$ is shallow and $k$-local, and
\begin{equation}
    \min_{\mathbf h_i}
    H_{S_i}(\mathbf v,\mathbf h_i)
    =
    w_i\sgn(P_i(\mathbf v))+\Gamma_i.
\end{equation}
Restricting the RBM inputs to those of the form
$\mathbf u(\mathbf v)$ preserves degeneracy one
and cannot decrease the cut-off.

Finally, choose disjoint hidden spins for the
different systems $S_i$ and set
\begin{equation}
    H_S(\mathbf v,\mathbf h)
    =
    \sum_{i=1}^{M}
    H_{S_i}(\mathbf v,\mathbf h_i).
\end{equation}
The hidden configurations can be minimized independently,
so
\begin{equation}
    \min_{\mathbf h}H_S(\mathbf v,\mathbf h)
    =
    f(\mathbf v)+\sum_{i=1}^{M}\Gamma_i.
\end{equation}
The resulting system remains shallow and $k$-local,
has degeneracy one, and has cut-off at least $\Delta$.

There are polynomially many polynomial-size summands,
so the total size and parameter magnitudes remain
polynomially bounded.
The ridge constructions can use the common denominator
of the given parameters, up to fixed factors; hence
their addition also preserves polynomial precision.
The total energy shift is small for the same reason.
\subsection{Proof of \cref{lem:rbm ridge}}\label{ssec:proof 2}
Subsequently, we prove \cref{lem:rbm ridge}, that is, we prove that a function (family) $f$ can be efficiently simulated by RBMs with small weights and energy shift if and only if it can be represented as a sum of polynomially many, jointly small ridge functions. 

\paragraph{Ridge representation $\Rightarrow$ RBM simulation.}
\mbox{}
\medskip
For this direction we work with spins in $\{0,1\}$; converting between this convention
and $\{-1,+1\}$ preserves the RBM structure, energy gaps, and
smallness of the weights.

We first construct a simulation of a single small ridge
function. For a sum of jointly small ridge functions,
use their common denominator uniformly in the constructions.
The resulting RBMs have a common polynomially bounded
denominator, up to a fixed factor.
Adding their Hamiltonians with disjoint hidden spins
therefore preserves polynomial size, polynomially bounded
parameter magnitudes, and polynomial precision, including
that of the total energy shift.

Let $f$ be a small ridge function, i.e.\ $f(\mathbf{v})=g(\mathbf{A}\mathbf{v})$ and $\mathbf{A}$ has components $A_i = \frac{p_i}{q}$ and $g(\mathbf{A}\mathbf{v})$ takes values $\frac{g_i}{q}$ for polynomially bounded integers $p_i,g_i,q$.

First, we may assume that
\begin{equation}
    f(\mathbf v)=g(\mathbf A\mathbf v),
    \qquad A_i\in\mathbb Z_{\geq0},
    \qquad K\coloneqq\sum_i A_i=\mathrm{poly}(n).
\end{equation}
Indeed, write the original ridge vector as $A_i=p_i/q$,
with polynomially bounded integers $p_i,q$. Absorb $q$ into
the argument of $g$. For each negative $p_i$, replace $v_i$
by $1-v_i$ and absorb the resulting constant into the argument
of $g$. This yields the nonnegative integer coefficients
$|p_i|$. These spin relabellings can be reversed in the final RBM.

The transformed function has the same attained values as the
original one. Define its values at any unattained integers in
$\{0,\ldots,K\}$ to be zero. Thus, for polynomially bounded
positive integers $G,Q$, we may assume
\begin{equation}
    |g(x)|\leq G,
    \qquad g(x)\in Q^{-1}\mathbb Z
    \qquad \text{for} \ x=0,\ldots,K.
\end{equation}
If $K=0$, the target is constant and the claim is immediate.
Hence assume $K\geq1$.

The construction has two parts. Adding a sufficiently large
quadratic term to $g$ allows us to interpolate it exactly by
ReLU functions with positive coefficients and half-integer breakpoints.
A second collection of ReLUs cancels the quadratic term,
up to a linear term that can be absorbed into visible biases.
Both collections are constructed so that, for every visible
configuration, changing any hidden spin from its minimizing
state increases the energy by at least $\Delta$.

Fix $\Delta>0$ and choose
\begin{equation}
    D=\max\{1,\lceil\Delta\rceil\},
    \qquad L=4KG+D.
\end{equation}
Write $\operatorname{ReLU}(z)=\max\{0,z\}$.

\paragraph{Interpolation with half-integer breakpoints.}
Set
\begin{equation}
    b=g(1)-g(0),\qquad a_1=2L,
\end{equation}
and define recursively, for $j=1,\ldots,K-1$,
\begin{equation}
    a_{j+1}
    =4L+2\bigl(g(j+1)-2g(j)+g(j-1)\bigr)-a_j.
\end{equation}
We claim that, for every integer $x\in\{0,\ldots,K\}$,
\begin{equation}\label{eq:ridge-quadratic-interpolation}
    g(x)+Lx^2
    =
    g(0)+bx+
    \sum_{j=1}^{K}
    a_j\operatorname{ReLU}\left(x-j+\tfrac12\right).
\end{equation}
Both sides agree at $x=0$, and their increments from $0$ to $1$
agree because $a_1=2L$. At each interior integer $j$, the
second difference of the ReLU sum is
$(a_j+a_{j+1})/2$. By the recurrence, this equals
\begin{equation}
    2L+g(j+1)-2g(j)+g(j-1),
\end{equation}
the second difference of the left-hand side.
This proves \eqref{eq:ridge-quadratic-interpolation}.

Since each second difference of $g$ has magnitude at most $4G$,
the recurrence also gives
\begin{equation}
    |a_j-2L|\leq 8(j-1)G.
\end{equation}
Our choice of $L$ therefore ensures that $a_j\geq2D>0$.
Consequently, each summand in
\eqref{eq:ridge-quadratic-interpolation} is a single ReLU with
affine function
\begin{equation}
    \ell_j(\mathbf v)
    =a_j\left(\mathbf A\mathbf v-j+\tfrac12\right).
\end{equation}
Because $\mathbf A\mathbf v$ is an integer,
\begin{equation}\label{eq:ridge-interpolation-margin}
    |\ell_j(\mathbf v)|\geq a_j/2\geq D.
\end{equation}

\paragraph{Cancelling the quadratic term.}
Set
\begin{equation}
    B=LK+D,\qquad M=B+D.
\end{equation}
For each $l$ with $A_l>0$, introduce the affine preactivation
\begin{equation}
    P_l(\mathbf v)
    =
    A_l\bigl(B-L\mathbf A\mathbf v-M(1-v_l)\bigr).
\end{equation}
Writing $x=\mathbf A\mathbf v$, we have
\begin{equation}
    P_l(\mathbf v)
    =
    \begin{cases}
        A_l(B-Lx)\geq D, & v_l=1,\\
        -A_l(Lx+D)\leq-D, & v_l=0.
    \end{cases}
\end{equation}
Thus this ReLU is active precisely when $v_l=1$, and
\begin{equation}\label{eq:ridge-quadratic-cancellation}
    \sum_{l:A_l>0}\operatorname{ReLU}(P_l(\mathbf v))
    =
    \sum_l A_lv_l(B-Lx)
    =
    Bx-Lx^2.
\end{equation}
Combining \eqref{eq:ridge-quadratic-interpolation} and
\eqref{eq:ridge-quadratic-cancellation} yields
\begin{equation}\label{eq:ridge-rbm-representation}
\begin{split}
    f(\mathbf v)
    ={}&g(0)+(b-B)\mathbf A\mathbf v\\
    &+\sum_{j=1}^{K}\operatorname{ReLU}(\ell_j(\mathbf v))
     +\sum_{l:A_l>0}\operatorname{ReLU}(P_l(\mathbf v)).
\end{split}
\end{equation}

\paragraph{RBM realization and cut-off.}
Introduce one hidden spin for each ReLU in
\eqref{eq:ridge-rbm-representation}, and define
\begin{equation}
    H(\mathbf v,\mathbf h,\mathbf t)
    =
    -(b-B)\mathbf A\mathbf v
    -\sum_{j=1}^{K}h_j\ell_j(\mathbf v)
    -\sum_{l:A_l>0}t_lP_l(\mathbf v).
\end{equation}
Every preactivation is affine in the visible spins, so this
is an RBM Hamiltonian. Using
$\min_{h\in\{0,1\}}(-hz)=-\operatorname{ReLU}(z)$ gives
\begin{equation}
    \min_{\mathbf h,\mathbf t}H(\mathbf v,\mathbf h,\mathbf t)
    =-f(\mathbf v)+g(0).
\end{equation}
Thus the RBM simulates $-f$ up to a constant energy shift.
Applying the construction to $-g$ instead gives a simulation
of $f$.

Every preactivation has absolute value at least $D$, by
\eqref{eq:ridge-interpolation-margin} and the gating bounds
above. Hence each hidden spin has a unique minimizing state.
Since hidden spins do not interact, any incorrect hidden
configuration costs at least $D$ above the minimum.
The simulation therefore has constant degeneracy one and
cut-off at least $\Delta$.

\paragraph{Size and weights.}
The construction uses at most $K+n=\mathrm{poly}(n)$ hidden
units and polynomially many interactions, independently of
$\Delta$. Moreover,
\begin{equation}
    a_j=\mathcal{O}(KG+D),\qquad B,M=\mathcal{O}(K^2G+KD),
\end{equation}
so all weights and biases have magnitude
$\mathrm{poly}(n,D)$.

Finally, the recurrence involves only integer multiples and
sums of values in $Q^{-1}\mathbb Z$. Thus every $a_j$ and $b$
belongs to $Q^{-1}\mathbb Z$, and all RBM weights and biases
have the common denominator $2Q$. Returning to the
$\{-1,+1\}$ convention changes this denominator by at most
a fixed factor. The weights therefore have polynomial
precision, and for $\Delta=\mathrm{poly}(n)$ they are small.

\paragraph{RBM Simulation $\Rightarrow$ ridge representation.}
\mbox{}
\medskip
For this direction we work with the standard spin convention $\{-1,+1\}$.
Suppose that an RBM $S$ efficiently simulates $f$
with small weights and energy shift $\Gamma$.
Write its Hamiltonian as
\begin{equation}
    H_S(\mathbf v,\mathbf h)
    =
    c+\mathbf a\mathbf v
    +\sum_{i=1}^{m}
      h_i\bigl(b_i+\mathbf W_i\mathbf v\bigr),
\end{equation}
where $m=\mathrm{poly}(n)$.
Since the hidden spins can be minimized independently,
\begin{equation}
    f(\mathbf v)
    =
    c-\Gamma+\mathbf a\mathbf v
    -\sum_{i=1}^{m}
      \left|b_i+\mathbf W_i\mathbf v\right|.
\end{equation}
The affine term is a ridge function, and each remaining
summand is a ridge function with ridge vector
$\mathbf W_i$ and scalar function
$g_i(t)=-|b_i+t|$.
Smallness of the RBM weights and shift implies that
these ridge vectors and attained values are jointly
small. 

\section{Examples \& remarks}\label{sec:supplement}
In this section we provide additional examples and remarks that go beyond the main results of this work. Specifically, we discuss the relation between simulations and approximations (\cref{rem:sim approx}), we provide a generalization of the efficient simulation of the hard simulations and approximations of \cref{sec:consequences} to target Ising models with inhomogeneous couplings and fields (\cref{rem:generalization NP-hard}), we construct an explicit, efficient DBM simulation of the $\NPclass$-hard GSE function $\mathcal{G}$ following \cref{lem:eff sim GSE} (\cref{ssec: supp DBM}), we discuss potential stronger separation results that would follow from the additional assumption that simulations be efficiently computable (\cref{rem:polytime simulations}) and we argue that complexity theoretic results imply that efficient simulations cannot be used to ``increase'' the ground-state energy gap of certain target spin models and hence the cut-off of the efficient ground-state simulation of \cref{lem:eff sim GSE} cannot be increased arbitrarily (\cref{ssec: supp cut-off}). 
\subsection{Simulation vs approximation}\label{rem:sim approx}
If $S \simul T$ with constant degeneracy then $S$ also approximates $p_T$ in the visible spin marginal of its Boltzmann distribution and hence spin ML models that are universal are also universal approximators, i.e.\ 
\begin{equation}
    \text{universal spin ML model} \quad \Rightarrow \quad \text{universal approximator}.
\end{equation}
In this remark we argue that a partial converse holds true.
Specifically, we argue that under mild additional assumptions universal approximators can approximately simulate arbitrary target systems. Note however, that the  respective simulations do not necessarily have constant degeneracy (cf. \cref{def:mlm}).

First, define the \emph{free energy} of a spin ML system $S$
\begin{equation}
        F_S(\mathbf{v}) = -\log\left(\sum_{\mathbf{h}}e^{-H_S(\mathbf{v},\mathbf{h})} \right).
    \end{equation}
By definition, 
\begin{equation}\label{eq:free energy vis marg}
    \frac{1}{Z_S}e^{-F_S(\mathbf{v})}= \frac{1}{Z_S} \sum_{\mathbf{h}}e^{-H_S(\mathbf{v},\mathbf{h})},
\end{equation}
so whenever the visible spin marginal of $S$ (the RHS of \cref{eq:free energy vis marg}) approximates the Boltzmann distribution of some target system $T$ then 
\begin{equation}\label{eq:free energy approx}
    F_S(\mathbf{v}) \approx H_T(\mathbf{v}) + \log \left(\frac{Z_T}{Z_S}\right).
\end{equation}
In other words, whenever $S$ approximates $p_T$ then the free energy $F_S$ and $H_T$ agree (approximately) up to a constant energy shift.

We now argue that a universal approximator $\mathcal{M}$ suffices to ``approximately'' simulate arbitrary targets $T$ to arbitrary precision, i.e.\  there exists $S \in \mathcal{M}$ with 
\begin{equation}\label{eq:start approx}
\min_{\mathbf{h}}H_S(\mathbf{v},\mathbf{h}) \approx H_{T}(\mathbf{v}) + \Gamma,
\end{equation}

We assume that $\mathcal M$ is closed under
positive rescaling of its Hamiltonians.
Then, for any $\lambda>0$, there exists a $S_{\lambda}\in \mathcal{M}$ that approximates $p_{\lambda T}$, i.e.\ 
\begin{equation}\label{eq: first approx}
        \frac{1}{Z_{S_\lambda}}e^{-F_{S_{\lambda}}(\mathbf{v})}\approx \frac{1}{Z_{\lambda T}}e^{-\lambda H_T(\mathbf{v})},
\end{equation}
Therefore 
\begin{equation}\label{eq:second approx}
        \frac{1}{\lambda}F_{S_\lambda}(\mathbf{v}) \approx  H_T(\mathbf{v}) + \frac{1}{\lambda}\log \left(\frac{Z_{\lambda T}}{Z_{S_\lambda}}\right).
\end{equation}
Note that $-\log(\sum_i e^{-x_i})$ smoothly approximates  the minimum over $\{x_i\}$. In particular, 
\begin{equation}\label{eq:approx min}
\bigl\vert \min_{\mathbf{h}}H_{S_{\lambda}}(\mathbf{v}, \mathbf{h}) - F_{S_{\lambda}}(\mathbf{v})\bigr\vert \leq m\log(2),
\end{equation}
where $m$ is the number of hidden spins of $S_{\lambda}$. We assume that $m$ does not depend on $\lambda$.
Define $S=\frac{1}{\lambda}S_{\lambda}$, then 
\begin{equation}
        \bigl \vert \min_{\mathbf{h}}H_{S}(\mathbf{v}, \mathbf{h}) - \frac{1}{\lambda} F_{S_{\lambda}}(\mathbf{v}) \bigr \vert \leq   \frac{1}{\lambda}m\log(2).
\end{equation}
Choosing $\lambda$ sufficiently large, we thus have
\begin{equation}\label{eq:final approx}
\min_{\mathbf{h}}H_S(\mathbf{v},\mathbf{h}) \approx \frac{1}{\lambda}F_{S_{\lambda}}(\mathbf{v}) \approx  H_T(\mathbf{v}) + \frac{1}{\lambda}\log \left(\frac{Z_{\lambda T}}{Z_{S_\lambda}}\right).
\end{equation}  

\subsection{Extensions of GSE simulation}\label{rem:generalization NP-hard}
    The key idea in \cref{lem:eff sim GSE} is to encode the parameters $\mathbf{m}$ that specify a spin system $S_{\mathbf{m}}$ of a spin model with $\NPclass$-hard ground-state energy problem in the sub-configuration of another spin system $S_n$. In \cref{ssec: separation 2} and \cref{ssec: separation 3} we use
    a similar strategy to prove that (v,p)-maximal models can represent $\SharpPclass$-hard function.

    It is straightforward to generalize these constructions to allow the efficient simulation of arbitrary Ising systems with fields.
    Spin systems $S_i$ with 
    \begin{equation}
        H_{S_i}(v_i,m_i)=\begin{cases}
            v_i \ &\text{if} \ m_i=+1\\
            0 \ &\text{if} \ m_i=-1
        \end{cases}
    \end{equation}
    add fields to the target system $S_{\mathbf{m}}$, using an additional bit of $\mathbf{m}$ per spin to switch the field on and off. 
    $k$-bit couplings $J_{i,j}= \sum_r \frac{(m_r+1)}{2} 2^{r-a}$ (where $a$ is used to tune the range of $J_{i,j}$) can be simulated via 
    \begin{equation}
        J_{i,j}v_iv_j = \sum_r 2^{r-a}H_{S_{i,j}}(v_i,v_j,m_r),
    \end{equation}
    where $S_{i,j}$ is defined in \cref{lem:eff sim GSE}.
    The simulation of $k$-bit fields can be constructed similarly.

\subsection{Explicit DBM simulation}\label{ssec: supp DBM}
     While the concrete simulation of $\mathcal{G}$ of \cref{lem:eff sim GSE} is implicitly obtained from the universality construction \cite{Re24,Re25}, here we explicitly provide this simulation for the (v,p)-maximal model of DBMs. 

    First, we construct an Ising system $T_{i,j}$ that for arbitrary cut-offs simulates $S_{i,j}$ from \cref{lem:eff sim GSE}. This system has one auxiliary spin $a_{i,j}$ and is defined by 
    \begin{equation}
    \begin{split}
        H_{T_{i,j}}(v_i,v_j,m_{i,j},a_{i,j})= &(\Delta+\frac{1}{2})v_iv_j-\Delta v_i m_{i,j}-\Delta v_jm_{i,j}\\
        &+2\Delta v_i a_{i,j}
        +2\Delta v_j a_{i,j}-2\Delta m_{i,j}a_{i,j}\\
        &-\frac{1}{2}v_i-\frac{1}{2}v_j +\frac{1}{2}m_{i,j}-a_{i,j}+3\Delta.
    \end{split}
    \end{equation}
    The values of $T_{i,j}$ are listed in \cref{tab:i-j sim}.
    \begin{table}[t]
    \renewcommand{\arraystretch}{1.2}
        \centering
        \begin{tabular}{l|l}
           \textbf{configuration} $(v_i,v_j,m_{i,j},a_{i,j})$  & \textbf{energy} $H_{T_{i,j}}(v_i,v_j,m_{i,j},a_{i,j})$ \\
           \toprule 
            $(-1,-1,-1,-1)$ &  $2+4\Delta$ \\ 
            $(-1,-1,-1,1)$ &  $0$ \\ 
            $(-1,-1,1,-1)$ &  $3+12\Delta$ \\ 
            $(-1,-1,1,1)$ &  $1$ \\ 
            $(-1,1,-1,-1)$ &  $0$ \\ 
            $(-1,1,-1,1)$ &  $-2+4\Delta$ \\ 
            $(-1,1,1,-1)$ &  $1+4\Delta$ \\ 
            $(-1,1,1,1)$ &  $-1$ \\ 
            $(1,-1,-1,-1)$ &  $0$ \\ 
            $(1,-1,-1,1)$ &  $-2+4\Delta$ \\ 
            $(1,-1,1,-1)$ &  $1+4\Delta$ \\ 
            $(1,-1,1,1)$ &  $-1$ \\ 
            $(1,1,-1,-1)$ &  $0$ \\ 
            $(1,1,-1,1)$ &  $-2+12\Delta$ \\ 
            $(1,1,1,-1)$ &  $1$ \\ 
            $(1,1,1,1)$ &  $-1+4\Delta$ 
        \end{tabular}
        \caption{Values of $H_{T_{i,j}}$ simulating $S_{i,j}$.}
        \label{tab:i-j sim}
    \end{table}
    This proves that $T_{i,j}$ simulates $S_{i,j}$ with cut-off $4\Delta-2$.

    Since $T_{i,j}$ has a fully connected interaction graph, the sum $\sum_{i,j}T_{i,j}$ is not a DBM. To remedy this, we construct a system $T_{i,j}'$ that simulates $T_{i,j}$ for arbitrary cut-offs $\Delta'$ and such that  $\sum_{i,j}T_{i,j}'$ is a DBM.  $T_{i,j}'$ has spins $v_i,v_j,m_{i,j}, a_{i,j}, v_i',v_j'$ and is defined by 
    \begin{equation}
        \begin{split}
            H_{T_{i,j}'}(v_i,v_j,m_{i,j},a_{i,j}, v_i',v_j')= &-\Delta'v_iv_i'-\Delta'v_jv_j'\\
            &+(\Delta+\frac{1}{2})v_i'v_j-\Delta v_i m_{i,j}-\Delta v_jm_{i,j}\\
            &+2\Delta v_i' a_{i,j}
            +2\Delta v_j' a_{i,j}-2\Delta m_{i,j}a_{i,j}\\
            &-\frac{1}{2}v_i-\frac{1}{2}v_j +\frac{1}{2}m_{i,j}-a_{i,j}+3\Delta.
        \end{split} 
    \end{equation}
    $T_{i,j}'$ is illustrated in \cref{fig:T' simulation}. 
    
    Finally, choosing $\Delta'$ large enough the sum $S_n'=\sum_{i,j}T_{i,j}'$ simulates $S_n$ for arbitrary cut-off. $S_n'$ defines a $3$-layer DBM, where the three layers correspond to the configurations $\mathbf{m}$, $(\mathbf{v},\mathbf{a})$, $\mathbf{v}'$ from bottom to top.
    Any interaction of the form $m_{i,j}-v_k$ or $m_{i,j}-a_{k,l}$ is an interaction between layer $0$ and layer $1$. Any interaction $v_i-v_j$ or $v_i-a_{j,k}$ is replaced by an equivalent interaction involving $v_i'$ instead of $v_i$, and such interactions are between layer $1$ and layer $2$.

    \begin{figure}[th]
        \centering
        \begin{tikzpicture}[vis_phys/.style = {draw=black!80, fill=blue!60, thick, circle, minimum size= 0.65em, inner sep=2pt},
        vis_aux/.style = {draw=black!80, fill=blue!20, thick, circle, minimum size= 0.65em, inner sep=2pt},
        hid_phys/.style = {draw=black!80, fill=red!60, thick, circle, minimum size= 0.65em, inner sep=2pt},
        hid_aux/.style = {draw=black!80, fill=red!20, thick, circle, minimum size= 0.65em, inner sep=2pt}, scale = 1]
        \node[vis_phys, label=left: $m_{i,j}$] (m) at (0,0) {};
        \node[hid_phys, label=left: $v_i$] (i) at (-2,2) {};
        \node[hid_phys, label=left: $v_j$] (j) at (0,2) {};
        \node[hid_phys, label=left: $a_{i,j}$] (a) at (2,2) {};
        \node[hid_aux, label=left: $v_i'$] (i') at (-2,4) {};
        \node[hid_aux, label=left: $v_j'$] (j') at (0,4) {};

        \draw[-, thick] (m) to (i);
        \draw[-, thick] (m) to (j);
        \draw[-, thick] (m) to (a);
        \draw[-, thick] (i) -- (i') node[midway, left] {$-\Delta'$};
        \draw[-, thick] (j) -- (j') node[midway, left] {$-\Delta'$};
        \draw[-, thick] (j) to (i');
        \draw[-, thick] (a) to (i');
        \draw[-, thick] (a) to (j');

        \end{tikzpicture}
        \caption{System $T_{i,j}'$ that simulates $T_{i,j}$. The additional spins $v_i',v_j'$ ensure that $T_{i,j}'$ is compatible with the DBM constraints. They
        are coupled to $v_i,v_j$ with strong ferromagnetic coupling $-\Delta'$ such that in the low-energy sector $v_i'=v_i$ and $v_j'=v_j$.}
        \label{fig:T' simulation}
    \end{figure}

\subsection{On the computability of simulations}\label{rem:polytime simulations}
    We now provide an assumption, termed the computability assumption, which provides even stronger evidence for the separation between shallow and (v,p)-maximal models (cf. \cref{thm:separation 1}). Under the computability assumption, shallow models cannot be (v,p)-maximal unless $\Pclass = \NPclass$, i.e.\ the polynomial hierarchy collapses completely. 
    Specifically, under the computability assumption, \cref{lem:eff sim GSE} implies that for (v,p)-maximal models, computing visible spin effective energies $\min_{\mathbf{h}}H_S(\mathbf{v},\mathbf{h})$ is $\NPclass$-hard, while it is polytime computable for shallow models by \cref{lem:shallow polytime}. 
    
    We first state the computability assumption and argue that it is justified. 
    Note that in \cref{lem:eff sim GSE}, for any $n$ we explicitly construct the system $S_n$ that efficiently simulates $G_n$. This essentially means that the efficient simulation of $\mathcal{G}$ is itself polytime computable. So for given $n$ we can compute the respective $S_n$ and energy shift of $S_n \simul G_n$ in polynomial time.

    In \cite{Re24} we consider a stronger version of simulation between spin models, which additionally requires polytime computability in the above sense. Namely, if $\mathcal{M} \simul \mathcal{N}$ then for any $T\in \mathcal{N}$ the respective $S\in \mathcal{M}$ that satisfies $S \simul T$ can be computed in polynomial time. We then also prove that  universal models satisfy this property for arbitrary targets.

    Now say a spin ML model $\mathcal{M}$ is polytime (v,p)-maximal, that is, it is (v,p)-maximal such that all respective simulations (see \cref{def:efficient universality}) are polytime computable.
    Following \cref{thm:main}, this is the case if $\mathcal{M}$ is visible closed and universal such that
    both the visible closure and the universality simulations can be constructed in polynomial time.
    In particular, this is satisfied for DBMs, using that they are strictly visible closed (cf. \cref{cor:DBM max}) and polytime universal \cite{Re25}. 

    We now prove that for a polytime (v,p)-maximal model $\mathcal{M}$, \cref{lem:eff sim GSE} implies that computing visible spin effective energies $\min_{\mathbf{h}}H_S(\mathbf{v},\mathbf{h})$ is $\NPclass$-hard, because the $\NPclass$-hard ground-state function $\mathcal{G}$ can be reduced to it.
    Given an input $\mathbf{m}$ of $G_n \in \mathcal{G}$, first compute $S_n\in \mathcal{M}$ that simulates $G_n$ and then compute 
    \begin{equation}
       \min_{\mathbf{h}}H_{S_n}(\mathbf{m},\mathbf{h}) = G_n(\mathbf{m}),
    \end{equation}
    up to a potential energy shift that by assumption can also be computed in polynomial time.
    If, additionally, $\mathcal{M}$ were shallow, by \cref{lem:shallow polytime} this would provide a polytime algorithm for the $\NPclass$-hard function $\mathcal{G}$, implying $\Pclass = \NPclass$.

\subsection{Increasing energy gaps through simulations}\label{ssec: supp cut-off}
By \cref{lem:eff sim GSE}, (v,p)-maximal models can efficiently
simulate an $\NPclass$-hard function with cut-off at least
$\Delta=1$. To obtain approximations of the corresponding
probability distributions, however,
we would need constant degeneracy and in particular, sufficiently large cut-off, that ultimately controls the approximation error.
Can these simulations be extended to arbitrary cut-offs?
We argue that, unless the polynomial hierarchy collapses,
such an extension is impossible.

Let
$\mathcal{G}=\{G_n\colon\mathbf{2}^n\to\mathbb{R}
\mid n\in\mathbb{N}\}$
be any family for which the decision problem
\begin{equation}
    L_{\mathcal{G}}
    =\{(\mathbf{v},k)\mid G_n(\mathbf{v})\leq k\}
\end{equation}
is $\NPclass$-complete (see e.g.\ \cref{lem:eff sim GSE}).

Suppose that $\mathcal{G}$ admits efficient simulations
with arbitrary cut-offs. For each $n$, choose
\begin{equation}
    \Delta_n>2\max_{\mathbf{v}}|G_n(\mathbf{v})|,
\end{equation}
and let $S_n$ be a corresponding simulator of size
$\mathrm{poly}(n)$, with energy shift $\Gamma_n$ and
cut-off at least $\Delta_n$. Thus, for every $\mathbf{v}$,
\begin{equation}\label{eq:sim large cut off}
    \begin{aligned}
        H_{S_n}(\mathbf{v},\mathbf{h})
        &=G_n(\mathbf{v})+\Gamma_n,
        &&\mathbf{h}\in\simset(\mathbf{v}),\\
        H_{S_n}(\mathbf{v},\mathbf{h})
        &\geq G_n(\mathbf{v})+\Gamma_n+\Delta_n,
        &&\mathbf{h}\notin\simset(\mathbf{v}).
    \end{aligned}
\end{equation}

The key observation is that such a large cut-off makes
minimizing hidden configurations easy to recognize.
Indeed, our choice of $\Delta_n$ gives
\begin{equation}
    -\frac{\Delta_n}{2}
    <G_n(\mathbf{v})
    <\frac{\Delta_n}{2}.
\end{equation}
Consequently, minimizing configurations have shifted energy
below $\Delta_n/2$, whereas all other configurations have
shifted energy strictly above $\Delta_n/2$. Hence
\begin{equation}
    \mathbf{h}\in\simset(\mathbf{v})
    \quad\Longleftrightarrow\quad
    H_{S_n}(\mathbf{v},\mathbf{h})-\Gamma_n
    <\frac{\Delta_n}{2}.
\end{equation}

We now show directly that
$L_{\mathcal{G}}\in\CoNPclass/\mathrm{poly}$.
Use $(S_n,\Gamma_n,\Delta_n)$ as polynomial-length advice.
A certificate for a no-instance $(\mathbf{v},k)$ is a
hidden configuration $\mathbf{h}$ satisfying
\begin{equation}
    k<H_{S_n}(\mathbf{v},\mathbf{h})-\Gamma_n
    <\frac{\Delta_n}{2}.
\end{equation}
Both inequalities can be checked in polynomial time.
The upper inequality guarantees that $\mathbf{h}$ is
minimizing, so the lower inequality establishes
$G_n(\mathbf{v})>k$.
Conversely, whenever $G_n(\mathbf{v})>k$, any minimizing
hidden configuration provides such a certificate.
The simulator is supplied as advice, so the verifier
does not need to check whether an arbitrary proposed
system satisfies the simulation assumptions.
\footnote{Formally, advice is indexed by the total input
length $N$. It can contain the simulator descriptions
for all $n\leq N$; their combined length remains polynomial.}

Since $L_{\mathcal{G}}$ is $\NPclass$-complete, this implies
\begin{equation}
    \NPclass\subseteq\CoNPclass/\mathrm{poly}.
\end{equation}
By \cite{Ya83}, this inclusion collapses the polynomial
hierarchy to its third level.
Unless such a collapse occurs, efficient simulations
with arbitrary cut-offs therefore cannot exist.
Notice that this obstruction does not require constant
degeneracy: the large cut-off alone would provide
efficiently verifiable certificates of minimality.

A related obstruction applies to polynomial-time constructions (such as simulations)
that preserve the target ground-state energy up to an efficiently
computable shift. If such a construction could produce a source
whose global ground-state energy gap exceeds a known range containing
the target ground-state energy, then source ground states would
provide polynomially verifiable certificates that the target
ground-state energy exceeds a given threshold. For an
$\NPclass$-hard ground-state energy decision problem, this
would imply $\NPclass=\CoNPclass$.

\section{Learning separation between RBMs and 3RBMs}\label{sec:ip-learning-methods}

The separation results in \cref{sec:RBM separation} concern the existence of efficient representations, but they
do not establish whether suitable weights can be learned from a dataset of 
observed configurations. Here, we investigate this question by training
RBMs and 3RBMs (\cref{eq:3RBM}) on samples from known target distributions. We consider an equal budget in terms of the number of adjustable parameters and the amount of training data. We then measure the approximation error as well as the computation time. These finite-size experiments
are a first test toward establishing whether the separation result of \cref{sec:RBM separation} manifests itself in
learning. The code for the learning experiments is publicly available in the
accompanying repository.\footnote{\url{https://github.com/TobiReinhart/boltzmann-learning}}

\subsection{Target distribution and model families}
We consider a target distribution defined in terms of $\mathrm{IP}_2$. \cref{sec:RBM separation} proves that $\mathrm{IP}_2$ can be efficiently simulated by 3RBMs but not by RBMs, and \cref{rem:sim approx} suggests that a similar separation applies to approximating the corresponding probability distribution $q(\mathbf{v}) \propto e^{\beta\mathrm{IP}_2(\mathbf{v})}$ in the respective visible spin marginals.

We work with states from $\{0,1\}$.
Write $\mathbf v=(\mathbf x,\mathbf y)$, where
$\mathbf x,\mathbf y\in\{0,1\}^{n}$, so there are $d=2n$ visible spins.
Define the target $T$ by
\begin{equation}
 s(\mathbf v)=\mathrm{IP}_2(\mathbf x,\mathbf y)
 =(-1)^{\sum_{i=1}^n x_i y_i},\qquad
 H_T(\mathbf v)=-\beta s(\mathbf v),\qquad
 q(\mathbf v)=Z_T^{-1}e^{-H_T(\mathbf v)}.
 \label{eq:lm-target}
\end{equation}
We consider the task of learning $q(\mathbf{v})$ in the visible spin marginals of 3RBMs and RBMs.
Note that $q(\mathbf{v})$ distinguishes configurations based on their inner product modulo $2$, and an even-IP configuration is $e^{2\beta}$ times as probable as an
odd-IP configuration. 

The target can be efficiently sampled as follows:
we propose independent uniform configurations, accepting even-IP proposals
with probability one and odd-IP proposals with probability $e^{-2\beta}$.
The accepted configurations then are independent draws from $q$. We generate a
fixed dataset $\mathcal D=(\mathbf v^{(1)},\ldots,\mathbf v^{(N)})$ before
training. Repeated configurations are allowed. Training receives these
configurations, not their IP values or target energies.

With $m$ hidden spins, the 3RBM Hamiltonian (\cref{eq:3RBM}) can be expressed as 
\begin{equation}
 H_S(\mathbf v,\mathbf h;\theta)
 =\sum_{i=1}^{d}b_i^{(v)}v_i+
 \sum_{r=1}^{m}h_r\ell_r(\mathbf v),\qquad
 \ell_r(\mathbf v)=b_r^{(h)}+\sum_i W_{ri}v_i+
                   \sum_{i<j}C_{rij}v_i v_j.
 \label{eq:lm-model}
\end{equation}
The RBM Hamiltonian is obtained by setting $C=0$. 

Summing over the hidden spins yields free energy (see \cref{rem:sim approx}). The visible spin marginal then can be written as follows:
\begin{equation}
 F_S(\mathbf v;\theta)=\sum_i b_i^{(v)}v_i
       -\sum_r\log\bigl(1+e^{-\ell_r(\mathbf v)}\bigr),\qquad
 p_S^{\mathrm{vis}}(\mathbf v;\theta)=\sum_{\mathbf h}p_S(\mathbf v,\mathbf h;\theta)
       =\frac{e^{-F_S(\mathbf v;\theta)}}{Z_S(\theta)}.
 \label{eq:lm-free}
\end{equation}
Since both RBMs and 3RBMs are shallow, $F_S$ and its derivatives are efficiently computable for any given visible configuration (see \cref{sec:consequences}),
although $Z_S=\sum_{\mathbf v}e^{-F_S(\mathbf v)}$ generally is not. 
\subsection{The likelihood objective and its gradient}

The objective we aim to minimize in training is the \emph{Kullback--Leibler} divergence between the target distribution $q$ and the model distribution $p_S^{\mathrm{vis}}$, 
\begin{equation}
 D_{\mathrm{KL}}(q\Vert p_S^{\mathrm{vis}})
 =\sum_{\mathbf v}q(\mathbf v)\log\frac{q(\mathbf v)}{p_S^{\mathrm{vis}}(\mathbf v)}.
 \label{eq:lm-kl}
\end{equation}
Note that in the training we do not have access to the distribution $q(\mathbf{v})$, but only the dataset $\mathcal{D}$ sampled from it.

We hence approximate \cref{eq:lm-kl} by accessible quantities as follows.
First, inserting the model distribution (\cref{eq:lm-free}) into \cref{eq:lm-kl} yields
\begin{equation}
    D_{\mathrm{KL}}(q\Vert p_S^{\mathrm{vis}})
 =\mathbb E_q[F_S]+\log Z_S-\mathcal H(q),
 \label{eq:lm-kl2}
\end{equation}
where $\mathcal H(q)=-\sum q\log q$ is independent of the model weights $\theta$, so minimization only concerns the first two summands. 
Next, replacing the target
average $\mathbb E_q[F_S]$ by the average over observations defines the negative log likelihood per observation,
\begin{equation}
 \mathcal L_N(\theta)=\frac1N\sum_{a=1}^{N}F_S(\mathbf v^{(a)};\theta)
                          +\log Z_S(\theta).
 \label{eq:lm-empirical}
\end{equation}
This objective is accessible from the dataset and the model parameters.
For fixed parameters, the first term approximates the target
average, by
the law of large numbers. Hence minimizing \cref{eq:lm-empirical} can be seen as a reasonable approximation of minimizing \cref{eq:lm-kl}.

Finally, we minimize \cref{eq:lm-empirical} by \emph{gradient descent}, that is, we compute its gradient w.r.t.\ the model parameters and then adjust these in the direction of the negative gradient, by an amount parametrized by the so-called \emph{learning rate}.
Differentiating \cref{eq:lm-empirical} yields
\begin{equation}
 \nabla_\theta\mathcal L_N
 =\frac1N\sum_a\nabla_\theta F_S(\mathbf v^{(a)})
       -\mathbb E_{p_S^{\mathrm{vis}}}[\nabla_\theta F_S(\mathbf v)].
 \label{eq:lm-gradient}
\end{equation}
Note that here $\mathbb E_{p_S^{\mathrm{vis}}}[\nabla_\theta F_S(\mathbf v)]$ is the model average of the gradient of the free energy.
Thus, computing $\nabla_\theta\mathcal L_N$ requires the computation of the average of $\nabla_\theta F_S(\mathbf v)$ w.r.t.\ the dataset and the average of $\nabla_\theta F_S(\mathbf v)$ w.r.t.\ the model distribution.
This, in principle, would entail iterating over all samples as well as computing the full model distribution (including $Z_S$), for each individual update. To avoid this, we use further approximations.

\subsection{Approximating the gradient --- mini-batches and Monte Carlo evolutions}

First, in order to avoid computing averages over the entire dataset, we divide it into so-called \emph{mini-batches}, that is, groups of $b$ observations from $\mathcal{D}$ and, for each update, compute averages over one minibatch. That is, we use the following approximation
\begin{equation}\label{eq:minbatch approx}
    \frac1N\sum_a\nabla_\theta F_S(\mathbf v^{(a)}) \approx \frac1b\sum_{a=1}^b\nabla_\theta F_S(\mathbf v^{(a)}).
\end{equation}

Second, in order to avoid computing the full model distribution $p_S^{\mathrm{vis}}$, or averages w.r.t.\ it (which is intractable), we sample from it by Markov chain Monte Carlo (MCMC) methods.
That is, we set up $b'$ Markov chains with starting points $\mathbf{v}_{a'}^{(0)}$. At each step $i$ the Markov chain first updates the hidden spins by sampling from the conditional distribution $\mathbf{h}_{a'}^{(i)} \sim p_S(\mathbf{h} \mid \mathbf{v}_{a'}^{(i)})$.

Since both RBMs and 3RBMs are shallow, i.e.\ contain no hidden-hidden interactions, the hidden spins are conditionally independent given the visible configuration and hence can be sampled in parallel, i.e.\ the entire hidden configuration can be sampled at once. 

For RBMs, there are also no visible-visible interactions. Thus also visible spins are conditionally independent given the hidden configuration. The Markov chain then proceeds by 
updating the visible spins, by sampling from the conditional distribution $\mathbf{v}_{a'}^{(i+1)} \sim p_S(\mathbf{v} \mid \mathbf{h}_{a'}^{(i)})$.

For 3RBMs there are visible-visible interactions and therefore the visible spins cannot be sampled in parallel. Fixing the hidden configuration the resulting system consisting of visible spins, only, effectively is an all-to-all Ising system. Its partition function and probability distribution hence remain intractable. For 3RBMs we hence first divide the visible spins into blocks of $4$. We then iterate through these blocks, for each block $j$ we update the respective configuration  $\mathbf{v}_{a',j}^{(i)}$ by sampling from the conditional distribution
\begin{equation}
    \mathbf{v}_{a',j}^{(i+1)} \sim p_S(\mathbf{v}_{j} \mid \mathbf{h}_{a'}^{(i)}, \mathbf{v}_{a',1}^{(i+1)}, \ldots , \mathbf{v}_{a',j-1}^{(i+1)}, \mathbf{v}_{a',j+1}^{(i)}, \ldots ).
\end{equation}
That means, when updating block $j$ in iteration $i$, i.e.\ when sampling $\mathbf{v}_{a',j}^{(i+1)}$ those visible blocks $l$ for $l<j$ are fixed to their post-update configuration $\mathbf{v}_{a',l}^{(i+1)}$ while those blocks with $l>j$ are fixed to their pre-update configuration $\mathbf{v}_{a',l}^{(i)}$.

The total number of iterations in this Markov chain is also referred to as sweeps. Especially for RBMs, the corresponding training algorithm with $k$ update rounds is called contrastive divergence $k$ or CD-$k$ \cite{Hi02}.
It is common practice to use an equal number of observations per minibatch $b$ and Markov chains $b'$ and to further initialize the Markov chains at the respective minibatch observation, i.e.\ use $\mathbf{v}_a^{(0)}=\mathbf{v}^{(a)}$. With this the CD-$k$ gradient approximation becomes 
\begin{equation}
 \widehat g=\frac1b\sum_{a=1}^{b}
 \left[\nabla_\theta F_S(\mathbf v_a^{(0)};\theta)
            -\nabla_\theta F_S(\mathbf v_a^{(k)};\theta)\right].
 \label{eq:lm-cd}
\end{equation}
Importantly, for large enough minibatch size $b$ and large enough $k$, i.e.\   long enough Markov chains $\widehat g$ is a reasonable approximation of the true gradient $\nabla_\theta\mathcal L_N$.

\subsection{Updating the weights}

Learning by gradient descent updates the model parameters as 
$\theta_{t+1}=\theta_t-\eta_t\widehat g_t$, where the positive
\emph{learning rate} $\eta_t$ sets the size of each parameter change.
We use Adam~\cite{Ki15},
which additionally accounts for recent update directions and their typical magnitudes.
Its coordinatewise update is
\begin{equation}
 \theta_{t+1}=\Pi_{[-B,B]^P}\left[
 \theta_t-\eta_t\frac{M_t/(1-\rho_1^t)}
 {\sqrt{V_t/(1-\rho_2^t)}+\varepsilon}\right],
 \label{eq:lm-adam}
\end{equation}
where $M_t=\rho_1M_{t-1}+(1-\rho_1)g_t$, $V_t=\rho_2V_{t-1}+(1-\rho_2)g_t^{2}$, 
$M_0=V_0=0$, $\rho_1=0.9$, $\rho_2=0.999$ and $\varepsilon=10^{-8}$. Products, squares and division act coordinatewise.
Further $g_t=\widehat g_t\min(1,10/\|\widehat g_t\|_2)$, with
$g_t=0$ when the gradient vanishes;  this ensures that unusually large gradients (with $\|\widehat g_t\|_2 > 10$) are rescaled before the update. The projection $\Pi_{[-B,B]^P}$ clips each coefficient to $[-B,B]$; this is compatible with the small weight assumption in \cref{sec:RBM separation}.

Finally, couplings are initialized independently from a centered Gaussian with standard
deviation $0.05$, and visible and hidden fields are initialized to zero.

We now compare the two model families on the IP target, always with
$\beta=1$ and coefficient bound $B=4$. The 3RBMs include all visible pairs
$i<j$ in the hidden fields, rather than being restricted to the products
$x_i y_i$. The numbers of adjustable parameters are
\begin{equation}
 P_{\mathrm{RBM}}=d+(d+1)m,\qquad
 P_{3\mathrm{RBM}}=d+\left(1+d+\binom d2\right)m,
 \qquad d=2n.
 \label{eq:ip-results-parameters}
\end{equation}
We choose different hidden-layer sizes to match these numbers as closely
as possible. The following comparisons separate approximation by direct
population optimization from learning using only observed configurations.
\subsection{Learning results}\label{ssec:results learning}
All experiments were performed on a MacBook Air with an
Apple M4 chip and 16\,GB of unified memory, running
macOS 26.6.2, Python 3.12.14, and PyTorch 2.13.0.
Sample-based training used the Apple MPS backend with
single-precision arithmetic (float32), whereas exact
population optimization used CPU-based L-BFGS-B with
double-precision arithmetic (float64).
\paragraph{Exact population optimization at $n=6$.}
As a small-system reference, we enumerate all $2^{12}=4096$ visible
configurations and evaluate $D_{\mathrm{KL}}(q\Vert p_S^{\mathrm{vis}})$
and its gradient without sampling error. We minimize this objective with
the bounded quasi-Newton method L-BFGS-B, which uses gradients and an
approximation to local curvature. There is no sampled training set and
no CD approximation in this experiment. Each model receives five starts:
one with zero coefficients and four with independent Gaussian
coefficients of standard deviation $0.25$. The limits per start are
$3000$ optimizer iterations and $6000$ objective evaluations; the
function and projected-gradient tolerances are $10^{-13}$ and $10^{-9}$.
\cref{tab:ip-population-results} reports the lowest KL found for each
model. At the smallest budget the RBM is slightly better. At the middle
budget the 3RBM error is about half the RBM error, and at the largest
budget it is about $74$ times smaller.
This is consistent with the explicitly constructed  3RBM $\mathrm{IP}$ simulation that following \cref{lem:IP k sim} uses $2n$ visible spins,
$m=2n+1$ hidden spins, and $\mathcal{O}(n^2)$ interactions (see \cref{rem:3RBM IP}).
The respective KL values are numerical upper
bounds on the best achievable errors, not certified optima: every best
fit in the table reached the iteration limit. This comparison removes
finite-data and Monte Carlo errors, but not the uncertainty of
nonconvex optimization.

\begin{table}[htbp]
 \centering
 \small
 \begin{tabular}{rrrrrr}
  \toprule
  \multicolumn{3}{c}{RBM} & \multicolumn{3}{c}{3RBM}\\
  \cmidrule(lr){1-3}\cmidrule(lr){4-6}
  $m$ & $P$ & Best KL & $m$ & $P$ & Best KL\\
  \midrule
  24  & 324  & $0.247303$ & 4  & 328  & $0.253821$\\
  73  & 961  & $0.136275$ & 12 & 960  & $0.065374$\\
  219 & 2859 & $0.014836$ & 36 & 2856 & $0.000200$\\
  \bottomrule
 \end{tabular}
 \caption{Exact population optimization of the IP target at $n=6$,
 $\beta=1$, $B=4$. Each entry is the best of five starts with the same
 optimization limits. The KL values are evaluated by full enumeration,
 but the minima are not certified.}
 \label{tab:ip-population-results}
\end{table}

\paragraph{Sample-based learning conventions.}

For sample-based learning, \cref{tab:ip-learning-budgets} fixes the
budgets. Each size has three independent repetitions, with a new dataset
and new random initialization. Within a repetition, the two architectures
use the same dataset and minibatch order. Training traverses shuffled
copies of this fixed dataset repeatedly; $Ub$ is the number of processed
observations, counting repeated uses, rather than the number of independent
draws. Both architectures use CD-2 and Adam as described above. In the
3RBM, each visible sweep uses a new random partition into four-spin
blocks and samples each block's conditional distribution exactly.
All learning results below use the \emph{final} update, without selecting
the best intermediate checkpoint. The settings were informed by earlier
exploratory experiments; the three repetitions assess reproducibility
at these settings, not exhaustive optimization of each architecture.

At $n=8$ we evaluate population KL by summing over all $2^{16}$ states.
At $n=12$, where the state space has $2^{24}$ elements, these runs do not
evaluate the model partition function. Instead we compare the learned
free energy with the target Hamiltonian through
\begin{equation}
 R_S^2=
 \frac{\operatorname{Var}_{u_d}[F_S(\mathbf v;\theta)-H_T(\mathbf v)]}
      {\operatorname{Var}_{u_d}[H_T(\mathbf v)]},
 \qquad u_d(\mathbf v)=2^{-d}.
 \label{eq:ip-results-energy-error}
\end{equation}
Subtracting the mean difference inside the variance removes the
irrelevant additive energy constant. Consequently $R_S=0$ means exact
agreement of the distributions, whereas a constant $F_S$ gives $R_S=1$.
We estimate both variances on independent uniform probe configurations,
shared between models within a repetition. This tests the shape of the
energy landscape, including variations within each IP sector; it is
\emph{not} an estimate of KL divergence. In particular, a small average
energy error alone does not exclude rare configurations with large
probability errors. We report this diagnostic also at $n=8$, alongside
exact KL, in \cref{tab:ip-learning-results}.

\begin{table}[htbp]
 \centering
 \small
 \begin{tabular}{lrr}
  \toprule
  Quantity & $n=8$ & $n=12$\\
  \midrule
  Visible spins $d$ & 16 & 24\\
  Hidden spins $m$, RBM / 3RBM & $516\,/\,64$ & $1734\,/\,144$\\
  Parameters $P$, RBM / 3RBM & $8788\,/\,8784$ & $43374\,/\,43368$\\
  Independent training draws $N$ & $131\,072$ & $33\,554\,432$\\
  Minibatch size $b$ (also $b'$) & $256$ & $4096$\\
  Parameter updates $U$ & $50\,000$ & $480\,000$\\
  Processed observations $Ub$ & $12\,800\,000$ & $1\,966\,080\,000$\\
  Equivalent dataset passes $Ub/N$ & $97.65625$ & $58.59375$\\
  CD sweeps $k$ per update & 2 & 2\\
  Initial learning rate $\eta_0$ & $0.002$ & $0.00133333\ldots$\\
  Learning-rate schedule & see caption & constant\\
  Independent repetitions & 3 & 3\\
  Uniform energy-evaluation probes & $8192$ & $32\,768$\\
  Population KL evaluation & exact & not performed\\
  \bottomrule
 \end{tabular}
 \caption{Budgets for learning the IP distribution. All runs use $\beta=1$,
 $B=4$, zero initial fields, Gaussian initial couplings of standard
 deviation $0.05$, and gradient-norm clipping at $10$. At $n=8$ the
 learning rate stays at $0.002$ for the first $25\,000$ updates, then
 decreases along a cosine schedule to $0.0002$. At $n=12$ it remains
 $0.004/3$ throughout. These are matched budgets within each size,
 not a comparison at fixed resources across sizes.}
 \label{tab:ip-learning-budgets}
\end{table}

\paragraph{Learning from samples at $n=8$.}
With $64=n^2$ hidden spins for the 3RBM and a parameter-matched RBM,
all three repetitions show a clear advantage for the 3RBM
(\cref{fig:ip-learning-success}, top row). The median final KL is
$0.10414$ for the 3RBM, compared with $0.32633$ for the RBM. For reference,
the uniform visible distribution has KL $0.32648$ relative to the
target. Thus the 3RBM removes about $68\%$ of this reference error,
whereas the RBM remains close to it. The energy diagnostic is consistent
with this conclusion: the median $R_S$ is $0.562$ for the 3RBM and
$0.999$ for the RBM. This is substantial but incomplete learning of IP,
not near-exact recovery of the distribution. The 3RBM takes longer per
run at this size (median $5.17$ versus $3.36$ minutes), but its error
already improves below the RBM level within the common time interval.
The advantage therefore is not solely due to its longer total runtime.

\begin{table}[htbp]
 \centering
 \small
 \begin{tabular}{llccc}
  \toprule
  Size & Model & Final KL & Final $R_S$ & Training time\\
  \midrule
  $n=8$ & RBM & $0.32633\;[0.32614,0.32647]$
  & $0.999\;[0.998,0.999]$ & $3.36$ min\\
  & 3RBM & $0.10414\;[0.10107,0.10547]$
  & $0.562\;[0.558,0.565]$ & $5.17$ min\\
  \addlinespace
  $n=12$ & RBM & ---
  & $1.003\;[1.002,1.003]$ & $3.07$ h\\
  & 3RBM & ---
  & $0.512\;[0.505,0.553]$ & $1.82$ h\\
  \bottomrule
 \end{tabular}
 \caption{Final-checkpoint results: median and full range in brackets
 over three repetitions. Ranges describe run-to-run variation, not
 confidence intervals. KL is exact at $n=8$ and unavailable here at
 $n=12$; $R_S$ is estimated from uniform probes at both sizes.
 Times are medians of measured training wall time, excluding subsequent checkpoint evaluation and dataset generation.}
 \label{tab:ip-learning-results}
\end{table}

\paragraph{Learning from samples at $n=12$.}
We next increase the 3RBM width to $144=n^2$ and enlarge both the data
and optimization budgets. Relative to $n=8$, the dataset is $256$ times
larger, the minibatch is $16$ times larger, and the number of updates is
$9.6$ times larger. All three 3RBM repetitions now learn substantial IP
structure, with final energy errors between $0.505$ and $0.553$, whereas
all three RBMs remain close to the constant-energy reference
(\cref{fig:ip-learning-success}, bottom row). The median correlation
between $F_S$ and $H_T$ on uniform probes is $0.860$ for the 3RBM and
$-0.003$ for the RBM. As an additional check, the difference between the
mean free energies in the odd- and even-IP sectors is about $70\%$ of
the target difference $2\beta$ for the 3RBM, but essentially zero for
the RBM. These observations show recovery of the IP-dependent energy
structure on independently drawn evaluation configurations.
They do not establish a numerical KL value at this size. The 3RBM is
also faster here: its median training time is $1.82$ hours, compared
with $3.07$ hours for the parameter-matched RBM. Parameter matching
gives the RBM many more hidden spins, so its simpler conditional
sampling does not necessarily make it faster. This timing comparison
depends on the implementation, hardware and minibatch size; it is not
an architecture-independent complexity bound.

\begin{figure}[htbp]
 \centering
 \includegraphics[width=\linewidth]{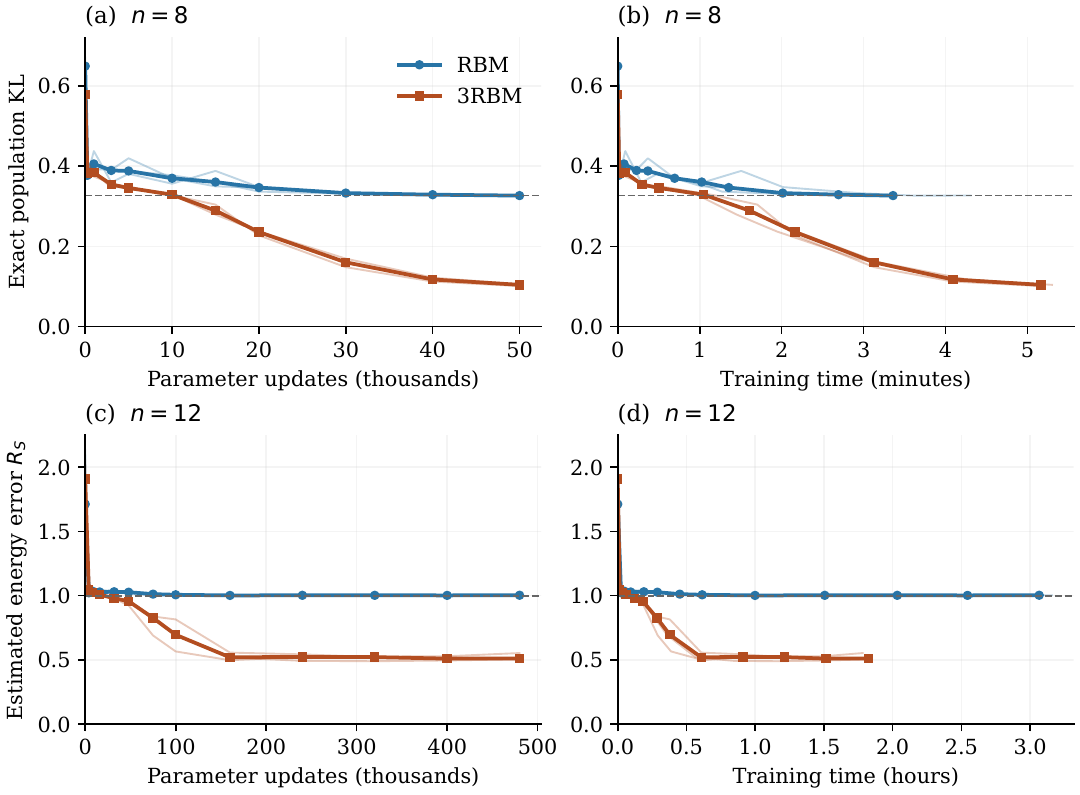}
 \caption{Learning the IP target with parameter-matched RBMs and
 all-pairs 3RBMs. Top: exact population KL at $n=8$. Bottom: estimated
 energy error $R_S$ at $n=12$. Left: error versus parameter updates;
 right: the same checkpoints versus measured training time. Thin lines
 show all three repetitions. Thick lines join checkpoint medians; in
 the time panels, both time and error are medians at each common update
 count, rather than interpolated equal-time averages. Dashed lines give
 the uniform-distribution KL (top) and constant-energy error $R_S=1$
 (bottom). The two rows use different error measures and time units.}
 \label{fig:ip-learning-success}
\end{figure}

\paragraph{Unsuccessful extension to $n=16$.}
We retain the $n=12$ learning procedure and budgets
($N=33\,554\,432$, $b=4096$, $U=480\,000$, CD-2,
constant learning rate $0.004/3$, $\beta=1$, $B=4$), but increase the
3RBM width to $256=n^2$ and the parameter-matched RBM width to $4104$
($135\,456$ and $135\,464$ parameters, respectively). In this single
paired repetition, the final energy errors are $R_S=0.9966$ for the
3RBM and $1.0057$ for the RBM. Both therefore remain close to the
constant-energy reference. The 3RBM recovers only about $0.5\%$ of the
target mean energy difference between IP sectors. This failure is
measured with the same diagnostic and $32\,768$ uniform probes as at
$n=12$, so it cannot be attributed simply to switching error measures.
A possible explanation is that increasing $n$ makes the IP-dependent
direction in the learning gradient harder to resolve, while the
minibatch size, number of updates and CD chain length remain unchanged.
Statistical fluctuations, incomplete equilibration in CD-2 and the
optimization landscape could then prevent the model from finding a
useful representation. A follow-up replaces dataset reuse by fresh
independent observations at every update, keeping the 3RBM initialization
and other learning settings fixed. Even with $1\,966\,080\,000$
independent draws, its final error remains $R_S=0.9966$. This weakens
the explanation that the fixed dataset alone was too small, but does
not test larger minibatches, longer chains or more updates. With only
one realization per condition, these results locate a limitation of
the tested learning procedure; they neither identify its precise cause
nor demonstrate a failure of 3RBM representability.
\paragraph{Larger-budget learning at $n=16$.}
To test whether the preceding failure could be overcome by increasing
the learning resources, we perform a further 3RBM run with
$m=256$ hidden spins, retaining $\beta=1$, $B=4$, the previous random
initialization and the constant learning rate $0.004/3$. We increase
the minibatch size from $4096$ to $32\,768$ and the number of Gibbs
sweeps from two to four. Each update uses fresh independent target
observations. To keep the runtime manageable, we use $160\,000$
parameter updates, giving $5\,242\,880\,000$ observations in total
and a measured training time of $13.8$ hours. The final energy error
is $R_S=0.9948$, compared with $0.9966$ in the previous fresh-data
run. The energy correlation increases slightly to $0.108$, but the
learned mean energy difference between IP sectors reaches only
$0.77\%$ of the target difference. Moreover, the error already reaches
$R_S=0.9964$ after $40\,000$ updates and subsequently improves only
marginally. Thus the larger budget does not recover the substantial
IP structure learned at $n=12$. This weakens the explanation that
the earlier failure resulted solely from an insufficient fixed
dataset. A weak useful gradient, difficulties in the optimization
landscape, bias from the finite CD chains, or the coefficient bound
remain possible explanations. The experiment does not distinguish
these mechanisms: several resources change simultaneously, the
number of updates is smaller than in the preceding $n=16$ runs,
and only one initialization is tested. The result therefore
documents a persistent limitation of the tested learning procedure,
rather than a failure of 3RBM representability.

\paragraph{Other targets: advantages and controls.}
To test whether a learning advantage extends beyond IP, we consider
$d=16$ visible spins $v_i$ with states in $\mathbf{2}$. We sample
$q(\mathbf v)\propto e^{-H_T(\mathbf v)}$ for different target systems $T$ that involve four-body interactions described by hyperedges $A=\{i,j,k,l\}$, i.e.\ 
$H_T=-M^{-1/2}\sum_{A}J_A\prod_{i\in A}v_i$, with
$J_A\in\{-1,1\}$. Specifically, we either consider four disjoint quartets $A_1,\ldots, A_4$ and then set $M=4$ or
we consider a ring with $A_i=\{i,r(i),8+i,8+r(i)\}$ for $i=1,\ldots,8$ and  $M=8$, where $r(i)=1+(i\bmod 8)$.

Considering appropriate product spins, both cases result in a particularly simple structure.
For each disjoint quartet $A_i=\{a_i,b_i,c_i,d_i\}$,
set $z_{i,1}=v_{a_i}v_{b_i}$ and $z_{i,2}=v_{c_i}v_{d_i}$.
Then the respective interaction is proportional to $z_{i,1}z_{i,2}$.
For the ring, set $z_i=v_iv_{8+i}$; then interactions are
proportional to $z_i z_{r(i)}$.
We expect the 3RBM structure to be better suited to encode  the state of such product spins in its hidden configuration and hence to better learn these targets.

Controls are pairwise
Ising models on an open two-leg ladder, with random couplings
$J_{ij}=\pm1$ and energy divided by its standard deviation under
uniform configurations, and independent spins with
$H_T=-\sum_i h_iv_i$, $h_i\sim\mathrm{Unif}[-0.7,0.7]$.
For each family we use two coupling/field instances and three paired
dataset/initialization repetitions. RBMs and all-pairs 3RBMs have
$m=516$ and $64$ hidden spins ($8788$ and $8784$ parameters),
$N=131\,072$, $b=256$ and $B=4$. Both CD-1 and CD-2 are trained
for $30\,000$ updates ($20\,000$ for independent spins), using
Adam with learning rate $0.002$, constant for the first half and
cosine-decayed to $0.0002$ thereafter; initialization, gradient
clipping and four-spin Gibbs blocks are as above.
\cref{fig:other-learning-examples} shows exact population KL
after validation-based selection. Across the four four-spin instances,
median errors are $0.446$--$0.474$ for RBMs and $0.0149$--$0.0191$
for 3RBMs. Independent-spin errors are comparable
($0.0028$--$0.0035$), whereas the two ladder instances give
RBM/3RBM errors $0.00702/0.0104$ and $0.00750/0.00619$:
the 3RBM is worse on one and better on the other.
Thus the observed advantage extends to some non-IP targets, but is
not universal. This is consistent with an architectural preference
for certain higher-order structures under the tested training
procedure; it neither isolates the mechanism nor establishes an
asymptotic representation separation for these targets.

\begin{figure}[htbp]
 \centering
 \includegraphics[width=\linewidth]{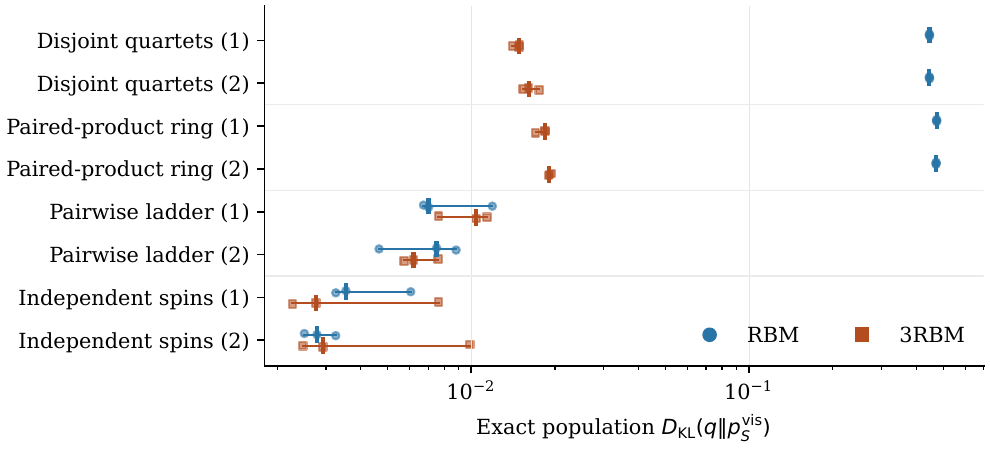}
 \caption{Representative non-IP comparisons at $d=16$. Smaller KL is
 better; the horizontal axis is logarithmic. Each target has two
 instances, each with three training repetitions (small symbols);
 vertical ticks mark medians and horizontal lines span the full range,
 not confidence intervals. Unlike the final-checkpoint IP results,
 CD depth and checkpoint are chosen by maximizing mean single-bit
 conditional log probability on $2048$ independent validation
 configurations, not by minimizing the plotted KL. Candidate
 checkpoints are $250$, $1000$, $3000$, $10\,000$, $20\,000$, and
 additionally $30\,000$ for interacting targets. Both architectures
 receive the same data and search budgets, but not equal wall time.
 These are selected families from a broader exploratory comparison.}
 \label{fig:other-learning-examples}
\end{figure}

\paragraph{Conclusion.}
Taken together, the successful $n=8$ and $n=12$ experiments provide finite-size evidence that an
advantage of 3RBMs on the IP target can survive learning from samples,
including with all visible-pair interactions allowed. The stronger
claim of efficient learning as $n$ grows is not established. In
particular, the $n=12$ success uses $153.6$ times as many processed
observations as the $n=8$ experiment, and several budgets change
simultaneously. These comparisons alone cannot identify which increase
was essential. Nor does a positive learned RBM error certify an
approximation lower bound: it may reflect limited optimization or the
biased CD-2 gradient as well as model capacity. The results therefore
test a learning consequence suggested by the separation theorems,
without constituting a numerical proof of their asymptotic statement.
The additional non-IP comparisons suggest that this learning advantage extends
to some higher-order spin targets, while independent-spin and pairwise controls show comparable performance or, in some instances, lower errors for the RBM, rather than a universal benefit of three-body
couplings.

\clearpage

\printbibliography 

\end{document}